\newcommand{\algemph}[3]{\algcolor{#1}{#2}{#3}}
\theoremstyle{plain}
\newtheorem{theorem}{Theorem}[section]
\newtheorem{proposition}[theorem]{Proposition}
\newtheorem{lemma}[theorem]{Lemma}
\newtheorem{corollary}[theorem]{Corollary}
\theoremstyle{definition}
\newtheorem{definition}[theorem]{Definition}
\newtheorem{assumption}[theorem]{Assumption}
\theoremstyle{remark}
\newtheorem{remark}[theorem]{Remark}
\def\ddefloop#1{\ifx\ddefloop#1\else\ddef{#1}\expandafter\ddefloop\fi}
\def\ddef#1{\expandafter\def\csname bb#1\endcsname{\ensuremath{\mathbb{#1}}}}
\def\ddef#1{\expandafter\def\csname c#1\endcsname{\ensuremath{\mathcal{#1}}}}
\def\ddef#1{\expandafter\def\csname v#1\endcsname{\ensuremath{\boldsymbol{#1}}}}
\def\ddef#1{\expandafter\def\csname v#1\endcsname{\ensuremath{\boldsymbol{\csname #1\endcsname}}}}
\newcommand\ip[1]{\langle #1 \rangle} 
\newcommand{\E}{\ensuremath{\mathbb{E}}} 
\newcommand{\bm}[1]{\boldsymbol{#1}}
\newcommand{\norm}[1]{\left\|#1\right\|}
\newcommand{\abs}[1]{\left|#1\right|}
\definecolor{antiquewhite}{rgb}{0.98, 0.92, 0.84} 
\definecolor{blizzardblue}{rgb}{0.67, 0.9, 0.93} 
\newcommand{\algcolor}[3]{\hspace*{-\fboxsep}\colorbox{#1}{\parbox{#2\linewidth}{#3}}}
\newcommand{\upstairs}[1]{\textsuperscript{#1}}
\newcommand{\affilone}{\dag}
\begin{document}

\doparttoc 
\faketableofcontents 

\part{} 

\begin{center}

  {\bf{\LARGE Tight Analysis of Extra-gradient and Optimistic Gradient Methods For Nonconvex Minimax Problems}} \\

  \vspace*{.15in}
  
  \begin{tabular}{cc}
    Pouria Mahdavinia\upstairs{\affilone}
    ~~Yuyang Deng\upstairs{\affilone}
    ~~Haochuan Li\upstairs{\S}
    ~~Mehrdad Mahdavi\upstairs{\affilone}
    \\[5pt]
    \upstairs{\affilone}Department of  Computer Science and Engineering \\
    The Pennsylvania State University\\
    \texttt{\{pxm5426, yzd82, mzm616\}@psu.edu}\\
    \\
    \upstairs{\S}Department of Electrical and Computer Engineering \\
    Massachusetts Institute of Technology\\
   \texttt{haochuan@mit.edu}
  \end{tabular}
  
 \vspace*{.0in}
\begin{abstract}
Despite the established convergence theory of Optimistic Gradient Descent Ascent (OGDA) and Extragradient (EG) methods for the convex-concave minimax problems, little is known about the theoretical guarantees of these methods in nonconvex settings. To bridge this gap, for the first time, this paper establishes the convergence of OGDA and EG methods under the nonconvex-strongly-concave (NC-SC) and nonconvex-concave (NC-C) settings by providing a unified analysis through the lens of single-call extra-gradient methods. We further establish lower bounds on the convergence of GDA/OGDA/EG, shedding light on the tightness of our analysis. We also conduct experiments supporting our theoretical results. We believe our results will advance the theoretical understanding of OGDA and EG methods for solving complicated nonconvex minimax real-world problems, e.g., Generative Adversarial Networks (GANs) or robust neural networks training. 
\end{abstract}

\end{center}

\section{Introduction}

\label{submission}

In this paper, we consider the following minimax problem:
\begin{equation}
\label{eqn:minimax}
 \underset{\vx \in \mathbb{R}^d }{\min}  \,  \underset{\vy \in \cY}{\max} \, \,  f(\vx , \vy) \end{equation}
where $\cY$ could be a bounded convex or unbounded set, and the function $f:\mathbb{R}^d \times \cY \to \mathbb{R}$ is smooth and strongly-concave/concave with respect to $\vy$, but possibly nonconvex in $\vx$. Minimax optimization (Problem \ref{eqn:minimax}) has been explored in a variety of fields, including classical game theory, online learning, and control theory~ \cite{basar1999dynamic,von2007theory, hast2013pid}. Minimax has emerged as a key optimization framework for machine learning applications such as generative adversarial networks (GANs)~\cite{goodfellow2014generative}, robust and adversarial machine learning~\cite{sinha2017certifying,madry2017towards,goodfellow2014explaining}, and reinforcement learning~\cite{zhang2021multi,qiu2020single}.

Gradient descent ascent (GDA) is a well-known algorithm for solving minimax problems, and it is widely used to optimize generative adversarial networks. GDA performs a gradient descent step on the primal variable $\vx$ and a gradient ascent step on the dual variable $\vy$ simultaneously in each iteration. GDA with equal step sizes for both variables converges linearly to Nash equilibrium under the  strongly-convex strongly-concave (SC-SC) assumption~\cite{liang2019interaction,fallah2020optimal}, but diverges even under the convex-concave (C-C) setting for functions such as bilinear~\cite{hommes2012multiple,mertikopoulos2018optimistic}. 

Given the high nonconvexity of practical applications such as GANs, exploring convergence guarantees of minimax optimization algorithms beyond the convex-concave (C-C) setting is one of the canonical research directions in minimax optimization. Several algorithms with convergence guarantees beyond the C-C domain have been explored in the literature. Alternating Gradient Descent Ascent (AGDA) is one of these methods demonstrated to have excellent convergence properties beyond the C-C setting~\cite{yang2020global,yang2021faster,chen2021accelerated}. Additionally, two alternative powerful algorithms are Extragradient (EG) and Optimistic GDA (OGDA), which have recently acquired prominence due to their superior empirical performance in optimizing GANs compared to other minimax optimization algorithms~\cite{liang2019interaction,daskalakis2017training,mertikopoulos2018optimistic}. 

\begin{table*}[t!]
\centering
\footnotesize
\resizebox{\columnwidth}{!}{%
\begin{tabular}{|c|c|c|c|c|}
\hline
 {Algorithm}& \multicolumn{2}{|c|}{NC-C} & \multicolumn{2}{|c|}{NC-SC}  \\
\cline{2-5}
         & Deterministic & Stochastic & Deterministic & Stochastic \\ \hline \hline 
PG-SVRG~\cite{rafique2018non}  &  -  & $\tilde{O}(\epsilon^{-6})$ & - & -\\    \hline 
  HiBSA~\cite{lu2020hybrid}  & $O(\epsilon^{-8})$   & - & - & -\\    \hline 
   Prox-DIAG~\cite{thekumparampil2019efficient}  & $\tilde{O}(\epsilon^{-3})$   & - & - & -\\    \hline 
    Minimax-PPA~\cite{lin2020near}  & $O(\epsilon^{-4})$   & -  & $O(\frac{\sqrt{\kappa}}{\epsilon^2})$ & - \\    \hline 
    
     ALSET~\cite{chen2021tighter} & - & - & $O(\frac{\kappa^3}{\epsilon^2})$ & $O(\frac{\kappa^3}{\epsilon^4})$
    \\ \hline
    Smoothed-AGDA~\cite{yang2021faster} & - & - & $O(\frac{\kappa}{\epsilon^2})$& $O(\frac{\kappa^2}{\epsilon^4})$
    \\ \hline
  GDA~\cite{lin2020gradient}  & $O(\epsilon^{-6})$   & $ O(\epsilon^{-8})$    & $O(\frac{\kappa^2}{\epsilon^{2}})$ & $O(\frac{\kappa^3}{\epsilon^4})$  \\    \hline 
\rowcolor{antiquewhite}
   
  OGDA/EG (Theorems~\ref{thm:ncsc_ogda},~\ref{thm:ncsc_sogda},~\ref{thm:ncc_ogda},~\ref{thm:ncc_sogda})  & $ O(\epsilon^{-6})$  &$  O(\epsilon^{-8})$  & $O(\frac{\kappa^2}{\epsilon^{2}})$ & $O(\frac{\kappa^3}{\epsilon^4})$  \\
 \hline
\end{tabular}}\label{tab:results} \vspace{-0.25cm}
\caption{A summary of prior  and our convergence rates in nonconvex-concave (NC-C) and nonconvex-strongly-concave (NC-SC) minimax optimization. For NC-C, we assume $f(\bm{x},\bm{y})$ is $\ell$-smooth,  $G$-Lipschitz in $\bm{x}$, and concave in $\bm{y}$, and for NC-SC we assume $\ell$-smoothness, and $\mu$-strong concavity in $\vy$, where $\kappa = \frac{\ell}{\mu}$.  }\label{tab:main} \vspace{-0.6cm}  
\end{table*}
Spurred by the empirical success of EG and OGDA methods, there has been a tremendous amount of work in theoretical understanding of their convergence rate under different sets of assumptions. Specifically, recently the convergence properties of EG and OGDA were investigated for SC-SC and C-C settings, where it has been shown that they tend to converge significantly faster than GDA in both deterministic and stochastic settings~\cite{mokhtari2020unified,fallah2020optimal,mokhtari2020convergence}. Despite these remarkable advances, there is a dearth of theoretical understanding  of the convergence of OGDA and EG methods in the nonconvex setting. This naturally motivates us to rigorously examine the convergence of these methods in nonconvex minimax optimization that we aim to investigate. Thus, we emphasize that our focus is on vanilla variants of OGDA/EG, and improved rates in NC-C and NC-SC problems have already been obtained with novel algorithms as mentioned in Section~\ref{sec:related}.

\noindent\textbf{Contributions.}~We propose a unified framework for analyzing and establishing the convergence of OGDA and EG methods for solving NC-SC and NC-C minimax problems. To the best of our knowledge, our analysis provides the first theoretical guarantees for such problems. Our contribution can be summarized as follows:
\begin{compactitem}
    \item For NC-SC objectives, we demonstrate that OGDA and EG iterates converge to the $\epsilon-$stationary point, with a gradient complexity of $O(\frac{\kappa^2}{\epsilon^2})$ for deterministic case, and $O(\frac{\kappa^3}{\epsilon^4})$ for the stochastic setting, matching the gradient complexity of GDA in~\cite{lin2020gradient}.  \vspace{2mm}
    \item For NC-C objectives, we establish the gradient complexity of $O(\epsilon^{-6})$ for the deterministic  and $O(\epsilon^{-8})$ for stochastic oracles, respectively. Compared to the most analogous work on GDA~\cite{lin2020gradient}, our rate matches the gradient complexity of GDA our results show that OGDA and EG have the advantage of shaving off a significant term related to primal
function gap ($\hat{\Delta}_0 = \Phi(\vx_0) - \min_{\vx} \Phi(\vx)$).\vspace{2mm}
\item We establish impossibility results on the achievable rates by providing an $\Omega(\frac{\kappa^2}{\epsilon^2})$, and $\Omega(\epsilon^{-6})$ lower bounds based on the common choice of parameters for both OGDA and EG in deterministic NC-SC and NC-C settings, respectively, thus demonstrating the tightness of our analysis of upper bounds.  \vspace{2mm}
\item By carefully designing hard instances, we establish a general lower bound  of $O(\frac{\kappa}{\epsilon^2})$, independent of the learning rate, for GDA/OGDA/EG methods in deterministic NC-SC setting--  demonstrating the optimality of  obtained upper bound  up to a factor of $\kappa$.
\end{compactitem}

\vspace{-2mm}

\section{Related Work} \label{sec:related}
\noindent\textbf{Extra-gradient (EG), and OGDA methods.} Under smooth SC-SC assumption, deterministic OGDA and EG have been shown to converge to an $O(\epsilon)$ neighborhood of the optimal solution with rate of $O(\kappa \log(\frac{1}{\epsilon}))$~\cite{mokhtari2020unified,tseng1995linear}.~\citet{fallah2020optimal} improved upon the previous rates by proposing multistage OGDA, which achieved the best-known rate of $O(\max(\kappa \log(\frac{1}{\epsilon}) , \frac{\sigma^2}{\mu^2 \epsilon^2}))$ for the stochastic OGDA in SC-SC setting. Under monotone and gradient Lipschitzness assumption (a slightly weaker notion of smooth convex-concave problems), ~\citet{cai2022tight} established the tight last iterate convergence of $O(\frac{1}{\sqrt{T}})$ for OGDA and EG, and similar results for EG has been achieved in~\cite{gorbunov2022extragradient,gorbunov2022stochastic}. Furthermore,  To the best of our knowledge, OGDA and EG methods have not been extensively explored in nonconvex-nonconcave settings except in a few recent works on structured nonconvex-nonconcave problems in which the analysis is done through the lens of a variational inequality. This line of work is discussed in the Nonconvex-nonconcave section. Moreover, recently,~\citet{guo2020fast} established the convergence rate of OGDA in NC-SC, however, they have $\mu$-PL assumption on $\Phi(\vx)$, which is a strong assumption and further allows them to show the convergence rate in terms of the objective gap. However, we did not make such an assumption on the primal function, and hence unlike~\cite{guo2020fast}, we measure the convergence by the gradient norm of the primal function. 

\noindent\textbf{Nonconvex-strongly-concave (NC-SC) problems.}~In deterministic setting,~\citet{lin2020gradient} demonstrated the first non-asymptotic convergence of GDA to $\epsilon$-stationary point of $\Phi(\vx)$, with the gradient complexity of $O(\frac{\kappa^2}{\epsilon^2})$.~\citet{lin2020near} and ~\citet{zhang2021complexity} proposed triple loop algorithms achieving gradient complexity of $O(\frac{\sqrt{\kappa}}{\epsilon^2})$ by leveraging ideas from catalyst methods (adding $\alpha \|\vx  - \vx_0 \|^2$ to the objective function), and inexact proximal point methods, which nearly match the existing lower bound~\cite{Li2021ComplexityLB,zhang2021complexity,Han2021LowerCB}. Approximating the inner loop optimization of catalyst idea by one step of GDA, Yang et al~\cite{yang2021faster} developed a single loop algorithm called smoothed AGDA, which provably converges to $\epsilon$-stationary point, with gradient complexity of $O(\frac{\kappa}{\epsilon^2})$. For stochastic setting, Lin et al~\cite{lin2020gradient} showed that Stochastic GDA, with choosing dual and primal learning rate ratio of $O(\frac{1}{\kappa^2})$, converges to $\epsilon$-stationary point with gradient complexity of $O(\frac{\kappa^3}{\epsilon^4})$.~\citet{chen2021tighter} proposed a double loop algorithm whose outer loop performs one step of gradient descent on the primal variable, and inner loop performs multiple steps of gradient ascent. Using this idea, they achieved gradient complexity of $O(\frac{\kappa^3}{\epsilon^4})$ with fixed batch size. However, their algorithm is double loop, and the iteration complexity of the inner loop is $O(\kappa)$. Yang et al~\cite{yang2021faster} also introduced the stochastic version of smoothed AGDA we mentioned earlier. They showed gradient complexity of $O(\frac{\kappa^2}{\epsilon^4})$, using fixed batch size. They achieved the best-known rate for NC-PL problems, which is an even weaker assumption than NC-SC. 

\noindent\textbf{Nonconvex-concave.}~Recently, due to the surge of GANs~\cite{goodfellow2014generative} and adversarially robust neural network training, a line of researches are focusing on nonconvex-concave or even nonconvex-nonconcave minimax optimization problems~\cite{lu2020hybrid,lin2019gradient,nouiehed2019solving,rafique2018non,thekumparampil2019efficient,gidel2018variational,liu2019towards,liu2019decentralized,jin2019local}. For nonconvex-concave setting, to our best knowledge, Rafique et al~\cite{rafique2018non} is the pioneer to propose provable nonconvex-concave minimax algorithm, where they proposed Proximally Guided Stochastic Mirror Descent Method, which achieves $O({\epsilon^{-6}})$ gradient complexity to find stationary point. Nouiehed et al~\cite{nouiehed2019solving} presented a double-loop algorithm to solve nonconvex-concave with constraint on both $\bm{x}$ and $\bm{y}$, and achieved $O(\epsilon^{-7})$ rate. Lin et al~\cite{lin2020gradient} provided the first analysis of the classic algorithm (S)GDA on nonconvex-strongly-concave and nonconvex-concave functions, and in nonconvex-concave setting they achieve $O({\epsilon^{-6}})$ for GDA and $O({\epsilon^{-8}})$ for SGDA. Zhang et al~\cite{zhang2020single} proposed smoothed-GDA and also achieve $O(\epsilon^{-8})$ rate.~\citet{thekumparampil2019efficient} proposed Proximal Dual Implicit Accelerated Gradient method and achieved the best known rate $O({\epsilon^{-3}})$ for nonconvex-concave problem.~\citet{kong2019accelerated} proposed an accelerated inexact proximal point method and also achieve $O({\epsilon^{-3}})$ rate. Lin et al~\cite{lin2020near} designed 
near-optimal algorithm using an acceleration  method with $O(\epsilon^{-3})$ rate.  However, their algorithms require  double or triple loops and are not as easy to implement as GDA, OGDA, or EG methods. 

\noindent\textbf{Nonconvex-nonconcave.}~Minimax optimization problems can be cast as one of the special cases of variational inequality problems (VIPs)~\cite{abernethy2021last,loizou2020stochastic}. Thus, one way of studying the convergence in Nonconvex-nonconcave problems is to leverage some variants of Variational Inequality properties such as Monotone variational inequality, Minty variational inequality (MVI), weak MVI, and negative comonotone, which are weaker assumptions compared to convex-concave problems. For instance, ~\citet{loizou2021stochastic} showed the linear convergence of SGDA under expected
co-coercivity, a condition that potentially holds for the non-monotone problem. Moreover, it has been shown that deterministic EG obtains gradient complexity of $O(\frac{1}{\epsilon^2})$ for the aforementioned settings~\cite{dang2015convergence,diakonikolas2021efficient,song2021optimistic,pethick2022escaping}.   Alternatively, another line  of works established the convergence under the weaker notions of strong convexity such as the Polyak-Łojasiewicz (PL) condition, or $\rho$-weakly convex. Yang et al~\cite{yang2020global} established the linear convergence of the AGDA algorithm assuming the two-sided PL condition. Hajizadeh et al~\cite{hajizadeh2022linear} achieved the same results for EG under the weakly-convex, weakly-concave assumption.

\section{Problem setup and
preliminaries}
We use lower-case boldface letters such as $\vx$ to denote vectors and let $\|\cdot\|$ denote the $\ell_2$-norm of vectors. In 
Problem~\ref{eqn:minimax}, we refer to $\vx$ as the primal variable and to $\vy$ as the dual variable. For a function $f:\mathbb{R}^m \times \mathbb{R}^n \to \mathbb{R}$, we use $\nabla_x f(\vx , \vy)$ to denote  the gradient of $f(\vx,\vy)$ with respect to primal variable $\vx$, and $\nabla_y f(\vx, \vy)$ to denote the gradient of $f(\vx,\vy)$ with respect to dual variable $\vy$. In stochastic setting, we let $\vg_{x,t}$ to be the unbiased estimator of $\nabla_x f(\vx_t,\vy_t)$, computed by a minibatch of size $M_x$ and $\vg_{y,t}$ to be the unbiased estimator of $\nabla_y f(\vx_t,\vy_t)$, computed by a minibatch of size $M_y$, where $\vx_t$ and $\vy_t$ are the $t$th iterates of the algorithms. Particularly, $ \vg_{x,t} = \frac{1}{M_x}  \sum _{i=1}^{M_x} \nabla_x f(\vx_t,\vy_t , \xi^x_{t,i}) $, and $\vg_{y,t} = \frac{1}{M_y} \sum_{i=1}^{M_y} \nabla_y f(\vx_t,\vy_t, \xi^y_{t,i})$, where $\{\xi^x_{t,i}\}_{i=1}^{M_x}$, and $\{\xi^y_{t,i}\}_{i=1}^{M_y}$ are i.i.d  minibatch samples  utilized to compute stochastic gradients at each iteration $t \in \{1, \dots ,T\}$. 

\begin{definition}[Primal Function]
We introduce $\Phi(\vx) = \max_{\vy} f(\vx,\vy)$ as the primal function, and define $\vy^*(\vx) = \arg \max_{\vy \in \mathcal{Y}} f(\vx,\vy)$ as the optimal dual variable at a point $\vx$. 
\end{definition}
\begin{definition}[Smoothness]
 A function $f(\vx,\vy)$ is $\ell$-smooth in both $\vx$, and $\vy$, if it is differentiable, and the following inequalities hold: $
         \|\nabla f(\vx_1, \vy_1) - \nabla f(\vx_2 , \vy_2 ) \|^2  \le \ell^2 \|\vx_1 - \vx_2 \|^2 + \ell^2 \|\vy_1 - \vy_2 \|^2.$
 \end{definition}
 \begin{definition}
 A function $g$ is $\mu$-strongly-convex, if for any $\vx_1,\vx_2 \in \mathbb{R}^d$ the following holds: $
g ( \vx_2 ) \ge g(\vx_1) + \ip{\nabla g(\vx_1) , \vx_2 - \vx_1} + \frac{\mu}{2} \|\vx_1 - \vx_2 \|^2$.
 \end{definition}
\begin{definition}
We say  $\vx$ is is an $\epsilon$-stationary point for a differentiable function $\Phi$ if $\|\nabla  \Phi(\vx)\| \le \epsilon$.
\end{definition}
We note that $\epsilon$-stationary point is a common optimality criterion used in the NC-SC setting. As pointed out in~\cite{lin2020gradient}, considering $\Phi(\bm{x})$ as convergence measure is natural since in many application scenarios, we mainly care about the value of the objective $f(\bm{x},\bm{y})$ under the maximized $\bm{y}$, e.g., adversarial training or distributionally robust learning.
 
When $f(\bm{x},\bm{y})$ is merely concave in $\bm{y}$, $\Phi(\bm{x})$ could be non-differentiable. Hence, following the routine of nonsmooth nonconvex minimization~\cite{davis2019stochastic}, we consider the following Moreau envelope function:
\begin{definition}[Moreau envelope] A function $\Phi_{p} (\bm{x})$ is the $p$-Moreau envelope of a function $\Phi$ if $    \Phi_{p} (\bm{x}) := \min_{\bm{x}'\in \mathbb{R}^{d}} \{ \Phi  (\bm{x}') + \frac{1}{2p}\|\bm{x}'-\bm{x}\|^2\}$. \end{definition}
We will utilize the  following property of the Moreau envelope of a nonsmooth function:
\begin{lemma}[\citet{davis2019stochastic}]\label{lemma: moreau}
Let $\hat{\bm{x}} = \arg\min_{\bm{x}'\in \mathbb{R}^d} \Phi  (\bm{x}') + \frac{1}{2p}\|\bm{x}'-\bm{x}\|^2$, then the following inequalities hold:
$\|\hat{\bm{x}} -  {\bm{x}}\| \leq p\|\nabla \Phi_p(\bm{x})\|$, $\min_{\bm{v}\in\partial\Phi(\hat{\bm{x}})} \|\bm{v}\| \leq \|\nabla \Phi_p(\bm{x})\|$.
\end{lemma}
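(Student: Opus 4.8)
The plan is to recover both inequalities from the standard differentiability theory of the Moreau envelope of a weakly convex function. First I would record the structural fact that makes $\Phi_p$ well behaved: $\Phi$ is $\rho$-weakly convex with $p<1/\rho$ (in our NC-C setting $\Phi(\bm{x})=\max_{\bm{y}\in\mathcal{Y}}f(\bm{x},\bm{y})$ is $\ell$-weakly convex, being a pointwise maximum of the $\ell$-smooth, hence $\ell$-weakly convex, maps $\bm{x}\mapsto f(\bm{x},\bm{y})$, and one takes $p<1/\ell$). Then the inner objective $\bm{x}'\mapsto\Phi(\bm{x}')+\frac{1}{2p}\|\bm{x}'-\bm{x}\|^2$ is $(\tfrac1p-\rho)$-strongly convex, so the minimizer $\hat{\bm{x}}$ exists and is unique and $\Phi_p(\bm{x})$ is well defined.

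The key step is the gradient identity $\nabla\Phi_p(\bm{x})=\tfrac1p(\bm{x}-\hat{\bm{x}})$. Expanding the square, $\Phi_p(\bm{x})=\frac{1}{2p}\|\bm{x}\|^2+\min_{\bm{x}'}\big\{\Phi(\bm{x}')+\frac{1}{2p}\|\bm{x}'\|^2-\tfrac1p\langle\bm{x}',\bm{x}\rangle\big\}$; for each fixed $\bm{x}'$ the bracketed expression is affine in $\bm{x}$, so the minimum over $\bm{x}'$ is concave in $\bm{x}$, and since the argmin is unique, Danskin's theorem makes it differentiable with gradient $-\tfrac1p\hat{\bm{x}}$. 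Adding the contribution of $\frac{1}{2p}\|\bm{x}\|^2$ gives $\nabla\Phi_p(\bm{x})=\tfrac1p(\bm{x}-\hat{\bm{x}})$, from which the first inequality follows at once, in fact with equality: $\|\hat{\bm{x}}-\bm{x}\|=p\|\nabla\Phi_p(\bm{x})\|$.

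For the second inequality I would invoke the first-order optimality condition of the inner problem. Since the quadratic penalty is $C^1$, the (limiting) subdifferential sum rule holds with equality, so optimality of $\hat{\bm{x}}$ reads $0\in\partial\Phi(\hat{\bm{x}})+\tfrac1p(\hat{\bm{x}}-\bm{x})$, i.e.\ $\tfrac1p(\bm{x}-\hat{\bm{x}})\in\partial\Phi(\hat{\bm{x}})$. The left-hand side equals $\nabla\Phi_p(\bm{x})$ by the identity just established, hence $\min_{\bm{v}\in\partial\Phi(\hat{\bm{x}})}\|\bm{v}\|\le\|\nabla\Phi_p(\bm{x})\|$, as claimed.

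The main obstacle is the rigorous justification of the differentiability of $\Phi_p$ and of the gradient formula for a merely weakly convex $\Phi$: one must verify that $p$ is small enough for the proximal subproblem to be strongly convex, apply Danskin's theorem to the decomposed concave part, and work with the appropriate (Fréchet/limiting) subdifferential so that the additive sum rule used above is an exact identity rather than a one-sided inclusion. The remaining manipulations are routine.
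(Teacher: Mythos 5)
The paper does not prove this lemma; it is cited verbatim from Davis and Drusvyatskiy, so there is no in-paper proof to compare against. Your reconstruction is correct and follows the standard argument: you establish (i) the gradient identity $\nabla\Phi_p(\bm{x})=\tfrac1p(\bm{x}-\hat{\bm{x}})$ by decomposing $\Phi_p$ into $\frac{1}{2p}\|\bm{x}\|^2$ plus an infimum of affine functions of $\bm{x}$ (concave, differentiable because the minimizer is unique under $p<1/\rho$) and (ii) the optimality inclusion $\tfrac1p(\bm{x}-\hat{\bm{x}})\in\partial\Phi(\hat{\bm{x}})$ via the exact sum rule with the $C^1$ quadratic; combining the two gives both inequalities, the first as an equality. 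Two minor points worth pinning down if you write this out in full: you should state explicitly that the paper's $\Phi(\bm{x})=\max_{\bm{y}\in\mathcal Y}f(\bm{x},\bm{y})$ is $\ell$-weakly convex only because $\mathcal Y$ is bounded (so the supremum is finite) and each $f(\cdot,\bm{y})$ is $\ell$-smooth, and that the paper's choice $p=1/(2\ell)$ indeed satisfies $p<1/\ell$; and the Danskin step needs the observation that the inner minimization is effectively over a bounded set (by coercivity of the strongly convex proximal objective), since the textbook Danskin hypotheses assume a compact index set. Otherwise the argument is sound and matches the source reference.
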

Lemma~\ref{lemma: moreau} suggests that, if we can find a $\bm{x}$ with a small $\| \nabla \Phi_p(\bm{x})\|$, then ${\bm{x}}$ is near some point $\hat{\bm{x}}$ which is a near-stationary point of $\Phi$. We will use  $1/2\ell$-Moreau envelope of $\Phi$, following the setting in~\cite{lin2020gradient,rafique2021weakly}, and establish the convergence rates in terms of $\|\nabla \Phi_{1/2\ell} (\bm{x})\|$. We also define two quantities $\hat{\Delta}_\Phi = \Phi_{1/2\ell}(\bm{x}_0) - \min_{\bm{x}\in\mathbb{R}^d} \Phi_{1/2\ell}(\bm{x})$ and $\hat{\Delta}_{0} = \Phi (\bm{x}_{0}) - \min_{\bm{x}\in\mathbb{R}^d} \Phi (\bm{x})$ that appear in our convergence bounds. Before presenting our results on EG and OGDA, we briefly revisit the most related algorithm, Gradient Descent Ascent (GDA).

\newpage

\subsection{Gradient Descent Ascent (GDA) algorithm}

\begin{wrapfigure}{r}{0.47\textwidth}
	\begin{center}\small
\begin{minipage}{1\linewidth}
	\vspace{-1.2cm}
\begin{algorithm}[H]
\caption{GDA}
\label{alg:gda}
\begin{algorithmic}
\STATE\textbf{Input:} $(\vx_0,\vy_0)$, stepsizes $(\eta_x , \eta_y)$
\FOR{$t=1,2,\dots,T$}
\STATE $\vx_{t} \gets \vx_{t-1} - \eta_x \nabla_x f(\vx_{t-1},\vy_{t-1})$  ; 
\STATE $ \vy_{t} \gets \mathcal{P}_{\mathcal{Y}}( \vy_{t-1} + \eta_y \nabla_y f(\vx_{t-1} ,\vy_{t-1}))$ ;
\ENDFOR
\STATE Randomly choose $\Bar{\vx}$ from $\vx_1,\dots,\vx_T$ 
\STATE \textbf{Output:}$\Bar{\vx}$
\end{algorithmic}
\end{algorithm}
\end{minipage}
	\end{center}
	\vspace{-0.5cm}
\end{wrapfigure}

The GDA method, as detailed in Algorithm~\ref{alg:gda}, performs simultaneous gradient descent and ascent updates on primal and dual variables, respectively. This simple algorithm has been deployed extensively for minimax optimization applications such as Generative Adversarial Networks (GANs). Under Assumptions~\ref{asm:1}, and~\ref{asm:2},~\citet{lin2020gradient} established the convergence of GDA  by choosing $\eta_x = \Theta(\frac{1}{\kappa^2 \ell})$, and $\eta_y = \Theta(\frac{1}{\ell})$. In particular, they showed that deterministic GDA requires $O(\frac{\kappa^2}{\epsilon^2})$ calls to a gradient oracle, and stochastic GDA requires $O(\frac{\kappa^3}{\epsilon^4})$ calls using the minibatch size of $O(\frac{\kappa}{\epsilon^2})$ to find an $\epsilon$-stationary point of the primal function.

\subsection{Optimistic Gradient Descent Ascent (OGDA) and Extra-gradient (EG) Method}
We now turn to reviewing the algorithms we study in this paper: Optimistic GDA (OGDA) and Extra-gradient (EG) methods. To optimize Problem~(\ref{eqn:minimax}), at each iteration $t = 1, 2, \ldots, T$, OGDA performs the following updates on the primal and dual variables:
\begin{equation*}
\label{eqn:uogda}
\begin{split}
\vx_{t+1} &= \vx_t - \eta_x \nabla_x f(\vx_{t},\vy_{t}) - \eta_x ( \nabla_x f(\vx_{t},\vy_{t}) - \nabla_x f(\vx_{t-1},\vy_{t-1}))\\
\vy_{t+1} &= \cP_{\cY} \left(\vy_t + \eta_y \nabla_y f(\vx_{t},\vy_{t}) + \eta_y (\nabla_y f(\vx_{t},\vy_{t}) - \nabla_y f(\vx_{t-1},\vy_{t-1})) \right)
\end{split}
\tag{OGDA}
\end{equation*}
where correction terms (e.g. $\nabla_x f(\vx_{t},\vy_{t}) - \nabla_x f(\vx_{t-1},\vy_{t-1})$) are added to the updates of the GDA. 
EG method performs the following updates:
\begin{equation*}
\label{eqn:ueg}
\begin{aligned}[c]
\vx_{t+1/2} &= \vx_t - \eta_x \nabla_x f(\vx_{t},\vy_{t})\\
\vx_{t+1} &= \vx_t - \eta_x \nabla_x f(\vx_{t+1/2},\vy_{t+1/2})\\
\end{aligned}
\, \, ; \, \,
\begin{aligned}[c]
\vy_{t+1/2} &= \cP_{\cY}\left(\vy_t + \eta_y \nabla_y f(\vx_{t},\vy_{t}) \right)\\
\vy_{t+1} &= \cP_{\cY} \left(\vy_t + \eta_y \nabla_y f(\vx_{t+1/2},\vy_{t+1/2}) \right) \\
\end{aligned}
\tag{EG}
\end{equation*}
where the gradient at the current point is used to find a mid-point, and then  the gradient at the mid-point is used to find the next iterate.  We also consider \textit{stochastic} variants of  the two algorithms where we replace full gradients with unbiased stochastic estimations. The detailed versions of these algorithms are provided in  Algorithm~\ref{alg:ogda} , and Algorithm~\ref{alg:eg} in Appendix~\ref{app:NCSC_Upper}.

\section{Main Results}
We provide  upper bounds on the gradient complexity and iteration complexity of  OGDA and EG methods for  NC-C and NC-SC objectives in both deterministic and stochastic settings. We also show the tightness of obtained bounds for the choice of learning rates made. We will derive general stepsize-independent lower bounds in Section~\ref{sec:lower bound}.

\subsection{Nonconvex-strongly-concave minimax problems}\label{sec:ncsc}
We start by establishing the convergence of deterministic OGDA/EG in the NC-SC setting by making the following standard assumption on the loss function. 
\begin{assumption}
\label{asm:1}
We assume  $f:\mathbb{R}^m \times \mathbb{R}^n \to \mathbb{R}$ is $\ell$-smooth, and $f(\vx,.)$ is $\mu$-strongly-concave. 
\end{assumption}
Moreover, we assume the initial primal optimality gap is bounded. i.e., $\Delta_\Phi = \max ( \Phi(\vx_1) , \Phi(\vx_0)) - \min_x \Phi(\vx)$.
\begin{theorem}
\label{thm:ncsc_ogda}
Let $\Bar{\vx}$ be output of OGDA/EG algorithms and choose $\eta_x \le  \frac{c_1}{\kappa^2 \ell}$, $\eta_y = \frac{c_2}{\ell}$. For OGDA, let $c_1 = \frac{1}{50}, c_2=\frac{1}{6}$, and for EG, let $c_1 = \frac{1}{75}, c_2=\frac{1}{4}$. Then under Assumption~\ref{asm:1}, OGDA/EG converges to an $\epsilon$-stationary point, i.e., $\|\nabla \Phi(\bar{\bm{x}})\|^2 \leq \epsilon^2$, with iteration number $T$ bounded by: 
\begin{equation*}
     O \left(  \frac{ \kappa^2 \ell \Delta_\Phi +        \kappa \ell^2 D_0 }{\epsilon^2}  \right),
\end{equation*}
where $D_0 = \max \left ( \|\vx_1 - \vx_0\|^2 , \|\vy_1 - \vy_0\|^2 , \|\vy_1 - \vy^*_1\|^2 , \|\vy_0 - \vy^*_0\|^2\right )$.
\end{theorem}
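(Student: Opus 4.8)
The plan is to follow the standard single-loop analysis template pioneered by \citet{lin2020gradient} for GDA, but adapted to the single-call extra-gradient structure shared by OGDA and EG. The key object is the primal function $\Phi(\vx) = \max_{\vy} f(\vx,\vy)$, which under Assumption~\ref{asm:1} is $L_\Phi$-smooth with $L_\Phi = \ell + \kappa\ell = O(\kappa\ell)$ (Danskin-type argument), and whose gradient is $\nabla\Phi(\vx) = \nabla_x f(\vx,\vy^*(\vx))$ with $\vy^*(\cdot)$ being $\kappa$-Lipschitz. The first step is to write the $L_\Phi$-smoothness descent inequality along the primal iterates: $\Phi(\vx_{t+1}) \le \Phi(\vx_t) + \ip{\nabla\Phi(\vx_t), \vx_{t+1}-\vx_t} + \frac{L_\Phi}{2}\|\vx_{t+1}-\vx_t\|^2$. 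Substituting the OGDA (resp.\ EG) update for $\vx_{t+1}-\vx_t$ and adding/subtracting $\nabla_x f(\vx_t,\vy_t)$ turns the cross term into a combination of $-\eta_x\|\nabla_x f(\vx_t,\vy_t)\|^2$, an error term $\eta_x\ip{\nabla_x f(\vx_t,\vy_t), \nabla\Phi(\vx_t)-\nabla_x f(\vx_t,\vy_t)}$ controlled by $\ell\|\vy_t-\vy^*(\vx_t)\|$, plus the "optimism/correction" term involving the gradient difference $\nabla_x f(\vx_t,\vy_t)-\nabla_x f(\vx_{t-1},\vy_{t-1})$, which by $\ell$-smoothness is bounded by $\ell(\|\vx_t-\vx_{t-1}\| + \|\vy_t-\vy_{t-1}\|)$.

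The second, and main, step is to control the dual suboptimality $\delta_t := \|\vy_t - \vy^*(\vx_t)\|^2$ by a one-step contraction inequality. Because $f(\vx,\cdot)$ is $\mu$-strongly concave, a projected (single-call) gradient ascent step with $\eta_y = \Theta(1/\ell)$ contracts the distance to $\vy^*(\vx_t)$ at rate $(1-\Theta(\mu\eta_y)) = (1 - \Theta(1/\kappa))$, up to (i) an error from the correction/extrapolation term, again $O(\ell^2(\|\vx_t-\vx_{t-1}\|^2+\|\vy_t-\vy_{t-1}\|^2))$, and (ii) the drift of the moving target, $\|\vy^*(\vx_{t+1})-\vy^*(\vx_t)\|^2 \le \kappa^2\|\vx_{t+1}-\vx_t\|^2 = O(\kappa^2\eta_x^2 \cdot \text{grad}^2)$. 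One then designs a potential function of the form $V_t = \Phi(\vx_t) + \lambda\,\delta_t + \gamma(\|\vx_t-\vx_{t-1}\|^2+\|\vy_t-\vy_{t-1}\|^2)$ with carefully chosen constants $\lambda = \Theta(\kappa\ell)$ and $\gamma = \Theta(\kappa\ell^2\cdot\eta_x)$ (the last block absorbing the single-call correction terms, which is the distinctive feature of OGDA/EG versus GDA — this is why $D_0$ enters only through an additive $\|\vx_1-\vx_0\|^2,\|\vy_1-\vy_0\|^2$ term rather than the full primal gap). The goal is to show $V_{t+1} \le V_t - c\,\eta_x\|\nabla_x f(\vx_t,\vy_t)\|^2 - c'\ell\,\delta_t + (\text{telescoping correction increments})$, which requires the constants $c_1 \le \frac{1}{50}$ (OGDA) / $\frac{1}{75}$ (EG) and $c_2 = \frac16$ / $\frac14$ precisely so that all the $\mu\eta_y\delta_t$ gain dominates the $\kappa^2\eta_x^2$ target-drift and correction losses. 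Relating $\|\nabla_x f(\vx_t,\vy_t)\|^2$ back to $\|\nabla\Phi(\vx_t)\|^2$ via $\|\nabla\Phi(\vx_t)\|^2 \le 2\|\nabla_x f(\vx_t,\vy_t)\|^2 + 2\ell^2\delta_t$ closes the loop.

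The final step is to telescope $V_t$ from $t=1$ to $T$: the correction increments form a telescoping sum bounded by the initial displacement terms in $D_0$, the $\lambda\delta_1$ initial term contributes $\Theta(\kappa\ell)\cdot(\|\vy_1-\vy^*_1\|^2)$, and $\Phi(\vx_1)-\min\Phi \le \Delta_\Phi$. Dividing by $T$ and by the per-step coefficient $\Theta(\eta_x) = \Theta(1/(\kappa^2\ell))$ yields $\frac1T\sum_t \|\nabla\Phi(\vx_t)\|^2 \le O\!\left(\frac{\kappa^2\ell\Delta_\Phi + \kappa\ell^2 D_0}{T}\right)$, so that the randomly selected $\bar\vx$ satisfies $\E\|\nabla\Phi(\bar\vx)\|^2 \le \epsilon^2$ once $T = O\!\left(\frac{\kappa^2\ell\Delta_\Phi+\kappa\ell^2 D_0}{\epsilon^2}\right)$. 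I expect the main obstacle to be the bookkeeping of the single-call correction term: unlike the two-call EG of Mokhtari et al., here one must simultaneously handle the primal error $\nabla\Phi(\vx_t)-\nabla_x f(\vx_t,\vy_t)$, the moving-target drift of $\vy^*$, and the extrapolation error, and verify that the three-block potential's constants can be chosen consistently — this is exactly where the explicit small constants $c_1,c_2$ in the statement come from, and getting a clean unified argument covering both OGDA and EG (which differ in how the correction term relates to past vs.\ half-step gradients) is the delicate part.
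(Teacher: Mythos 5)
Your high-level architecture — a smoothness-descent inequality on $\Phi$ driven by the primal update, a linear-rate dual contraction with a slack from the moving target and the optimistic correction, and a step-size choice making the slack absorbable — is the same blueprint the paper follows, and your use of $\kappa\ell$-smoothness of $\Phi$, $\kappa$-Lipschitzness of $\vy^*$, and $\ell$-smoothness of $f$ is exactly the machinery invoked in Lemmas~\ref{lemma:3}--\ref{lemma:seg2}. For EG your plan is essentially what the paper does in Lemma~\ref{lemma:seg1}: a direct one-step contraction of $\delta_t = \|\vy_t - \vy^*(\vx_t)\|^2$ with rate $1 - \Theta(1/\kappa)$ plus an $O(\eta_x^2\kappa^3)\|\vg_{x,t}\|^2$ remainder.

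The gap is in the OGDA case. You assert that the OGDA dual step ``contracts the distance to $\vy^*(\vx_t)$ at rate $(1-\Theta(\mu\eta_y))$, up to (i) an error from the correction/extrapolation term,'' and then put $\delta_t$ directly into a three-block Lyapunov $V_t = \Phi + \lambda\delta_t + \gamma(\cdots)$. This is precisely the step that the paper identifies (Section~\ref{sec:discussion}) as \emph{not} extending from GDA to OGDA: when you expand $\|\vy_{t+1} - \vy^*(\vx_t)\|^2$ with $\vy_{t+1} = \vy_t + 2\eta_y\nabla_y f(\vx_t,\vy_t) - \eta_y\nabla_y f(\vx_{t-1},\vy_{t-1})$, the cross term $\ip{\vy_t - \vy^*(\vx_t),\,\nabla_y f(\vx_t,\vy_t)-\nabla_y f(\vx_{t-1},\vy_{t-1})}$ cannot be Young-split without either destroying the $\Theta(1/\kappa)$ contraction margin or producing an $O(\kappa)$ blowup on the gradient-difference term; your $\gamma$-block does not have the right structure to absorb it. The paper resolves this by passing to the past-extra-gradient (PEG) interpretation: it introduces the ghost iterate $\vz_t = \vy_{t-1} + \eta_y(\vg_{y,t-1}-\vg_{y,t-2})$, for which the update $\vz_{t+1} = \vz_t + \eta_y\vg_{y,t}$ is a plain ascent step on a stale gradient, and proves contraction of the modified potential $\vr_t = \|\vz_{t+1}-\vy^*_t\|^2 + \tfrac14\|\vy_t-\vy_{t-1}\|^2$ (Lemma~\ref{lemma:2}); a separate argument (Lemma~\ref{lemma:4}) then converts $\sum_i\vr_i$ into a bound on $\sum_i\delta_i$ that feeds the primal descent. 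Without the ghost iterate your per-step potential inequality $V_{t+1}\le V_t - c\eta_x\|\cdots\|^2 - c'\ell\delta_t + (\text{telescoping})$ would not close for OGDA, and the constants $c_1=1/50$, $c_2=1/6$ are tuned in the paper precisely so that the $\vg_i$-coefficients from Lemma~\ref{lemma:2}/\ref{lemma:4} are dominated by the negative $\|\vg_{t-1}\|^2$ term in the primal descent, not so that a naive $\delta_t$-potential contracts. In short: right scaffolding and right intuition about where the difficulty lies, but the concrete device that makes the OGDA dual contraction go through — the $\vz_t$/$\vr_t$ machinery — is missing, and a $\delta_t$-based potential will not supply it.
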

To establish the convergence rate in stochastic setting, we will make the following
 assumption on the stochastic gradient oracle.
\begin{assumption}
\label{asm:2}
Let $\nabla_x f(\vx,\vy, \xi^x)$ and $\nabla_y f(\vx,\vy , \xi^y)$ to be the unbiased estimator of the $\nabla_x f(\vx,\vy)$ and $\nabla_y f(\vx,\vy)$, respectively. Then, the stochastic gradient oracle satisfies the following:
\begin{compactitem}
    \item Unbiasedness:    $\E_{\xi^x}\left[\nabla_x f(\vx,\vy, \xi^x)\right] = \nabla_x f(\vx,\vy)$ and 
    $\E_{\xi^y}\left[\nabla_y f(\vx,\vy , \xi^y) \right] = \nabla_y f(\vx,\vy)$.
\item Bounded variance: $\E_{\xi^x} \left [\|\nabla_x f(\vx,\vy, \xi^x) - \nabla_x f(\vx,\vy)\|^2 \right] \le  \sigma^2$ and $
    \E_{\xi^y}\left[\|\nabla_y f(\vx,\vy , \xi^y) - \nabla_y f(\vx,\vy)\|^2\right]\le \sigma^2$.
\end{compactitem}
\end{assumption}
We now turn to establishing the convergence rate in stochastic setting.
\begin{theorem}
\label{thm:ncsc_sogda}
Let $\Bar{\vx}$ be output of stochastic OGDA/EG algorithms and let $\eta_x$ and $\eta_y$ to be chosen as in Theorem~\ref{thm:ncsc_ogda}. For EG, choose minibatch size $M = \max\left\{1, \frac{\kappa \sigma^2}{\epsilon^2}\right\}$, and for OGDA choose primal minibatch size $M_x = \max\{1 , \frac{\sigma^2}{\epsilon^2}\}$, and dual minibatch size $M_y = \max\{1,\frac{\kappa \sigma^2}{\epsilon^2}\}$. Then under Assumptions~\ref{asm:1}, and~\ref{asm:2}, OGDA/EG converges to an $\epsilon$-stationary point, i.e., $ \mathbb{E}\|\nabla \Phi(\bar{\bm{x}})\|^2 \leq \epsilon^2$, with the iteration number $T$ bounded by: 
\begin{equation*}
     O \left(  \frac{ \kappa^2 \ell \Delta_\Phi +        \kappa \ell^2 D_0 }{\epsilon^2}  \right),
\end{equation*}
where $D_0 = \max \left ( \|\vx_1 - \vx_0\|^2 , \|\vy_1 - \vy_0\|^2 , \|\vy_1 - \vy^*_1\|^2 ,\|\vy_0 - \vy^*_0\|^2\right )$.
\end{theorem}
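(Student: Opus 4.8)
\textbf{Proof proposal for Theorem~\ref{thm:ncsc_sogda}.}

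The plan is to mimic the deterministic analysis of Theorem~\ref{thm:ncsc_ogda} but with a careful bookkeeping of stochastic error terms, exploiting the minibatch sizes to drive the variance contribution below the target $\epsilon^2$. The backbone is the standard NC-SC potential/Lyapunov argument: track (i) the descent of the primal function $\Phi$ along $\vx_t$, using smoothness of $\Phi$ (it is $L_\Phi = \ell(1+\kappa)$-smooth under Assumption~\ref{asm:1}), and (ii) the contraction of the dual tracking error $\delta_t := \|\vy_t - \vy^*(\vx_t)\|^2$, which is where $\mu$-strong concavity enters and generates the $\kappa$-dependence. Combining these into a single potential $V_t = \Phi(\vx_t) + c\,\delta_t$ for a suitable constant $c$, one shows $\E[V_{t+1}] \le \E[V_t] - \Omega(\eta_x)\E\|\nabla\Phi(\vx_t)\|^2 + (\text{stochastic error})$. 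Telescoping over $t=1,\dots,T$, dividing by $T\eta_x$, and using that $\bar{\vx}$ is drawn uniformly gives the bound $\E\|\nabla\Phi(\bar{\vx})\|^2 \lesssim \frac{V_1 - \min V}{T\eta_x} + (\text{per-step stochastic error})$. Plugging $\eta_x = \Theta(1/(\kappa^2\ell))$ converts $\frac{V_1-\min V}{T\eta_x}$ into the claimed $O\big((\kappa^2\ell\Delta_\Phi + \kappa\ell^2 D_0)/T\big)$, the $D_0$ term arising because OGDA/EG carry a one-step memory so the potential must absorb the initial differences $\|\vx_1-\vx_0\|^2$, $\|\vy_1-\vy_0\|^2$ and the initial dual errors.

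The steps, in order, are: (1) write the OGDA (resp.\ EG) primal update as a perturbed gradient step on $\Phi$, decomposing the displacement into the true gradient $\nabla\Phi(\vx_t) = \nabla_x f(\vx_t,\vy^*(\vx_t))$, a bias term $\nabla_x f(\vx_t,\vy_t) - \nabla_x f(\vx_t,\vy^*(\vx_t))$ controlled by $\ell\sqrt{\delta_t}$, the OGDA correction term $\eta_x(\nabla_x f(\vx_t,\vy_t) - \nabla_x f(\vx_{t-1},\vy_{t-1}))$ (or, for EG, the midpoint-vs-current gradient gap) which is $O(\eta_x\ell)$ times consecutive iterate movements, and the zero-mean stochastic noise; (2) apply $L_\Phi$-smoothness to get the primal descent inequality, taking conditional expectations so the noise cross-terms vanish and leave only a $\frac{\eta_x^2 L_\Phi}{M_x}\sigma^2$ (for OGDA primal) variance residual; (3) derive the dual recursion: using the OGDA/EG dual update, non-expansiveness of $\cP_\cY$, strong concavity, and the fact that $\vy^*(\cdot)$ is $\kappa$-Lipschitz, show $\E[\delta_{t+1}] \le (1 - \Theta(\eta_y\mu))\E[\delta_t] + O(\kappa^2)\E\|\vx_{t+1}-\vx_t\|^2 + O(\eta_y^2)\sigma^2/M_y$ plus correction-term residuals; (4) choose the potential coefficient $c \asymp \ell\kappa$ (as in the deterministic proof) so the $O(\kappa^2)\|\vx_{t+1}-\vx_t\|^2$ feedback from the dual is dominated by the primal descent's negative term, using $\eta_x \le c_1/(\kappa^2\ell)$ to make the movement terms small; (5) telescope, and finally (6) substitute the prescribed minibatch sizes — $M = \max\{1,\kappa\sigma^2/\epsilon^2\}$ for EG, and the asymmetric $M_x=\max\{1,\sigma^2/\epsilon^2\}$, $M_y=\max\{1,\kappa\sigma^2/\epsilon^2\}$ for OGDA — to check that every variance residual ($\eta_x^2 L_\Phi\sigma^2/M_x$ in the primal, $c\eta_y^2\sigma^2/M_y$ in the dual, and the correction-term variance) is at most $O(\epsilon^2\eta_x)$ per step, hence contributes $O(\epsilon^2)$ after dividing by $T\eta_x$; this is exactly why the dual batch needs the extra $\kappa$ factor while the OGDA primal batch does not, and why $T$ stays at the deterministic $O(\kappa^2\ell\Delta_\Phi/\epsilon^2 + \kappa\ell^2 D_0/\epsilon^2)$ rather than degrading.

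The main obstacle I anticipate is controlling the correction/extrapolation terms in the stochastic regime. In deterministic OGDA the term $\eta_x(\nabla_x f(\vx_t,\vy_t)-\nabla_x f(\vx_{t-1},\vy_{t-1}))$ is bounded by $\eta_x\ell$ times $\|\vx_t-\vx_{t-1}\| + \|\vy_t-\vy_{t-1}\|$, which telescopes cleanly; but with stochastic gradients the analogous difference involves noise at \emph{two} consecutive steps and is not conditionally mean-zero given only the step-$t$ filtration, so one must either carry a second auxiliary term in the potential measuring $\|\vx_t-\vx_{t-1}\|^2 + \|\vy_t-\vy_{t-1}\|^2$ (and its expected stochastic inflation) or re-center the correction term and absorb the residual $\eta_x^2\ell^2\sigma^2/M$ into the budget. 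Getting the constants to close — i.e.\ verifying that the specific $c_1,c_2$ from Theorem~\ref{thm:ncsc_ogda} leave enough slack to swallow these additional stochastic residuals without changing the learning rates — is the delicate part; the unified single-call extra-gradient viewpoint alluded to in the introduction is presumably what lets OGDA and EG be handled by the same recursion, differing only in the numerical constants and in whether the batch is split across the two variables.
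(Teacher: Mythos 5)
Your high-level plan — smoothness of $\Phi$ for the primal descent, a contraction for the dual tracking error, telescoping, then sizing the minibatches so that each variance residual is $O(\epsilon^2\eta_x)$ per step — is the same skeleton the paper uses, and your accounting of why $M_y$ needs an extra factor of $\kappa$ relative to $M_x$ for OGDA is directionally right. But the step you yourself flag as ``the delicate part'' is exactly where the paper does something you have not supplied, and this is a genuine gap rather than a detail. For OGDA, a direct recursion on $\delta_t = \|\vy_t - \vy^*(\vx_t)\|^2$ does not close, even after appending $\|\vy_t - \vy_{t-1}\|^2$ to the potential: the OGDA dual step is $\vy_{t+1} = \vy_t + 2\eta_y \vg_{y,t} - \eta_y \vg_{y,t-1}$, and the cross term $\langle \vy_t - \vy^*(\vx_t), \vg_{y,t-1}\rangle$ involves a gradient evaluated at $(\vx_{t-1},\vy_{t-1})$ paired with a residual measured relative to $\vy^*(\vx_t)$, so strong concavity gives a contraction for the \emph{wrong} anchor and the one-step difference terms do not telescope. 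The paper escapes this by reinterpreting OGDA as Past Extra-Gradient: it introduces the auxiliary iterate $\vz_{t+1} = \vy_t + \eta_y(\vg_{y,t} - \vg_{y,t-1})$ (so that $\vz_{t+1} - \vz_t = \eta_y\vg_{y,t}$ is a plain lagged gradient step) and proves a one-step contraction for $\vr_t = \|\vz_{t+1} - \vy^*_t\|^2 + \tfrac{1}{4}\|\vy_t - \vy_{t-1}\|^2$ (Lemma~\ref{lemma:2}), then relates $\sum_i \|\vy_i - \vy^*_i\|^2$ back to $\sum_i \vr_i$ by a separate bound (Lemma~\ref{lemma:4}). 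Swapping $\|\vy_t - \vy^*_t\|^2$ for $\|\vz_{t+1} - \vy^*_t\|^2$ is the essential move you are missing; merely carrying $\|\vy_t - \vy_{t-1}\|^2$ in the potential, as you propose, is not enough.

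A secondary structural difference: the paper never forms the combined Lyapunov $V_t = \Phi(\vx_t) + c\,\delta_t$. Instead it sums the primal descent inequality (Lemma~\ref{lemma:3}) over $t$, substitutes the accumulated dual-error bounds, and relies on the \emph{negative} coefficient of $\sum_i\E\|\vg_i\|^2$ from the primal descent to absorb the positive $\eta_x^2\kappa^4\ell^2\sum_i\E\|\vg_i\|^2$ feedback coming out of the dual recursion; this closes precisely because $\eta_x \le c_1/(\kappa^2\ell)$ forces $\eta_x^2\kappa^4\ell^2 \lesssim 1$. This ``sum then combine'' bookkeeping is equivalent in spirit to your potential but is what makes the constant-checking tractable. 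Finally note that for EG your direct route is in fact what the paper does (Lemma~\ref{lemma:seg1} recurses on $\|\vy_t - \vy^*_t\|^2$ without any auxiliary iterate, with a single batch size $M$); the $\vz_t$ device is needed only for OGDA because of the one-step memory in its update, and the $\Delta_\Phi = \max(\Phi(\vx_1),\Phi(\vx_0)) - \min\Phi$ definition reflects that the OGDA telescope starts at $t=1$ rather than $t=0$.
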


The proofs of Theorems~\ref{thm:ncsc_ogda} and~\ref{thm:ncsc_sogda} are deferred to Appendix~\ref{app:NCSC_Upper}. Our iteration complexity matches with the complexity of two-scale GDA obtained in~\cite{lin2020gradient}. However, we improve primal gradient oracle complexity for OGDA by a factor of $\kappa$ as our analysis works for smaller primal batch size $M_x$ compared to GDA~\cite{lin2020gradient}. This paper establishes primal gradient oracle complexity of $O(\frac{\kappa^2}{\epsilon^4})$, while the analysis for GDA in~\cite{lin2020gradient}, requires gradient oracle complexity of $O(\frac{\kappa^3}{\epsilon^4})$ for primal variable. 

In previous theorems, we established upper bounds on the convergence of OGDA and EG algorithms. In the following results, we turn to examining the tightness of obtained rates. To this end, we first consider a simple GDA algorithm and will extend the analysis to OGDA/EG. Note that in this section, we only consider the stepsize choice in our upper bound results.  

\begin{theorem}[Tightness of  GDA] \label{thm:NCSC_Tightness_GDA}

Consider GDA method (Algorithm~\ref{alg:gda}) with step sizes chosen as in Theorem $4.4$ in~\cite{lin2020gradient}, and let $\Bar{\vx}$ be the returned solution after $T$ iterations. Then, there exists a function $f(\cdot, \cdot)$ that is $\ell$-gradient Lipschitz and $\mu$-strongly concave in $\vy$, and an initialization $(\vx_0,\vy_0)$, such that Algorithm~\ref{alg:gda} requires at least $   T = \Omega\left(\frac{\kappa^2\Delta_\Phi}{\epsilon^2}\right)
 $ iterations to  guarantee $\|\nabla\Phi(\Bar{\vx})\| \leq \epsilon$.
 \label{thm:lower_bound_gda}
\end{theorem}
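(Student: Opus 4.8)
The plan is to construct an explicit hard instance showing that GDA with Lin et al.'s stepsizes ($\eta_x = \Theta(1/(\kappa^2 \ell))$, $\eta_y = \Theta(1/\ell)$) cannot converge faster than $\Omega(\kappa^2 \Delta_\Phi / \epsilon^2)$. The natural choice is a \emph{separable} function of the form $f(\vx, \vy) = g(\vx) - \tfrac{\mu}{2}\|\vy\|^2$ (or a one-dimensional variant $f(x,y) = g(x) - \tfrac{\mu}{2} y^2$), so that the dual dynamics are trivial ($\vy^*(\vx) = 0$, $\Phi(\vx) = g(\vx)$) and the primal iterate is exactly gradient descent on $g$ with the \emph{tiny} stepsize $\eta_x = \Theta(1/(\kappa^2\ell))$. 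The point is that the stepsize prescribed for GDA is forced to be small (scaled down by $\kappa^2$) purely to accommodate the worst case over all NC-SC functions, but on a purely nonconvex $g$ this small stepsize is wasteful: gradient descent with stepsize $\eta$ needs $\Omega(\frac{(\Phi(x_0)-\min\Phi)}{\eta \epsilon^2})$ iterations in the worst case to reach a point with $\|\nabla g\| \le \epsilon$, and plugging in $\eta = \Theta(1/(\kappa^2\ell))$ gives the claimed $\Omega(\kappa^2 \Delta_\Phi/\epsilon^2)$.

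The concrete steps, in order: (i) Recall or re-derive the classical lower bound for gradient descent on smooth nonconvex functions — there is a function $g$ that is $\ell'$-smooth with $g(x_0) - \min g = \Delta$ on which GD with stepsize $\eta \le c/\ell'$ produces iterates all having gradient norm bounded below until roughly $\Delta/(\eta\epsilon^2)$ steps have passed. A standard construction chains together "bump"-like pieces (e.g. a piecewise function built from scaled copies of $-\cos$ or from cubic/quadratic splines) so that each GD step makes only $\Theta(\eta \epsilon^2)$ progress in function value while the gradient norm stays $\ge \epsilon$; this is the Carmon–Duchi–Hinder–Sidford / Nesterov-style construction, and one only needs the one-dimensional version since the output is a single averaged/random iterate. (ii) Set $f(x,y) = g(x) - \tfrac{\mu}{2}y^2$ with $\mathcal{Y} = \mathbb{R}$, check that $f$ is $\ell$-smooth (choose $g$ to be $\ell$-smooth, and note the cross-terms vanish so the joint smoothness constant is $\max(\ell,\mu) = \ell$) and $\mu$-strongly concave in $y$. (iii) Observe that with $\vy_0 = 0$ the dual iterate stays at $0$ forever, hence $\Phi(x) = \max_y f(x,y) = g(x)$, $\hat{\Delta}_\Phi = \Delta$, and the $x$-update of Algorithm~\ref{alg:gda} is literally $x_{t} = x_{t-1} - \eta_x g'(x_{t-1})$. (iv) Apply the GD lower bound with $\eta = \eta_x = \Theta(1/(\kappa^2\ell))$ to conclude that $\|\Phi'(\bar x)\| = \|g'(\bar x)\| \le \epsilon$ forces $T = \Omega(\Delta/(\eta_x \epsilon^2)) = \Omega(\kappa^2 \Delta_\Phi/\epsilon^2)$.

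I would present this cleanly by isolating the GD lower bound as a lemma (citing the standard reference, e.g. Carmon et al. or Nesterov, rather than reproving it), then doing the short reduction. One subtlety to handle carefully: the theorem allows $\bar x$ to be an \emph{arbitrary} one of $x_1,\dots,x_T$ (in fact Algorithm~\ref{alg:gda} picks it randomly), so the hard instance must ensure that \emph{every} iterate up to time $\Omega(\kappa^2\Delta/\epsilon^2)$ has gradient norm exceeding $\epsilon$ — a "for all iterates" lower bound, not just "the last iterate" or "on average." The standard chained construction already gives exactly this monotone-progress behavior, so it suffices to invoke the right version. Another minor point is matching constants: the stepsize in Lin et al.'s Theorem 4.4 is $\eta_x = \Theta(1/(\kappa^2\ell))$ with specific constants, so the smoothness constant $\ell'$ of the hard function $g$ should be taken comparable to $\ell$ so that $\eta_x \le c/\ell'$ is satisfied and the bound $\Omega(\Delta/(\eta_x\epsilon^2))$ applies with the right $\kappa^2$ dependence.

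\textbf{Main obstacle.} The genuinely non-routine part is arranging a single instance (function plus initialization) that \emph{simultaneously} meets all constraints: $\ell$-smooth and $\mu$-strongly concave in $y$ with the prescribed $\kappa = \ell/\mu$, a controlled primal gap $\hat\Delta_\Phi = \Delta$, and the "every iterate is $\epsilon$-far from stationary for $\Omega(\kappa^2\Delta/\epsilon^2)$ steps" property \emph{at the specific stepsize} $\eta_x$ that the algorithm is run with. Since the reduction makes the dual trivial, essentially all the difficulty is pushed into the one-dimensional nonconvex GD lower bound, which is well established; the work is in verifying the smoothness bookkeeping of the lifted function $f(x,y)=g(x)-\tfrac\mu2 y^2$ and confirming that the chained "no-progress" construction indeed produces a lower bound valid for all iterates (not just in expectation over the random output), so that the randomly selected $\bar x$ of Algorithm~\ref{alg:gda} is still $\epsilon$-suboptimal with probability $1$ before time $\Omega(\kappa^2\Delta_\Phi/\epsilon^2)$.
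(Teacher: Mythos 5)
Your reduction to a separable function $f(x,y)=g(x)-\tfrac{\mu}{2}y^2$ with $y_0=0$, so that the dual sits at its optimum and the primal update is literally gradient descent on $g$ with stepsize $\eta_x=\Theta(1/(\kappa^2\ell))$, is a genuinely different and in fact cleaner route than what the paper does: the paper takes the coupled quadratic $f(x,y)=-\tfrac{1}{4}\ell x^2+bxy-\tfrac{1}{2}\mu y^2$ (with $b$ tuned so that $\Phi(x)=\tfrac{1}{2}\mu_x x^2$), writes GDA as a $2\times 2$ linear time-invariant system, and does a small eigenvalue computation to control $x_T$. Your decoupling makes the eigenvalue analysis unnecessary. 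The constraint-checking you flag in (ii)--(iii) is routine, and your observation that the construction must force a gradient lower bound at \emph{every} iterate up to horizon $T$ is the right thing to worry about.

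The one place where I think you would run into trouble if you wrote this out is step (i). You identify the key lemma as a ``Carmon--Duchi--Hinder--Sidford / Nesterov-style'' chained-bumps construction, but that class of lower bounds is \emph{algorithm-agnostic}: the zero-chain argument charges one new coordinate per gradient query and yields $\Omega(\ell\Delta/\epsilon^2)$ regardless of stepsize. That bound does not scale like $\Delta/(\eta\epsilon^2)$ when $\eta\ll 1/\ell$, so it cannot produce the extra $\kappa^2$ factor you need --- with $\eta_x=\Theta(1/(\kappa^2\ell))$ the CDHS bound only gives $\Omega(\ell\Delta_\Phi/\epsilon^2)$, which can be arbitrarily weaker than $\Omega(\kappa^2\ell\Delta_\Phi/\epsilon^2)$. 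What is actually needed is a \emph{GD-with-prescribed-stepsize} lower bound, and for that a much more elementary instance suffices: take $g(x)=\tfrac{\mu_x}{2}x^2$ with $\mu_x=\Theta(\epsilon^2/\Delta_\Phi)$ and $x_0=\sqrt{2\Delta_\Phi/\mu_x}$. Then $x_t=(1-\eta_x\mu_x)^t x_0$, every iterate up to $T=\Theta(1/(\eta_x\mu_x))$ satisfies $|\nabla g(x_t)|=\mu_x|x_t|>\epsilon$, and plugging in $\eta_x=\Theta(1/(\kappa^2\ell))$ and the choice of $\mu_x$ gives exactly $T=\Omega(\kappa^2\Delta_\Phi/\epsilon^2)$. (A Huber-type function with a long constant-slope-$\epsilon$ piece also works and matches your ``each step makes $\Theta(\eta\epsilon^2)$ progress'' intuition.) This is, up to the coupling, precisely the paper's hard instance --- note in particular that the worst case here is a \emph{strongly convex} $g$ with tiny curvature, not a ``purely nonconvex'' one; the slowness comes entirely from the prescribed stepsize being $\kappa^2$ times too small for the actual curvature, not from nonconvexity. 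So your intuitive description is right, but the reference and the framing of the hard $g$ as nonconvex are both off; with the quadratic (or Huber) substituted in, the separable reduction closes.
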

\begin{theorem}[Tightness of EG/OGDA] \label{thm:NCSC_Tightness_EG/OGDA}
 Consider deterministic EG and OGDA methods with step sizes chosen as in Theorem~\ref{thm:ncsc_ogda} and let $\Bar{\vx}$ be the returned solution after $T$ iterations. Then, there exists a function $f(\cdot, \cdot)$ that is $\ell$-gradient Lipschitz and $\mu$-strongly concave in $\vy$, and an initialization $(\vx_0,\vy_0)$, such that  both methods require  at least     $T = \Omega\left(\frac{\kappa^2\Delta_\Phi}{\epsilon^2}\right)
 $ iterations to guarantee $\norm{\nabla\Phi(\Bar{\vx})}\le\epsilon$. \label{thm:lower_bound_eg}
\end{theorem}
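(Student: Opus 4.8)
\textbf{Proof proposal for Theorem~\ref{thm:NCSC_Tightness_EG/OGDA}.}

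The plan is to construct a single hard instance $f(\vx,\vy)$, building on the GDA lower bound of Theorem~\ref{thm:NCSC_Tightness_GDA}, on which deterministic EG and OGDA with the prescribed step sizes $\eta_x \le c_1/(\kappa^2\ell)$, $\eta_y = c_2/\ell$ make slow progress on the primal variable. The key structural observation I would exploit is that the $\kappa^2$ in the iteration count comes entirely from the smallness of the primal stepsize $\eta_x = \Theta(1/(\kappa^2\ell))$: if the primal update moves $\vx$ by at most $O(\eta_x \cdot \text{(gradient magnitude)})$ per iteration along a direction where $\Phi$ has gradient of order $\epsilon$ over a region of size $\Omega(\sqrt{\Delta_\Phi/\ell})$, then reducing $\|\nabla\Phi\|$ below $\epsilon$ needs $\Omega(\kappa^2 \Delta_\Phi/\epsilon^2)$ steps. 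So I would take $f$ to be (essentially) independent of $\vy$ in a leading term — e.g. $f(\vx,\vy) = h(\vx) - \frac{\mu}{2}\|\vy\|^2$ for a carefully chosen $\ell$-smooth nonconvex $h$ with $\Phi = h$ — so that the dual part is trivially $\mu$-strongly concave and $\vy^*(\vx)\equiv 0$, and the whole difficulty is a pure nonconvex \emph{minimization} lower bound for the EG/OGDA primal recursion applied to $h$.

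The steps, in order: (1) Reduce to one dimension in $\vx$ and recall/construct the standard "slowly-decreasing-gradient" nonconvex function $h$ used for first-order minimization lower bounds (a function built from stitched quadratic/cubic pieces, or Carmon–Duchi–Hinder–Sidford-style, scaled so that $h$ is $\ell$-smooth, $h(x_0)-\min h = \Delta_\Phi$, and along the trajectory of any method taking steps of size $\le \eta_x\ell\,|h'|$ the gradient stays $\ge \epsilon$ for $\Omega(\kappa^2\Delta_\Phi/\epsilon^2)$ iterations). (2) Write out the EG primal recursion on $h$: $x_{t+1/2} = x_t - \eta_x h'(x_t)$, $x_{t+1} = x_t - \eta_x h'(x_{t+1/2})$, and likewise the OGDA recursion $x_{t+1} = x_t - 2\eta_x h'(x_t) + \eta_x h'(x_{t-1})$; observe that in both cases $|x_{t+1}-x_t| \le C\eta_x \sup|h'| \le C'\eta_x \ell \cdot (\text{diam})$, i.e. each iterate moves by at most $O(\eta_x)$ times a bounded quantity. (3) Since $\eta_x \le c_1/(\kappa^2\ell)$, conclude the trajectory is confined and the per-step gradient decrease is $O(1/\kappa^2)$ of what a unit-stepsize method would achieve, so $T = \Omega(\kappa^2\Delta_\Phi/\epsilon^2)$. (4) Verify the joint function $f$ satisfies Assumption~\ref{asm:1} ($\ell$-smoothness of the separable sum, $\mu$-strong concavity in $\vy$), that $\Phi = h$ so $\Delta_\Phi$ is as claimed, and that the dual iterates (which converge geometrically to $0$) do not interfere — in particular $\nabla_x f$ does not depend on $\vy$, so the dual dynamics are irrelevant to the primal lower bound. (5) Handle the randomized output $\bar\vx$ by making $h'$ monotone in magnitude along the trajectory (gradient norm stays $\ge\epsilon$ at \emph{every} iterate up to time $\Omega(\kappa^2\Delta_\Phi/\epsilon^2)$), so that no matter which $\vx_t$ is returned the bound holds.

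The main obstacle I anticipate is step (2)–(3): unlike plain GDA, EG evaluates the gradient at a shifted point and OGDA uses a two-step memory, so I must ensure the hard function $h$ is "robust" to these modifications — i.e. that the extrapolation/momentum cannot create an $O(\kappa^2)$ speedup. For EG this should be fine because the extrapolation distance $\eta_x|h'|$ is itself $O(1/(\kappa^2\ell))$-small, so $h'(x_{t+1/2}) \approx h'(x_t)$ up to a factor $(1+\eta_x\ell) = 1+O(1/\kappa^2)$, and the recursion is a small perturbation of the GDA one; I would make this quantitative via $\ell$-smoothness of $h'$. For OGDA the care needed is that the combined step $-2\eta_x h'(x_t)+\eta_x h'(x_{t-1})$ has magnitude at most $3\eta_x\sup|h'|$, still $O(\eta_x)$, and I would need a short argument (e.g. tracking $x_t$ and $x_t - x_{t-1}$ jointly, or noting $h'$ varies slowly between consecutive iterates) to show the memory term cannot accumulate into large displacement. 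Designing $h$ with piecewise-constant-magnitude derivative on the relevant scale makes all these perturbation estimates clean. The remaining parts (smoothness bookkeeping for the separable $f$, the $\bar\vx$ argument) are routine.
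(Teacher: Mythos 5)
Your separable construction $f(\vx,\vy)=h(\vx)-\tfrac{\mu}{2}\|\vy\|^2$ is a reasonable way to decouple the dual dynamics, and you correctly anticipate that the EG extrapolation and OGDA memory are the places where the argument could leak a $\kappa$-factor. But two parts of the plan do not hold up as written.

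First, the appeal to a Carmon–Duchi–Hinder–Sidford-style resisting-oracle $h$ is both unnecessary and misdirected. Those constructions give step-size-\emph{independent} lower bounds of order $\ell\Delta/\epsilon^2$; they do not by themselves produce the extra $\kappa^2$ you need, and grafting a step-size factor onto them requires a fresh argument. The paper instead uses the much simpler quadratic hard instance already introduced for GDA, $f(x,y)=-\tfrac14\ell x^2+bxy-\tfrac12\mu y^2$ with $b=\sqrt{\mu(\ell+2\mu_x)/2}$, whose primal function is $\Phi(x)=\tfrac12\mu_x x^2$. The crucial free parameter is the tiny curvature $\mu_x$, tuned to $\Theta(\epsilon^2/\Delta_\Phi)$; the contraction per step is then $\Theta(\eta_x\mu_x)=\Theta(\mu_x/(\kappa^2\ell))$, and solving $(1-\eta_x\mu_x)^T\approx\text{const}$ gives $T=\Omega(\kappa^2\ell\Delta_\Phi/\epsilon^2)$. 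This quadratic trick is where the $\Delta_\Phi/\epsilon^2$ factor comes from; your proposal never explains that factor. (Note also this $f$ is concave in $x$, so it genuinely lies inside the NC-SC class, whereas with a convex $h$ your separable $f$ is convex in $x$ — not disqualifying for a lower bound, but less convincing.)

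Second, and more seriously, step (3) of your plan is a gap. You argue ``the per-step gradient decrease is $O(1/\kappa^2)$ of what a unit-stepsize method would achieve, so $T=\Omega(\kappa^2\Delta_\Phi/\epsilon^2)$.'' That inference is not valid: bounding the movement per step only yields $T=\Omega(\kappa^2)$ by a Lipschitz-of-gradient argument, and the heuristic ``slow per-step progress $\Rightarrow$ slow overall'' does not give a lower bound on iteration count for a general nonconvex $h$ (a method could, in principle, bypass the hard region). What replaces this in the paper is an \emph{exact} analysis of the linear dynamics on the quadratic: for EG, the transition matrix is $I+\eta_x M+\eta_x^2 M^2$, whose relevant eigenvalue satisfies $s_1=1+\eta_x\lambda_1+\eta_x^2\lambda_1^2\ge 1+\eta_x\lambda_1$, so EG decays no faster than GDA; for OGDA, the two-step scalar recursion $z_{k+1}=z_k+2\eta_x\lambda_1 z_k-\eta_x\lambda_1 z_{k-1}$ is solved in closed form, its dominant root $\alpha=\tfrac12(1+2\eta_x\lambda_1+\sqrt{1+4\eta_x^2\lambda_1^2})\ge 1+\eta_x\lambda_1$ is again at least the GDA rate, and the initial conditions are used to bound the coefficients so the subdominant root cannot kill the lower bound. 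Your ``perturbation'' discussion of $h'(x_{t+1/2})\approx h'(x_t)$ and the ``memory term cannot accumulate'' claim would need to be replaced by exactly this kind of explicit recursion/eigenvalue computation; absent it, you have not established the claimed $T=\Omega(\kappa^2\Delta_\Phi/\epsilon^2)$ for either algorithm.
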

The proofs of Theorems~\ref{thm:NCSC_Tightness_GDA} and~\ref{thm:NCSC_Tightness_EG/OGDA} are deferred to Appendix~\ref{app:NCSC_Tightness_GDA} and~\ref{app:NCSC_Tightness_EG/OGDA},  respectively. Theorems~\ref{thm:NCSC_Tightness_EG/OGDA} show that to achieve $\epsilon$ stationary point of $\Phi$, EG and OGDA need at least $O(\frac{\kappa^2}{\epsilon^2})$ gradient evaluations, which match with our upper bound results (Theorems~\ref{thm:ncsc_ogda}). These impossibility results demonstrate the tightness of our analysis. It would also be interesting to see such analysis for stochastic setting, which we leave as a valuable future work.

\subsection{Nonconvex-concave minimax problems}
\label{sec:ncc}
We now turn to establishing the convergence rate of (stochastic) OGDA/EG in the NC-C setting. We make the following assumption throughout this subsection: 
\begin{assumption}
\label{asm:3}
We assume  $f:\mathbb{R}^m \times \cY \to \mathbb{R}$ is $\ell$-smooth in $\vx,\vy$,  $G$-Lipschitz in $\vx$ and $\cY$ is bounded convex set with diameter $D$, and also $f(\vx,.)$ is concave.
\end{assumption}
From the above assumption, we note when $f$ is merely concave in $\boldsymbol{y}$, we have to assume the dual variable domain is bounded since otherwise, the Moreau envelope function will not be well-defined (This is shown in Lemma $3.6$ in~\cite{lin2020gradient}). Therefore, the update rule for $\vy$ requires projection as follows:
\begin{equation*}
\label{eqn:uogda}
\begin{split}
\bm{y}_t =   \mathcal{P}_{\mathcal{Y}}\left(\bm{y}_{t-1} +  \eta_y \nabla_y f(\bm{x}_{t-1},\bm{y}_{t-1}) +\eta_y( \nabla_y f(\bm{x}_{t-1},\bm{y}_{t-1})- \nabla_y f(\bm{x}_{t-2},\bm{y}_{t-2})) \right) \quad \text{(OGDA)}
\end{split} 
\end{equation*}
\begin{equation*}
\label{eqn:uogda}
\begin{split}
\bm{y}_{t+1/2} = \mathcal{P}_{\mathcal{Y}}\left( \bm{y}_{t} + \eta_y \nabla_y f(\bm{x}_{t},\bm{y}_{t})\right), \quad 
\bm{y}_{t+1} = \mathcal{P}_{\mathcal{Y}}\left(\bm{y}_t + \eta_y \nabla_y f(\bm{x}_{t+1/2},\bm{y}_{t+1/2}) \right)\quad \text{(EG)}
\end{split} 
\end{equation*}

The following theorem establishes the convergence of OGDA/EG for NC-C objectives.

\begin{theorem}\label{thm:ncc_ogda}
 Let $\eta_x = O\left(\min\left\{\frac{\epsilon}{\ell G},\frac{\epsilon^2}{\ell G^2},\frac{\epsilon^4}{D^2 G^2 \ell^3}\right\}\right)$, and $\eta_y = \frac{1}{2\ell}$. By convention, we set $\bm{x}_{-1/2} = \bm{x}_0$, $\bm{y}_{-1/2} = \bm{y}_0$. Under Assumption~\ref{asm:3}, OGDA/EG converges to an $\epsilon$-stationary point, i.e., $\frac{1}{T+1} \sum_{t=0}^{T}  \|\nabla \Phi_{1/2\ell}(\bm{x}_{t})\|^2 \leq \epsilon^2$ for OGDA and $\frac{1}{T+1} \sum_{t=0}^{T}  \|\nabla \Phi_{1/2\ell}(\bm{x}_{t-1/2})\|^2 \leq \epsilon^2$ for EG, with the gradient complexity bounded by:
 \begin{align*}
     O\left( \frac{\ell G^2 \hat{\Delta}_{\Phi}}{\epsilon^4}\max \left\{ 1,\frac{D^2  \ell^2}{\epsilon^2}\right\}   \right).
 \end{align*}        
 \end{theorem}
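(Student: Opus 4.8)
The plan is to analyze OGDA/EG in the NC-C setting by combining the framework used for NC-SC with the Moreau-envelope machinery of Davis--Drusvyatskiy and the two-timescale idea of Lin et al.~\cite{lin2020gradient}. Since $f(\bm{x},\cdot)$ is only concave, the key potential function will be the $1/2\ell$-Moreau envelope $\Phi_{1/2\ell}$, whose gradient is $2\ell(\bm{x}_t - \hat{\bm{x}}_t)$ where $\hat{\bm{x}}_t = \arg\min_{\bm{x}'}\{\Phi(\bm{x}') + \ell\|\bm{x}'-\bm{x}_t\|^2\}$. First I would write the standard descent-type inequality for $\Phi_{1/2\ell}$ along the primal iterates: expanding $\Phi_{1/2\ell}(\bm{x}_{t+1})$, using $2\ell$-weak convexity of $\Phi$ and the explicit primal update (the GDA step plus the optimistic correction, or the mid-point step for EG), one gets a bound of the form $\Phi_{1/2\ell}(\bm{x}_{t+1}) \le \Phi_{1/2\ell}(\bm{x}_t) - c\,\eta_x\|\nabla\Phi_{1/2\ell}(\bm{x}_t)\|^2 + (\text{error terms})$, where the error terms involve $\eta_x^2 G^2\ell$ (from smoothness and the $G$-Lipschitz bound on $\nabla_x f$), the correction-term drift $\eta_x\|\nabla_x f(\bm{x}_t,\bm{y}_t) - \nabla_x f(\bm{x}_{t-1},\bm{y}_{t-1})\|$, and most importantly a \emph{dual suboptimality} term proportional to $\eta_x\ell\,\|\bm{y}_t - \bm{y}^*(\hat{\bm{x}}_t)\|$ or $\eta_x\ell\,(\Phi(\bm{x}_t) - f(\bm{x}_t,\bm{y}_t))$, because $\nabla_x f(\bm{x}_t,\bm{y}_t)$ is used in place of $\nabla\Phi(\bm{x}_t)$ and these differ by how far $\bm{y}_t$ is from optimal.

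The second ingredient is controlling this dual gap. I would track the quantity $\delta_t := \Phi(\bm{x}_t) - f(\bm{x}_t,\bm{y}_t)$ (or a suitable surrogate tied to the Moreau proximal point), and show a recursion: the dual ascent/extragradient step on a concave function increases $f(\bm{x}_t,\cdot)$ by a controllable amount (here the standard EG/OGDA analysis for concave maximization with $\eta_y = 1/(2\ell)$ gives something like $\sum_{t}( \Phi(\bm{x}_t)-f(\bm{x}_t,\bm{y}_t)) = O(D^2\ell)$ per "epoch" if $\bm{x}$ were frozen), while the primal movement $\|\bm{x}_{t+1}-\bm{x}_t\| = O(\eta_x G)$ perturbs $\delta_t$ by at most $O(\eta_x G\cdot G) = O(\eta_x G^2)$ plus a term $G\cdot D$ times drift. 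Telescoping the dual inequality over $t=0,\dots,T$ yields $\sum_t \delta_t \lesssim D^2\ell + T\eta_x G^2\cdot(\text{something})$, i.e. $\frac{1}{T}\sum_t\delta_t \lesssim \frac{D^2\ell}{\eta_y T} + \eta_x(\cdots)$. Plugging this averaged dual gap back into the telescoped primal inequality and choosing $\eta_x$ small enough — precisely $\eta_x = O(\min\{\epsilon/(\ell G), \epsilon^2/(\ell G^2), \epsilon^4/(D^2G^2\ell^3)\})$ so that each error source is $\le \epsilon^2$ — gives $\frac{1}{T+1}\sum_{t=0}^T\|\nabla\Phi_{1/2\ell}(\bm{x}_t)\|^2 \le O\big(\frac{\hat{\Delta}_\Phi}{\eta_x T}\big) + \epsilon^2$, whence $T = O(\hat\Delta_\Phi/(\eta_x\epsilon^2))$, which after substituting the smallest of the three candidate stepsizes becomes the claimed $O\!\big(\frac{\ell G^2\hat\Delta_\Phi}{\epsilon^4}\max\{1, D^2\ell^2/\epsilon^2\}\big)$. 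The optimistic correction terms for OGDA are handled as in the NC-SC proof by a "ghost iterate" / potential augmented with $\eta_x\|\nabla_x f(\bm{x}_t,\bm{y}_t)-\nabla_x f(\bm{x}_{t-1},\bm{y}_{t-1})\|^2$ and using $\ell$-smoothness to bound consecutive gradient differences by $\ell^2(\|\bm{x}_t-\bm{x}_{t-1}\|^2 + \|\bm{y}_t-\bm{y}_{t-1}\|^2)$; EG is slightly cleaner since the mid-point already plays that role, which is why the EG statement is phrased in terms of $\bm{x}_{t-1/2}$.

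The main obstacle I anticipate is getting a \emph{clean} coupled recursion between the primal Moreau-envelope descent and the dual gap that does not lose extra factors — specifically, the dual gap $\delta_t$ is measured against $\bm{y}^*(\bm{x}_t)$ which itself moves as $\bm{x}_t$ moves, and the function is not strongly concave so there is no contraction pulling $\bm{y}_t$ toward $\bm{y}^*(\bm{x}_t)$; one only has that \emph{averaged} over iterations the ascent steps make progress. Handling this requires the now-standard but delicate argument of summing $\delta_t$ over blocks and absorbing the $\|\bm{x}_{t+1}-\bm{x}_t\|$-induced changes of the optimal value (using $|\Phi(\bm{x})-\Phi(\bm{x}')|\le G\|\bm{x}-\bm{x}'\|$ and $|f(\bm{x},\bm{y})-f(\bm{x}',\bm{y})| \le G\|\bm{x}-\bm{x}'\|$), and then balancing the three error contributions against $\epsilon^2$, which is exactly what forces the three-way minimum in the stepsize and the $\epsilon^{-6}$ worst-case rate. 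I would also need to verify that the projected OGDA/EG dual steps still obey the requisite "one-step progress" inequality for concave maximization with projection onto $\cY$ — this is classical (Nemirovski-type / mirror-prox analysis) but must be stated carefully so that the diameter $D$ (not an unbounded quantity) controls the constant.
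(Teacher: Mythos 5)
Your proposal is correct and follows essentially the same route as the paper's proof: a Moreau-envelope descent lemma in which the per-step error splits into an $O(\eta_x^2\ell G^2)$ smoothness term, an $\eta_x\|\nabla_x f(\bm{x}_t,\bm{y}_t)-\nabla_x f(\bm{x}_{t-1},\bm{y}_{t-1})\|^2$ drift term (reduced via $\ell$-smoothness to $\|\bm{y}_t-\bm{y}_{t-1}\|^2$), and an $\eta_x\ell\,\delta_t$ term with $\delta_t = \Phi(\bm{x}_t)-f(\bm{x}_t,\bm{y}_t)$; then a block-telescoping bound on $\frac{1}{T+1}\sum_t\delta_t$ obtained by fixing the comparator $\bm{y}^*(\bm{x}_s)$ within each block of length $B$ and paying $O(\eta_x B G^2)$ for the drift of $\bm{x}$, followed by balancing $B$ and the three candidate $\eta_x$ values against $\epsilon^2$. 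The one technical point you glossed over, but which the paper emphasizes as the reason OGDA/EG shave a term off the GDA bound, is \emph{which} decomposition of $\delta_t$ is telescoped: OGDA/EG allow one to bound $f(\bm{x}_t,\bm{y}^*(\bm{x}_s))-f(\bm{x}_t,\bm{y}_t)$ directly via the projected optimistic/extragradient one-step inequality (Lemma~\ref{lemma: ogda dual} / Proposition~\ref{prop: 3 point theorem}), with no bounded-dual-gradient assumption and no spurious $f(\bm{x}_{t+1},\bm{y}_{t+1})-f(\bm{x}_t,\bm{y}_t)$ remainder, whereas plain GDA only controls $f(\bm{x}_t,\bm{y}^*(\bm{x}_s))-f(\bm{x}_t,\bm{y}_{t+1})$ and must absorb an extra term depending on $\hat\Delta_0$. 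Your sketch would still close, but making that distinction explicit is what yields the claimed improvement over the GDA rate.
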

\begin{theorem}\label{thm:ncc_sogda}
Let $\eta_x=O(\min\{\frac{\epsilon^2}{\ell (G^2+\sigma^2)},\frac{\epsilon^4}{D^2\ell^3  G\sqrt{G^2+\sigma^2} },\frac{\epsilon^6}{D^2\ell^3\sigma^2 G\sqrt{G^2+\sigma^2}}\})$, and $\eta_y=O(\min\{\frac{1}{4\ell},\frac{\epsilon^2}{\ell\sigma^2}\})$. By convention, we set $\bm{x}_{-1/2} = \bm{x}_0$, $\bm{y}_{-1/2} = \bm{y}_0$. Under Assumptions~\ref{asm:2} and~\ref{asm:3}, stochastic OGDA/EG algorithms converge to an $\epsilon$-stationary point, i.e., $\frac{1}{T+1} \sum_{t=0}^{T}  \mathbb{E}\|\nabla \Phi_{1/2\ell}(\bm{x}_{t})\|^2 \leq \epsilon^2$ for  OGDA and $\frac{1}{T+1} \sum_{t=0}^{T}  \mathbb{E}\|\nabla \Phi_{1/2\ell}(\bm{x}_{t-1/2})\|^2 \leq \epsilon^2$ for  EG, with the gradient complexity bounded by:
 \begin{align*}
     O\left(\frac{D^2\ell^3  G\sqrt{G^2+\sigma^2}\hat{\Delta}_{\Phi}}{\epsilon^6} \max\left\{ 1, \frac{  \sigma^2   }{\epsilon^2} \right\}\right).
 \end{align*} 
 \end{theorem}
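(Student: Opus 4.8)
\quad I would prove Theorem~\ref{thm:ncc_sogda} as the stochastic analogue of Theorem~\ref{thm:ncc_ogda}, reusing the unified ``single-call extra-gradient'' template and carefully propagating the variance terms. \emph{Step 1 (unified update).} First I would write both stochastic OGDA and stochastic EG as one update $\vx_{t+1}=\vx_t-\eta_x\widehat{\vg}_{x,t}$, $\vy_{t+1}=\cP_{\cY}(\vy_t+\eta_y\widehat{\vg}_{y,t})$, where $\widehat{\vg}$ is the look-ahead gradient (the midpoint gradient for EG, the optimistic extrapolation for OGDA), and a short lemma shows $\E[\widehat{\vg}_{x,t}\mid\mathcal{F}_t]=\nabla_x f(\vx_t,\vy_t)+\ve_t$ with $\|\ve_t\|=O(\eta\ell)\cdot(\text{gradient magnitudes})$, and likewise for $\vy$; this is exactly why EG and OGDA incur errors of the same order and can be analyzed together. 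Throughout I track $\Delta_t := \Phi(\vx_t)-f(\vx_t,\vy_t)\ge 0$, the dual suboptimality gap.

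\emph{Step 2 (primal descent on the Moreau envelope).} Let $\hat{\vx}_t=\arg\min_{\vx'}\{\Phi(\vx')+\ell\|\vx'-\vx_t\|^2\}$, which is $\mathcal{F}_t$-measurable. Using $\Phi_{1/2\ell}(\vx_{t+1})\le\Phi(\hat{\vx}_t)+\ell\|\hat{\vx}_t-\vx_{t+1}\|^2$, expanding the square, taking conditional expectation (unbiasedness kills the noise in the cross term), invoking the $\ell$-weak convexity of $f(\cdot,\vy_t)$ at $\vx_t$ together with $\Phi(\hat{\vx}_t)+\tfrac{3\ell}{2}\|\hat{\vx}_t-\vx_t\|^2\le\Phi(\vx_t)$, the bound $\|\nabla_x f\|\le G$, and $\E\|\widehat{\vg}_{x,t}\|^2\le G^2+\sigma^2$, I expect a per-step inequality of the shape
\[
\E[\Phi_{1/2\ell}(\vx_{t+1})\mid\mathcal{F}_t]\;\le\;\Phi_{1/2\ell}(\vx_t)-\tfrac{\eta_x}{4}\|\nabla\Phi_{1/2\ell}(\vx_t)\|^2+c_1\eta_x\ell\,\Delta_t+c_2\eta_x^2\ell(G^2+\sigma^2),
\]
where the look-ahead error $\ve_t$ and, for EG, the midpoint displacement are absorbed into the last, $O(\eta_x^2\ell^2(\cdot))$ terms. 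Telescoping and rearranging then reduces the theorem to showing $\tfrac1{T+1}\sum_t\E\Delta_t=O(\epsilon^2/\ell)$ (and it is the fact that the descent is phrased purely through $\Phi_{1/2\ell}$, without a separate bound on $\Phi(\vx_t)$, that lets the final complexity depend only on $\hat{\Delta}_{\Phi}$ and not additionally on $\hat{\Delta}_0$, unlike the GDA analysis of~\cite{lin2020gradient}).

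\emph{Step 3 (dual gap under mere concavity --- the crux).} This is the main obstacle, since $\vx\mapsto\vy^*(\vx)$ is neither single-valued nor Lipschitz. I would start from the projected-ascent regret inequality: using non-expansiveness of $\cP_{\cY}$, concavity of $f(\vx_t,\cdot)$, and unbiasedness,
\[
2\eta_y\,\E[\Delta_t\mid\mathcal{F}_t]\;\le\;\|\vy_t-\vy^*(\vx_t)\|^2-\E[\|\vy_{t+1}-\vy^*(\vx_t)\|^2\mid\mathcal{F}_t]+\eta_y^2\,\E\big[\|\widehat{\vg}_{y,t}\|^2\mid\mathcal{F}_t\big]+(\text{midpoint error}).
\]
Two ideas close this. (i) The co-coercivity of an $\ell$-smooth concave function gives the self-bounding estimate $\|\nabla_y f(\vx_t,\vy_t)\|^2\lesssim\ell\,\Delta_t$ (with the gradient mapping replacing $\nabla_y f$ at the boundary of $\cY$), so the dual second moment becomes $O(\ell\Delta_t+\sigma^2)$, which can be moved to the left once $\eta_y\le 1/(4\ell)$, leaving only an $O(\eta_y\sigma^2)$ residual; this is what dictates the branch $\eta_y=O(\epsilon^2/(\ell\sigma^2))$. (ii) To handle the moving comparator $\vy^*(\vx_t)$ I would use a blocking/windowing argument, resetting the comparator to $\vy^*$ at an iterate inside each window of length $B$: inside a window $\|\vx_t-\vx_{\mathrm{start}}\|\le B\eta_x\sqrt{G^2+\sigma^2}$, and $G$-Lipschitzness of $\Phi$ turns this slow primal drift into an $O(GB\eta_x\sqrt{G^2+\sigma^2})$ overhead per step. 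Summing over windows and optimizing $B\asymp D/\sqrt{\eta_x\eta_y\,G\sqrt{G^2+\sigma^2}}$ yields $\tfrac1{T+1}\sum_t\E\Delta_t=O\big(D\sqrt{\eta_x G\sqrt{G^2+\sigma^2}/\eta_y}+\eta_y\sigma^2\big)$.

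\emph{Step 4 (combine and tune).} Plugging Step~3 into the telescoped Step~2 gives
\[
\tfrac1{T+1}\sum_t\E\|\nabla\Phi_{1/2\ell}(\vx_t)\|^2=O\!\Big(\tfrac{\hat{\Delta}_{\Phi}}{\eta_x T}+\ell D\sqrt{\tfrac{\eta_x G\sqrt{G^2+\sigma^2}}{\eta_y}}+\ell\eta_y\sigma^2+\ell\eta_x(G^2+\sigma^2)\Big).
\]
Choosing $\eta_y=O(\min\{1/(4\ell),\epsilon^2/(\ell\sigma^2)\})$ kills $\ell\eta_y\sigma^2$; the drift term then forces $\eta_x=O\big(\eta_y\epsilon^4/(\ell^2 D^2 G\sqrt{G^2+\sigma^2})\big)$, which is precisely the second branch of the stated three-way minimum when $\eta_y\asymp 1/\ell$ and the third branch when $\eta_y\asymp\epsilon^2/(\ell\sigma^2)$, while $\ell\eta_x(G^2+\sigma^2)\le\epsilon^2$ gives the first branch. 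With all other terms at $O(\epsilon^2)$, requiring $\hat{\Delta}_{\Phi}/(\eta_x T)=O(\epsilon^2)$ sets $T=O\big(\hat{\Delta}_{\Phi}/(\eta_x\epsilon^2)\big)=O\big(\tfrac{D^2\ell^3 G\sqrt{G^2+\sigma^2}\,\hat{\Delta}_{\Phi}}{\epsilon^6}\max\{1,\sigma^2/\epsilon^2\}\big)$; since each iteration uses $O(1)$ stochastic gradient calls, this is also the gradient complexity, and for EG the identical bound holds with the guarantee written at the midpoints $\vx_{t-1/2}$ because the Moreau descent for EG is naturally phrased there. I expect the delicate part to be Step~3 --- simultaneously pushing the averaged dual gap down to $O(\epsilon^2/\ell)$ and absorbing the $\sigma^2$ variance without inflating $T$ past $\epsilon^{-8}$; the rest is bookkeeping of the single-call error terms already developed for Theorem~\ref{thm:ncc_ogda}.
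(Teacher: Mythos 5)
Your proposal follows essentially the same route as the paper's proof: a Moreau-envelope descent lemma for $\Phi_{1/2\ell}$, a windowing argument that fixes the comparator $\vy^*(\vx_s)$ inside blocks of length $B\asymp D/\sqrt{\eta_x\eta_y\,G\sqrt{G^2+\sigma^2}}$, the self-bounding bound $\|\nabla_y f(\vx,\vy)\|^2\lesssim\ell\,\Delta$, and the same tuning of $(\eta_x,\eta_y,B)$, arriving at the same gradient complexity. Two small deviations are worth noting. First, an imprecision in Step~2: you claim the look-ahead error contributes only to the $O(\eta_x^2\ell(G^2+\sigma^2))$ slot, but $\|\nabla_x f(\vx_t,\vy_t)-\nabla_x f(\vx_{t-1},\vy_{t-1})\|^2 \le \ell^2(\|\vx_t-\vx_{t-1}\|^2+\|\vy_t-\vy_{t-1}\|^2)$, and the dual increment is of order $\eta_y^2\|\nabla_y f\|^2$, not $\eta_x^2$; with $\eta_y\asymp 1/\ell$ this is a first-order-in-$\eta_x$ contribution. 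The paper resolves this precisely by applying the co-coercivity reduction $\|\nabla_y f\|^2\lesssim\ell\Delta_{t-1}$ here (Lemma~\ref{lemma: ogda y gap} for OGDA, the analogous step in the proof of Theorem~\ref{Theorem: SEG formal} for EG), so that $\eta_x\ell^2\|\vy_t-\vy_{t-1}\|^2\lesssim\eta_x\ell\Delta_{t-1}$ and the term joins your $c_1\eta_x\ell\Delta_t$ slot, not the $c_2$ slot; you already invoke this tool in Step~3(i), so no new idea is needed, but Step~2 as written does not hold with your stated decomposition. Second, a benign organizational difference: you place co-coercivity inside the dual regret inequality to cancel $\eta_y^2\|\widehat{\vg}_{y,t}\|^2$ from the left, whereas the paper uses the refined PEG/three-point dual inequalities (Lemma~\ref{lemma: sogda dual} and Proposition~\ref{prop: 3 point theorem} with Lemma~\ref{lemma: SEG dual}), which never produce an $\eta_y^2\|\nabla_y f\|^2$ term in the first place, leaving only the $O(\eta_y\sigma^2)$ variance residual; both routes close Step~3 at the same order.
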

The proofs of Theorems~\ref{thm:ncc_ogda} and~\ref{thm:ncc_sogda} are deferred to Appendix~\ref{app:NCC_Upper}. Here we show that OGDA/EG need at most $O\left( \frac{D^2 \ell^3 G^2 \hat{\Delta}_{\Phi}}{\epsilon^6} \right)$ gradient evaluations in deterministic setting and  $O\left(\frac{D^2\ell^3  \sigma^2 G\sqrt{G^2+\sigma^2} \hat{\Delta}_{\Phi}}{\epsilon^8} \right)$ gradient evaluations in stochastic setting to visit an $\epsilon$-stationary point. 
 
 Our stepsize choices for dual variable match the optimal analysis in convex-concave setting, $\Theta(\frac{1}{\ell})$ in deterministic setting~\cite{mokhtari2020convergence} and $\Theta(\frac{1}{\epsilon^2})$ in stochastic setting~\cite{hsieh2019convergence}, so we suppose our dual stepsize choice is optimal.  The stepsize ratio is $\frac{\eta_x}{\eta_y} = O(\epsilon^4)$ in both settings, same as~\citet{lin2020gradient}'s results on applying GDA to a nonconvex-concave objective, which reveals some connection and similarity between OGDA and GDA. However, compared to GDA~\cite{lin2020gradient}, where they get an 
$O\left( \frac{D^2  \ell^3  G^2 \hat{\Delta}_{\Phi}}{\epsilon^6} + \frac{\ell^3 D^2 \hat{\Delta}_0}{\epsilon^4}  \right)$ rate in deterministic setting, and $ O\left(\frac{D^2\ell^3 \sigma^2 G\sqrt{G^2+\sigma^2}\hat{\Delta}_{\Phi}}{\epsilon^8} +   \frac{\ell^3 D^2 \hat{\Delta}_0}{\epsilon^6} \right)$ in stochastic setting, we shave off the significant terms with dependency on $\hat{\Delta}_0$. As we will show in the proof, this acceleration is mainly due to the fact that OGDA/EG enjoys an inherent nice descent property on concave function, which is more elaborated in Section~\ref{sec:discussion}. In the stochastic setting, we observe similar superiority.

Now, we switch to examining the tightness of obtained rates. Similar to the NC-SC setting, we first consider a simple GDA algorithm and will extend the analysis to OGDA/EG.
 \begin{theorem}[Tightness of GDA ]\label{thm:NCC_Tightness_GDA}
 Consider GDA that runs $T$ iterations on solving (\ref{eqn:minimax}), and let $\vx_T$ be the returned solution. Then, there exists a function $f$ that is $G$-Lipschitz in $\vx$, $\ell$-gradient Lipschitz and concave in $\vy$, and an initialization point $(\vx_0, \vy_0)$ such that GDA requires at least $     T = \Omega\left(\frac{\ell^3 G^2 D^2\hat{\Delta}_\Phi }{\epsilon^6}\right)$ iterations to guarantee $\|\Phi_{1/2\ell}(\vx_T)\| \leq \epsilon$.
\end{theorem}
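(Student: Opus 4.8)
The plan is to build an instance on which GDA, run with the step sizes prescribed for the nonconvex–concave regime — the primal step $\eta_x$ of order $\epsilon^4/(\ell^3 D^2 G^2)$ (the binding term of the minimum in the upper-bound step size once $\epsilon$ is small) and $\eta_y=\Theta(1/\ell)$ — collapses to plain gradient descent on a one-dimensional nonconvex $\ell$-smooth function with a \emph{forced tiny} step, and therefore needs $\Omega(\hat\Delta_\Phi/(\eta_x\epsilon^2))=\Omega(\ell^3 G^2 D^2\hat\Delta_\Phi/\epsilon^6)$ iterations before it can reach a point with $\|\nabla\Phi_{1/2\ell}(\cdot)\|\le\epsilon$. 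Concretely I take $d=1$, $\cY=[0,D]$, and $f(\vx,\vy)=h(\vx)$ \emph{independent of} $\vy$. Then $f$ is (linear, hence) concave and trivially $\ell$-gradient-Lipschitz in $\vy$, $\Phi\equiv h$, and the GDA recursion reduces to $\vx_{t+1}=\vx_t-\eta_x h'(\vx_t)$ no matter what the (vacuous) $\vy$-updates do. The $D$- and $\ell$-dependencies in the target bound are thus inherited purely from the necessarily small prescribed step size rather than from any primal–dual coupling.

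For $h$ I would use a ``slow ramp'': a flat piece for $\vx\le\vx_0$, a short ($\Theta(\epsilon/\ell)$-length) $\ell$-smooth transition bringing the slope down to exactly $-2\epsilon$ at $\vx_0$, then $h'\equiv -2\epsilon$ on a long interval $[\vx_0,\vx_0+L]$ with $L=\Theta(\hat\Delta_\Phi/\epsilon)$, and finally a second $\Theta(\epsilon/\ell)$-length smooth transition to a flat plateau at height $-\Theta(\hat\Delta_\Phi)$. One checks directly that $h$ is $\ell$-smooth ($|h''|\le\ell$, using transition length $\ge 2\epsilon/\ell$ to turn the slope), $G$-Lipschitz ($2\epsilon\le G$ for small $\epsilon$), and, by tuning the free length $L$, that $\hat\Delta_\Phi=\Phi_{1/2\ell}(\vx_0)-\min_{\vx}\Phi_{1/2\ell}(\vx)$ equals the desired value up to a constant factor (the envelope perturbs values only by $O(\epsilon^2/\ell)$, negligible once $\hat\Delta_\Phi\gg\epsilon^2/\ell$). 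Since $h'\le0$ everywhere, GD started at $\vx_0$ only moves right, advancing by exactly $2\eta_x\epsilon$ per step while on the ramp; hence it stays inside $[\vx_0,\vx_0+L]$ for at least $T^\star:=L/(4\eta_x\epsilon)=\Theta(\hat\Delta_\Phi/(\eta_x\epsilon^2))$ iterations.

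It remains to show that the Moreau-envelope gradient is too large along this trajectory. Because $h$ is $\ell$-smooth, $h(\cdot)+\ell(\cdot-\vx)^2$ is $\ell$-strongly convex, so $\Phi_{1/2\ell}$ is well defined and its prox point $\hat\vx(\vx)=\arg\min_{x'}h(x')+\ell(x'-\vx)^2$ is unique; for $\vx$ in the interior of the ramp the first-order condition $h'(\hat\vx)+2\ell(\hat\vx-\vx)=0$ with $h'\equiv-2\epsilon$ gives $\hat\vx-\vx=\epsilon/\ell$, whence $\|\nabla\Phi_{1/2\ell}(\vx)\|=2\ell\,|\hat\vx-\vx|=2\epsilon>\epsilon$. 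For $T<T^\star$ every iterate $\vx_1,\dots,\vx_T$ lies in this interior region (it has advanced by at most $2\eta_x\epsilon T<L/2$), so $\|\nabla\Phi_{1/2\ell}(\vx_t)\|=2\epsilon$ for all such $t$; consequently the returned point $\vx_T$ — and equally any iterate, or a uniformly random one — fails the criterion $\|\nabla\Phi_{1/2\ell}(\cdot)\|\le\epsilon$. Substituting $\eta_x=\Theta(\epsilon^4/(\ell^3D^2G^2))$ into $T^\star$ yields the claimed $T=\Omega(\ell^3G^2D^2\hat\Delta_\Phi/\epsilon^6)$.

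The conceptual content is simply that the prescribed NC-C primal step is $\Theta(\epsilon^4)$-small, so GD cannot reduce an $\Omega(\hat\Delta_\Phi)$ potential faster than $\Theta(\eta_x\epsilon^2)$ per step; the only real work — and the main place to be careful — is the low-level bookkeeping: keeping $|h''|\le\ell$ and $|h'|\le G$ through both transition segments while still reaching slope $2\epsilon$, checking that the envelope gradient does not dip below $\epsilon$ for the \emph{relevant} iterates (it can dip inside the two short transitions, but GD crosses each of those in only $O(1/(\eta_x\ell))\ll T^\star$ steps), and relating the raw drop of $h$ to $\hat\Delta_\Phi$ up to the $O(\epsilon^2/\ell)$ envelope correction. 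If a genuinely coupled instance is preferred (for instance to also argue that enlarging $\eta_x$ does not help), one can glue the same $\vx$-ramp to a bilinear term $\langle\vx,\vy\rangle$ on the bounded set $\cY$ so that the $\vy$-tracking error itself forces $\eta_x=O(\epsilon^4\eta_y)$; the counting argument above is unchanged.
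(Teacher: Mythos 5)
Your proposal is correct and achieves the same $\Omega(\ell^3 G^2 D^2 \hat\Delta_\Phi/\epsilon^6)$ bound, but via a different hard instance than the paper, so a short comparison is warranted.

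The paper's instance is genuinely two-variable: $f(x,y)=h(x)\,y$ with $h$ a rescaled ``bump'' ($h(x)=\tfrac{L}{2}x^2$ near zero, plateauing to $L$ at $|x|\ge 2$), $\cY=[-D,D]$, $L=\tfrac{1}{D}\min\{\ell/2,G\}$. With $|x_0|\le 1$, $y_0\ge 0$, one shows inductively $|x_t|\le1$, $y_t\ge0$, so the primal recursion is the geometric contraction $x_{t+1}=(1-\eta_x L y_t)x_t$ with contraction factor at most $1-\eta_x LD$; the Moreau-envelope gradient is $\propto x$, and the initial scale $x_0$ is tuned so that $\hat\Delta_\Phi\approx\frac{LD+2\ell}{LD\ell}\epsilon^2$. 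Your instance collapses the $\vy$-variable entirely ($f(\vx,\vy)=h(\vx)$, concave in $\vy$ vacuously) so GDA degenerates to GD on a one-dimensional ramp with constant slope $-2\epsilon$; the envelope gradient is \emph{constant} at $2\epsilon$ along the ramp, and the bound comes from the escape time of a long plateau rather than a geometric decay rate. Both arguments rest on the same underlying mechanism — with the step size $\eta_x=\Theta(\epsilon^4/(\ell^3 D^2 G^2))$ prescribed by the NC-C upper-bound analysis, GD makes only $\Theta(\eta_x\epsilon^2)$ progress per step against an $\Omega(\hat\Delta_\Phi)$ potential. What your approach buys is flexibility in $\hat\Delta_\Phi$ (it can be tuned by choosing the ramp length $L$, whereas the paper's construction forces $\hat\Delta_\Phi\approx\epsilon^2/(LD)$) and somewhat more transparent bookkeeping, since the envelope gradient is exactly $2\epsilon$ rather than linearly decaying. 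What it gives up is any demonstration that the small $\eta_x$ is \emph{necessary} on the instance: on your decoupled ramp a larger $\eta_x$ would escape quickly, whereas the paper's coupled instance requires $\eta_x\le 1/(LD)$ even just for the induction $|x_t|\le1$ to hold and for the linear system not to overshoot. Since the theorem (read together with the preceding remark that stepsizes are fixed to the upper-bound choices) is only an existential tightness claim for that fixed $\eta_x$, your proof suffices; the paper's instance is simply a closer illustration of why the analysis cannot be improved by re-tuning.

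Two small points worth tightening in a writeup. First, the stated $G$- and $D$-dependence in your bound enters \emph{only} through the prescribed $\eta_x$ and is slack on the instance itself ($|h'|\le2\epsilon\ll G$); this is fine formally but should be said explicitly to avoid the impression that the instance is $G$-adversarial. Second, your closing suggestion to glue a bilinear $\langle\vx,\vy\rangle$ term does not immediately go through: it changes $\Phi$ to $h(\vx)+D|\vx|$, whose Lipschitz constant is now $\Theta(D)$, so the parameters $G$ and $D$ can no longer be set independently, and the $\vy$-tracking-error analysis would need to be carried out from scratch. If you want a genuinely coupled instance, the paper's $h(x)y$ construction already does the job.
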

\begin{theorem}[Tightness of {OGDA/EG}]\label{thm:NCC_Tightness_EG/OGDA}
Consider {OGDA/EG} that runs $T$ iterations on solving (\ref{eqn:minimax}), and let $\vx_T$ be the returned solution. Then, there exists a function $f$ that is $G$-Lipschitz in $\vx$, $\ell$-gradient Lipschitz and concave in $\vy$, and an initialization point $(\vx_0, \vy_0)$ such that  to achieve $\|\Phi_{1/2\ell}(\vx_T)\| \leq \epsilon$, {OGDA/EG} requires at least  $T = \Omega\left(\frac{\ell^3 G^2 D^2\hat{\Delta}_\Phi }{\epsilon^6}\right)$.
\end{theorem}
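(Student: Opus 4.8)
The plan is to exhibit a single low-dimensional instance that simultaneously defeats OGDA and EG run with the stepsizes of Theorem~\ref{thm:ncc_ogda}, in the same spirit as (and lightly extending) the GDA construction behind Theorem~\ref{thm:NCC_Tightness_GDA}. Take $\cY=[0,D]\subset\mathbb{R}$ and let $f(\vx,\vy)=\Phi(\vx)$ be independent of $\vy$; then $f$ is trivially concave in $\vy$ with $\cY$ of diameter $D$, so it only remains to engineer the one-dimensional primal function $\Phi$. I would choose $\Phi$ to be $\ell$-smooth, bounded below, with $\Phi(\vx_0)-\min_{\vx}\Phi=\hat{\Delta}_\Phi$, and such that on an interval $I\ni\vx_0$ of length $\mathrm{length}(I)=\Theta(\hat{\Delta}_\Phi/\epsilon)$ the derivative $\Phi'$ has constant sign and magnitude $|\Phi'(\vx)|\in[c\epsilon,C\epsilon]$, with $\Phi$ flattening to its minimum only past the far endpoint of $I$. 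Such a $\Phi$ exists with curvature $O(\epsilon^2/\hat{\Delta}_\Phi)\ll\ell$ on $I$, and since $\epsilon\le G$ it is $G$-Lipschitz, so Assumption~\ref{asm:3} holds. Because on $I$ the proximal coefficient $\ell$ dominates the curvature of $\Phi$, the proximal point $\hat{\vx}$ of Lemma~\ref{lemma: moreau} stays $O(\epsilon/\ell)$-close to $\vx$ and $\nabla\Phi_{1/2\ell}(\vx)$ agrees with $\Phi'(\vx)$ up to a constant factor there; tuning $c,C$ gives $\norm{\nabla\Phi_{1/2\ell}(\vx)}>\epsilon$ for every $\vx\in I$.

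Next I would control the primal displacement per iteration. Since $f$ does not depend on $\vy$, the dual updates and projections are irrelevant, and on $I$ the primal step of EG is $\vx_{t+1}=\vx_t-\eta_x\Phi'(\vx_{t+1/2})$ while that of OGDA is $\vx_{t+1}=\vx_t-\eta_x\big(2\Phi'(\vx_t)-\Phi'(\vx_{t-1})\big)$; in both cases $\abs{\vx_{t+1}-\vx_t}\le 3\eta_x\max_I\abs{\Phi'}\le 3C\eta_x\epsilon$, and the motion is monotone along $I$ because $\Phi'$ has constant sign there and varies between consecutive iterates by only $O(\epsilon^2/\hat{\Delta}_\Phi)\cdot\abs{\vx_t-\vx_{t-1}}=o(\eta_x\epsilon)$, negligible against $c\epsilon$. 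Hence $\abs{\vx_t-\vx_0}\le 3C\eta_x\epsilon\,t$, so the iterates (and, for EG, the half-iterates $\vx_{t-1/2}$) stay inside $I$ — hence have Moreau-envelope gradient strictly above $\epsilon$ — for all $t$ up to $T_0=\Omega\big(\mathrm{length}(I)/(\eta_x\epsilon)\big)=\Omega\big(\hat{\Delta}_\Phi/(\eta_x\epsilon^2)\big)$. Substituting the prescribed $\eta_x=\Theta\big(\epsilon^4/(D^2G^2\ell^3)\big)$ (the binding term of the minimum in Theorem~\ref{thm:ncc_ogda} in the relevant small-$\epsilon$ regime $\epsilon\lesssim D\ell$, which is exactly where the claimed rate dominates) yields $T_0=\Omega\big(\ell^3G^2D^2\hat{\Delta}_\Phi/\epsilon^6\big)$, the asserted bound; and since \emph{every} visited (half-)iterate up to $T_0$ violates the stationarity target, the conclusion is insensitive to whether $\vx_T$ denotes the last iterate or a (weighted) average, matching the output conventions of Theorem~\ref{thm:ncc_ogda}.

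The delicate point — and the main obstacle — is ruling out that the optimistic correction of OGDA or the lookahead gradient of EG provides a shortcut out of the slow region $I$: a priori one might fear that extrapolation amplifies the effective step. This is precisely why $\Phi$ is built so that $\Phi'$ is nearly constant on $I$: then the OGDA correction $\Phi'(\vx_t)-\Phi'(\vx_{t-1})$ and the EG increment $\Phi'(\vx_{t+1/2})-\Phi'(\vx_t)$ are both lower order, so each method is a $(1\pm o(1))$ multiplicative perturbation of gradient descent with the same tiny stepsize and the displacement/monotonicity estimates above carry over verbatim. If one prefers a genuinely coupled hard instance, so that the $D$-dependence is intrinsic rather than inherited from the prescribed stepsize, the same template applies after additionally tracking the dual iterates $\vy_t$ and checking that the projection keeps them in a band on which $\nabla_x f(\cdot,\vy_t)$ is still $\Theta(\epsilon)$ and slowly varying; the bookkeeping is heavier but the mechanism — a $\Theta(\epsilon)$-slope primal function of extent $\Theta(\hat{\Delta}_\Phi/\epsilon)$ traversed at rate $O(\eta_x\epsilon)$ — is unchanged.
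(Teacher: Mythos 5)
Your construction differs from the paper's. The paper keeps a genuinely coupled hard instance $f(x,y)=h(x)y$ on $\cY=[-D,D]$, with $h$ a quadratic-then-plateau, initialized at $y_0=D$; it first shows by induction that the dual iterate stays pinned at $y_t=D$ (and for OGDA also that $x_t\ge x_{t-1}/\sqrt 2$ so the optimistic correction on $y$ is non-negative), reduces the primal dynamics to a scalar linear recurrence -- a second-order one for OGDA, solved via the characteristic roots $\alpha,\beta$, and a first-order one for EG -- and then turns the resulting geometric decay of $x_T$ into the iteration bound. You instead take $f(\vx,\vy)=\Phi(\vx)$ independent of $\vy$, rendering the dual and the projection vacuous, and design $\Phi$ to have a long shallow valley of slope $\Theta(\epsilon)$ and extent $\Theta(\hat\Delta_\Phi/\epsilon)$; the argument then becomes the elementary observation that any of GDA/OGDA/EG moves $\vx$ by at most $O(\eta_x\epsilon)$ per step on this valley, so with the prescribed $\eta_x=\Theta(\epsilon^4/(D^2G^2\ell^3))$ one cannot exit in fewer than $\Omega(\ell^3G^2D^2\hat\Delta_\Phi/\epsilon^6)$ iterations. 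Both are correct; yours is simpler and makes the mechanism (tiny mandated stepsize versus long flat region) transparent, and by choosing $\Phi'$ nearly constant it disposes of the worry that the optimistic correction or EG lookahead could accelerate escape without ever having to solve the linear recursion -- the point the paper's eigenvalue computation handles more laboriously. What the paper's route buys is a non-degenerate minimax instance: in your construction $D$ enters only through the prescribed $\eta_x$ and plays no role in the dynamics, whereas the paper's does at least exercise the $y$-update and the projection (though it too sets $L\propto 1/D$ so that $LD$, and hence the primal dynamics, is $D$-independent, making the distinction largely cosmetic). Also, the paper's $\hat\Delta_\Phi$ is forced to scale as $\Theta(\epsilon^2/\min\{\ell,G\})$ by the quadratic valley, while your $\Phi$ lets you choose $\hat\Delta_\Phi$ freely by extending the interval $I$; both are consistent with the theorem as stated, since $\hat\Delta_\Phi$ is a derived quantity of the constructed instance.
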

The proof of Theorems~\ref{thm:NCC_Tightness_GDA} and~\ref{thm:NCC_Tightness_EG/OGDA} are deferred to Appendix~\ref{app:NCC_Tightness_GDA} and~\ref{app:NCC_Tightness_EG/OGDA}, respectively. Theorems~\ref{thm:NCC_Tightness_EG/OGDA} demonstrates that to find an $\epsilon$ stationary point of $\Phi_{1/2\ell}$, OGDA and EG with our stepsize choices need at least $O(\frac{1}{\epsilon^6})$ gradient evaluations, which verifies the tightness of  upper bound.
\subsection{Discussion}
\label{sec:discussion}
\noindent\textbf{Key technical challenges.}~Here, we present the key  technical challenges that arise in the nonconvex setting, which makes the analysis much more involved compared to the previous analysis of these algorithms in convex settings. Our proofs are mainly based on NC-C and NC-SC GDA analysis in~\cite{lin2020gradient}, and SC-SC OGDA/EG analysis in~\cite{mokhtari2020unified}. In the nonconvex-strongly-concave setting, finding an upper bound for $\sum_{i=1}^T \|\boldsymbol{y}_i - \boldsymbol{y}^*(\vx_i) \|^2$ is one of the key steps to establish the convergence rate, however bounding this term is much more complicated   for OGDA and EG than GDA due to difference in  updating rules. Note that in GDA analysis~\cite{lin2020gradient}, $\sum_{i=1}^T \|\boldsymbol{y}_i - \boldsymbol{y}^*(\vx_i) \|^2$ can be bounded by deriving simple recursive equation for $ \|\boldsymbol{y}_t - \boldsymbol{y}^*(\vx_t) \|^2 $, while extending it to OGDA is quite complicated. Hence, we propose to bound $r_t = \|\boldsymbol{z}_{t+1} - \boldsymbol{y}^*(\vx_t) \|^2 + \frac{1}{4} \|\boldsymbol{y}_t - \boldsymbol{y}_{t-1} \|^2$, and establish the upper bound on $ \sum_{i=1}^t \|\boldsymbol{y}_i  - \boldsymbol{y}^*(\vx_i)\|^2$ in terms of $\sum_{i=1}^t r_i$. In nonconvex-concave setting, we have to bound $\|\bm{y}_t - \bm{y}_{t-1}\|^2$, so we reduce it to the primal function gap: $\Phi(\bm{x}_t) - f(\bm{x}_t,\bm{y}_t)$. To bound this gap, we utilize the benign descent property of OGDA and EG on concave function and shave off a significant term $\hat{\Delta}_0$, which yields a better upper complexity bound than GDA.

\paragraph{On descent property of concave function for OGDA/EG}

Take OGDA, for example. The key step in NC-C analysis is to bound $\Phi( \vx_t) - f( \vx_t ,  \vy_t)$. In OGDA proof, we split this into  bounding the following:
\begin{equation}
\begin{split}
  \Phi( \vx_t) - f( \vx_t, \vy_t) &\leq f( \vx_t, {\vy}^*( \vx_t))-f( {\vx}_{s}, {\vy}^*( {\vx}_{t }))+f( {\vx}_{s}, {\vy}^*( {\vx}_{s})) \\
  &\qquad- f( {\vx}_{t }, {\vy}^*( {\vx}_{s}))  + f( {\vx}_{t }, {\vy}^*( {\vx}_{s})) - f( {\vx}_{t }, {\vy}_{t }). 
\end{split}
\end{equation}

For the last term $f( {\vx}_{t }, {\vy}^*( {\vx}_{s})) - f( {\vx}_{t }, {\vy}_{t })$, OGDA can guarantee its convergence without bounded gradient assumption on $\vy$. However, for GDA, it requires bounded gradient assumption on $\vy$ to show the convergence of this term, and without such assumption, we can only show the convergence of $f( {\vx}_{t }, {\vy}^*( {\vx}_{s})) - f( {\vx}_{t }, {\vy}_{t+1 })$, so~\citet{lin2020gradient} split the  $\Phi( {\vx}_{t }) - f( {\vx}_{t }, {\vy}_{t })$ as follow:
\begin{equation}
\begin{split}
     \Phi( {\vx}_{t }) - f( {\vx}_{t }, {\vy}_{t }) &\leq f( {\vx}_{t }, {\vy}^*( {\vx}_{t }))-f( {\vx}_{t}, {\vy}^*( {\vx}_{s }))+f( {\vx}_{t+1}, {\vy}_{t+1}) - f( {\vx}_{t }, {\vy}_t)  + f( {\vx}_{t }, {\vy}_{t+1}) \\
     &\qquad- f( {\vx}_{t+1 }, {\vy}_{t+1}) + f( {\vx}_t, {\vy}^*(\vx_s)) - f(\vx_t,\vy_{t+1})
\end{split}
\end{equation}
Hence they reduce the problem to bounding $f( {\vx}_{t }, {\vy}^*( {\vx}_{s})) - f( {\vx}_{t }, {\vy}_{t+1 })$. Therefore, they have to pay the price for the extra term $f( {\vx}_{t+1}, {\vy}_{t+1}) - f( {\vx}_{t }, {\vy}_t)$.

\noindent\textbf{Generalized OGDA.}~Generalized OGDA algorithm is a variant of OGDA in which different learning rates are used for current gradient $\nabla f(\vx_t,\vy_t)$, and the correction term $\nabla f(\vx_t,\vy_t) - \nabla f(\vx_{t-1} , \vy_{t-1})$. The update rule for this algorithm is as follows:
\begin{equation*}
\label{eqn:ugogda}
\begin{split}
\vx_{t+1} &= \vx_t - \eta_{x,1} \nabla_x f(\vx_{t},\vy_{t}) - \eta_{x,2} ( \nabla_x f(\vx_{t},\vy_{t}) - \nabla_x f(\vx_{t-1},\vy_{t-1}))\\
\vy_{t+1} &= \cP_{\cY} \left(\vy_t + \eta_{y,1} \nabla_y f(\vx_{t},\vy_{t}) + \eta_{y,2} (\nabla_y f(\vx_{t},\vy_{t}) - \nabla_y f(\vx_{t-1},\vy_{t-1})) \right)
\end{split}
\tag{OGDA+}
\end{equation*}
~\citet{mokhtari2020unified} introduced this algorithm and established the convergence bound for the bilinear setting while analysis beyond this setting remained as an open problem. In Appendix~\ref{app:gen_ogda}, we show that our analysis can be adapted to establish the convergence of the generalized OGDA algorithm. In Section~\ref{sec:exp}, the empirical advantage of generalized OGDA over the state of art optimization algorithms is shown, and it seems this algorithm is a better alternative to OGDA in practice. We also define the correction term ratios $\beta_1 = \frac{\eta_{x,2}}{\eta_{x,1}}$, $\beta_2 = \frac{\eta_{y,2}}{\eta_{y,1}}$, and empirically study the effect of these parameters on convergence. Note that if $\beta_1 = \beta_2 = 1 $, generalized OGDA would be same as OGDA. It would also be an interesting future direction to analyze this algorithm for C-C and SC-SC problems to understand its superior performance better.

\noindent\textbf{Projected OGDA/EG for NC-SC.}~Here, we highlight that while our analysis for NC-SC assumes that $\cY = \mathbb{R}^n$, it can be easily extended to a constrained setting, where the dual update is performed under projection onto a convex bounded set $\cY$.  In the following, we provide a proof sketch for extending our analysis of OGDA  to its projected variant, in which we do the same primal update as unconstrained OGDA and a projected (Optimistic gradient) OG update, as defined in \cite{hsieh2019convergence}, on the dual variable. The main idea behind our dual descent lemma, Lemma~\ref{lemma:2}, is interpreting OGDA as an extension of the PEG/OG method and then using Theorem 5 of~\cite{hsieh2019convergence} for PEG/OG analysis, which already considers the projected gradient updates. Thus, our Lemma~\ref{lemma:2} could be immediately adapted to the projected update. Lemma~\ref{lemma:4} can also be extended to projected setting by leveraging Lemma A.1 in~\cite{hsieh2019convergence}. Combining the projected variant of the mentioned lemmas, the convergence could be easily established for projected OGDA/EG.

\section{Stepsize-Independent Lower Bounds}\label{sec:lower bound}
So far, we have established upper bounds and tightness results given specific stepsize choices. In this section, we turn to establishing general  stepsize-independent lower bound results in the NC-SC setting. 
\begin{theorem}[Lower complexity bound for GDA] Consider deterministic GDA method (Algorithm~\ref{alg:gda}) with any arbitrary choice of learning rates, and let $\Bar{\vx}$ be the returned solution. Then, there exists a function $f$ satisfying Assumption~\ref{asm:1}, and an initialization $(\vx_0,\vy_0)$, such that Algorithm~\ref{alg:gda} requires at least $   T = \Omega\left(\frac{\kappa }{\epsilon^2}\right)
 $ iterations to  guarantee $\|\nabla\Phi(\Bar{\vx})\| \leq \epsilon$.
 \label{thm:lower_bound_gda}
\end{theorem}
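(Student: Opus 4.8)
The plan is to construct, for every choice of step sizes $(\eta_x,\eta_y)$, a low‑dimensional instance $f$ satisfying Assumption~\ref{asm:1} and an initialization on which GDA cannot produce an $\epsilon$‑stationary point of $\Phi$ in $o(\kappa/\epsilon^{2})$ iterations. I would take $x\in\mathbb{R}$, $\vy\in\mathbb{R}^{2}$ and work with an objective of the form $f(x,\vy)=\phi(x)+ x\,\vb^{\top}\vy-\tfrac12\vy^{\top}Q\vy$, where $Q\succ0$ has eigenvalues $\mu$ and $\ell$ (so $f(x,\cdot)$ is $\mu$‑strongly concave with dual conditioning exactly $\kappa$), the coupling vector $\vb$ is deliberately \emph{not} aligned with an eigenvector of $Q$ so that it excites both dual modes, the overall scale is picked so that $f$ is $\ell$‑smooth, and $\phi$ is a one‑dimensional profile with $|\phi''|\le\tfrac34\ell$ making $\Phi(x)=\max_{\vy}f(x,\vy)$ a nonconvex function whose near‑stationary set is a small neighbourhood of a single minimizer $x^{\star}$ placed at distance $\Theta(1/\epsilon)$ from the initialization. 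The design point is the two dual modes: for \emph{any} $\eta_y$, at least one of $|1-\eta_y\mu|$, $|1-\eta_y\ell|$ is $\ge 1-O(1/\kappa)$ or $\ge 1$, so GDA's primal gradient estimate $\nabla_x f(x_t,\vy_t)$ can never track $\nabla\Phi(x_t)$ with error contracting faster than rate $1-\Theta(1/\kappa)$; this lagging dual read‑out is the mechanism by which a factor of $\kappa$ enters.

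Given the instance I would reduce GDA to a scalar recursion for $x_t$ coupled to the two tracking errors $e^{(1)}_t,e^{(2)}_t$ along the eigendirections of $Q$ — namely $x_{t+1}=x_t-\eta_x\big(\Phi'(x_t)+(\text{linear functional of }e^{(1)}_t,e^{(2)}_t)\big)$ with $e^{(i)}_{t+1}=(1-\eta_y\lambda_i)e^{(i)}_t+c_i(x_t-x_{t+1})$ — and argue by a case split on $(\eta_x,\eta_y)$. When $\eta_y$ is large enough to destabilize the stiff ($\ell$‑)mode, the perturbation it injects into $\nabla_x f$ keeps every iterate a fixed distance from stationarity, so $T=\infty$. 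In the complementary regime $\eta_y<2/\ell$ the stiff mode is controlled but then $\eta_y\mu<2/\kappa$ forces the soft mode to lag, and on the hard $\phi$ the primal iterates either (i) cannot approach the $O(\epsilon/\ell)$‑neighbourhood of $x^{\star}$ faster than $\Theta(1/(\kappa\ell))$ per step before the lag destabilizes them, so covering the distance $\Theta(1/\epsilon)$ costs $T=\Omega(\kappa\ell/\epsilon^{2})=\Omega(\kappa/\epsilon^{2})$ iterations, or (ii) are driven into a bounded, non‑convergent regime trapped inside $\{\|\nabla\Phi\|\ge\epsilon\}$. Either branch yields $T=\Omega(\kappa/\epsilon^{2})$, treating $\ell$ and the range of $\Phi$ as constants as in the theorem statement.

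The step I expect to be the main obstacle is the construction of $\phi$ together with the verification that it defeats \emph{all} step sizes at once. Unlike the step‑size–specific Theorem~\ref{thm:NCSC_Tightness_GDA}, here a single fixed gentle ramp will not do: a large $\eta_x$ could cross it in few steps, and a careless gluing could land an overshoot near a stationary point. The standard remedy I would adopt, borrowed from first‑order lower bounds for smooth nonconvex minimization, is to stack scaled copies of the hard region at geometrically spaced length scales — so that for every $\eta_x$ some scale is simultaneously ``too large to be negligible'' and ``too short to be crossed quickly'' — and to flank the construction with steeply increasing ``walls'' so that any overshoot lands where $\|\nabla\Phi\|$ is large rather than small. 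A secondary difficulty is propagating the signs and magnitudes of $e^{(1)}_t,e^{(2)}_t$ through the possibly oscillatory error recursions, so that the rate‑$(1-\Theta(1/\kappa))$ lag translates into the claimed per‑iteration progress bound uniformly over all admissible $\eta_y$, and not just for $\eta_y=\Theta(1/\ell)$.
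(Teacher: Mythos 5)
Your proposal takes a genuinely different route from the paper's. The paper constructs a single quadratic hard instance
\[
f(x,y)=-\tfrac12\ell x^2+bxy-\tfrac12\mu y^2,\qquad b=\sqrt{\mu(\ell+\mu_x)},
\]
with scalar $x$ and $y$, for which GDA is a linear time-invariant system; it then chooses $\mu_x=\Theta(\epsilon^2/\Delta_\Phi)$, initializes along an eigenvector of the transition matrix, and reads off the slow eigenvalue $s_1\ge 1-O(\mu_x\eta_x)$ when $\eta_x\le 1/(\kappa\ell)$, combined with a separate divergence argument (trace of the transition matrix exceeding $2$) when $\eta_x>1/(\kappa\ell)$. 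Your plan instead uses a two-dimensional dual to manufacture a stiff $\ell$-mode and a soft $\mu$-mode, a genuinely nonconvex multi-scale $\phi$, and a case split on $\eta_y$; this is the machinery of first-order lower bounds for smooth nonconvex minimization, imported wholesale.

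The key idea you are missing, and the reason the paper needs none of that machinery, is that strong concavity in $y$ lets one replace the multi-scale construction with a \emph{divergence} argument for aggressive step sizes. On the quadratic instance, once $\eta_x$ (equivalently the ratio $r=\eta_y/\eta_x$) exceeds the stability threshold, the spectral radius of the transition matrix exceeds $1$ and the iterates blow up, so $\|\nabla\Phi(\bar x)\|$ never decreases; there is no need for "walls" or geometrically stacked scales. Below the threshold, the choice $\mu_x=\Theta(\epsilon^2)$ makes the primal so flat that the contraction factor is $1-O(\mu_x/(\kappa\ell))$, which immediately yields $T=\Omega(\kappa\ell\Delta_\Phi/\epsilon^2)$ without any tracking-error bookkeeping. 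Your "two dual modes" construction is also unnecessary: one scalar $y$-mode with curvature $\mu$ suffices, because the coupling $b$ already carries the $\kappa$-dependence through the stability threshold on $\eta_x$. Finally, your rate sketch has an arithmetic gap: if per-step progress is $\Theta(1/(\kappa\ell))$ independently of $\epsilon$ and the distance to cover is $\Theta(1/\epsilon)$, that gives $T=\Omega(\kappa\ell/\epsilon)$, not $\Omega(\kappa\ell/\epsilon^2)$; the missing ingredient is exactly the $\epsilon$-dependent curvature $\mu_x=\Theta(\epsilon^2)$ (or some equivalent $\epsilon$-scaling), which the paper builds in explicitly. One point in your favor: your case split genuinely handles all $\eta_y$, whereas the paper's appendix proof is stated only for $\eta_y=\Theta(1/\ell)$ despite the main-text theorem claiming arbitrary learning rates; but that generality does not require the full multi-scale apparatus you invoke, since on the quadratic instance the case $\eta_y\ge 2/\ell$ is again a divergence case and the case $\eta_y<2/\ell$ reduces to the paper's analysis with a worse dual contraction rate.
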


Theorem~\ref{thm:lower_bound_gda} implies that GDA algorithm can not find $\epsilon$ stationary point of NC-SC problem with less than with $\Omega(\frac{\kappa}{\epsilon^2})$ many gradient evaluations. This result provides the first known lower bound for the GDA algorithm in NC-SC, showing that the rate obtained in~\cite{lin2020gradient} for the convergence of GDA is tight up to a factor of $\kappa$. The general proof idea is to consider the following quadratic NC-SC function $f:\mathbb{R} \times \mathbb{R} \mapsto \mathbb{R}$, which is strongly-concave in both $\vx$ and $\vy$:
\begin{align*}
	f(x,y):=-\tfrac{1}{2}\ell x^2+bxy-\tfrac{1}{2}\mu y^2.
\end{align*} 
By construction,  $f$ is nonconvex in $x$ (it is actually concave in $x$) and $\mu$-strongly-concave in $y$. Assume $\kappa:=\ell/\mu\ge 4$ and choose 	$b = \sqrt{\mu (\ell+\mu_x)}
$ for some $0<\mu_x\le \ell/2$ to be chosen later.
Then we know $b\le \ell/2$, and it is easy to verify that $f$ is $\ell$ smooth. Note that the primal function
\begin{align*}
	\Phi(x)=\max_y f(x,y)=\tfrac{1}{2}\mu_x x^2
\end{align*}
is actually strongly convex. This also justifies the symbol for $\mu_x$. We use GDA to find the solution for $\min_x\max_y f(x,y)$. Indeed for this problem, the optimal solution is achieved at the origin. The stepsizes ratio is chosen as
$	r = \frac{\eta_y}{\eta_x}$ and $ \eta_y = \frac{1}{\ell}$ for some numerical constants $c$. Then the GDA update rule can be written as
\begin{equation}
\begin{pmatrix}
x_{k+1}\\ y_{k+1} 
\end{pmatrix}= (\mathbf{I}+\eta_x \mathbf{M})\cdot \begin{pmatrix}
x_{k}\\ y_{k} 
\end{pmatrix}, \, \mathbf{M}:=\begin{pmatrix}
\ell & -b\\ r b & -\mu r 
\end{pmatrix}. 
\label{eq:gda_update-2}
\end{equation}
Note that~\eqref{eq:gda_update-2} is a linear time-invariant system, and due to the simplicity of quadratic form, we are able to track the dynamic of primal and dual variables. By iterating this linear system and analyzing the eigenvalues of the transition matrix, we are able to lower bound the gradient at final iterations.

 Now we turn to the extension of the lower bound analysis of GDA to OGDA/EG as stated below. 

\begin{theorem}[Lower complexity bound for OGDA/EG] 
 Consider the deterministic OGDA/EG method with any arbitrary choice of learning rates and let $\Bar{\vx}$ be the returned solution. Then, there exists a function $f$ satisfying Assumption~\ref{asm:1}, and an initialization $(\vx_0,\vy_0)$, such that OGDA/EG method requires  at least     $T = \Omega\left(\frac{\kappa \Delta_\Phi}{\epsilon^2}\right)
 $ iterations to guarantee $\norm{\nabla\Phi(\Bar{x})}\le\epsilon$. \label{thm:lower_bound_eg}
\end{theorem}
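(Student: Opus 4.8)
The plan is to mirror the GDA lower-bound construction in Theorem~\ref{thm:lower_bound_gda}, adapting it to the OGDA and EG update rules. Concretely, I would again take the quadratic test function
\[
f(x,y) = -\tfrac{1}{2}\ell x^2 + bxy - \tfrac{1}{2}\mu y^2,
\]
with $\kappa = \ell/\mu \ge 4$, $b = \sqrt{\mu(\ell+\mu_x)} \le \ell/2$ for a small parameter $\mu_x \in (0,\ell/2]$ to be tuned, and $\eta_y = c/\ell$, $\eta_x = \eta_y/r$ for a stepsize ratio $r$. As before, $\Phi(x) = \tfrac{1}{2}\mu_x x^2$ is strongly convex with minimizer at the origin, so $\|\nabla\Phi(\bar x)\| = \mu_x|\bar x|$ and the task reduces to lower-bounding how many iterations are needed to drive $|x_k|$ below $\epsilon/\mu_x$. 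The key point is that for a \emph{quadratic} $f$, both OGDA and EG reduce to linear time-invariant recursions in an augmented state vector, so the same eigenvalue-tracking argument applies.

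First I would write down the linearized dynamics. For OGDA the natural state is $(x_k, y_k, x_{k-1}, y_{k-1})$: the update $\bm{z}_{k+1} = \bm{z}_k + \eta_x(2 A \bm{z}_k - A\bm{z}_{k-1})$-type form (with $A$ the appropriate $2\times2$ gradient matrix built from $\ell, b, \mu, r$) becomes a fixed $4\times 4$ transition matrix $\mathbf{M}_{\mathrm{OGDA}}$. For EG the half-step can be substituted explicitly, yielding $\bm{z}_{k+1} = (\mathbf I + \eta_x \mathbf M + \eta_x^2 \mathbf M^2/\text{something})\bm z_k$, i.e. a $2\times2$ system with transition matrix $\mathbf M_{\mathrm{EG}}$ that is a quadratic polynomial in the GDA transition matrix. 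In both cases I would compute the spectrum: the goal is to show that for the prescribed stepsizes (from Theorem~\ref{thm:ncsc_ogda}, $\eta_x \asymp 1/(\kappa^2\ell)$, $\eta_y \asymp 1/\ell$), the dominant eigenvalue $\lambda$ governing the $x$-component satisfies $|\lambda| \ge 1 - C\mu_x\eta_x = 1 - O(\mu_x/(\kappa^2\ell))$, so that $|x_k| \gtrsim |\lambda|^k |x_0|$ stays above $\epsilon/\mu_x$ unless $k = \Omega\!\big(\tfrac{1}{\mu_x\eta_x}\log(\mu_x|x_0|/\epsilon)\big) = \Omega(\kappa^2/\epsilon^2)$ after optimizing $\mu_x$ and the initialization (choosing $\mu_x$ and $|x_0|$ so that $\Delta_\Phi = \tfrac12\mu_x x_0^2$ is a fixed constant and the $\log$ factor is absorbed, as in the GDA proof). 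The initialization $(x_0,y_0)$ should be chosen aligned with (or having nontrivial projection onto) the slow eigenvector, and for OGDA I'd set $x_{-1}=x_0$, $y_{-1}=y_0$ (the paper's convention) which corresponds to a specific point in the $4$-dimensional state space.

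The main obstacle I anticipate is the eigenvalue analysis of the \emph{non-normal} transition matrices, especially the $4\times4$ one for OGDA: one must show not just that a slow eigenvalue near $1$ exists, but that the initial state has a non-negligible component along the corresponding (possibly ill-conditioned) eigenvector, and that the correction/momentum terms do not introduce cancellation or a faster-decaying mode that dominates the $x$-coordinate. Because OGDA's characteristic polynomial is quartic, I would likely perturbatively expand its roots in the small parameter $\eta_x$ (treating $\mu_x\eta_x$ and $b^2\eta_x/\ell$ as the relevant small quantities) to identify the root behaving like $1 - \Theta(\mu_x\eta_x)$, and separately check the other three roots are bounded away in a way that does not spoil the lower bound on $|x_k|$; for EG the quadratic-in-$\mathbf M$ structure makes this more transparent. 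A secondary technical point is verifying the constructed $f$ genuinely satisfies Assumption~\ref{asm:1} with the claimed $\ell$ and $\mu$ (already checked in the GDA case since the function is identical) and that the chosen stepsizes match those in Theorem~\ref{thm:ncsc_ogda} up to the constants $c_1,c_2$. Once the slow-mode estimate $|x_k| \ge c\,|x_0|\,(1-C\mu_x\eta_x)^k$ is in hand, the conclusion $T = \Omega(\kappa^2\Delta_\Phi/\epsilon^2)$ follows exactly as in Theorem~\ref{thm:NCSC_Tightness_GDA}.
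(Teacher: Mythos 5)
Your plan is aimed at the wrong target. The theorem you are asked to prove is the \emph{stepsize-independent} lower bound (Section~\ref{sec:lower bound}), which asserts $T = \Omega(\kappa\Delta_\Phi/\epsilon^2)$ for \emph{any} choice of learning rates, not the tightness result of Theorem~\ref{thm:NCSC_Tightness_EG/OGDA}, which assumes the specific stepsizes $\eta_x\asymp 1/(\kappa^2\ell)$ and yields $\Omega(\kappa^2\Delta_\Phi/\epsilon^2)$. By fixing $\eta_x\asymp 1/(\kappa^2\ell)$ and concluding $\Omega(\kappa^2/\epsilon^2)$ you have proved the tightness statement, not this one. Once $\eta_x$ is left free, the adversary (the algorithm designer) can push $\eta_x$ up to roughly $1/(\kappa\ell)$ to make the slow mode $1+\eta_x\lambda_1 \approx 1-\Theta(\mu_x/(\kappa\ell))$, at which point the best obtainable lower bound is only $\Omega(\kappa\ell\Delta_\Phi/\epsilon^2)$. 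This is the bound the paper actually proves (Proposition~\ref{prop:NCSC_GDA_1}'s EG/OGDA analogue, with $\mu_x = \Theta(\kappa\ell/T)$).

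More importantly, for an ``arbitrary learning rate'' lower bound you cannot stop at the slow-mode estimate: you must also dispose of the regime $\eta_x \gtrsim 1/(\kappa\ell)$ (i.e., $r = \eta_y/\eta_x \lesssim \kappa$), where the slow eigenvalue argument breaks down. The paper handles this by a separate trace argument (the analogue of Proposition~\ref{prop:NCSC_GDA_2}): for EG one computes $\operatorname{trace}(\mathbf I + \eta_x\mathbf M + \eta_x^2\mathbf M^2) = 1 + \eta_x(\ell-\mu r)+\eta_x^2\bigl((\ell-\mu r)^2 - 2r\mu\mu_x\bigr)$, which exceeds $1$ once $\mu_x$ is small enough, forcing an eigenvalue of modulus at least $1$ and hence divergence; an analogous $4\times 4$ trace computation is done for OGDA. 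Your proposal omits this case entirely, so the ``arbitrary stepsize'' claim is not established.

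Finally, a remark on your intended $4\times 4$ eigenvalue perturbation for OGDA: the paper sidesteps this. By initializing $\vw_0=\vw_1$ parallel to the eigenvector of $\mathbf M$ for the slow eigenvalue $\lambda_1$, every iterate remains on that one-dimensional line, so the dynamics collapse to the scalar two-step recurrence $z_{k+1} = z_k + 2\eta_x\lambda_1 z_k - \eta_x\lambda_1 z_{k-1}$, which can be solved exactly. This avoids the non-normality and ill-conditioned-eigenvector worries you correctly flag. You should adopt this reduction rather than perturbing the characteristic quartic directly.
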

Theorem~\ref{thm:lower_bound_eg} shows that OGDA/EG methods can not find $\epsilon$-stationary point for any choice of learning rates with less than $\Omega(\frac{\kappa}{\epsilon^2})$ gradient evaluations. Given the upper bounds we derived for deterministic OGDA/EG in section~\ref{sec:ncsc}, our result indicates that our upper bounds is tight up to a factor of $\kappa$, however, we highlight that according to Theorem~\ref{thm:NCSC_Tightness_EG/OGDA}, given our choice of the learning rate, our upper bound is exactly tight. The complete proof of Theorems~\ref{thm:lower_bound_gda} and~\ref{thm:lower_bound_eg} are deferred to Appendix~\ref{app:lw_ncsc}.

\section{Experiments}
\label{sec:exp}
In this section, we empirically evaluate the performance of the OGDA algorithm. In particular, we follow~\cite{yang2021faster} and optimize 
Wasserstein GAN (WGAN) on a synthetic dataset generated from a Gaussian distribution. We mainly follow the setting of~\cite{yang2021faster,loizou2020stochastic} to conduct our experiment. We consider optimizing the following WGAN loss, where the generator approximates a one-dimensional Gaussian distribution:
\begin{equation}
\begin{split}
 \underset{w_G}{\min} \,  \underset{w_D}{\max} &\quad \E_{x \sim \cN (\mu,\sigma^2)}[D_{w_D} (x)] - \E_{z \sim \cN(0,1) }[D_{w_D} (G_{w_G}(z))] - \lambda \|w_D\|^2
 \end{split}
\end{equation}
Where $w_G$ and $w_D$ correspond to generator and discriminator parameters, respectively. We define discriminator to be $D(x) = \phi_1 x + \phi_2 x^2$, and generator to be a neural network with one hidden layer with 5 neurons with ReLU activation function, same as the setup considered in~\cite{yang2021faster}. We assume that real data comes from a Gaussian $\cN(\mu,\sigma^2)$ distribution, and the generator tries to approximate $\mu$ and $\sigma^2$ using a neural network. We set $\mu= 0$, and $\sigma = 0.1$. $\lambda$ is the regularization parameter which we set to $0.001$. Note that $\lambda$  makes the function strongly-concave/concave in terms of discriminator parameters, so the problem becomes NC-SC/NC-C. 

Performance of fine-tuned stochastic OGDA is depicted in Figure~\ref{fig:lw}, in comparison to ADAM~\cite{kingma2014adam}, RMSprop, SGDA~\cite{lin2020gradient}, SAGDA~\cite{yang2021faster}, and Smooth-SAGDA~\cite{yang2021faster}, which are well-known minimax optimization methods. Our evaluation shows that OGDA outperforms all of these methods, supporting the empirical advantage of OGDA as seen in relevant studies \cite{liang2019interaction,daskalakis2017training}. While our theoretical results show that OGDA/EG might not outperform GDA in terms of convergence rate, comparing the empirical result suggests that OGDA might converge faster. In Figure~\ref{fig:WD}, the evolution of the Wasserstein distance metric during the training has been shown. While GDA and OGDA are stabilized faster than other algorithms, it seems that they converge to a suboptimal solution, which incurs a higher Wasserstein distance. Thus, our study suggests that comparing different minimax algorithms only based on the convergence of gradient norm may not be that insightful in practice, as they might converge to a suboptimal equilibrium. This observation naturally leads to an interesting future direction to theoretically understand how different notions of equilibrium in first-order minimax optimization algorithms are related to the realistic  performance of practical methods such as GANs or WGANs.

The common version of OGDA, as depicted in Algorithm~\ref{alg:ogda} in Appendix~\ref{app:NCSC_Upper}, uses the same learning rate for the current gradient and correction term (difference between gradient). Empirically, we observed that using different learning rates for those terms (which we call generalized OGDA) makes the convergence faster and more stable. Hence in the following, we investigate the effect of using different correction term ratios in OGDA, which we refer them as $\beta_1$ and $\beta_2$ as defined in Subsection~\ref{sec:discussion}. The results in Figure~\ref{fig:beta}  demonstrate  that  small values of these parameters benefit the convergence rate, and larger values degrade the performance. We further observe that using correction term ratios larger than $0.5$ makes the algorithm diverge and become unstable. Hence, this corroborates the practical importance of the generalized OGDA algorithm compared to OGDA, as we are restricted to choosing the same learning rate  in OGDA (i.e.,  $\beta_1= \beta_2 = 1$).

\begin{figure}
\centering
\begin{subfigure}{.33\textwidth}
  \centering
  \includegraphics[width=.99\linewidth]{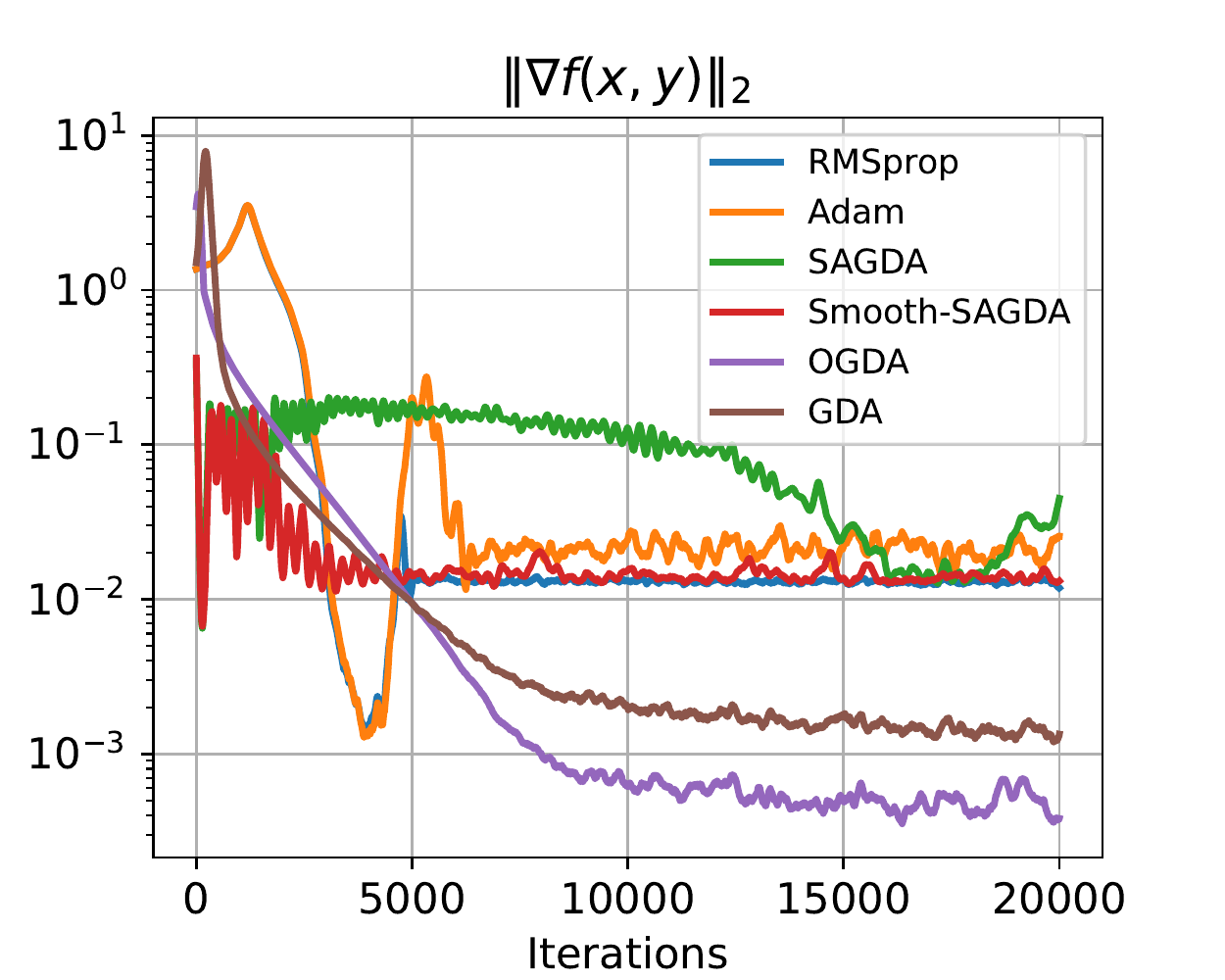}
  \caption{}
  \label{fig:lw}
\end{subfigure}%
\begin{subfigure}{.33\textwidth}
  \centering
  \includegraphics[width=.99\linewidth]{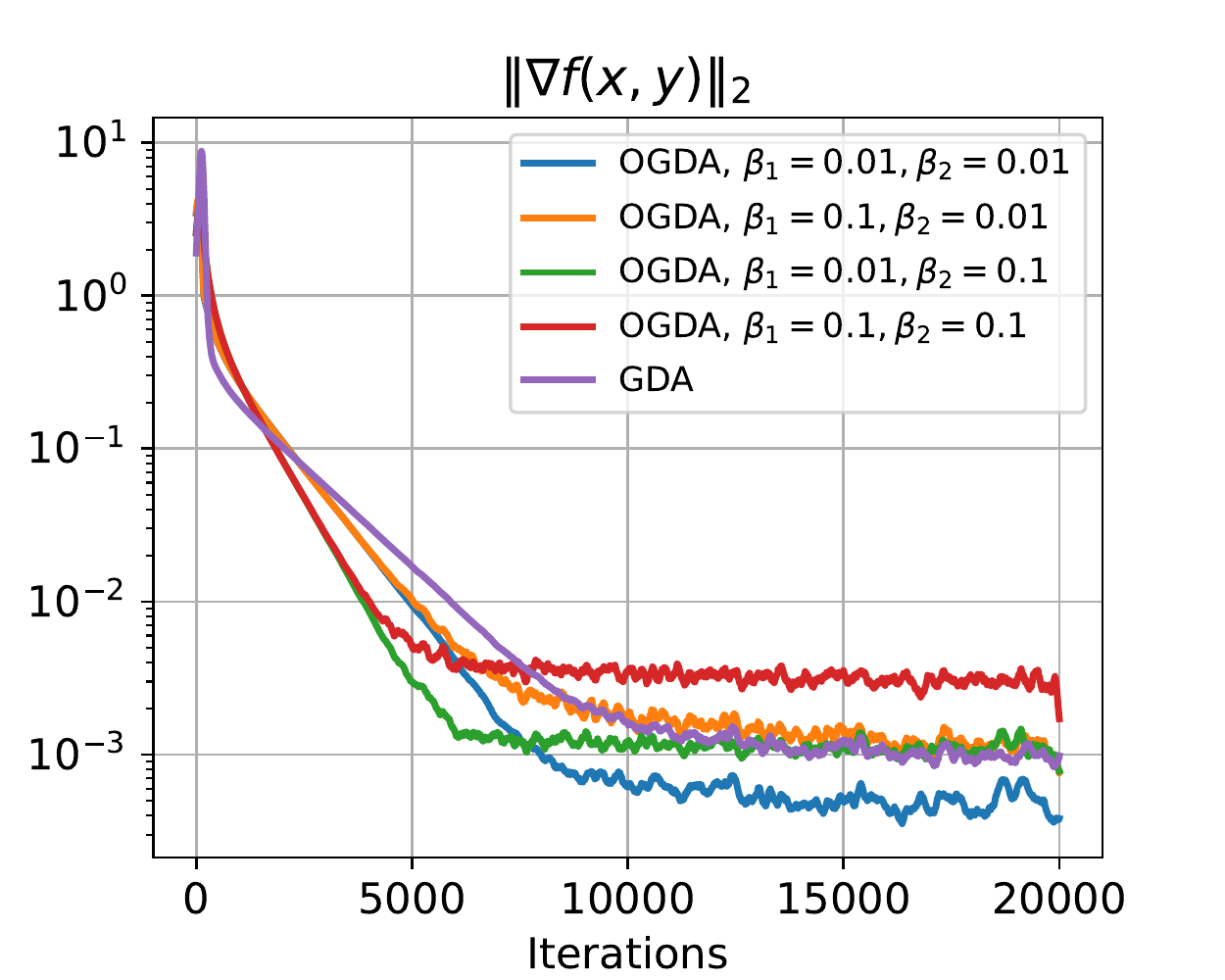}
  \caption{}
  \label{fig:beta} 
\end{subfigure}
\begin{subfigure}{.33\textwidth}
   \centering
  \includegraphics[width=.99\linewidth]{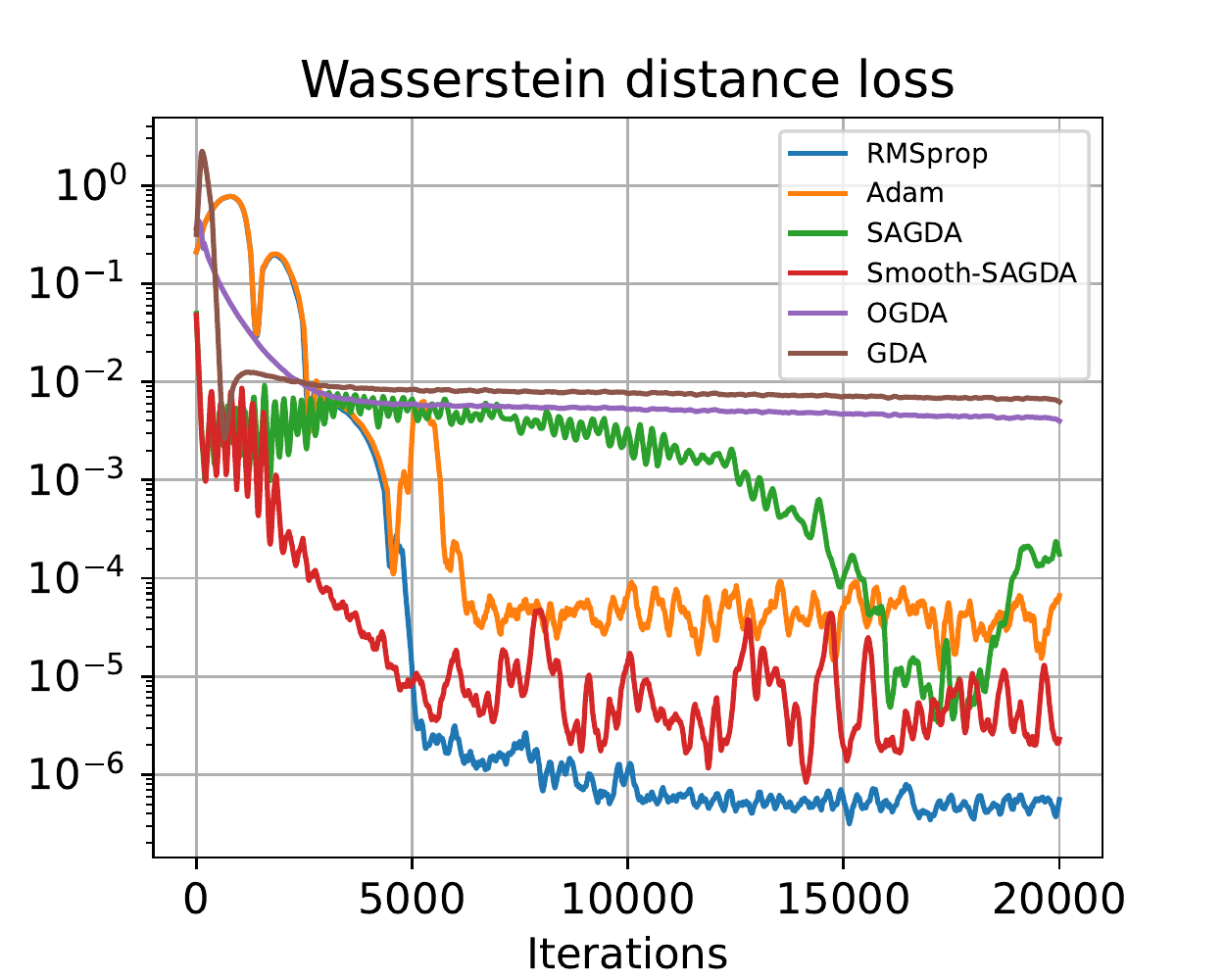}
  \caption{}
  \label{fig:WD}
\end{subfigure}
\caption{Figure~\ref{fig:lw} demonstrates  the best performance of different algorithms on optimizing  NC-SC objective in WGAN, where $\|\nabla f(\vx,\vy)\|^2 = \| \nabla_x f(\vx,\vy)\|^2 + \|\nabla_y f(\vx,\vy) \|^2$. For GDA, and OGDA, $\eta_x$, and $\eta_y$ chosen from the set $\{5e-5 , 1e-4 , 5e-4 , 1e-3 , 5e-3 , 1e-2 , 5e-2\}$ using grid search. For OGDA, we choose correction term ratios from the set $\{0 , 0.01 , 0.1 , 0.5, 1\}$. The optimal learning rates are as follows. For both OGDA, and GDA, we set  $\eta_x = \eta_y = 0.05$, and for OGDA $\beta_1 = \beta_2 =0.01$. For other algorithms, we used the same hyperparameters as reported in~\cite{yang2021faster}, using the same random seed. Figure~\ref{fig:beta} indicates effect of tuning correction term ratio $\beta$ on the performance of generalized OGDA algorithm. Figure~\ref{fig:WD} indicates the evaluation of the Wasserstein distance metric during the training for the best hyperparameter configuration.}\vspace{-0.5cm}
\label{fig:test}
\end{figure}

\section{Conclusion}
In this paper, we established the convergence of Optimistic Gradient Descent Ascent (OGDA) and  Extra-gradient (EG) methods in solving nonconvex minimax optimization problems. We demonstrated that  both methods exhibit the same convergence rate that is achievable by
GDA in both stochastic and deterministic settings. We also derived matching lower bounds for the choice of parameters that indicate the tightness of obtained rates. Further, we established general lower bounds (i.e, learning rate-independent) for GDA/EG/OGDA in the NC-SC setting, indicating the optimality of  obtained upper bounds up to the factor of $\kappa$. It remains an interesting future work to extend the lower bound results to the stochastic setting and also derive the general lower bound for GDA/EG/OGDA in the NC-C setting. Moreover, there is a gap by a factor of $\kappa$ between our lower and upper bounds for NC-SC problems, which would also be an interesting future work to close this gap.

\section*{Acknowledgements}

This work was supported in part by NSF grant CNS 1956276. We also would like to thank Mohammad Mahdi Kamani for
his help on conducting the experiments.

\clearpage
\newpage
\appendix

\addcontentsline{toc}{section}{Appendix} 

\part{Appendix} 
{
In the appendix, we provide the missing proofs and derivations from the main manuscript, as well as proposing a general variant of the OGDA algorithm where different learning rates can be employed in primal and dual updates. \\

\parttoc 
}
\newpage

\section{Proof of Convergence in Nonconvex-Strongly-Concave Setting}\label{app:NCSC_Upper}

\subsection{Proof of Convergence of  OGDA}\label{app:NCSC_Upper_OGDA}
Here we present the convergence proof for the OGDA algorithm in the NC-SC setting as detailed in Algorithm~\ref{alg:ogda}. Note that it is clear from context we abuse the notation and use $\bm{y}^*_t$ instead of $\bm{y}^*(\vx_t)$. In the following, we provide a proof sketch, making our analysis easier to follow.

Algorithm~\ref{alg:ogda} shows the deterministic and stochastic variants of the OGDA algorithm in detail.

\begin{algorithm}[H] 
\caption{(Stochastic) OGDA}
\label{alg:ogda}
\SetKwInOut{Input}{Input}
\SetKwInOut{Output}{Output}
\Input{Initialization $(\bm{x}_{-1}=\bm{x}_0,\bm{y}_{-1}=\bm{y}_0)$, learning rates $\eta_x, \eta_y$}
\For{$t=1,2,\dots,T$}{

 \algemph{antiquewhite}{0.95}{ 
 $\left. \bm{x}_t  = \bm{x}_{t-1} -  \eta_x  \nabla_x  f(\bm{x}_{t-1},\bm{y}_{t-1})  + \eta_x(\nabla_x f(\bm{x}_{t-1},\bm{y}_{t-1})-\nabla_x f(\bm{x}_{t-2},\bm{y}_{t-2}) )\right.$,  \\
 $\left.\bm{y}_t =   \bm{y}_{t-1} +  \eta_y( \nabla_y f(\bm{x}_{t-1},\bm{y}_{t-1}) -\eta_y( \nabla_y f(\bm{x}_{t-1},\bm{y}_{t-1})- \nabla_y f(\bm{x}_{t-2},\bm{y}_{t-2}))\right.$.  \hfill   \# OGDA  }
 
 \algemph{blizzardblue}{0.95}{ 
 $\bm{x}_t =   \bm{x}_{t-1} -  \eta_y \vg_{x,t-1}   +\eta_y(\vg_{x,t-1} - \vg_{x,t-2} )$,  \\
 $\bm{y}_t =   \bm{y}_{t-1} +  \eta_y \vg_{y,t-1}  -\eta_y(\vg_{y,t-1} - \vg_{y,t-2} )$.  \hfill   \# Stochastic OGDA}
} 
\end{algorithm}

\paragraph{Proof sketch.} We provide a sketch of key technical ideas. Specifically, we develop three key lemmas to prove the convergence. First lemma is primal descent, in which we use the $\kappa \ell$-smoothness property of $\Phi(\vx)$ at point $\vx_t$ and $\vx_{t-1}$ to find an upper bound for $\E[\Phi(\vx_t) - \Phi(\vx_{t-1})]$, and then by taking summation on this upper bound for all $t \in \{1, \dots , T\}$ we are able to show the following: 
\begin{equation}
\begin{split}
    \E[\Phi(\vx_T)] - \Phi(\vx_1) & \le - \frac{\eta_x}{2} \sum_{i=1}^{T-1} \E[\| \nabla \Phi(\vx_i) \|^2]  +  O(\eta_x \ell^2)  \\
    & \quad + O(\eta_x \ell^2)  \left (\sum_{i=1}^{T-1} \| \E[\vy_i - \vy^*_i \|^2] + \sum_{i=1}^{T-1} \E[\|\vy_i - \vy_{i-1} \|^2] \right ) \\
    &\quad - \frac{\eta_x}{2}( 1 - O(\eta_x)) \sum_{i=1}^{T-2} \E[\| \vg_i\|^2] + O\left(\eta_x \frac{T \sigma^2}{M_x}\right) 
\end{split}\label{eqn:primal}
\raisetag{2\normalbaselineskip}
\end{equation}
where $\vg_i = 2 \nabla_x f(\vx_i,\vy_i) - \nabla_x f(\vx_{i-1},\vy_{i-1})$.

The second key lemma is dual descent. To derive this lemma, first note that OGDA alternatively can be written in view of Past Extra-gradient algorithm (PEG) as defined in~\cite{hsieh2019convergence}: 
\begin{equation}
\begin{split}
    \vy_t = \vz_t   + \eta_y \vg_{y,t-1} \quad , \quad \vz_{t+1} = \vz_t + \eta_y \vg_{y,t}
\end{split}
\tag{Dual update}
\end{equation}
where $\vz_t = \vy_{t-1} + \eta_{y} ( \vg_{y,t-1} -  \vg_{y,t-2})$. Also, we have the following primal update: 
\begin{equation}
\begin{split}
    \vx_t = \vw_t  - \eta_x \vg_{x,t-1} \quad , \quad \vw_{t+1} = \vw_t - \eta_x \vg_{x,t}
\end{split}
\tag{Primal update}
\end{equation}
where $\vw_t = \vx_{t-1}- \eta_{x} (\vg_{x,t-1} - \vg_{x,t-2})$. This view of OGDA is presented in \cite{hsieh2019convergence,gidel2018variational,mokhtari2020unified}. Motivated by this interpretation of the OGDA algorithm, we define the following potential function to derive the dual descent. Let $ \vr_t = \| \vz_{t+1}- \vy^*_t \|^2 +  \frac{1}{4} \|\vy_t - \vy_{t-1}\|^2$, and $\eta_y = \frac{1}{6 \ell}$, then we show that:
\begin{equation*}
\begin{split}
    \E[\vr_t] &\le (1 - \frac{1}{12 \kappa}) \E[\vr_{t-1}] +  
    O(\eta_x^2) \E[\| \vg_{t-2 }\|^2] + O(\eta_x^2 \kappa^3) \E[\| \vg_{t-1} \|^2] + O\left(\frac{\sigma^2}{\ell^2 M_y}\right).
\end{split}
\end{equation*}
 We built on the top of OGDA analysis in~\cite{mokhtari2020unified,hsieh2019convergence} in strongly-concave-strongly-concave setting to prove the above lemma, which helps us directly find an upper bound for $\sum_{i=1}^{T-1} \|\vy_i - \vy_{i-1}\|^2$ in Equation~\ref{eqn:primal}.

Our third key lemma  aims to upper bound  $\sum_{i=1}^{T-1} \E[\|\vy_i - \vy^*_i\|^2]$ in terms of $\sum_{i=1}^{T-1} \E[\vr_i]$. Particularly we show that: 

\begin{equation*}
\begin{split}
&\sum_{i=1}^{T-1} \E[\|\vy_i - \vy^*_i \|^2]  \le \left(\|\vy_0 - \vy^*_0 \|^2 + \sum_{i=2}^{T-1} \E[\vr_i] + \eta_x^2 \kappa^2 \sum_{i=1}^{T-2} \E[\|\vg_i\|^2] + \frac{T \sigma^2}{\ell^2 M_y}\right).
\end{split}
\end{equation*}

Now note that using second, and third lemma both $\sum_{i=1}^{T-1} \E[\|\vy_i - \vy_{i-1} \|^2$], and $\sum_{i=1}^{T-1}   \E[\|\vy_i - \vy^*_i \|^2] $ terms can be upper bounded in terms of $\sum_{i=1}^{T-2} \E[\| \vg_i\|^2]$, and by properly choosing $\eta_x$ we show that $\sum_{i=1}^{T-2} \E[\|\vg_i\|^2]$ term can be ignored, which entails the  desired convergence rate.

\subsubsection{Useful lemmas}

\begin{lemma} [Lemma 4.3 in~\cite{lin2020gradient}] \label{app:lemma:smooth}
Let $\Phi(\vx) = \max_{\vy} f(\vx, \vy)$, and $\vy^*(\vx) =  \arg \max_{\vy} f(\vx,\vy)$. Then, under Assumption~\ref{asm:1}, $\Phi(\vx)$ is $\kappa \ell + \ell$-smooth, and $\vy^*(\vx)$ is $\kappa$ Lipschitz.
\end{lemma}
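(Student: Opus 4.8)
The plan is to prove the two claims in order: first that the dual maximizer map $\vy^*(\cdot)$ is $\kappa$-Lipschitz, and then, using that, that $\nabla\Phi$ is $(\kappa+1)\ell$-Lipschitz, via Danskin's theorem.

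\textbf{Step 1 ($\vy^*$ is $\kappa$-Lipschitz).} Since $f(\vx,\cdot)$ is $\mu$-strongly concave, $\vy^*(\vx)$ is the unique maximizer and is characterized by the first-order (variational inequality) condition $\langle \nabla_y f(\vx,\vy^*(\vx)),\,\vy-\vy^*(\vx)\rangle\le 0$ for all $\vy\in\cY$ (which reduces to $\nabla_y f(\vx,\vy^*(\vx))=0$ when $\cY=\mathbb{R}^n$). Fix $\vx_1,\vx_2$; substitute $\vy=\vy^*(\vx_2)$ into the condition at $\vx_1$ and $\vy=\vy^*(\vx_1)$ into the condition at $\vx_2$, and add the two inequalities. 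Inserting $\pm\nabla_y f(\vx_1,\vy^*(\vx_2))$, the strong concavity of $f(\vx_1,\cdot)$ lower-bounds one piece by $\mu\|\vy^*(\vx_1)-\vy^*(\vx_2)\|^2$, and Cauchy--Schwarz together with the $\ell$-smoothness of $f$ in $\vx$ upper-bounds the remaining cross term by $\ell\|\vx_1-\vx_2\|\,\|\vy^*(\vx_1)-\vy^*(\vx_2)\|$. Rearranging gives $\|\vy^*(\vx_1)-\vy^*(\vx_2)\|\le(\ell/\mu)\|\vx_1-\vx_2\|=\kappa\|\vx_1-\vx_2\|$.

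\textbf{Step 2 ($\Phi$ is $(\kappa\ell+\ell)$-smooth).} By Danskin's theorem -- applicable because $\vy^*(\vx)$ is the unique maximizer and $f$ is differentiable -- $\Phi$ is differentiable with $\nabla\Phi(\vx)=\nabla_x f(\vx,\vy^*(\vx))$. For any $\vx_1,\vx_2$, insert the intermediate term $\nabla_x f(\vx_2,\vy^*(\vx_1))$ and apply the triangle inequality:
\[
\|\nabla\Phi(\vx_1)-\nabla\Phi(\vx_2)\|\le \|\nabla_x f(\vx_1,\vy^*(\vx_1))-\nabla_x f(\vx_2,\vy^*(\vx_1))\|+\|\nabla_x f(\vx_2,\vy^*(\vx_1))-\nabla_x f(\vx_2,\vy^*(\vx_2))\|.
\]
By the $\ell$-smoothness assumption (taking $\vy_1=\vy_2$ in the first summand and $\vx_1=\vx_2$ in the second) this is at most $\ell\|\vx_1-\vx_2\|+\ell\|\vy^*(\vx_1)-\vy^*(\vx_2)\|$, and Step 1 bounds it by $\ell(1+\kappa)\|\vx_1-\vx_2\|$. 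Thus $\Phi$ is $(\kappa\ell+\ell)$-smooth.

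\textbf{Main obstacle.} Nothing here is deep; the only genuine subtleties are invoking Danskin's theorem (differentiability of $\Phi$ and the gradient formula) and using the variational-inequality form of optimality in the constrained case. Strong concavity is precisely what makes $\vy^*$ single-valued, so both are standard. Since the statement is exactly Lemma 4.3 of~\cite{lin2020gradient}, one may alternatively just cite it, but the short self-contained argument above is preferable.
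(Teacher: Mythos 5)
Your proof is correct. The paper does not supply its own proof of this statement --- it cites Lemma~4.3 of Lin et al.\ (2020) directly --- and the argument you give (the variational-inequality/monotonicity bound for $\kappa$-Lipschitzness of $\vy^*$, followed by Danskin's theorem and a triangle-inequality split to get $(\kappa+1)\ell$-smoothness of $\Phi$) is exactly the standard one used in that reference.
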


\begin{lemma}
\label{lemma:a2}
Let $\{a_t\}_{t=0}^{\infty}$, $\{b_t\}_{t=0}^{\infty}$ be the sequence of positive real valued number, and $\gamma \in (2,\infty)$ such that $\forall t\ge 1$: 
\begin{equation}
\label{eqn:a2}
a_t \le (1 - \frac{1}{\gamma}) a_{t-1} + b_t    
\end{equation}
then the following inequality holds for any $t_1 > t_2 \ge 0 $: 
\begin{equation}
\sum_{i = t_1}^{t_2} a_i \le \gamma a_{t_1} + \gamma \sum_{i=t_1 +1}^{t_2} b_i    
\end{equation}
\end{lemma}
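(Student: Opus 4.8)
The plan is to sum the one-step recursion over the relevant range of indices and then solve the resulting inequality for the partial sum. First I would record the sign information: since $\gamma \in (2,\infty)$, the factor $\rho := 1 - \tfrac{1}{\gamma}$ satisfies $\rho \in (\tfrac12,1) \subset (0,1)$, and in particular $\rho > 0$ and $1-\rho = \tfrac{1}{\gamma} > 0$. These facts, together with the nonnegativity of the $a_t$'s and $b_t$'s, are the only structural inputs needed. (I read the index constraint as $t_2 \ge t_1 \ge 0$, the case $t_2 = t_1$ being trivial since then $\gamma a_{t_1} \ge a_{t_1}$.)

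Next I would fix $t_2 > t_1 \ge 0$, abbreviate $S := \sum_{i=t_1}^{t_2} a_i$ and $B := \sum_{i=t_1+1}^{t_2} b_i$, and apply hypothesis~\eqref{eqn:a2} at each index $i \in \{t_1+1,\dots,t_2\}$. Summing those inequalities gives
\[
\sum_{i=t_1+1}^{t_2} a_i \;\le\; \rho \sum_{i=t_1+1}^{t_2} a_{i-1} + \sum_{i=t_1+1}^{t_2} b_i \;=\; \rho \sum_{j=t_1}^{t_2-1} a_j + B .
\]
Here the left-hand side is $S - a_{t_1}$ and the remaining sum is $\sum_{j=t_1}^{t_2-1} a_j = S - a_{t_2} \le S$, where the last step uses $a_{t_2} \ge 0$ and $\rho \ge 0$. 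Hence $S - a_{t_1} \le \rho S + B$.

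Finally I would rearrange: $(1-\rho) S \le a_{t_1} + B$, i.e.\ $\tfrac{1}{\gamma} S \le a_{t_1} + B$, and multiplying through by $\gamma$ yields $S \le \gamma a_{t_1} + \gamma B$, which is exactly the claimed bound. I do not expect any genuine obstacle; the only points that require care are the bookkeeping between the shifted index ranges $\{t_1,\dots,t_2-1\}$ and $\{t_1+1,\dots,t_2\}$, and the single place where the nonnegative term $\rho\, a_{t_2}$ is dropped. An alternative would be to unroll the recursion to $a_i \le \rho^{\,i-t_1} a_{t_1} + \sum_{j=t_1+1}^{i} \rho^{\,i-j} b_j$ and then bound the geometric sums by $\sum_{k\ge 0}\rho^k = \gamma$, but the telescoping argument above is shorter and avoids manipulating double sums.
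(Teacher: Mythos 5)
Your proof is correct, but it takes a different route from the paper. The paper first unrolls the recursion to get $a_t \le \rho^{\,t-t_1} a_{t_1} + \sum_{i=t_1+1}^{t} \rho^{\,t-i} b_i$ with $\rho = 1 - \tfrac{1}{\gamma}$, then sums over $t$ and bounds both resulting geometric series by $\sum_{k\ge 0}\rho^k = \gamma$, which requires some care with the double sum over $(t,i)$. You instead sum the one-step inequality directly over $i \in \{t_1+1,\dots,t_2\}$, observe that the two partial sums that appear differ from $S := \sum_{i=t_1}^{t_2} a_i$ only by the nonnegative boundary terms $a_{t_1}$ and $a_{t_2}$, and then solve $(1-\rho)S \le a_{t_1} + B$ for $S$. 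Your telescoping argument is shorter, avoids the double-sum bookkeeping entirely, and makes the role of nonnegativity transparent (it is used exactly once, to discard $\rho a_{t_2}$); the paper's unrolled form has the minor advantage of also yielding a pointwise decay bound on $a_t$ as a byproduct, though that is not needed for the stated conclusion. You also correctly noted that the index constraint in the statement must be read as $t_2 \ge t_1 \ge 0$ rather than the printed $t_1 > t_2$.
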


\begin{proof}[Proof of Lemma~\ref{lemma:a2}]
Unfolding the recursion in Equation~\ref{eqn:a2} for $t-t_1$ steps we have:
\begin{equation}
a_{t} \le (1-\frac{1}{\gamma} )^{t - t_1} a_{t_1} + \sum_{i = t_1 +1}^{t} (1 - \frac{1}{\gamma})^{t - i} b_i
\end{equation}
Now taking summation of above equation we have:
\begin{equation}
\label{eqn:a2.1}
\sum_{t= t_1}^{t_2} a_t \le \left ( \sum_{t = t_1}^{t_2} (1 - \frac{1}{\gamma})^{t- t_1} \right)  a_{t_1}  + \sum_{t=t_1 +1}^{t_2} \, \sum_{i = t_1 +1 }^t ( 1 - \frac{1}{\gamma} )^{t-i} b_i
\end{equation}
However note that, we can write: 
\begin{equation}
\begin{split}
 \sum_{t=t_1 +1}^{t_2} \, \sum_{i = t_1 +1 }^t ( 1 - \frac{1}{\gamma} )^{t-i} b_i = \sum_{i=t_1+1}^{t_2} \left ( b_i \sum_{j=0}^{t_2 - i} (1 - \frac{1}{\gamma})^j \right ) &=  \sum_{i=t_1+1}^{t_2} b_i \frac{1 - (1 -\frac{1}{\gamma})^{t_2-i +1}}{1 - (1 - \frac{1}{\gamma})}  \\
 &\le \gamma \sum_{i=t_1+1}^{t_2} b_i
\end{split}
\end{equation}
\end{proof}
Plugging this back to Equation~\ref{eqn:a2.1}, and noting that $\sum_{t = t_1}^{t_2} (1 - \frac{1}{\gamma})^{t- t_1}  = \frac{1 - (1 -\frac{1}{\gamma})^{t_2-t_1 +1}}{1 - (1 - \frac{1}{\gamma})} \le \gamma$, we have:
\begin{equation}
\sum_{t=t_1}^{t_2} a_t \le \gamma a_{t_1} + \gamma \sum_{i = t_1 + 1}^{t_2} b_i    
\end{equation}

\begin{lemma}
\label{lemma:a3}
Let $\vy_{t+1} = \vy_t + \eta_y  \vg_{y,t}$, where $\vg_{y,t}$ is the unbiased estimator of $\, \nabla_y f(\vx_t , \vy_t)$. If $\eta_y \le \frac{1}{2 \ell}$, we have: 
\begin{equation}
\| \vy_{t+1} - \vy^*_t \|^2 \le (1 - \eta_y \mu) \|\vy_t - \vy^*_t \|^2 + 2 \eta_y^2 \| \vdelta^y_t \|^2 + 2 \eta_y \ip{\vdelta^y_t , \vy_t - \vy^*_t} 
\end{equation}
where $\vdelta^y_t = \vg_{y,t}- \nabla_y f(\vx_t,\vy_t)$.
\end{lemma}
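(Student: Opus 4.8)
The plan is to treat $g(\vy):=f(\vx_t,\vy)$ as a fixed function of $\vy$ that is $\mu$-strongly concave (Assumption~\ref{asm:1}) and $\ell$-smooth in $\vy$ (set $\vx_1=\vx_2=\vx_t$ in the definition of $\ell$-smoothness to get $\|\nabla_y f(\vx_t,\vy_1)-\nabla_y f(\vx_t,\vy_2)\|\le\ell\|\vy_1-\vy_2\|$), and whose unique maximizer over $\mathbb{R}^n$ is $\vy^*_t=\vy^*(\vx_t)$, so that $\nabla g(\vy^*_t)=0$. First I would split the stochastic gradient as $\vg_{y,t}=\nabla g(\vy_t)+\vdelta^y_t$ and expand the update:
\begin{equation*}
\|\vy_{t+1}-\vy^*_t\|^2 = \|\vy_t-\vy^*_t\|^2 + 2\eta_y\ip{\nabla g(\vy_t),\vy_t-\vy^*_t} + \eta_y^2\|\nabla g(\vy_t)\|^2 + 2\eta_y\ip{\vdelta^y_t,\vy_t-\vy^*_t} + 2\eta_y^2\ip{\nabla g(\vy_t),\vdelta^y_t} + \eta_y^2\|\vdelta^y_t\|^2 .
\end{equation*}
Applying Young's inequality to the mixed term, $2\eta_y^2\ip{\nabla g(\vy_t),\vdelta^y_t}\le\eta_y^2\|\nabla g(\vy_t)\|^2+\eta_y^2\|\vdelta^y_t\|^2$, funnels all the randomness into exactly the two target terms $2\eta_y\ip{\vdelta^y_t,\vy_t-\vy^*_t}+2\eta_y^2\|\vdelta^y_t\|^2$, and leaves the deterministic remainder $\|\vy_t-\vy^*_t\|^2+2\eta_y\ip{\nabla g(\vy_t),\vy_t-\vy^*_t}+2\eta_y^2\|\nabla g(\vy_t)\|^2$ to be controlled.

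For that remainder I would invoke the standard interpolation inequality for a $\mu$-strongly-concave, $\ell$-smooth function (the concave analogue of Nesterov's Theorem~2.1.12), namely $\ip{\nabla g(\vy_t),\vy_t-\vy^*_t}\le -\tfrac{\mu\ell}{\mu+\ell}\|\vy_t-\vy^*_t\|^2-\tfrac{1}{\mu+\ell}\|\nabla g(\vy_t)\|^2$, using $\nabla g(\vy^*_t)=0$. Substituting this gives
\begin{equation*}
\|\vy_t-\vy^*_t\|^2 + 2\eta_y\ip{\nabla g(\vy_t),\vy_t-\vy^*_t} + 2\eta_y^2\|\nabla g(\vy_t)\|^2 \;\le\; \Bigl(1-\tfrac{2\eta_y\mu\ell}{\mu+\ell}\Bigr)\|\vy_t-\vy^*_t\|^2 + \Bigl(2\eta_y^2-\tfrac{2\eta_y}{\mu+\ell}\Bigr)\|\nabla g(\vy_t)\|^2 .
\end{equation*}

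It then remains to use the step-size restriction $\eta_y\le\frac{1}{2\ell}$ together with $\mu\le\ell$ (valid since $\kappa=\ell/\mu\ge1$) in two places: (i) $\mu+\ell\le2\ell$ gives $\eta_y\le\frac{1}{2\ell}\le\frac{1}{\mu+\ell}$, so $2\eta_y^2-\frac{2\eta_y}{\mu+\ell}\le0$ and the $\|\nabla g(\vy_t)\|^2$ term may be dropped; (ii) $\ell\ge\mu$ gives $\frac{2\mu\ell}{\mu+\ell}\ge\mu$, hence $1-\frac{2\eta_y\mu\ell}{\mu+\ell}\le1-\eta_y\mu$. Combining these with the noise terms isolated in the first step yields the claimed bound. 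I do not foresee a genuine obstacle; the only point requiring care is the bookkeeping of the stochastic cross term $\ip{\nabla g(\vy_t),\vdelta^y_t}$, so that after Young's inequality the residual $\|\nabla g(\vy_t)\|^2$ contribution is still fully absorbed by the negative term produced by the interpolation inequality under $\eta_y\le 1/(2\ell)$ — everything else is a routine application of textbook strong-convexity/smoothness estimates.
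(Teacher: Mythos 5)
Your proposal is correct and follows essentially the same route as the paper's proof: expand $\|\vy_{t+1}-\vy^*_t\|^2$, decompose $\vg_{y,t}=\nabla_y f(\vx_t,\vy_t)+\vdelta^y_t$, use Young's inequality to separate out the noise terms $2\eta_y^2\|\vdelta^y_t\|^2+2\eta_y\ip{\vdelta^y_t,\vy_t-\vy^*_t}$, then apply the strongly-concave/smooth interpolation inequality $\ip{\nabla_y f(\vx_t,\vy_t),\vy_t-\vy^*_t}\le-\tfrac{1}{\ell+\mu}\|\nabla_y f(\vx_t,\vy_t)\|^2-\tfrac{\ell\mu}{\ell+\mu}\|\vy_t-\vy^*_t\|^2$ together with $\mu\le\ell$ and $\eta_y\le\tfrac{1}{2\ell}$ to drop the gradient-norm term and obtain the $(1-\eta_y\mu)$ contraction. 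The only superficial difference is where Young's inequality is invoked (you apply it to the cross term $2\eta_y^2\ip{\nabla g,\vdelta^y_t}$ after fully expanding the square; the paper applies it directly to $\eta_y^2\|\nabla g+\vdelta^y_t\|^2$), which is algebraically equivalent, and you carry the sharper $\tfrac{1}{\mu+\ell}$ coefficient one step longer before relaxing; both are fine.
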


\begin{proof}[Proof of Lemma~\ref{lemma:a3}]
Using the update rule for $\vy_{t+1}$, we can write:
\begin{equation}
    \|\vy_{t+1} - \vy^*_t\|^2 = \| \vy_t - \vy^*_t + \eta_y \vg_{y,t}\|^2 = \| \vy_t - \vy^*_t \|^2 + \eta_y^2 \| \vg_{y,t} \|^2 + 2\eta_y \ip{\vy_t - \vy^*_t , \vg_{y,t}} 
\end{equation}

Now replacing $\vg_{y,t} = \vdelta^y_t + \nabla_y f (\vx_t , \vy_t)$, and using Young's inequality we have: 
\begin{equation}
\label{eqn:a31}
\begin{split}
\|\vy_{t+1} - \vy^*_t \|^2 &\le \|\vy_t - \vy^*_t \|^2 + 2 \eta_y^2 \| \nabla_y f(\vx_t , \vy_t)\|^2 + 2 \eta_y \ip{\nabla_y f(\vx_t , \vy_t) ,\vy_t - \vy^*_t} \\
& \quad + 2 \eta_y^2 \| \vdelta^y_t \|^2 + 2 \eta_y \ip{ \vdelta^y_t , \vy_t - \vy^*_t}
\end{split}
\end{equation}
However, note that since $f(\vx,.)$ is $\mu$-strongly-concave, and $\ell$-smooth, we have:
\begin{equation}
\label{eqn:a32}
\begin{split}
    \ip{\nabla_y f(\vx_t , \vy_t) ,\vy_t - \vy^*_t} &\le - \frac{1}{\ell + \mu} \|\nabla_y f(\vx_t , \vy_t)\|^2 - \frac{\ell \mu}{\ell + \mu} \|\vy_t - \vy^*_t \|^2 \\
    &\le - \frac{1}{2 \ell} \|\nabla_y f(\vx_t , \vy_t)\|^2 - \frac{ \mu}{2} \|\vy_t - \vy^*_t \|^2,
\end{split}
\end{equation}
where in the last inequality, we used the fact that $\kappa \ge 1$, which means that $\ell \ge \mu$. Plugging Equation~\ref{eqn:a32} back to Equation~\ref{eqn:a31}, we have: 
\begin{equation}
 \begin{split}
\|\vy_{t+1} - \vy^*_t \|^2 &\le  (1 - \mu \eta_y) \|\vy_t - \vy^*_t \|^2 - \eta_y(\frac{1}{\ell}- 2 \eta_y) \| \nabla_y f(\vx_t,\vy_t)\|^2  \\
& \quad+ 2 \eta_y^2 \| \vdelta^y_t \|^2 + 2 \eta_y \ip{ \vdelta^y_t , \vy_t - \vy^*_t}
 \end{split}   
\end{equation}
Since $\eta_y \le \frac{1}{2 \ell}$, we have: 
\begin{equation}
 \begin{split}
\|\vy_{t+1} - \vy^*_t \|^2 &\le  (1 - \mu \eta_y) \|\vy_t - \vy^*_t \|^2 + 2 \eta_y^2 \| \vdelta^y_t \|^2 + 2 \eta_y \ip{ \vdelta^y_t , \vy_t - \vy^*_t}
 \end{split}   
\end{equation}

\end{proof}

\subsubsection{Key lemmas, and proof of Theorem~\ref{thm:ncsc_ogda}, and~\ref{thm:ncsc_sogda} for OGDA}

For the sake of brevity, we only present the convergence proof for the stochastic version of OGDA (Theorem~\ref{thm:ncsc_sogda}), since by letting $\sigma = 0$, we can recover the proof for the deterministic algorithm (Theorem~\ref{thm:ncsc_ogda}). Our proof is built on three key lemmas. First, we prove the following lemma, which we call primal descent: 
\begin{lemma}
\label{lemma:3}
Let $\Phi (\vx) = \max_{\vy} f(\vx,\vy)$, and $\vy^*(\vx)= \arg \max_{\vy} f(\vx,\vy)$. Also, let $\vg_{i}= 2 \vg_{x,i}- \vg_{x,i-1}$. Then for Algorithm~\ref{alg:ogda}, we have: 
\begin{equation}
\label{eqn:lm1}
\begin{split}
\E[\Phi(\vx_t)] &\le \E[\Phi(\vx_{t-1})] - \frac{\eta_x}{2} \E[\| \nabla \Phi(\vx_{t-1})\|^2] - \frac{\eta_x}{2}(1- 2 \kappa \ell \eta_x) \E[\|\vg_{t-1}\|^2] + \frac{3}{2} \eta_x^3 \ell^2 \E[\|\vg_{t-2}\|^2] \\
&\quad+ \frac{3}{2}\eta_x \ell^2  \E[\|\vy^*_{t-1} - \vy_{t-1} \|^2] + \frac{3}{2}\eta_x \ell^2 \E[\|\vy_{t-1} - \vy_{t-2}\|^2] + 15 \eta_x \frac{\sigma^2}{M_x}
\end{split}
\end{equation}
\end{lemma}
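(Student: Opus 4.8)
The plan is to bound the one-step change $\E[\Phi(\vx_t) - \Phi(\vx_{t-1})]$ by exploiting the $(\kappa\ell+\ell)$-smoothness of $\Phi$ (Lemma~\ref{app:lemma:smooth}) together with the OGDA update written in the ``lookahead'' form $\vx_t = \vx_{t-1} - \eta_x\vg_{t-1}$ where $\vg_{t-1} = 2\vg_{x,t-1}-\vg_{x,t-2}$. First I would apply the smoothness inequality
\[
\Phi(\vx_t) \le \Phi(\vx_{t-1}) + \ip{\nabla\Phi(\vx_{t-1}), \vx_t - \vx_{t-1}} + \frac{\kappa\ell+\ell}{2}\|\vx_t - \vx_{t-1}\|^2,
\]
substitute $\vx_t - \vx_{t-1} = -\eta_x\vg_{t-1}$, and take conditional expectation. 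The quadratic term is directly $\frac{(\kappa\ell+\ell)\eta_x^2}{2}\E\|\vg_{t-1}\|^2$, which (using $\kappa\ell+\ell\le 2\kappa\ell$) accounts for the $\kappa\ell\eta_x^2\E\|\vg_{t-1}\|^2$ piece and contributes to the $-\frac{\eta_x}{2}(1-2\kappa\ell\eta_x)\E\|\vg_{t-1}\|^2$ term.

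The cross term $\ip{\nabla\Phi(\vx_{t-1}), -\eta_x\vg_{t-1}}$ is the heart of the argument. I would insert $\nabla_x f(\vx_{t-1},\vy_{t-1})$ as a reference point, writing $\vg_{t-1} = \nabla\Phi(\vx_{t-1}) + (\nabla_x f(\vx_{t-1},\vy_{t-1}) - \nabla\Phi(\vx_{t-1})) + (\vg_{t-1} - \nabla_x f(\vx_{t-1},\vy_{t-1}))$, where $\nabla\Phi(\vx_{t-1}) = \nabla_x f(\vx_{t-1},\vy^*_{t-1})$ by Danskin's theorem. The first piece yields the negative term $-\eta_x\|\nabla\Phi(\vx_{t-1})\|^2$; after a Young split with the remainder one keeps $-\frac{\eta_x}{2}\|\nabla\Phi(\vx_{t-1})\|^2$. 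The second piece $\nabla_x f(\vx_{t-1},\vy^*_{t-1}) - \nabla_x f(\vx_{t-1},\vy_{t-1})$ is controlled by $\ell$-smoothness: $\|\cdot\| \le \ell\|\vy^*_{t-1}-\vy_{t-1}\|$, producing the $\eta_x\ell^2\E\|\vy^*_{t-1}-\vy_{t-1}\|^2$ contribution (with the constant $\tfrac32$ absorbing the Young factors). The third piece must be split into its deterministic part and its noise part: in expectation $\E[\vg_{t-1} - \nabla_x f(\vx_{t-1},\vy_{t-1})]$ is \emph{not} zero because $\vg_{t-1}$ involves $\vg_{x,t-2}$ evaluated at $(\vx_{t-2},\vy_{t-2})$, so I would write $\vg_{t-1} = 2\nabla_x f(\vx_{t-1},\vy_{t-1}) - \nabla_x f(\vx_{t-2},\vy_{t-2}) + 2\vdelta_{t-1} - \vdelta_{t-2}$, with $\vdelta_i$ the stochastic gradient errors. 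The deterministic gap $\nabla_x f(\vx_{t-1},\vy_{t-1}) - \nabla_x f(\vx_{t-2},\vy_{t-2})$ is bounded via $\ell$-smoothness by $\ell^2(\|\vx_{t-1}-\vx_{t-2}\|^2 + \|\vy_{t-1}-\vy_{t-2}\|^2)$; then $\|\vx_{t-1}-\vx_{t-2}\|^2 = \eta_x^2\|\vg_{t-2}\|^2$ feeds the $\eta_x^3\ell^2\E\|\vg_{t-2}\|^2$ term and $\|\vy_{t-1}-\vy_{t-2}\|^2$ feeds the $\eta_x\ell^2\E\|\vy_{t-1}-\vy_{t-2}\|^2$ term. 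The variance terms $\E\|\vdelta_i\|^2\le\sigma^2/M_x$ collect into the $15\eta_x\sigma^2/M_x$ slack after all the Young-inequality constants are tallied, and the inner-product noise terms vanish in expectation by unbiasedness and the tower rule (conditioning on $\vx_{t-1}$, which is measurable before drawing $\xi^x_{t-1}$).

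The main obstacle is the careful bookkeeping of the ``past'' stochastic term $\vg_{x,t-2}$: unlike in vanilla SGDA, OGDA's correction term carries a stale noise sample, so one cannot treat $\vg_{t-1}$ as an unbiased estimate of anything simple. I expect the delicate step to be organizing the decomposition $\vg_{t-1} = \nabla\Phi(\vx_{t-1}) + [\text{smoothness-in-}\vy\text{ gap}] + [\text{one-step }f\text{-drift}] + [\text{noise}]$ and choosing the Young-inequality weights so that (i) the coefficient of $\|\nabla\Phi(\vx_{t-1})\|^2$ stays at exactly $\tfrac12$, (ii) the coefficient of $\E\|\vg_{t-1}\|^2$ combines with the smoothness quadratic to give precisely $-\tfrac{\eta_x}{2}(1-2\kappa\ell\eta_x)$, and (iii) the constants on the $\vy$-distance and past-gradient terms come out as the stated $\tfrac32$'s. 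I would also double-check that every inner product involving a noise term is centered by taking expectations in the right order (first over $\xi^x_{t-1}$ given $\mathcal F_{t-1}$), since a misplaced conditioning would leave an uncontrolled linear-in-$\sigma$ term. Summing the resulting one-step bound over $t$ then immediately gives the telescoped inequality~\eqref{eqn:primal} quoted in the proof sketch.
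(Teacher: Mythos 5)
Your overall strategy — smoothness of $\Phi$, substitute the one-step update $\vx_t - \vx_{t-1} = -\eta_x\vg_{t-1}$, decompose $\vg_{t-1}$ relative to $\nabla\Phi(\vx_{t-1})$ via the intermediate point $\nabla_x f(\vx_{t-1},\vy_{t-1})$, bound the deterministic drift by smoothness and the stale noise by the variance bound — is exactly the paper's approach (Eqns.~\ref{eqn:ogda1}--\ref{eqn:ogda4}). The Young-inequality bookkeeping on the three pieces of $\|\nabla\Phi(\vx_{t-1}) - \vg_{t-1}\|^2$ also matches.

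There is one concrete flaw in the way you handle the cross term. You write $\vg_{t-1} = \nabla\Phi(\vx_{t-1}) + (\text{error})$, extract $-\eta_x\|\nabla\Phi\|^2$, and then apply the Young split $-\eta_x\ip{\nabla\Phi, \text{error}} \le \tfrac{\eta_x}{2}\|\nabla\Phi\|^2 + \tfrac{\eta_x}{2}\|\text{error}\|^2$. That yields $-\tfrac{\eta_x}{2}\|\nabla\Phi(\vx_{t-1})\|^2 + \tfrac{\eta_x}{2}\|\vg_{t-1}-\nabla\Phi(\vx_{t-1})\|^2$ but produces \emph{no} $-\tfrac{\eta_x}{2}\|\vg_{t-1}\|^2$ term. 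You then assert this coefficient ``combines with the smoothness quadratic to give precisely $-\tfrac{\eta_x}{2}(1-2\kappa\ell\eta_x)$,'' but there is nothing in your decomposition that supplies the leading $-\tfrac{\eta_x}{2}\|\vg_{t-1}\|^2$; the smoothness quadratic alone is $+\kappa\ell\eta_x^2\|\vg_{t-1}\|^2$, which has the wrong sign. That negative $\|\vg_{t-1}\|^2$ piece is not cosmetic: it is what later absorbs the $\sum_i\E\|\vg_i\|^2$ terms coming out of Lemmas~\ref{lemma:4} and~\ref{lemma:2} when the theorem is assembled. The fix is to replace the Young split on the cross term with the exact polarization identity
\begin{equation*}
-\eta_x\ip{\nabla\Phi(\vx_{t-1}),\vg_{t-1}} = -\tfrac{\eta_x}{2}\|\nabla\Phi(\vx_{t-1})\|^2 - \tfrac{\eta_x}{2}\|\vg_{t-1}\|^2 + \tfrac{\eta_x}{2}\|\nabla\Phi(\vx_{t-1}) - \vg_{t-1}\|^2,
\end{equation*}
which is precisely what the paper does in Equation~\ref{eqn:ogda1}, and then bound the residual $\|\nabla\Phi(\vx_{t-1}) - \vg_{t-1}\|^2$ exactly as you propose. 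With that substitution the rest of your argument goes through and reproduces the stated constants.
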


\begin{proof}[Proof of Lemma~\ref{lemma:3}]
First, let $\vdelta^x_i = \vg_{x,i} - \nabla_x f(\vx_i,\vy_i)$. By definition of $\vg_{x,i}$, we have $\E[\vdelta^x_i] = \mathbf{0}$, for all $i \in [T]$.

Using the fact that $ \Phi(\vx)$ is $ 2 \kappa \ell$ smooth, we have: 
\begin{equation}
\label{eqn:ogda1}
\begin{split}
\Phi(\vx_t) &\le \Phi(\vx_{t-1})  + \ip{\nabla \Phi(\vx_{t-1}) , \vx_t - \vx_{t-1} } + \kappa \ell \|\vx_t - \vx_{t-1}\|^2 \\
&= \Phi(\vx_{t-1})  -\eta_x \ip{\nabla \Phi(\vx_{t-1}) , \vg_{t-1} }  + \kappa \ell \eta_x^2 \| \vg_{t-1}\|^2 \\
&= \Phi(\vx_{t-1}) - \frac{\eta_x}{2} \| \nabla \Phi(\vx_{t-1})\|^2 - \frac{\eta_x}{2} \|\vg_{t-1}\|^2 + \frac{\eta_x}{2} \| \nabla \Phi (\vx_{t-1}) - \vg_{t-1} \|^2 +  \kappa \ell \eta_x^2 \|\vg_{t-1}\|^2 \\
&= \Phi(\vx_{t-1}) - \frac{\eta_x}{2} \| \nabla \Phi(\vx_{t-1})\|^2 - \frac{\eta_x}{2}(1- 2 \kappa \ell \eta_x) \|\vg_{t-1}\|^2 + \frac{\eta_x}{2} \| \nabla \Phi (\vx_{t-1}) - \vg_{t-1} \|^2 \\
\end{split}
\end{equation}
Now using $\ell$-smoothness of $f$, and $\kappa$-Lipschitzness of $\vy^*(\vx)$ (Lemma~\ref{app:lemma:smooth}) we have: 
\begin{equation}
\label{eqn:ogda2}
\begin{split}
 \| \nabla \Phi (\vx_{t-1}) - \vg_{t-1} \|^2  &= \| \nabla \Phi(\vx_{t-1}) - \nabla_x f(\vx_{t-1},\vy_{t-1}) \\
 &- \left( \nabla_x f(\vx_{t-1},\vy_{t-1}) -  \nabla_x f(\vx_{t-2},\vy_{t-2})\right)  - (2 \vdelta^x_{t-1} - \vdelta^x_{t-2}) \|^2 \\
 &\le 3  \| \nabla \Phi(\vx_{t-1}) - \nabla_x f(\vx_{t-1},\vy_{t-1})\|^2 + 3 \| \nabla_x f(\vx_{t-1},\vy_{t-1}) \\
 &-  \nabla_x f(\vx_{t-2},\vy_{t-2})\|^2 + 3 \|2 \vdelta^x_{t-1} - \vdelta^x_{t-2} \|^2 \\
 &\le 3 \ell^2 \| \vy^*(\vx_{t-1}) - \vy_{t-1} \|^2 +  3 \ell^2 \|\vx_{t-1} - \vx_{t-2}\|^2 + 3 \ell^2 \|\vy_{t-1} - \vy_{t-2}\|^2 \\
 & \quad + 24 \|\vdelta^x_{t-1}\|^2 + 6 \|\vdelta^x_{t-2}\|^2
\end{split}
\end{equation}
where in the first and second inequalities, we used Young's inequality.

By combining Equations~\ref{eqn:ogda1} and~\ref{eqn:ogda2} we have: 
\begin{equation}
\label{eqn:ogda3}
\begin{split}
\Phi(\vx_t) &\le \Phi(\vx_{t-1}) - \frac{\eta_x}{2} \| \nabla \Phi(\vx_{t-1})\|^2 - \frac{\eta_x}{2}(1- 2 \kappa \ell \eta_x) \|\vg_{t-1}\|^2 \\
&\quad+ \frac{3}{2}\eta_x \ell^2  \|\vy^*_{t-1} - \vy_{t-1} \|^2 + \frac{3}{2} \eta_x \ell^2 \|\vx_{t-1} - \vx_{t-2}\|^2 + \frac{3}{2}\eta_x \ell^2 \|\vy_{t-1} - \vy_{t-2}\|^2 \\ 
&\quad+ 12 \eta_x \|\vdelta_{t-1}^x\|^2 + 3 \eta_x \|\vdelta^x_{t-2}\|^2\\
&\le \Phi(\vx_{t-1}) - \frac{\eta_x}{2} \| \nabla \Phi(\vx_{t-1})\|^2 - \frac{\eta_x}{2}(1- 2 \kappa \ell \eta_x) \|\vg_{t-1}\|^2 + \frac{3}{2} \eta_x^3 \ell^2 \|\vg_{t-2}\|^2 \\
&\quad+ \frac{3}{2}\eta_x \ell^2  \|\vy^*_{t-1} - \vy_{t-1} \|^2 + \frac{3}{2}\eta_x \ell^2 \|\vy_{t-1} - \vy_{t-2}\|^2 +12 \eta_x \|\vdelta_{t-1}^x\|^2 + 3 \eta_x \|\vdelta^x_{t-2}\|^2
\end{split}
\end{equation}

We proceed by taking expectations on both sides of Equation~\ref{eqn:ogda3} to get: 
\begin{equation}
\label{eqn:ogda4}
\begin{split}
\E[\Phi(\vx_t)] &\le \E[\Phi(\vx_{t-1})] - \frac{\eta_x}{2} \E[\| \nabla \Phi(\vx_{t-1})\|^2] - \frac{\eta_x}{2}(1- 2 \kappa \ell \eta_x) \E[\|\vg_{t-1}\|^2] + \frac{3}{2} \eta_x^3 \ell^2 \E[\|\vg_{t-2}\|^2] \\
&\quad+ \frac{3}{2}\eta_x \ell^2  \E[\|\vy^*_{t-1} - \vy_{t-1} \|^2] + \frac{3}{2}\eta_x \ell^2 \E[\|\vy_{t-1} - \vy_{t-2}\|^2] + 15 \eta_x \frac{\sigma^2}{M_x}
\end{split}
\end{equation}
where we used the fact that $\E[\|\vdelta^x_i\|^2] \le \frac{\sigma^2}{M_x}$ for all $ i \in [T]$. 

\end{proof}

\begin{lemma}
\label{lemma:4}
Let $\eta_y = \frac{1}{6 \ell}$, then the following inequality holds true for OGDA iterates:
\begin{equation}
\label{eqn:lm2}
\begin{split}
\sum_{i=1}^{t+1} \E[\| \vy_i - \vy^*_i\|^2 ] \le \frac{9}{7} \E [\|\vy_1 - \vy^*_1\|^2] + \frac{36}{7} \sum_{i=2}^{t+1} \E[ \|\vz_{i} - \vy^*_i \|^2] + \frac{18}{7} \eta_x^2 \kappa^2 \sum_{i=1}^t \E[\| \vg_i\|^2] + \frac{2T \sigma^2}{ 7 \ell^2 M_y}
\end{split}    
\end{equation}

\end{lemma}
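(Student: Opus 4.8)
The plan is to pass to the Past-Extragradient (PEG) view of the OGDA dual update, $\vy_i = \vz_i + \eta_y \vg_{y,i-1}$ with $\vz_{i+1} = \vz_i + \eta_y \vg_{y,i}$, and to show that $a_i := \E\|\vy_i - \vy^*_i\|^2$ obeys a one-step contraction whose forcing terms are $\E\|\vz_i - \vy^*_i\|^2$, $\eta_x^2\kappa^2\E\|\vg_{i-1}\|^2$, and the stochastic error; then Lemma~\ref{lemma:a2} converts this pointwise contraction into the asserted bound on $\sum_{i=1}^{t+1} a_i$.

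In more detail, for $i\ge 2$ I would write $\vy_i - \vy^*_i = (\vz_i - \vy^*_i) + \eta_y\vg_{y,i-1}$ and apply Young's inequality, $\|\vy_i - \vy^*_i\|^2 \le (1+\rho)\|\vz_i - \vy^*_i\|^2 + (1+\rho^{-1})\eta_y^2\|\vg_{y,i-1}\|^2$ for a parameter $\rho>0$ to be fixed. The crux is to bound $\eta_y^2\|\vg_{y,i-1}\|^2$ so that it produces only a \emph{strictly contractive} multiple of $\|\vy_{i-1}-\vy^*_{i-1}\|^2$. Splitting $\vg_{y,i-1} = \nabla_y f(\vx_{i-1},\vy_{i-1}) + \vdelta^y_{i-1}$, using $\nabla_y f(\vx_i,\vy^*_i) = 0$ (optimality of $\vy^*$ in the unconstrained NC-SC setting) together with $\ell$-smoothness, I would bound $\|\nabla_y f(\vx_{i-1},\vy_{i-1})\|^2 \le \ell^2\|\vx_{i-1}-\vx_i\|^2 + \ell^2\|\vy_{i-1}-\vy^*_i\|^2$; then the primal update gives $\|\vx_{i-1}-\vx_i\|^2 = \eta_x^2\|\vg_{i-1}\|^2$, and the $\kappa$-Lipschitzness of $\vy^*(\cdot)$ (Lemma~\ref{app:lemma:smooth}) gives $\|\vy_{i-1}-\vy^*_i\|^2 \le 2\|\vy_{i-1}-\vy^*_{i-1}\|^2 + 2\kappa^2\eta_x^2\|\vg_{i-1}\|^2$. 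Because $\eta_y = \tfrac{1}{6\ell}$ forces $\eta_y^2\ell^2 = \tfrac{1}{36}$, the coefficient that ends up multiplying $\|\vy_{i-1}-\vy^*_{i-1}\|^2$ is a small constant below $1$, which is exactly the leeway the next step needs.

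Taking expectations and using $\E\|\vdelta^y_{i-1}\|^2 \le \sigma^2/M_y$ from Assumption~\ref{asm:2}, this yields a recursion $a_i \le (1-\tfrac1\gamma)a_{i-1} + b_i$ valid for $i\ge 2$, with $\gamma$ a fixed constant and $b_i$ a linear combination of $\E\|\vz_i-\vy^*_i\|^2$, $\eta_x^2\kappa^2\E\|\vg_{i-1}\|^2$, and $\sigma^2/(\ell^2 M_y)$. Applying Lemma~\ref{lemma:a2} with $t_1=1$, $t_2=t+1$ gives $\sum_{i=1}^{t+1} a_i \le \gamma a_1 + \gamma\sum_{i=2}^{t+1} b_i$ — the term $a_1 = \E\|\vy_1-\vy^*_1\|^2$ appearing isolated precisely because the recursion only starts at $i=2$. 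Re-indexing $\sum_{i=2}^{t+1}\|\vg_{i-1}\|^2 = \sum_{i=1}^{t}\|\vg_i\|^2$ and $\sum_{i=2}^{t+1}\E\|\vdelta^y_{i-1}\|^2 \le T\sigma^2/M_y$ then gives the stated inequality; the explicit constants $\tfrac97,\tfrac{36}{7},\tfrac{18}{7},\tfrac27$ correspond to one admissible choice of $\rho$ and of the (mildly lossy) splitting constants above.

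The main obstacle I anticipate is the interplay of three moving parts: the OGDA iterate $\vy_i$ is driven by the auxiliary sequence $\vz_i$ rather than directly by $\vy_{i-1}$, the estimated gradient $\vg_{y,i-1}$ has to be charged back to the distance $\|\vy_{i-1}-\vy^*_{i-1}\|$, and the maximizer $\vy^*(\vx)$ itself drifts with $\vx$. Keeping the contraction alive — i.e.\ ensuring the coefficient of $\|\vy_{i-1}-\vy^*_{i-1}\|^2$ stays below $1$ while charging the drift of $\vy^*$ only to the benign quantity $\eta_x^2\kappa^2\|\vg_{i-1}\|^2$ — is the delicate balancing act, and it is exactly what pins down the choice $\eta_y = 1/(6\ell)$.
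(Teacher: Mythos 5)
Your proposal is correct and follows essentially the same route as the paper: decompose $\vy_i$ via the auxiliary PEG iterate $\vz_i$, use a Young split plus $\ell$-smoothness (with $\nabla_y f(\cdot,\vy^*(\cdot))=0$) and $\kappa$-Lipschitzness of $\vy^*(\cdot)$ to obtain a one-step contraction in $\E\|\vy_i-\vy^*_i\|^2$ with contraction factor $2/9$ at $\eta_y=1/(6\ell)$, then sum via Lemma~\ref{lemma:a2} with $\gamma=9/7$. The only structural difference is where the drift of $\vy^*$ is absorbed: the paper Young-splits $\|\vy_{i}-\vy^*_{i}\|^2 \le 2\|\vy_{i}-\vy^*_{i-1}\|^2 + 2\kappa^2\|\vx_{i}-\vx_{i-1}\|^2$ at the outset and then only needs the one-variable smoothness bound $\|\nabla_y f(\vx_{i-1},\vy_{i-1})\|\le\ell\|\vy_{i-1}-\vy^*_{i-1}\|$, whereas you decompose $\vy_i-\vy^*_i$ directly against $\vz_i-\vy^*_i$ and charge the $\vy^*$-drift inside the smoothness bound (comparing against $\nabla_y f(\vx_i,\vy^*_i)$, with both arguments moving); both bookkeepings produce a valid contraction, yours yields slightly tighter constants, and yours produces the term $\|\vz_i-\vy^*_i\|^2$ exactly as written in the lemma (the paper's derivation in fact yields $\|\vz_i-\vy^*_{i-1}\|^2$, which is the quantity actually used downstream via $\vr_i$, so the lemma's index is a minor typo).
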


\begin{proof}[Proof of Lemma~\ref{lemma:4}]
Using Young's inequality and $\kappa$-Lipschitzness of $\vy^*(\vx)$,  we have:
\begin{equation}
\label{eqn:ogda5}
\begin{split}
\| \vy_{t+1} - \vy^*_{t+1} \|^2 &\le 2 \| \vy_{t+1} - \vy^*_t \|^2 + 2 \| \vy^*_{t+1} - \vy^*_t \|^2 \\
&\le  2 \| \vy_{t+1} - \vy^*_t \|^2 + 2 \kappa^2 \| \vx_{t+1} - \vx_t \|^2 
\end{split}
\end{equation}
Now, we try to find an upper bound for $\|\vy_{t+1} - \vy^*_t \|^2$. Let $\vz_{t+1} = \vy_t + \eta_y (\vg_{y,t} - \vg_{y,t-1})$, and $\vdelta^y_i = \vg_{y,i} - \nabla_y f(\vx_i,\vy_i) $. Then we have:
\begin{equation}
\label{eqn:ogda6}
\begin{split}
\|\vy_{t+1} - \vy^*_t \|^2 &= \| \vz_{t+1} - \vy^*_t + \eta_y \vg_{y,t} \|^2 \\
&\le 2 \| \vz_{t+1} - \vy^*_t \|^2  + 2 \eta_y^2 \| \vg_{y,t}\|^2 \\
&\le 2 \| \vz_{t+1} - \vy^*_t \|^2 + 4 \eta_y^2 \| \nabla_y f(\vx_t , \vy_t) \|^2+ 4\eta_y^2 \| \vdelta_t^y\|^2 \\
&\le 2 \| \vz_{t+1} - \vy^*_t \|^2 + 4 \eta_y^2 \ell^2 \| \vy_t - \vy^*_t \|^2+ 4\eta_y^2 \| \vdelta_t^y\|^2
\end{split}
\end{equation}
where in the first and second inequality, we used Young's inequality, and for the last inequality, we used smoothness of $f$. Now,   replacing replacing the choice $\eta_y = \frac{1}{6 \ell}$ in Equation~\ref{eqn:ogda6} yields: 
\begin{equation}
\label{eqn:ogda7}
\| \vy_{t+1} - \vy^*_t \|^2 \le \frac{1}{9} \| \vy_t - \vy^*_t \|^2 + 2 \|\vz_{t+1} - \vy^*_t \|^2 + \frac{1}{9 \ell^2} \|\vdelta^y_t\|^2
\end{equation}
Now plugging Equation~\ref{eqn:ogda7} in Equation~\ref{eqn:ogda5} we have:
\begin{equation}
\label{eqn:ogda8}
\| \vy_{t+1} - \vy^*_{t+1} \|^2 \le \frac{2}{9} \|\vy_t - \vy^*_t \|^2 + 4 \|\vz_{t+1}-\vy^*_t \|^2 + 2 \kappa^2 \|\vx_{t+1} -\vx_t \|^2 + \frac{2}{9 \ell^2} \| \vdelta^y_t \|^2     
\end{equation} 
Now taking expectations from both sides of Equation~\ref{eqn:ogda8}, we have:
\begin{equation}
\label{eqn:ogda9}
\E[\| \vy_{t+1} - \vy^*_{t+1} \|^2] \le \frac{2}{9} \E[\|\vy_t - \vy^*_t \|^2] + 4 \E[\|\vz_{t+1}-\vy^*_t \|^2] + 2 \kappa^2 \E[\|\vx_{t+1} -\vx_t \|^2] + \frac{2\sigma^2}{9 \ell^2 M_y}   
\end{equation}
Using Lemma~\ref{lemma:a2}, it can be easily shown that:
\begin{equation}
\label{eqn:ogda10}
\begin{split}
\sum_{i=1}^{t+1} \E[\| \vy_i - \vy^*_i\|^2 ] \le \frac{9}{7} \E [\|\vy_1 - \vy^*_1\|^2] + \frac{36}{7} \sum_{i=2}^{t+1} \E[ \|\vz_{i} - \vy^*_i \|^2] + \frac{18}{7} \eta_x^2 \kappa^2 \sum_{i=1}^t \E[\| \vg_i\|^2] + \frac{2T \sigma^2}{ 7 \ell^2 M_y}
\end{split}    
\end{equation}
\end{proof}

By extending the analysis in~\cite{mokhtari2020unified}  for OGDA from SC-SC to NC-SC, we derive the following lemma:
\begin{lemma}
\label{lemma:2}
Let $\vz_{t+1} = \vy_t + \eta_y (\vg_{y,t} - \vg_{y,t-1})$, $\vr_t = \|\vz_{t+1} - \vy^*_t \|^2 + \frac{1}{4} \|\vy_t - \vy_{t-1} \|^2 $ and $\eta_y = \frac{1}{6 \ell}$. Then OGDA iterates satisfy the following inequalities:
\begin{equation}
\label{eqn:lm3.1}
\begin{split}
 \E[\vr_t]   &\le  \left(1 - \frac{1}{12 \kappa}\right)  \E[\vr_{t-1}]  + 12 \eta_x^2 \kappa^3 \E[\|\vg_{t-1}\|^2]   + \frac{\eta_x^2}{18} \E[\|\vg_{t-2}\|^2]  + \frac{\sigma^2}{3 \ell^2 M_y}
\end{split}
\end{equation}
and
\begin{equation}
\label{eqn:lm3.2}
\begin{split}
\sum_{i=1}^{t} \E[\vr_i] \le 12 \kappa \E[\vr_1] + \frac{2}{3}  \kappa \E[\|\vx_1 - \vx_0 \|^2] +  145 \eta_x^2 \kappa^4  \sum_{i=1}^{t-1} \E[\|\vg_{i} \|^2]    + \frac{4 \kappa \sigma^2 (t-1)}{ \ell^2 M_y}.
\end{split}
\end{equation}
\end{lemma}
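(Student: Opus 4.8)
The plan is to establish the one-step recursion \eqref{eqn:lm3.1} first, and then derive the telescoped bound \eqref{eqn:lm3.2} by invoking Lemma~\ref{lemma:a2} with $\gamma = 12\kappa$. For the one-step recursion, I would follow the SC-SC analysis of OGDA/PEG in~\cite{mokhtari2020unified,hsieh2019convergence}, but carefully track the extra error terms arising from (i) the moving target $\vy^*_t$ (which shifts because $\vx$ moves) and (ii) the stochastic gradient noise $\vdelta^y_t$. Recall the PEG/dual view: $\vy_t = \vz_t + \eta_y \vg_{y,t-1}$ and $\vz_{t+1} = \vz_t + \eta_y \vg_{y,t}$. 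First I would expand $\|\vz_{t+1} - \vy^*_{t-1}\|^2$ using the update $\vz_{t+1} = \vz_t + \eta_y \vg_{y,t}$, producing a cross term $2\eta_y \ip{\vg_{y,t}, \vz_t - \vy^*_{t-1}}$. The standard trick is to split $\vz_t - \vy^*_{t-1} = (\vz_t - \vy_t) + (\vy_t - \vy^*_{t-1})$; the first piece is $-\eta_y \vg_{y,t-1}$, which feeds the past-gradient correction and produces the term $\tfrac14\|\vy_t - \vy_{t-1}\|^2$ in the potential $\vr_t$ after using $\vy_t - \vy_{t-1} = \eta_y(2\vg_{y,t-1} - \vg_{y,t-2})$-type identities, while the second piece together with strong concavity of $f(\vx,\cdot)$ yields the contraction factor $(1 - \eta_y\mu/2)$ or similar. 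Since $\eta_y = \tfrac{1}{6\ell}$ and $\kappa = \ell/\mu$, $\eta_y \mu = \tfrac{1}{6\kappa}$, which after absorbing constants gives the $(1 - \tfrac{1}{12\kappa})$ factor claimed.

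The two sources of additional error need to be handled next. The moving-target error: $\|\vz_{t+1} - \vy^*_t\|^2$ versus $\|\vz_{t+1} - \vy^*_{t-1}\|^2$ differ by $\|\vy^*_t - \vy^*_{t-1}\|^2 \le \kappa^2 \|\vx_t - \vx_{t-1}\|^2 = \kappa^2 \eta_x^2 \|\vg_{t-1}\|^2$ plus cross terms (bounded by Young's inequality with a weight chosen so the $\|\vz_{t+1}-\vy^*\|^2$ coefficient is not spoiled) — this is where the $12\eta_x^2\kappa^3\E[\|\vg_{t-1}\|^2]$ term comes from, the extra $\kappa$ relative to the naive $\kappa^2$ arising because we must divide the cross term by the contraction gap $\sim 1/\kappa$ to keep the recursion closed. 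The noise error: replacing $\vg_{y,t} = \nabla_y f(\vx_t,\vy_t) + \vdelta^y_t$ throughout, the cross terms with $\vdelta^y_t$ vanish in expectation (unbiasedness, conditioning on the past), and the quadratic terms contribute $O(\eta_y^2)\E\|\vdelta^y_t\|^2 \le O(\eta_y^2)\sigma^2/M_y = O(\sigma^2/(\ell^2 M_y))$; tuning the constants gives $\tfrac{\sigma^2}{3\ell^2 M_y}$. The small term $\tfrac{\eta_x^2}{18}\E[\|\vg_{t-2}\|^2]$ comes from the $\vg_{y,t-2}$ appearing in the $\|\vy_t - \vy_{t-1}\|^2$ bookkeeping combined with smoothness to convert $\nabla_y f$-differences into $\vx$-differences, i.e.\ into $\eta_x^2\|\vg_{t-2}\|^2$.

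For \eqref{eqn:lm3.2}, I would apply Lemma~\ref{lemma:a2} to the sequence $a_i = \E[\vr_i]$ with $\gamma = 12\kappa > 2$ and $b_i = 12\eta_x^2\kappa^3 \E[\|\vg_{i-1}\|^2] + \tfrac{\eta_x^2}{18}\E[\|\vg_{i-2}\|^2] + \tfrac{\sigma^2}{3\ell^2 M_y}$, giving $\sum_{i=1}^t \E[\vr_i] \le 12\kappa\,\E[\vr_1] + 12\kappa \sum_{i=2}^t b_i$. Then $12\kappa \sum b_i \le 12\kappa \cdot 12\eta_x^2\kappa^3 \sum_{i} \E[\|\vg_i\|^2] + 12\kappa\cdot\tfrac{\eta_x^2}{18}\sum_i \E[\|\vg_i\|^2] + 12\kappa \cdot \tfrac{(t-1)\sigma^2}{3\ell^2 M_y}$; collecting the $\|\vg_i\|^2$ coefficients gives $144\kappa^4\eta_x^2 + \tfrac{2\kappa}{3}\eta_x^2 \le 145\eta_x^2\kappa^4$ (using $\kappa \ge 1$), and the noise term becomes $\tfrac{4\kappa(t-1)\sigma^2}{\ell^2 M_y}$. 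Finally the term $\tfrac23\kappa\E[\|\vx_1-\vx_0\|^2]$ must be extracted from $\E[\vr_1]$ or from re-indexing the first step of the recursion: since $\vr_1$ itself contains $\tfrac14\|\vy_1-\vy_0\|^2$ and the $\vg_0$-type terms at the initial step translate (via $\vx_1 - \vx_0 = -\eta_x\vg_0$ and smoothness) into a multiple of $\|\vx_1 - \vx_0\|^2$, this falls out with the stated $\tfrac23\kappa$ constant.

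The main obstacle is the first part: getting the constants in the one-step recursion \eqref{eqn:lm3.1} to line up so that the potential $\vr_t = \|\vz_{t+1}-\vy^*_t\|^2 + \tfrac14\|\vy_t - \vy_{t-1}\|^2$ genuinely contracts. The delicate point is that the cross terms coupling $\vz_{t+1} - \vy^*_t$ with $\vy_t - \vy_{t-1}$ and with $\vy^*_t - \vy^*_{t-1}$ must be split by Young's inequality using weights proportional to the contraction gap $1/\kappa$, so that what is left on the $\|\vz_{t+1}-\vy^*_t\|^2$ side is $(1 - c/\kappa)$ for a definite constant, while the leftover on the other side is controlled by the $\tfrac14\|\vy_t-\vy_{t-1}\|^2$ term already in the potential — this is exactly the place where the single-call (past-gradient) structure of OGDA is essential, and where the NC-SC setting departs from the clean SC-SC argument of~\cite{mokhtari2020unified} because $\vy^*$ is no longer fixed.
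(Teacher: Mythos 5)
Your plan lines up with the paper's proof of Lemma~\ref{lemma:2} in every essential: the PEG view $\vz_{t+1}=\vz_t+\eta_y\vg_{y,t}$, strong concavity for the $(1-\Theta(1/\kappa))$ contraction, $\kappa$-Lipschitzness of $\vy^*(\cdot)$ plus a Young inequality weighted by $\sim 1/\kappa$ for the moving-target cost (which is where the extra factor of $\kappa$ in $12\eta_x^2\kappa^3$ comes from, exactly as you say), conditional unbiasedness to kill the $\ip{\vdelta^y_t,\cdot}$ cross terms, and Lemma~\ref{lemma:a2} with $\gamma=12\kappa$ for the summed bound. Your accounting of the constants in the second display, including the re-indexing that turns the $i=2$ instance of $\tfrac{\eta_x^2}{18}\E\|\vg_{i-2}\|^2$ into $\tfrac23\kappa\E\|\vx_1-\vx_0\|^2$, is also correct.

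One step as written would not go through cleanly: you propose to first shift the target on the $\vz_{t+1}$ side (comparing $\|\vz_{t+1}-\vy^*_t\|^2$ with $\|\vz_{t+1}-\vy^*_{t-1}\|^2$) and then run the one-step PEG recursion against the \emph{fixed} target $\vy^*_{t-1}$, extracting the contraction from strong concavity applied to the inner product $\ip{\nabla_y f(\vx_t,\vy_t),\vy_t-\vy^*_{t-1}}$. But strong concavity of $f(\vx_t,\cdot)$ only yields a pure contraction at its own maximizer $\vy^*_t$; with the reference point $\vy^*_{t-1}$ one picks up the additional cross term $\ip{\nabla_y f(\vx_t,\vy^*_{t-1}),\vy_t-\vy^*_{t-1}}$, which does not vanish because $\nabla_y f(\vx_t,\vy^*_{t-1})\ne 0$. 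One can salvage your ordering by writing $\nabla_y f(\vx_t,\vy^*_{t-1})=\nabla_y f(\vx_t,\vy^*_{t-1})-\nabla_y f(\vx_{t-1},\vy^*_{t-1})$ and bounding it by $\ell\eta_x\|\vg_{t-1}\|$ via smoothness, then Young with weight $\mu/2$, but this costs part of the contraction and muddies the constants. The paper's ordering sidesteps this entirely: it keeps $\vy^*_t$ fixed throughout the one-step recursion, obtaining a bound of the form $\|\vz_{t+1}-\vy^*_t\|^2+\tfrac14\|\vy_t-\vy_{t-1}\|^2\le(1-\tfrac{1}{6\kappa})\|\vz_t-\vy^*_t\|^2+\tfrac{1}{18}\|\vy_{t-1}-\vy_{t-2}\|^2+\cdots$, and only afterwards converts $\|\vz_t-\vy^*_t\|^2\le(1+\tfrac{1}{12\kappa})\|\vz_t-\vy^*_{t-1}\|^2+(1+12\kappa)\kappa^2\|\vx_t-\vx_{t-1}\|^2$ by a single Young step. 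I recommend adopting that ordering; the rest of your argument is correct.
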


\begin{proof}[Proof of Lemma~\ref{lemma:2}]

Let $\vdelta^y_i = \vg_{y,i} - \nabla_y f(\vx_i , \vy_i)$, and note that we have $\vz_{t+1} - \vz_{t} = \eta_y \vg_{y,t}$. We have:
\begin{equation}
\label{eqn:ogda11}
\begin{split}
    \| \vz_{t+1} - \vy^*_t \|^2  &= \| \vz_{t} - \vy^*_t + \eta_y \vg_{y,t} \|^2   \\
    &= \| \vz_{t} - \vy^*_t \|^2 + 2 \eta_y \ip{ \vg_{y,t} , \vz_{t} - \vy^*_t } + \eta_y^2 \| \vg_{y,t}\|^2 \\
    &= \| \vz_t - \vy^*_t \|^2 - 2 \eta_y^2 \ip{ \vg_{y,t}, \vg_{y,t-1} } + 2 \eta_y \ip{ \vg_{y,t} , \vy_t - \vy^*_t } +  \eta_y^2 \| \vg_{y,t} \|^2 \\
    &= \| \vz_t - \vy^*_t \|^2  + \eta_y^2 \| \vg_{y,t} - \vg_{y,t-1} \|^2 + 2 \eta_y \ip{ \vg_{y,t} , \vy_t - \vy^*_t } - \eta_y^2 \| \vg_{y,t-1}\|^2\\
    &\le \| \vz_t - \vy^*_t \|^2 + 3 \eta_y^2 \|\nabla_y f(\vx_t,\vy_t) - \nabla_y f(\vx_{t-1},\vy_{t-1}) \|^2 \\& \quad+ 2 \eta_y \ip{\nabla_y f(\vx_t , \vy_t) , \vy_t - \vy^*_t}  - \eta_y^2 \| \vg_{y,t-1} \|^2 \\&\quad +  3 \eta_y^2 \| \vdelta^y_t\|^2 + 3 \eta_y^2 \| \vdelta^y_{t-1}\|^2  +  2 \eta_y \ip{\vdelta^y_t, \vy_t - \vy^*_t}
    \\
    &\le  \| \vz_t - \vy^*_t \|^2 + 3 \eta_y^2 \ell^2 \|\vx_t - \vx_{t-1}\|^2 + 3 \eta_y^2 \ell^2 \|\vy_t - \vy_{t-1}\|^2 - 2 \eta_y \mu  \|\vy_t - \vy^*_t\|^2 \\
    &\quad- \eta_y^2 \|\vg_{y,t-1} \|^2 + 3 \eta_y^2 \| \vdelta^y_t\|^2 + 3 \eta_y^2 \| \vdelta^y_{t-1}\|^2  +  2 \eta_y \ip{\vdelta^y_t, \vy_t - \vy^*_t}
\end{split}
\end{equation}
where the last inequality follows from the smoothness of $f$ and strong concavity of $f(\vx_t,.)$. Now note that using Young's inequality, we can write: 
\begin{equation}
\label{eqn:ogda12}
    \|\vy_t - \vy^*_t\|^2 \ge \frac{1}{2} \| \vz_t - \vy^*_t\|^2 - \eta_y^2 \| \vg_{y,t-1} \|^2 
\end{equation}
Now plugging Equation~\ref{eqn:ogda12} back to Equation~\ref{eqn:ogda11}, we have: 
\begin{equation}
\label{eqn:ogda13}
\begin{split}
     \| \vz_{t+1} - \vy^*_t \|^2   &\le  (1 - \eta_y \mu)\| \vz_t - \vy^*_t \|^2 + 3 \eta_y^2 \ell^2 \|\vx_t - \vx_{t-1}\|^2 + 3 \eta_y^2 \ell^2 \|\vy_t - \vy_{t-1}\|^2 \\ &\quad - \eta_y^2 (1 - 2 \eta_y \mu)  \|\vg_{y,t-1} \|^2  + 3 \eta_y^2 \| \vdelta^y_t\|^2 + 3 \eta_y^2 \| \vdelta^y_{t-1}\|^2  +  2 \eta_y \ip{\vdelta^y_t, \vy_t - \vy^*_t}
\end{split}
\end{equation}
Now note that we have the following: 
\begin{equation}
\label{eqn:ogda14}
\begin{split}
    \|\vy_{t} - \vy_{t-1} \|^2 &= \eta_y^2 \| \vg_{y,t-1} + \vg_{y,t-1} - \vg_{y,t-2} \|^2 \\
    &\le 2 \eta_y^2 \|\vg_{y,t-1}\|^2 + 2 \eta_y^2 \|\vg_{y,t-1} - \vg_{y,t-2}\|^2 \\
    &\le 2 \eta_y^2 \|\vg_{y,t-1}\|^2 + 6 \eta_y^2 \|\nabla_y f(\vx_{t-1},\vy_{t-1}) - \nabla_y f(\vx_{t-2},\vy_{t-2})\|^2 \\
    & \quad + 6 \eta_y^2 \| \vdelta^y_{t-1}\|^2 +  6 \eta_y^2 \| \vdelta^y_{t-2}\|^2 
    \\
    &\le 2 \eta_y^2 \|\vg_{y,t-1}\|^2 + 6  \eta_y^2 \ell^2 \|\vx_{t-1}-\vx_{t-2}\|^2 + 6  \eta_y^2 \ell^2 \|\vy_{t-1}-\vy_{t-2}\|^2 \\
    & \quad +  6 \eta_y^2 \| \vdelta^y_{t-1}\|^2 +  6 \eta_y^2 \| \vdelta^y_{t-2}\|^2
\end{split}
\end{equation}

Now adding $ 9  \eta_y^2 \ell^2 \|\vy_t -\vy_{t-1}\|^2$ to both side of Equation~\ref{eqn:ogda13}, and using Equation~\ref{eqn:ogda14} we have:
\begin{equation}   
\label{eqn:ogda15}
\begin{split}
 \| \vz_{t+1} - \vy^*_t \|^2 +  9 \eta_y^2 \ell^2 \|\vy_t -\vy_{t-1}\|^2   &\le  (1 - \eta_y \mu)\| \vz_t - \vy^*_t \|^2 + 3 \eta_y^2 \ell^2 \|\vx_t - \vx_{t-1}\|^2  \\
 &\quad- \eta_y^2 (1 - 2 \eta_y \mu -24 \eta_y^2 \ell^2)  \|\vg_{y,t-1} \|^2 \\
 &\quad+ 72 \eta_y^4 \ell^4 \|\vx_{t-1}-\vx_{t-2}\|^2   + 72 \eta_y^4 \ell^4 \|\vy_{t-1}-\vy_{t-2}\|^2 \\
 &\quad+ 3 \eta_y^2(1 + 24 \eta_y^2 \ell^2) \| \vdelta^y_t\|^2 + 3 \eta_y^2 (1 + 24 \eta_y^2 \ell^2) \| \vdelta^y_{t-1}\|^2  \\
 &\quad +  2 \eta_y \ip{\vdelta^y_t, \vy_t - \vy^*_t}
\end{split}
\end{equation}

We proceed by plugging $\eta_y = \frac{1}{6 \ell}$ into Equation~\ref{eqn:ogda15}: 
\begin{equation}
\label{eqn:ogda16}
\begin{split}
 \| \vz_{t+1} - \vy^*_t \|^2 +   \frac{1}{4} \|\vy_t -\vy_{t-1}\|^2   &\le  \left(1 - \frac{1}{6 \kappa}\right) \left (\| \vz_t - \vy^*_t \|^2 \right)+  \frac{1}{18} \|\vy_{t-1} -\vy_{t-2}\|^2  \\
 & \quad + \frac{1}{12} \|\vx_t - \vx_{t-1}\|^2   + \frac{1}{18} \|\vx_{t-1}-\vx_{t-2}\|^2   \\
 &\quad + \frac{1}{6 \ell^2} \| \vdelta^y_t\|^2 +  \frac{1}{6 \ell^2} \| \vdelta^y_{t-1}\|^2  +  \frac{2}{6 \ell} \ip{\vdelta^y_t, \vy_t - \vy^*_t}
\end{split}
\end{equation}
Taking expectations from both sides of Equation~\ref{eqn:ogda16}, we have:
\begin{equation}
\label{eqn:ogda17}
\begin{split}
 \E \left[\| \vz_{t+1} - \vy^*_t \|^2 +   \frac{1}{4} \|\vy_t -\vy_{t-1}\|^2 \right]   &\le  \left(1 - \frac{1}{6 \kappa}\right) \E\left [\| \vz_t - \vy^*_t \|^2\right ] +  \frac{1}{18} \E[\|\vy_{t-1} -\vy_{t-2}\|^2]   \\
 &\quad+ \frac{1}{12} \E[\|\vx_t - \vx_{t-1}\|^2]   + \frac{1}{18} \E[\|\vx_{t-1}-\vx_{t-2}\|^2]   \\
 &\quad+ \frac{\sigma^2}{3 \ell^2 M_y}
\end{split}
\end{equation}
Also, using Young's inequality, we have: 
\begin{equation}
\label{eqn:ogda18}
\| \vz_t - \vy^*_t \|^2 \le (1+\frac{1}{12 \kappa}) \| \vz_{t} - \vy^*_{t-1} \|^2 + (1 + 12 \kappa) \kappa^2 \|\vx_t-\vx_{t-1}\|^2,  
\end{equation}
where we used the fact that for any $\alpha > 0$, $\|\vx+\vy\|^2 \le (1 + \alpha) \|\vx\|^2 + (1 + \frac{1}{\alpha}) \|\vy\|^2$, and $\kappa$-lipschitzness of $\vy^*(\vx)$. Plugging Equation~\ref{eqn:ogda18} back to Equation~\ref{eqn:ogda17}, we have:
\begin{equation}
\label{eqn:ogda19}
\begin{split}
 \E \left[\| \vz_{t+1} - \vy^*_t \|^2 +   \frac{1}{4} \|\vy_t -\vy_{t-1}\|^2 \right]   &\le  \left(1 - \frac{1}{12 \kappa}\right) \E\left [\| \vz_t - \vy^*_{t-1} \|^2 +  \frac{1}{4} \E[\|\vy_{t-1} -\vy_{t-2}\|^2] \right ]   \\
 & \quad+ 12 \kappa^3 \E[\|\vx_t - \vx_{t-1}\|^2]   + \frac{1}{18} \E[\|\vx_{t-1}-\vx_{t-2}\|^2]   \\
 &\quad+ \frac{\sigma^2}{3 \ell^2 M_y}
\end{split}
\end{equation}
Therefore, if we let $\vr_t = \|\vz_{t+1} - \vy^*_t \|^2 +   \frac{1}{4} \|\vy_t -\vy_{t-1}\|^2 $, then we have: 
\begin{equation}
\label{eqn:ogda20}
\begin{split}
 \E[\vr_t]   &\le  \left(1 - \frac{1}{12 \kappa}\right)  \E[\vr_{t-1}]  + 12 \eta_x^2 \kappa^3 \E[\|\vg_{t-1}\|^2]   + \frac{\eta_x^2}{18} \E[\|\vg_{t-2}\|^2]  + \frac{\sigma^2}{3 \ell^2 M_y}
\end{split}
\end{equation}

We can derive the following equation by applying Lemma~\ref{lemma:a2}. 
\begin{equation}
\label{eqn:ogda21}
\begin{split}
\sum_{i=1}^{t} \E[\vr_i] &\le 12 \kappa \E[\vr_1] + 144 \eta_x^2 \kappa^4  \sum_{i=1}^{t-1} \E[\|\vg_{i} \|^2] + \frac{2}{3} \eta_x^2 \kappa  \sum_{i=1}^{t-2} \E[\|\vg_{i} \|^2] + \frac{2}{3} \kappa \E[\|\vx_1 - \vx_0] \|^2 \\
& \quad + \frac{4 \kappa \sigma^2 (t-1)}{ \ell^2 M_y}
\end{split}
\end{equation}
Or equivalently, we have: 
\begin{equation}
\label{eqn:ogda22}
\begin{split}
\sum_{i=1}^{t} \E[\vr_i] \le 12 \kappa \E[\vr_1] + \frac{2}{3}  \kappa \E[\|\vx_1 - \vx_0 \|^2] +  145 \eta_x^2 \kappa^4  \sum_{i=1}^{t-1} \E[\|\vg_{i} \|^2]    + \frac{4 \kappa \sigma^2 (t-1)}{ \ell^2 M_y}
\end{split}
\end{equation}

\end{proof}

\begin{proof}[Proof of Theorem~\ref{thm:ncsc_ogda}, and Theorem~\ref{thm:ncsc_sogda} for OGDA]

We begin by taking summation of Equation~\ref{eqn:lm1} (Lemma~\ref{lemma:3}) from $t=2$ to $t = T$ which yields: 
\begin{equation}
\begin{split}
    \frac{\eta_x}{2} \sum_{i=1}^{T-1} \E[\| \nabla \Phi (\vx_i) \|^2] &\le \Phi(\vx_1) -\E[ \Phi(\vx_T)] + \frac{3}{2} \eta_x \ell^2 \|\vx_1 - \vx_0 \|^2 \\
    &\quad- \frac{\eta_x}{2} ( 1 - 2 \kappa \ell \eta_x ) \sum_{i=1}^{T-1} \E[\|\vg_i\|^2] + \frac{3}{2}\eta_x^3 \ell^2 \sum_{i=1}^{T-2} \E[\|\vg_i \|^2] \\
    &\quad+ \frac{3}{2} \eta_x \ell^2 \sum_{i=1}^{T-1} \| \vy_i - \vy^*_i\|^2  + 
     \frac{3}{2} \eta_x \ell^2 \sum_{i=1}^{T-1} \E[\|\vy_i - \vy_{i-1}\|^2] \\
     &\quad+ 15 \eta_x \frac{(T-1)\sigma^2}{M_x}
\end{split}
\end{equation}
We proceed by  noting that  if $\eta_x \le \frac{1}{2 \kappa \ell}$, then we can drop $\|\vg_{T-1}\|^2$ term in above equation. By considering this, and multiplying both  sides by $\frac{2}{\eta_x}$ we get (also let $\Delta_{\Phi} = \max ( \Phi(\vx_0) ,\Phi(\vx_1)) - \min_{\vx} \Phi(\vx) $) :
\begin{equation}
\begin{split}
     \sum_{i=1}^{T-1} \E[\| \nabla \Phi (\vx_i) \|^2] &\le \frac{2 \Delta_{\Phi}}{\eta_x} + 3  \ell^2 \| \vx_1 - \vx_0\|^2  \\
     &\quad- ( 1 - 2 \kappa \ell \eta_x - 3 \eta_x^2 \ell^2) \sum_{i=1}^{T-2} \E[\|\vg_i\|^2]  \\
    &\quad+  3 \ell^2 \sum_{i=1}^{T-1} \E[\|\vy^*_i - \vy_i\|^2]  + 
     3  \ell^2 \sum_{i=1}^{T-1} \E[\|\vy_i - \vy_{i-1}\|^2] +  30 \frac{(T-1)\sigma^2}{M_x} \\
\end{split}
\end{equation}
We can replace $\sum_{i=1}^{T-1} \|\vy_i^* - \vy_i\|^2$ with its upper bound obtained  in Lemma~\ref{lemma:4} to get: 
\begin{equation}
\begin{split}
     \sum_{i=1}^{T-1} \| \nabla \Phi (\vx_i) \|^2 &\le \frac{2 \Delta_{\Phi}}{\eta_x} +  3 \ell^2 \| \vx_1 - \vx_0 \|^2  + \frac{27}{7}   \ell^2 \|\vy_1 - \vy^*_1\|^2 \\
     &\quad- ( 1 - 2 \kappa \ell \eta_x - 3 \eta_x^2 \ell^2 - \frac{54}{7}\eta_x^2 \kappa^2 \ell^2) \sum_{i=1}^{T-2} \E[\|\vg_i\|^2]  \\
    &\quad+  \frac{108}{7} \ell^2 \sum_{i=2}^{T-1} \E[ \| \vz_i - \vy^*_{i-1} \|^2] + 
     3  \ell^2 \sum_{i=1}^{T-1} \E[\|\vy_i - \vy_{i-1}\|^2] + 30 \frac{(T-1)\sigma^2}{M_x}\\
     &\quad+ \frac{6}{7}\frac{(T-2)\sigma^2}{M_y}
\end{split}
\end{equation}
Now note that $\frac{108}{7}  \E[ \| \vz_{i+1} - \vy^*_{i} \|^2] + 
     3   \sum_{i=2}^{T-1} \E[\|\vy_i - \vy_{i-1}\|^2] \le 15.5 \E[\vr_i] $. Therefore we have: 
\begin{equation}
\begin{split}
     \sum_{i=1}^{T-1} \| \nabla \Phi (\vx_i) \|^2 &\le \frac{2 \Delta_{\Phi}}{\eta_x} +  3 \ell^2 \| \vx_1 - \vx_0 \|^2  + \frac{27}{7}   \ell^2 \|\vy_1 - \vy^*_1\|^2 \\
     &\quad- ( 1 - 2 \kappa \ell \eta_x - 3 \eta_x^2 \ell^2 - \frac{54}{7}\eta_x^2 \kappa^2 \ell^2) \sum_{i=1}^{T-2} \E[\|\vg_i\|^2]  \\
    &\quad+ 15.5 \ell^2 \sum_{i=1}^{T-1} \E[\vr_i] + 30 \frac{(T-1)\sigma^2}{M_x} + \frac{6}{7}\frac{(T-2)\sigma^2}{M_y} \\
\end{split}
\end{equation}
Furthermore, using Lemma~\ref{lemma:2}, we can find an upper bound on $\sum_{i=1}^{T-1} \E[\vr_i] $, and replacing it in above equation yields:
\begin{equation}
\begin{split}
     \sum_{i=1}^{T-1} \| \nabla \Phi (\vx_i) \|^2 &\le \frac{2 \Delta_{\Phi}}{\eta_x} + 186 \kappa \ell^2 \E[\vr_1] + 11 \kappa \ell^2 \|\vx_1 - \vx_0\|^2+  3 \ell^2 \| \vx_1 - \vx_0 \|^2  + \frac{27}{7}   \ell^2 \|\vy_1 - \vy^*_1\|^2 \\
     &\quad- ( 1 - 2 \kappa \ell \eta_x - 3 \eta_x^2 \ell^2 - \frac{54}{7}\eta_x^2 \kappa^2 \ell^2 - 2248 \eta_x^2 \kappa^4 \ell^2) \sum_{i=1}^{T-2} \E[\|\vg_i\|^2]  \\
    &\quad+ \frac{62 \kappa \sigma^2 (T-2)}{M_y} + 30 \frac{(T-1)\sigma^2}{M_x} + \frac{6}{7}\frac{(T-2)\sigma^2}{M_y} \\
\end{split}
\end{equation}
By letting $\eta_x = \frac{1}{50 \kappa^2 \ell}$, it holds that $-( 1 - 2 \kappa \ell \eta_x - 3 \eta_x^2 \ell^2 - \frac{54}{7}\eta_x^2 \kappa^2 \ell^2- 2248 \eta_x^2 \kappa^4 \ell^2) \sum_{i=1}^{T-2} \E[\|\vg_i\|^2] \le 0$. Therefore, with the choice of  letting rate $\eta_x = \frac{1}{50 \kappa^2 \ell}$ and simplifying the terms, we have: 
\begin{equation}
\label{eqn:ogdaf}
\begin{split}
     \frac{1}{T-1} \sum_{i=1}^{T-1} \E[\| \nabla \Phi (\vx_i) \|^2] &\le 100 \frac{\kappa^2 \ell \Delta_{\Phi}}{T-1}  +  186 \frac{\kappa \ell^2}{T-1} \|\vy_1 - \vy^*_1+ \eta_y (\vg_{y,1} - \vg_{y,0}) \|^2 \\
     &\quad+ 47 \frac{\kappa \ell^2}{T-1} \|\vy_1 - \vy_0 \|^2 + 14 \frac{\kappa \ell^2}{T-1} \| \vx_1 - \vx_0 \|^2  \\
     &\quad + \frac{27}{7} \frac{\ell^2}{T-1} \| \vy_1 - \vy^*_1 \|^2 + \frac{63 \kappa \sigma^2 }{M_y} + 30 \frac{\sigma^2}{M_x}  
\end{split}
\end{equation}
Using Young's inequality and $\ell$-smoothness of $f$, we have: 
\begin{equation}
\begin{split}
\|\vy_1 - \vy^*_1+ \eta_y (\vg_{y,1} - \vg_{y,0}) \|^2 \le 2 \|\vy_1 - \vy^*_1\|^2 + \frac{1}{18} \| \vy_1 - \vy_0 \|^2 + \frac{1}{18} \| \vx_1 - \vx_0 \|^2
\end{split}
\end{equation}
Plugging this into Equation~\ref{eqn:ogdaf}, we have: 
\begin{equation}
\label{eqn:ogdaff}
\begin{split}
     \frac{1}{T-1} \sum_{i=1}^{T-1} \E[\| \nabla \Phi (\vx_i) \|^2] &\le 100 \frac{\kappa^2 \ell \Delta_{\Phi}}{T-1}  +  376 \frac{\kappa \ell^2}{T-1} \|\vy_1 - \vy^*_1 \|^2 \\
     &\quad+ 58 \frac{\kappa \ell^2}{T-1} \|\vy_1 - \vy_0 \|^2 + 25 \frac{\kappa \ell^2}{T-1} \| \vx_1 - \vx_0 \|^2 \\
     &\quad+ \frac{63 \kappa \sigma^2 }{M_y} + 30 \frac{\sigma^2}{M_x}  
\end{split}
\end{equation}
Now by letting $M_x = \frac{\sigma^2}{\epsilon^2}$, $M_y = \frac{ \kappa \sigma^2}{\epsilon^2}$ and $D_0 = \max ( \|\vy_1 - \vy^*_1\|^2, \|\vx_1 - \vx_0\|^2, \|\vy_1 - \vy_0\|^2 )$, we have:
\begin{equation}
     \frac{1}{T-1} \sum_{i=1}^{T-1} \E[\| \nabla \Phi (\vx_i) \|^2] \le O (\frac{\kappa^2 \ell \Delta_{\Phi} + \kappa \ell^2 D_0 }{T-1}) + O (\epsilon^2)
\end{equation}

which completes the proof as stated.
\end{proof}

\subsection{Proof of Convergence of  EG}\label{app:NCSC_Upper_EG}
In this section, we present the convergence proof of the EG algorithm as  detailed in Algorithm~\ref{alg:eg}. We start by providing the proof sketch.

\begin{algorithm}[H] 
\caption{(Stochastic) EG}
\label{alg:eg}
\SetKwInOut{Input}{Input}
\SetKwInOut{Output}{Output}
\Input{Initialization $(\bm{x}_{-1}=\bm{x}_0,\bm{y}_{-1}=\bm{y}_0)$, learning rates $\eta_x, \eta_y$}
\For{$t=1,2,\dots,T$}{ 
 \algemph{antiquewhite}{0.95} { 
$\bm{x}_{t+1/2} = \bm{x}_{t} -  \eta_x \nabla_x f(\bm{x}_{t},\bm{y}_{t})$ ; \qquad \ \ \ $\bm{y}_{t+1/2} =  \bm{y}_{t} + \eta_y \nabla_y f(\bm{x}_{t},\bm{y}_{t})$ \;

$\bm{x}_{t+1} = \bm{x}_t - \eta_x \nabla_x f(\bm{x}_{t+1/2},\bm{y}_{t+1/2})  $ ; $\bm{y}_{t+1} = \bm{y}_t + \eta_y \nabla_y f(\bm{x}_{t+1/2},\bm{y}_{t+1/2}) $ ; \hfill  \#EG}  
 
 \algemph{blizzardblue}{0.95}{ 
 $\bm{x}_{t+1/2} = \bm{x}_{t} -  \eta_x \vg_{x,t}$ ; \qquad \ \ \ $\bm{y}_{t+1/2} =  \bm{y}_{t} + \eta_y \vg_{y,t}$  ;

$\bm{x}_{t+1} = \bm{x}_t - \eta_x \vg_{x,t+1/2}  $ ;\qquad   $\bm{y}_{t+1} = \bm{y}_t + \eta_y \vg_{y,t+1/2} $ ;  \hfill     \# Stochastic EG  }
 
} 
\end{algorithm}

\paragraph{Proof sketch.} We highlight the key ideas here. The first step is to derive to find an upper bound on $\Phi(\vx_{t+1}) - \Phi(\vx_t)$. Using $\kappa \ell$-smoothness property of $\Phi(\vx)$ at point $\vx_{t+1}$, and $\vx_{t}$ we bound the $\Phi(\vx_{t+1}) - \Phi(\vx_t)$ term, and then taking summation over all iterates, we derive the following primal descent lemma: 
\begin{equation}
\label{eqn:primalseg}
\begin{split}
 \E[\Phi(\vx_T)] - \Phi(\vx_0)  &\le -\frac{\eta_x}{2} \sum_{t=0}^{T-1} \E[ \|\nabla \Phi(\vx_t) \|^2] - \frac{\eta_x}{4} (1 - O(\eta_x)) \sum_{t=0}^{T-1} \mathbb{E}[\| \vg_{x,t} \|^2] \\
&\quad+ O(\eta_x \ell^2) \sum_{t=0}^{T-1} \E [\| \vy_t - \vy^*_t \|^2] O(\eta_x) \frac{\sigma^2 T}{M}.
\end{split}
\end{equation}
We also show the following dual descent lemma to directly bound $\sum_{t=0}^{T-1} \|\vy_t - \vy^*_t\|^2$ term in above inequality:
\begin{equation*}
\begin{split}
 \E[\|\vy_{t+1} - \vy^*_{t+1} \|^2] &\le (1 - \frac{1}{12 \kappa}) \E[\|\vy_t - \vy^*_t\|^2] + O(\kappa^3 \eta_x^2) \E[\|\vg_{x,t}\|^2] + \frac{2 \sigma^2}{M \ell^2}
\end{split}    
\end{equation*}
where we assumed $\eta_y = \frac{1}{4 \ell}$. Combining the primal and dual descent lemmas  yields the desired result on the convergence of EG to an $\epsilon$-stationary point.

In what follows, we provide the formal key lemmas, and the complete proof of Theorem~\ref{thm:ncsc_ogda}, and Theorem~\ref{thm:ncsc_sogda} for EG algorithm. Similar to OGDA, for the sake of brevity, we only present the convergence proof for stochastic version of EG (Theorem~\ref{thm:ncsc_sogda}), since by letting $\sigma = 0 $, we can recover the proof for deterministic algorithm (Theorem~\ref{thm:ncsc_ogda}).

\begin{lemma}
\label{lemma:seg1}
Let $\eta_y = \frac{1}{4 \ell}$, and $M = \max(M_x, M_y)$. Also assume $\eta_x \le \frac{1}{64 \kappa^2 \ell}$, then the iterates of Algorithm~\ref{alg:eg} satisfy the following inequalities:
\begin{equation}
\label{eqn:lmseg1.1}
\begin{split}
\E [\|\vy_{t+1} - \vy^*_{t+1}\|^2]  &\le (1-\frac{1}{12 \kappa}) \E[\|\vy_t - \vy^*_t\|^2 ]  +  18  \eta_x^2 \kappa^3 \E[\| \vg_{x,t}\|^2]  + 2 \frac{\sigma^2}{M \ell^2}
\end{split}
\end{equation}

\begin{equation}
\label{eqn:lmseg1.2}
\sum_{i=0}^{T-1} \E [\| \vy_i - \vy^*_i \|^2 ] \le 12 \kappa \|\vy_0 - \vy^*_0 \|^2 +   216 \eta_x^2 \kappa^4 \sum_{i=0}^{T-2} \E[\| \vg_{x,i}\|^2] + \frac{24 \kappa \sigma^2 (T-1) }{M\ell^2}
\end{equation}

\end{lemma}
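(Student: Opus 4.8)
The statement has two parts: the one-step recursion~\eqref{eqn:lmseg1.1} and the telescoped bound~\eqref{eqn:lmseg1.2}. I would prove~\eqref{eqn:lmseg1.1} directly — this is the EG analogue of the OGDA dual-descent Lemma~\ref{lemma:2}, and it comes out cleaner because the EG dual update is memoryless, so no auxiliary sequence $\vz_t$/$\vr_t$ is needed — and then obtain~\eqref{eqn:lmseg1.2} from it by a single application of Lemma~\ref{lemma:a2}.

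\textbf{The contraction.} For~\eqref{eqn:lmseg1.1} the core quantity is $\|\vy_{t+1}-\vy^*_t\|^2$. Using $\vy_{t+1}=\vy_t+\eta_y\vg_{y,t+1/2}$, $\vy_{t+1/2}=\vy_t+\eta_y\vg_{y,t}$, and the extragradient identity $\vy_t-\vy^*_t=(\vy_{t+1/2}-\vy^*_t)-\eta_y\vg_{y,t}$, one gets
\[
\|\vy_{t+1}-\vy^*_t\|^2 = \|\vy_t-\vy^*_t\|^2 + 2\eta_y\ip{\vg_{y,t+1/2},\,\vy_{t+1/2}-\vy^*_t} - \eta_y^2\|\vg_{y,t}\|^2 + \eta_y^2\|\vg_{y,t+1/2}-\vg_{y,t}\|^2 .
\]
Taking conditional expectation replaces $\vg_{y,t+1/2}$ by $\nabla_y f(\vx_{t+1/2},\vy_{t+1/2})$, which I would split as $\nabla_y f(\vx_t,\vy_{t+1/2})$ plus a primal-motion term $\vxi_t$ with $\|\vxi_t\|\le\ell\eta_x\|\vg_{x,t}\|$. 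For the first piece, the strong-concavity/smoothness inequality used in the proof of Lemma~\ref{lemma:a3}, together with $\nabla_y f(\vx_t,\vy^*_t)=0$, produces a term $-\Theta(\mu)\|\vy_{t+1/2}-\vy^*_t\|^2$ and a spare $-\Theta(1/\ell)\|\nabla_y f(\vx_t,\vy_{t+1/2})\|^2$; the inner product $\ip{\vxi_t,\vy_{t+1/2}-\vy^*_t}$ is absorbed by Young's inequality against these, the $\eta_y=\Theta(1/\ell)$ prefactor cancelling the $\ell$ in $\|\vxi_t\|$ and leaving only an $O(\kappa\,\eta_x^2\|\vg_{x,t}\|^2)$ residue; $\|\vg_{y,t+1/2}-\vg_{y,t}\|^2$ is bounded by $\ell$-smoothness and the update rules by $O(\ell^2\eta_x^2\|\vg_{x,t}\|^2+\ell^2\eta_y^2\|\vg_{y,t}\|^2)$ plus noise; and the midpoint term is handled by $\|\vy_{t+1/2}-\vy^*_t\|^2\ge\tfrac12\|\vy_t-\vy^*_t\|^2-\eta_y^2\|\vg_{y,t}\|^2$. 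Finally, I would pass from $\vy^*_t$ to $\vy^*_{t+1}$ using Young's inequality with weight $\Theta(1/\kappa)$ and the $\kappa$-Lipschitzness of $\vy^*(\cdot)$ (Lemma~\ref{app:lemma:smooth}): $\|\vy_{t+1}-\vy^*_{t+1}\|^2\le(1+\Theta(1/\kappa))\|\vy_{t+1}-\vy^*_t\|^2+O(\kappa^3)\,\eta_x^2\|\vg_{x,t+1/2}\|^2$, then bound $\|\vg_{x,t+1/2}\|^2\le 2\|\vg_{x,t}\|^2+2\|\vg_{x,t+1/2}-\vg_{x,t}\|^2$ with the difference again controlled by smoothness.

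\textbf{Choosing the constants, then telescoping.} With $\eta_y=1/(4\ell)$ the total coefficient of $\|\vg_{y,t}\|^2$ — collecting $-\eta_y^2\|\vg_{y,t}\|^2$, the $+2\ell^2\eta_y^4\|\vg_{y,t}\|^2$ and $O(\mu\eta_y^3)\|\vg_{y,t}\|^2$ from the steps above, and the $O(\ell^2\eta_y^2)\|\vg_{y,t}\|^2$ copy that reappears when $\|\vg_{x,t+1/2}-\vg_{x,t}\|^2$ is expanded in the $\vy^*$-drift step — becomes nonpositive as soon as $\eta_x\le 1/(64\kappa^2\ell)$, so all such terms are dropped. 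Choosing the several Young's weights so that the product of the $(1+\Theta(1/\kappa))$ factors with the $(1-\Theta(\mu\eta_y))=(1-\Theta(1/\kappa))$ contraction is at most $1-\tfrac{1}{12\kappa}$, and collecting the surviving error — the $O(\kappa^3)\eta_x^2\|\vg_{x,t}\|^2$ piece from the drift step (Young reciprocal $\Theta(\kappa)$ times Lipschitz-squared $\kappa^2$) dominating the $O(\kappa\eta_x^2)$ and $O(\eta_x^2)$ pieces, and the noise contributions totalling $O(\eta_y^2\sigma^2/M)=O(\sigma^2/(M\ell^2))$ since cross terms vanish by unbiasedness (Assumption~\ref{asm:2}) — gives~\eqref{eqn:lmseg1.1} with the stated constants $18$ and $2$. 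Then~\eqref{eqn:lmseg1.2} is immediate: write~\eqref{eqn:lmseg1.1} as $a_{t+1}\le(1-\tfrac{1}{12\kappa})a_t+b_t$ with $a_t=\E\|\vy_t-\vy^*_t\|^2$ and $b_t=18\eta_x^2\kappa^3\E\|\vg_{x,t}\|^2+2\sigma^2/(M\ell^2)$, and apply Lemma~\ref{lemma:a2} with $\gamma=12\kappa>2$, $t_1=0$, $t_2=T-1$, which yields $\sum_{i=0}^{T-1}a_i\le 12\kappa\,a_0+12\kappa\sum_{i=0}^{T-2}b_i$; substituting $a_0=\|\vy_0-\vy^*_0\|^2$ and $b_i$ reproduces the right-hand side of~\eqref{eqn:lmseg1.2}.

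\textbf{Main obstacle.} The one real difficulty is the bookkeeping in the contraction step. Unlike the classical strongly-monotone extragradient analysis, the dual dynamics here is non-autonomous — the extrapolation gradient is evaluated at $(\vx_{t+1/2},\vy_{t+1/2})$ rather than at a fixed primal point — and this, together with the $\vy^*$-drift, produces several $\|\vg_{y,t}\|^2$ and $\|\vg_{x,t}\|^2$ terms that must be tracked exactly: one must retain enough negative $\|\vg_{y,t}\|^2$ mass (which is what forces $\eta_y=1/(4\ell)$) to absorb the copy resurrected by the drift bound, which in turn pins down how small $\eta_x$ must be, and one must choose the Young's weights for the three reductions (the drift, the $\vxi_t$ perturbation, and $\|\vy_{t+1/2}-\vy^*_t\|^2\to\|\vy_t-\vy^*_t\|^2$) compatibly so that the accumulated contraction factor stays below $1-\tfrac{1}{12\kappa}$.
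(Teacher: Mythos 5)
Your proposal is correct and works, but it organizes the argument differently from the paper. The paper's proof anchors the three-point EG identity at $\vy^*_{t+1/2}$ (the maximizer for the \emph{midpoint} primal iterate), so strong concavity applies directly to $\ip{\nabla_y f(\vx_{t+1/2},\vy_{t+1/2}),\,\vy_{t+1/2}-\vy^*_{t+1/2}}$ with no residual gradient split; the price is \emph{two} anchor shifts via $\kappa$-Lipschitzness of $\vy^*(\cdot)$ — one from $\vy^*_{t+1/2}$ back to $\vy^*_t$ on the right-hand side, and one from $\vy^*_{t+1/2}$ forward to $\vy^*_{t+1}$ on the left-hand side (Equations~\ref{eqn:seg5}--\ref{eqn:seg6} in the paper) — plus a conversion from $\|\vy_{t+1/2}-\vy^*_{t+1/2}\|^2$ to $\|\vy_t-\vy^*_{t+1/2}\|^2$ along the way. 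Your version anchors at $\vy^*_t$ from the start, which makes $\|\vy_t-\vy^*_t\|^2$ appear for free and needs only \emph{one} anchor shift (to $\vy^*_{t+1}$), but in exchange you must split the midpoint gradient as $\nabla_y f(\vx_{t+1/2},\vy_{t+1/2})=\nabla_y f(\vx_t,\vy_{t+1/2})+\vxi_t$ and absorb the primal-motion perturbation $\vxi_t$ (with $\|\vxi_t\|\le\ell\eta_x\|\vg_{x,t}\|$) by Young's inequality. Both routes yield the same dominant $O(\kappa^3\eta_x^2)\|\vg_{x,t}\|^2$ error — in the paper from the two anchor shifts, in yours from the single shift, since your $\vxi_t$ residue is only $O(\kappa\eta_x^2)\|\vg_{x,t}\|^2$ and is dominated — and both end with the identical application of Lemma~\ref{lemma:a2} with $\gamma=12\kappa$. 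The trade is two anchor shifts (paper) versus one anchor shift plus one gradient split (you); the total bookkeeping is comparable, and your form has the mild advantage that the contraction term $\|\vy_t-\vy^*_t\|^2$ is literally the anchor distance rather than something reached by two Young's inequalities.
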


\begin{proof}[Proof of Lemma \ref{lemma:seg1}]
Now we turn to convergence analysis for EG. The deterministic and stochastic variants of the EG algorithm are detailed  in Algorithm~\ref{alg:eg}.

To prove this lemma, we built on top of analysis in~\cite{mokhtari2020unified}.  We start by noting that:
\begin{equation}
\label{eqn:seg1}
\begin{split}
\|\vy_{t+1} - \vy^*_{t+\frac{1}{2}}\|^2 &= \|\vy_t - \vy^*_{t+\frac{1}{2}}\|^2 \\
&\quad - \|\vy_{t+1} - \vy_{t+\frac{1}{2}}\|^2 \\
&\quad - \|\vy_{t+\frac{1}{2}}-\vy_t\|^2 \\
& \quad + 2 \eta_y \ip{ \vg_{y,t} , \vy_{t+ \frac{1}{2}} - \vy_{t+1} }\\
& \quad + 2 \eta_y \ip{ \vg_{y,t+\frac{1}{2}},\vy_{t+1}-\vy^*_{t+\frac{1}{2}} }\\
\end{split}
\end{equation}
Let $\vdelta^y_i = \vg_{y,i} - \nabla_y f(\vx_i , \vy_i)$. We have: 
\begin{equation}
\label{eqn:seg2}
\begin{split}
 &2 \eta_y \langle \vg_{y,t} , \vy_{t+1/2}-\vy_{t+1} \rangle + 2 \eta_y \langle \vg_{y, t+ \frac{1}{2}},\vy_{t+1}-\vy^*_{t+\frac{1}{2}} \rangle\\
 &= 2\eta_y \langle \vg_{y,t} -\vg_{y,t+\frac{1}{2}}, \vy_{t+1/2}-\vy_{t+1}  \rangle + 2 \eta_y \langle \nabla_y f(\vx_{t+1/2},\vy_{t+1/2}),\vy_{t+1/2}-\vy^*_{t+1/2}\rangle \\
 &\quad+ \ip{\vdelta^y_{t+\frac{1}{2}} ,\vy_{t+1/2}-\vy^*_{t+1/2}}\\
 &\le  \|\vy_{t+1/2} - \vy_{t+1}\|^2 + \eta_y^2  \| \vg_{y,t} -\vg_{y,t+\frac{1}{2}} \|^2 - 2 \eta_y  \mu \|\vy_{t+1/2}-\vy^*_{t+1/2} \|^2 \\
 &\quad+ \ip{\vdelta^y_{t+\frac{1}{2}} ,\vy_{t+1/2}-\vy^*_{t+1/2}}\\
 &\le  \|\vy_{t+1/2} - \vy_{t+1}\|^2 + 2 \eta_y^2  \| \nabla_y f(\vx_t , \vy_t) - \nabla_y f(\vx_{t+\frac{1}{2}} , \vy_{t+\frac{1}{2}}) \|^2 - 2 \eta_y  \mu \|\vy_{t+1/2}-\vy^*_{t+1/2} \|^2  \\
 &\quad +\ip{\vdelta^y_{t+\frac{1}{2}} ,\vy_{t+1/2}-\vy^*_{t+1/2}} + 4 \eta_y^2   \| \vdelta^y_t \|^2 + 4 \eta_y^2  \| \vdelta^y_{t+\frac{1}{2}} \|^2 \\
 &\le  \|\vy_{t+1/2} - \vy_{t+1}\|^2 + 2 \eta_y^2 \ell^2 \|\vx_{t+\frac{1}{2}} - \vx_t\|^2 + 2\eta_y^2 \ell^2 \| \vy_{t+\frac{1}{2}} - \vy_t\|^2- 2 \eta_y  \mu \|\vy_{t+1/2}-\vy^*_{t+1/2} \|^2  \\
 &\quad +\ip{\vdelta^y_{t+\frac{1}{2}} ,\vy_{t+1/2}-\vy^*_{t+1/2}} + 4 \eta_y^2  \| \vdelta^y_t \|^2 + 4 \eta_y^2  \| \vdelta^y_{t+\frac{1}{2}} \|^2
\end{split}
\end{equation}
where in the first inequality, we used $\mu$-strong-concavity of $f(\vx,.)$, and in the second inequality, we used Young's inequality, and in the last one, we used the smoothness property. Now plugging Equation~\ref{eqn:seg2} back to Equation~\ref{eqn:seg1}, we have: 
\begin{equation}
\label{eqn:seg3}
\begin{split}
    \|\vy_{t+1} - \vy^*_{t+1/2}\|^2 &\le  \|\vy_t - \vy^*_{t+1/2}\|^2  - (1 -  2 \eta_y^2 \ell^2) \|\vy_{t+1/2} - \vy_t\|^2\\
    & \quad +2 \eta_y^2 \ell^2 \|\vx_{t+1/2} - \vx_t\|^2 - 2 \eta_y  \mu \|\vy_{t+1/2}-\vy^*_{t+1/2} \|^2 \\
    & \quad + \ip{\vdelta^y_{t+\frac{1}{2}} ,\vy_{t+1/2}-\vy^*_{t+1/2}} + 4 \eta_y^2   \| \vdelta^y_t \|^2 + 4 \eta_y^2 \| \vdelta^y_{t+\frac{1}{2}} \|^2
\end{split}
\end{equation}
Using Young's inequality, we can rewrite Equation~\ref{eqn:seg3} as follows:
\begin{equation}
\label{eqn:seg4}
\begin{split}
    \|\vy_{t+1} - \vy^*_{t+1/2}\|^2 &\le  (1-\eta_y \mu)\|\vy_t - \vy^*_{t+1/2}\|^2  - (1 - 2 \eta_y \mu - 2 \eta_y^2 \ell^2) \|\vy_{t+1/2} - \vy_t\|^2 \\
    &\quad +2 \eta_y^2 \ell^2 \|\vx_{t+1/2} - \vx_t\|^2 + \ip{\vdelta^y_{t+\frac{1}{2}} ,\vy_{t+1/2}-\vy^*_{t+1/2}} + 4 \eta_y^2  \| \vdelta^y_t \|^2 \\
    &\quad+ 4 \eta_y^2  \| \vdelta^y_{t+\frac{1}{2}} \|^2
\end{split}
\end{equation}
Assuming $\eta_y = \frac{1}{4 \ell}$, using Young's inequality, we have the following equation:
\begin{equation}
\label{eqn:seg5}
    \|\vy_t - \vy^*_{t+1/2}\|^2 \le (1+\frac{1}{16 \kappa}) \|\vy_t - \vy^*_t\|^2 + ( 1  + 16 \kappa) \|\vy^*_{t+1/2}-\vy^*_t\|^2
\end{equation}
\begin{equation}
\label{eqn:seg6}
    \|\vy_{t+1} - \vy^*_{t+1}\|^2 \le (1+\frac{1}{16 \kappa}) \|\vy_{t+1} - \vy^*_{t+1/2}\|^2 + (1+16 \kappa) \|\vy^*_{t+1}- \vy^*_{t+1/2}\|^2  
\end{equation}
Combining Equations~\ref{eqn:seg4},~\ref{eqn:seg5},~\ref{eqn:seg6} and using the $\kappa$ Lipschitzness of $\bm{y}^*(.)$, and noting that $1 - 2 \eta_y \mu -  2 \eta_y^2 \ell^2 >  0$, we get: 
\begin{equation}
\label{eqn:seg7}
\begin{split}
\|\vy_{t+1} - \vy^*_{t+1}\|^2 &\le (1-\frac{1}{8 \kappa}) \|\vy_t - \vy^*_t\|^2 + 17 \kappa^3 \|\vx_{t+1/2}-\vx_t\|^2 + 17 \kappa^3 \|\vx_{t+1}-\vx_{t+1/2}\|^2 \\
& \quad +   2 \ip{\vdelta^y_{t+\frac{1}{2}} ,\vy_{t+1/2}-\vy^*_{t+1/2}} + \frac{1}{2\ell^2}  \| \vdelta^y_t \|^2 + \frac{1}{2 \ell^2} \| \vdelta^y_{t+\frac{1}{2}} \|^2
\end{split}
\end{equation}
Using Young's inequality, we have: 
\begin{equation}
\label{eqn:seg8}
\begin{split}
\| \vx_{t+1} - \vx_{t+ \frac{1}{2}} \|^2 &= \eta_x^2 \| \vg_{x,t+\frac{1}{2}} - \vg_{x,t}\|^2 \\
&\le  2 \eta_x^2 \| \nabla_x f(\vx_{t+ \frac{1}{2}} , \vy_{t+ \frac{1}{2}})  - \nabla_x f(\vx_t , \vy_t) \|^2 + 4 \eta_x^2 \|\vdelta^x_{t+\frac{1}{2}} \|^2 +  4 \eta_x^2 \|\vdelta^x_{t} \|^2 \\
&\le 2 \eta_x^2 \ell^2 \| \vx_{t+ \frac{1}{2}} - \vx_t \|^2 + 2 \eta_x^2 \ell^2 \|\vy_{t+\frac{1}{2}} - \vy_t \|^2 +  4 \eta_x^2 \|\vdelta^x_{t+\frac{1}{2}} \|^2 +  4 \eta_x^2 \|\vdelta^x_{t} \|^2 \\
&\le 2 \eta_x^2 \ell^2 \| \vx_{t+ \frac{1}{2}} - \vx_t \|^2 + 4 \eta_x^2 \ell^2 \|\vy_{t+\frac{1}{2}} - \vy^*_t \|^2 +  4 \eta_x^2 \ell^2 \| \vy_t - \vy^*_t\|^2 \\&\quad  + 4 \eta_x^2 \|\vdelta^x_{t+\frac{1}{2}} \|^2 
+  4 \eta_x^2 \|\vdelta^x_{t} \|^2 \\
&\le 2 \eta_x^2 \ell^2 \| \vx_{t+ \frac{1}{2}} - \vx_t \|^2 +  8 \eta_x^2 \ell^2 \| \vy_t - \vy^*_t\|^2  + \frac{\eta_x^2}{2} \|\vdelta^y_t\|^2 +   4 \eta_x^2 \|\vdelta^x_{t+\frac{1}{2}} \|^2 \\&\quad +  4 \eta_x^2 \|\vdelta^x_{t} \|^2  + 2 \eta_x^2 \ell \ip{\vdelta^y_t , \vy_t - \vy^*_t} 
\end{split}
\end{equation}
where in the last inequality, we used Lemma~\ref{lemma:a3}. Plugging Equation~\ref{eqn:seg8}, in Equation~\ref{eqn:seg7}, and assuming $\eta_x \le \frac{1}{64 \kappa^2 \ell}$ gives: 
\begin{equation}
\label{eqn:seg9}
\begin{split}
\|\vy_{t+1} - \vy^*_{t+1}\|^2 &\le (1-\frac{1}{12 \kappa}) \|\vy_t - \vy^*_t\|^2 +  18 \kappa^3 \|\vx_{t+1/2}-\vx_t\|^2  \\
&\quad+   2 \ip{\vdelta^y_{t+\frac{1}{2}} ,\vy_{t+1/2}-\vy^*_{t+\frac{1}{2}}} + \frac{1}{64 \kappa \ell} \ip{\vdelta^y_t , \vy_t - \vy^*_t}  \\
&\quad+ \frac{1}{\ell^2} \| \vdelta^y_{t} \|^2 + \frac{1}{2 \ell^2}\| \vdelta^y_{t+\frac{1}{2}} \|^2 + \frac{1}{4 \ell^2} \|\vdelta^x_{t+\frac{1}{2}} \|^2 +  \frac{1}{4 \ell^2} \|\vdelta^x_{t} \|^2
\end{split}
\end{equation}
or equivalently: 
\begin{equation}
\label{eqn:seg10}
\begin{split}
\|\vy_{t+1} - \vy^*_{t+1}\|^2 &\le (1-\frac{1}{12 \kappa}) \|\vy_t - \vy^*_t\|^2 +  18 \eta_x^2 \kappa^3 \|\vg_{x,t}\|^2  \\
&\quad +   2 \ip{\vdelta^y_{t+\frac{1}{2}} ,\vy_{t+\frac{1}{2}}-\vy^*_{t+\frac{1}{2}}} + \frac{1}{64 \kappa \ell} \ip{\vdelta^y_t , \vy_t - \vy^*_t}  \\
&\quad+ \frac{1}{\ell^2} \| \vdelta^y_{t} \|^2 + \frac{1}{2 \ell^2}\| \vdelta^y_{t+\frac{1}{2}} \|^2 + \frac{1}{4 \ell^2} \|\vdelta^x_{t+\frac{1}{2}} \|^2 +  \frac{1}{4 \ell^2} \|\vdelta^x_{t} \|^2
\end{split}
\end{equation}
Taking expectation from both sides of Equation~\ref{eqn:seg10} yields: 
\begin{equation}
\label{eqn:seg11}
\begin{split}
\E [\|\vy_{t+1} - \vy^*_{t+1}\|^2]  &\le \left(1-\frac{1}{12 \kappa}\right) \E[\|\vy_t - \vy^*_t\|^2 ]  +  18  \eta_x^2 \kappa^3 \E[\| \vg_{x,t}\|^2]  + 2 \frac{\sigma^2}{M \ell^2}
\end{split}
\end{equation}
Now using Lemma~\ref{lemma:a2} we get \begin{equation}
\label{eqn:seg12}
\sum_{i=0}^{T-1} \E [\| \vy_i - \vy^*_i \|^2 ] \le 12 \kappa \|\vy_0 - \vy^*_0 \|^2 +   216 \eta_x^2 \kappa^4 \sum_{i=0}^{T-2} \E[\| \vg_{x,i}\|^2] + \frac{24 \kappa \sigma^2 (T-1) }{M \ell^2}
\end{equation}
as stated in the lemma.
\end{proof}

\begin{lemma}
\label{lemma:seg2}
Let $\Phi (\vx) = \max_y f(\vx,\vy)$, and $\eta_y = \frac{1}{4 \ell}$. Then the iterates of Algorithm~\ref{alg:eg} satisfy the following inequality:
\begin{equation}
\label{eqn:lmseg2}
\begin{split}
\E[\Phi (\vx_{t+1})] &\le \E[\Phi(\vx_t)] - \frac{\eta_x}{2}\E[\|\nabla \Phi(\vx_t)\|^2] - \frac{\eta_x}{4}(1- 2 \eta_x \kappa \ell - 8 \eta_x^2 \ell^2 ) \E[\| \vg_{x,t}\|^2] \\
&\quad+ 5 \eta_x \ell^2 \E[\|\vy_t - \vy^*_t\|^2] + 7 \eta_x  \frac{\sigma^2}{M}
\end{split}
\end{equation}
\end{lemma}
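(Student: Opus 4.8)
The plan is to follow the primal-descent template of Lemma~\ref{lemma:3}, the only genuinely new difficulty being that the EG update descends along the \emph{mid-point} gradient $\vg_{x,t+1/2}$ rather than $\vg_{x,t}$, so the argument must convert $\vg_{x,t+1/2}$ back to $\vg_{x,t}$ and to $\vy_t-\vy^*_t$ while keeping tight control of the constants. First I would invoke the $2\kappa\ell$-smoothness of $\Phi$ (Lemma~\ref{app:lemma:smooth}) between $\vx_t$ and $\vx_{t+1}=\vx_t-\eta_x\vg_{x,t+1/2}$ and split the resulting inner product with the exact identity $-\eta_x\ip{\nabla\Phi(\vx_t),\vg_{x,t+1/2}}=-\tfrac{\eta_x}{2}\|\nabla\Phi(\vx_t)\|^2-\tfrac{\eta_x}{2}\|\vg_{x,t+1/2}\|^2+\tfrac{\eta_x}{2}\|\nabla\Phi(\vx_t)-\vg_{x,t+1/2}\|^2$, obtaining
\[
\Phi(\vx_{t+1})\le \Phi(\vx_t)-\tfrac{\eta_x}{2}\|\nabla\Phi(\vx_t)\|^2-\tfrac{\eta_x}{2}(1-2\kappa\ell\eta_x)\|\vg_{x,t+1/2}\|^2+\tfrac{\eta_x}{2}\|\nabla\Phi(\vx_t)-\vg_{x,t+1/2}\|^2 .
\]
Since this identity is exact, no stray $\ip{\cdot,\vdelta}$ term is left behind and taking expectations at the end needs no extra conditioning.

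For the last term I would use the envelope formula $\nabla\Phi(\vx_t)=\nabla_x f(\vx_t,\vy^*_t)$, add and subtract $\nabla_x f(\vx_{t+1/2},\vy_{t+1/2})$, and apply $\ell$-smoothness together with $\|\vx_t-\vx_{t+1/2}\|^2=\eta_x^2\|\vg_{x,t}\|^2$, $\|\vy^*_t-\vy_{t+1/2}\|^2\le 2\|\vy^*_t-\vy_t\|^2+2\eta_y^2\|\vg_{y,t}\|^2$, plus the noise contribution $\|\vdelta^x_{t+1/2}\|^2$. The dual gradient is then controlled through the first-order condition $\nabla_y f(\vx_t,\vy^*_t)=0$ and $\ell$-smoothness, i.e.\ $\|\vg_{y,t}\|^2\le 2\ell^2\|\vy_t-\vy^*_t\|^2+2\|\vdelta^y_t\|^2$, so that with $\eta_y=\tfrac1{4\ell}$ the factor $\eta_y^2\ell^2=\tfrac1{16}$ makes every $\|\vg_{y,t}\|^2$ contribution collapse into $O(\ell^2)\|\vy_t-\vy^*_t\|^2+O(1)\|\vdelta^y_t\|^2$; this is exactly where the prescribed $\eta_y=1/(4\ell)$ is used.

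For the negative term I would reinstate $\vg_{x,t}$ via $\|\vg_{x,t+1/2}\|^2\ge \tfrac12\|\vg_{x,t}\|^2-\|\vg_{x,t+1/2}-\vg_{x,t}\|^2$ (the loss of a factor $2$ here is the origin of the $\tfrac14$ in the statement), bound $\|\vg_{x,t+1/2}-\vg_{x,t}\|^2$ by the same $\ell$-smoothness estimate plus $\|\vdelta^x_{t+1/2}\|^2,\|\vdelta^x_t\|^2$, and collect terms, using $\eta_x\le\tfrac1{2\kappa\ell}$ so that $1-2\kappa\ell\eta_x\ge 0$. The two $\|a-b\|^2$ expansions each produce a $\|\vg_{x,t}\|^2$ piece with coefficient $\eta_x^3\ell^2$, hence together $2\eta_x^3\ell^2=\tfrac{\eta_x}{4}\cdot 8\eta_x^2\ell^2$, which merges with $-\tfrac{\eta_x}{4}(1-2\kappa\ell\eta_x)\|\vg_{x,t}\|^2$ to give exactly $-\tfrac{\eta_x}{4}(1-2\eta_x\kappa\ell-8\eta_x^2\ell^2)\|\vg_{x,t}\|^2$. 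The $\|\vy_t-\vy^*_t\|^2$ coefficients from the two sources sum to something below $5\eta_x\ell^2$; finally, taking expectations and using $\E\|\vdelta^x_t\|^2,\E\|\vdelta^x_{t+1/2}\|^2,\E\|\vdelta^y_t\|^2\le\sigma^2/M$ (Assumption~\ref{asm:2}) bounds all noise by $7\eta_x\sigma^2/M$, yielding the claim; setting $\sigma=0$ recovers the deterministic case.

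The main obstacle is the bookkeeping around the extrapolation step: unlike GDA, the direction actually applied is $\vg_{x,t+1/2}$, and one must pass from it to $\vg_{x,t}$ and to $\vy_t-\vy^*_t$ without losing more than the allotted $2\eta_x\kappa\ell+8\eta_x^2\ell^2$ slack in the $\|\vg_{x,t}\|^2$ coefficient, and in particular without leaving behind an unabsorbable $\|\vg_{y,t}\|^2$ term --- which is precisely what the tuning $\eta_y=1/(4\ell)$ (so that $\eta_y^2\ell^2$ is a tiny constant) is designed to prevent.
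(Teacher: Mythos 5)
Your proposal is correct and follows essentially the same route as the paper's proof: the $2\kappa\ell$-smoothness step for $\Phi$, the exact completion-of-square to peel off $-\tfrac{\eta_x}{2}\|\nabla\Phi(\vx_t)\|^2$, the Young bound $\|\vg_{x,t+1/2}\|^2\ge\tfrac12\|\vg_{x,t}\|^2-\|\vg_{x,t+1/2}-\vg_{x,t}\|^2$ to reinstate $\vg_{x,t}$, and the $\ell$-smoothness bound on $\|\vg_{x,t+1/2}-\vg_{x,t}\|$ together with $\eta_y=1/(4\ell)$ to collapse the dual-gradient contributions. The coefficient accounting (two sources of $\eta_x^3\ell^2\|\vg_{x,t}\|^2$ giving the $8\eta_x^2\ell^2$ slack, the requirement $\eta_x\le 1/(2\kappa\ell)$, noise bounded by $7\eta_x\sigma^2/M$) matches the paper. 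The one small deviation: to control $\|\vy_{t+1/2}-\vy^*_t\|^2$ the paper invokes the contraction of a single gradient-ascent step toward $\vy^*_t$ (Lemma~\ref{lemma:a3}, giving a $(1-\eta_y\mu)$ factor plus zero-mean cross terms), whereas you bound it purely by Young plus the first-order condition $\nabla_y f(\vx_t,\vy^*_t)=0$, $\|\vg_{y,t}\|^2\le 2\ell^2\|\vy_t-\vy^*_t\|^2+2\|\vdelta^y_t\|^2$. Your route is slightly looser in principle but in fact yields a smaller coefficient on $\|\vy_t-\vy^*_t\|^2$ (roughly $\tfrac{19}{8}\eta_x\ell^2$ vs.\ the paper's $5\eta_x\ell^2$) and avoids needing Lemma~\ref{lemma:a3} entirely, so it is a valid and marginally more elementary alternative that still lands within the stated constants.
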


\begin{proof}[Proof of Lemma~\ref{lemma:seg2}]
Let $\vdelta^x_i = \vg_{x,i}  - \nabla_x f(\vx_i , \vy_i)$.Using smoothness property at $\vx_{t+1}$ and $\vx_t$, we have: 
\begin{equation}
\label{eqn:seg13}
\begin{split}
\Phi (\vx_{t+1}) &\le \Phi(\vx_t) - \eta_x \langle \nabla \Phi (\vx_t) , \vg_{x,t+\frac{1}{2}} \rangle  + \eta_x^2 \kappa \ell  \| \vg_{x,t+\frac{1}{2}}\|^2 \\ 
&= \Phi(\vx_t) - \frac{\eta_x}{2}\|\nabla \Phi(\vx_t)\|^2 - \frac{\eta_x}{2}(1- 2 \eta_x \kappa \ell ) \| \vg_{x,t+\frac{1}{2}}\|^2 + \frac{ \eta_x}{2} \|\nabla \Phi (\vx_t) - \vg_{x,t+\frac{1}{2}}\|^2\\
&\le \Phi(\vx_t) - \frac{\eta_x}{2}\|\nabla \Phi(\vx_t)\|^2 - \frac{\eta_x}{2}(1- 2 \eta_x \kappa \ell ) \| \vg_{x,t+\frac{1}{2}}\|^2 \\&\quad + \eta_x \|\nabla \Phi (\vx_t) - \nabla_x f(\vx_{t +\frac{1}{2}} , \vy_{t+ \frac{1}{2}})\|^2 + \eta_x \|\vdelta^x_{t+\frac{1}{2}}\|^2 \\
&\le \Phi(\vx_t) - \frac{\eta_x}{2}\|\nabla \Phi(\vx_t)\|^2 - \frac{\eta_x}{2}(1- 2 \eta_x \kappa \ell ) \| \vg_{x,t+\frac{1}{2}}\|^2 + \eta_x \ell^2 \|\vx_{t + \frac{1}{2}} - \vx_t \|^2 \\
&\quad + \eta_x \ell^2 \|\vy_{t+\frac{1}{2}} - \vy^*_{t} \|^2 + \eta_x \|\vdelta^x_{t+\frac{1}{2}}\|^2 \\
\end{split}
\end{equation}
Using Young's inequality, we have: 
\begin{equation}
\label{eqn:seg14}
\begin{split}
\| \vg_{x,t + \frac{1}{2}} \|^2 \ge \frac{1}{2} \|\vg_{x,t} \|^2 - \|\vg_{x,t+\frac{1}{2}} - \vg_{x,t} \|^2    
\end{split}
\end{equation}
Plugging Equation~\ref{eqn:seg14} back to Equation~\ref{eqn:seg13}, and assuming $\eta_x \le \frac{1}{2 \kappa \ell}$ results in:
\begin{equation}
\label{eqn:seg15}
\begin{split}
\Phi (\vx_{t+1}) &\le \Phi(\vx_t) - \frac{\eta_x}{2}\|\nabla \Phi(\vx_t)\|^2 - \frac{\eta_x}{4}(1- 2 \eta_x \kappa \ell ) \| \vg_{x,t}\|^2 + \eta_x \ell^2 \|\vx_{t + \frac{1}{2}} - \vx_t \|^2 \\ &\quad + \eta_x \ell^2 \|\vy_{t+\frac{1}{2}} - \vy^*_{t} \|^2 +\frac{\eta_x}{2} \|\vg_{x,t+\frac{1}{2}} - \vg_{x,t} \|^2  +\eta_x \|\vdelta^x_{t+\frac{1}{2}}\|^2 \\ 
&\le \Phi(\vx_t) - \frac{\eta_x}{2}\|\nabla \Phi(\vx_t)\|^2 - \frac{\eta_x}{4}(1- 2 \eta_x \kappa \ell ) \| \vg_{x,t}\|^2 + \eta_x \ell^2 \|\vx_{t + \frac{1}{2}} - \vx_t \|^2 \\ &\quad+ \eta_x \ell^2 \|\vy_{t+\frac{1}{2}} - \vy^*_{t} \|^2 + \eta_x \|\nabla_x f(\vx_{t+\frac{1}{2}} , \vy_{t+\frac{1}{2}}) - \nabla_x f(\vx_t , \vy_t)\|^2  \\
&\quad + 2 \eta_x \| \vdelta^x_{t+\frac{1}{2}}\|^2 + 2 \eta_x \| \vdelta^x_{t}\|^2+\eta_x \|\vdelta^x_{t+\frac{1}{2}}\|^2 \\
&\le \Phi(\vx_t) - \frac{\eta_x}{2}\|\nabla \Phi(\vx_t)\|^2 - \frac{\eta_x}{4}(1- 2 \eta_x \kappa \ell ) \| \vg_{x,t}\|^2 + \eta_x \ell^2 \|\vx_{t + \frac{1}{2}} - \vx_t \|^2  \\ &\quad  + \eta_x \ell^2 \|\vy_{t+\frac{1}{2}} - \vy^*_{t} \|^2 + \eta_x \ell^2 \| \vx_{t+\frac{1}{2}} - \vx_t \|^2 + \eta_x \ell^2 \|\vy_{t+\frac{1}{2}}- \vy_t \|^2 \\
&\quad + 2 \eta_x \| \vdelta^x_{t+\frac{1}{2}}\|^2 + 2 \eta_x \| \vdelta^x_{t}\|^2+\eta_x \|\vdelta^x_{t+\frac{1}{2}}\|^2 \\
\end{split}
\end{equation}
Using Lemma~\ref{lemma:a3} and Young's inequality, we have:
\begin{equation}
\label{eqn:seg16}
\begin{split}
\| \vy_{t+\frac{1}{2}} - \vy^*_t \|^2  + \|\vy_{t+\frac{1}{2}} - \vy_t \|^2 &\le  3\| \vy_{t+\frac{1}{2}} - \vy^*_t \|^2 + 2 \|\vy_t - \vy^*_t\|^2 \\
&\le 5 \| \vy_t - \vy^*_t\|^2 + \frac{3}{8 \ell^2} \| \vdelta^y_t \|^2 + \frac{3}{2\ell} \ip{\vdelta^y_t , \vy_t - \vy^*_t}
\end{split}
\end{equation}
Plugging Equation~\ref{eqn:seg16} in Equation~\ref{eqn:seg15}, we get: 
\begin{equation}
\label{eqn:seg17}
\begin{split}
\Phi (\vx_{t+1}) &\le \Phi(\vx_t) - \frac{\eta_x}{2}\|\nabla \Phi(\vx_t)\|^2 - \frac{\eta_x}{4}(1- 2 \eta_x \kappa \ell - 8 \eta_x^2 \ell^2 ) \| \vg_{x,t}\|^2  + 5 \eta_x \ell^2 \|\vy_t - \vy^*_t\|^2\\
&\quad + \frac{3}{8}\eta_x  \|\vdelta^y_t\|^2+ \frac{3}{2} \eta_x \ell \ip{\vdelta^y_t , \vy_t - \vy^*_t} + 2 \eta_x \| \vdelta^x_{t+\frac{1}{2}}\|^2 + 2 \eta_x \| \vdelta^x_{t}\|^2+\eta_x \|\vdelta^x_{t+\frac{1}{2}}\|^2 \\
\end{split}
\end{equation}
Taking expectations from both sides of Equation~\ref{eqn:seg17}, we have: 
\begin{equation}
\label{eqn:seg18}
\begin{split}
\E[\Phi (\vx_{t+1})] &\le \E[\Phi(\vx_t)] - \frac{\eta_x}{2}\E[\|\nabla \Phi(\vx_t)\|^2] - \frac{\eta_x}{4}(1- 2 \eta_x \kappa \ell - 8 \eta_x^2 \ell^2 ) \E[\| \vg_{x,t}\|^2] \\
&\quad + 5 \eta_x \ell^2 \E[\|\vy_t - \vy^*_t\|^2]  + 7 \eta_x \frac{\sigma^2}{M}
\end{split}
\end{equation}

\end{proof}

\begin{proof}[Proof of Theorem~\ref{thm:ncsc_ogda}, and Theorem~\ref{thm:ncsc_sogda} for EG]
Equipped with the above lemmas, we can prove the theorem as follows. We start by taking summation from $t=0$ to $t = T-1$ of Equation~\ref{eqn:lmseg2} in  Lemma~\ref{lemma:seg2}, to get:
\begin{equation}
\label{eqn:seg19}
\begin{split}
    \E[\Phi(\vx_T)] &\le \Phi(\vx_0) - \frac{\eta_x}{2} \sum_{t=0}^{T-1} \E[\|\nabla \Phi (\vx_t) \|^2] - \frac{\eta_x}{4} (1  - 2 \eta_x \kappa \ell - 8 \eta_x^2 \ell^2) \sum_{t=0}^{T-1} \E[\|\vg_{x,t}\|^2] \\
    &\quad + 5\eta_x \ell^2 \sum_{t=0}^{T-1} \E[\|\vy_t - \vy^*_t\|^2] + 7 \eta_x  \frac{\sigma^2 T}{M}
\end{split}
\end{equation}
Replacing $\sum_{t=0}^{T-1} \E[\|\vy_t - \vy^*_t\|^2]$ with the upper bound in Lemma~\ref{lemma:seg1}, we have: 
\begin{equation}
\label{eqn:seg20}
\begin{split}
    \E[\Phi(\vx_T)] &\le 60 \eta_x \kappa \ell^2 \|\vy_0 - \vy^*_0\|^2 + \Phi(\vx_0) - \frac{\eta_x}{2} \sum_{t=0}^{T-1} \E[\|\nabla \Phi (\vx_t) \|^2]\\
    &- \frac{\eta_x}{4} (1  - 2 \eta_x \kappa \ell - 8 \eta_x^2 \ell^2- 4320\eta_x^2 \kappa^4 \ell^2) \sum_{t=0}^{T-1} \E[\|\vg_{x,t}\|^2] \\
    &+  \frac{120 \eta_x \kappa  \sigma^2 (T-1)}{M}+ 7 \eta_x \frac{\sigma^2 T}{M} 
\end{split}
\end{equation}
Let $\eta_x = \frac{1}{75 \kappa^2 \ell}$. Then $1  - 2 \eta_x \kappa \ell - 8 \eta_x^2 \ell^2- 4320\eta_x^2 \kappa^4 \ell^2 > 0 $. After rearranging and simplifying the terms of Equation~\ref{eqn:seg20}, we have: 
\begin{equation}
\label{eqn:seg21}
\begin{split}
   \sum_{t=0}^{T-1} \E[\|\nabla \Phi (\vx_t)\|^2]  &\le \frac{2 \Delta_{\Phi}}{\eta_x}+ 120 \kappa \ell^2 \|\vy_0 - \vy_0^*\|^2 + \frac{240 \kappa \sigma^2 T}{M} + \frac{14 \sigma^2 T}{M}
\end{split}
\end{equation}
Replacing $\eta_x = \frac{1}{75 \kappa^2 \ell}$ in Equation~\ref{eqn:seg21}, we have:
\begin{equation}
\label{eqn:seg22}
\begin{split}
   \frac{1}{T}\sum_{t=0}^{T-1} \E[\|\nabla \Phi (\vx_t)\|^2]  &\le \frac{150 \kappa^2 \ell \Delta_{\Phi}+ 120 \kappa \ell^2 \|\vy_0 - \vy_0^*\|^2}{T} + \frac{240 \kappa \sigma^2 }{M} + \frac{14 \sigma^2 }{M}.
\end{split}
\end{equation}
Now by letting, $M = \frac{\kappa \sigma^2}{\epsilon^2}$, and $D_0 = \|\vy_0 - \vy^*_0\|^2$, we have: 
\begin{equation}
   \frac{1}{T}\sum_{t=0}^{T-1} \E[\|\nabla \Phi (\vx_t)\|^2]  \le O(\frac{ \kappa^2 \ell \Delta_{\Phi}+ \kappa \ell^2 D_0}{T}) + O(\epsilon^2)
\end{equation}

\end{proof}

\subsection{Tightness Analysis} \label{app:NCSC_Tightness}

In this section we provide the complete proofs for Theorem~\ref{thm:NCSC_Tightness_GDA} (Subsection~\ref{app:NCSC_Tightness_GDA}), and Theorem~\ref{thm:NCSC_Tightness_EG/OGDA} (Subsection~\ref{app:NCSC_Tightness_EG/OGDA}), showing the tightness of the obtained upper bounds given our choice of learning rates.

\subsubsection{GDA}\label{app:NCSC_Tightness_GDA}
\begin{proof}[Proof of Theorem~\ref{thm:NCSC_Tightness_GDA}]

Recall that we consider the following quadratic NC-SC function $f:\mathbb{R}\times\mathbb{R}\to \mathbb{R}$
\begin{align*}
	f(x,y):=-\tfrac{1}{4}\ell x^2+bxy-\tfrac{1}{2}\mu y^2.
	\end{align*}
We know $f$ is nonconvex in $x$ (it is actually concave in $x$) and $\mu$ strongly concave in $y$. Assume $\kappa:=\ell/\mu\ge 4$ and choose 	$b = \sqrt{\mu (\ell+2\mu_x)/2}
$ for some $0<\mu_x\le \ell/2$ to be chosen later.
Then we know $b\le \ell/2$ and it is easy to verify $f$ is $\ell$ smooth. Note that the primal function
\begin{align*}
	\Phi(x)=\max_y f(x,y)=\tfrac{1}{2}\mu_x x^2
\end{align*}
is actually strongly convex. This also justifies the symbol for $\mu_x$. We use GDA to find the solution for $\min_x\max_y f(x,y)$. Actually for this problem the optimal solution is achieved at the origin. The stepsizes are chosen as
$	\eta_x = \frac{c_1}{\kappa^2\ell}$ and $ \eta_y = \frac{c_2}{\ell}$ for some small enough numerical constants $c_1$ and $c_2$ such that $c=c_2/c_1\ge 1$. Also denote $r=\eta_y/\eta_x=c\kappa^2$ as the stepsize ratio. Then the GDA update rule can be written as
\begin{align}
\begin{pmatrix}
x_{k+1}\\ y_{k+1} 
\end{pmatrix}= (I+\eta_x \mathbf{M})\cdot \begin{pmatrix}
x_{k}\\ y_{k} 
\end{pmatrix},
\label{eq:gda_update}
\end{align}
where 
\begin{align*}
\mathbf{M}:=\begin{pmatrix}
\ell/2 & -b\\ r b & -\mu r 
\end{pmatrix}.
\end{align*}
We note that the above update is a linear time invariant system. We need to analyze its eigenvalues. Let $\lambda_1$ and $\lambda_2$ be the two eigenvalues of $\mathbf{M}$, we have
\begin{align*}
	\lambda_{1,2} = -\frac{1}{2}\left(\mu r-\frac{1}{2}\ell\right) \pm \frac{1}{2}\sqrt{\left(\mu r-\frac{1}{2}\ell\right)^2-4r\mu\mu_x}.
\end{align*}
Note that if we choose $\mu_x< \ell/8$, plugging into $r=c\kappa^2$, we can bound
\begin{align*}
0\ge	\lambda_1 &= -\frac{(2c\kappa-1)\ell}{4}\left(1-\sqrt{1-\frac{4c\kappa\mu_x}{(c\kappa-1/2)^2\ell}}\right)\\
&\ge-\frac{2c\kappa \mu_x}{c\kappa-1/2} \ge -4\mu_x.
\end{align*}

Let $s_1$ be the corresponding eigenvalue of $I+\eta_x M$, for small enough $c_1\le 1$, it satisfies
\begin{align*}
	 0\le 1-\frac{4c_1\mu_x}{\kappa^2}\le s_1=1+\eta_x \lambda_1\le 1.
\end{align*}
We adversarially choose the initial point $(x_0,y_0)$ such that it is parallel to the eigenvector of $I+\eta_x M$ corresponding to $s_1$. We can always choose $x_0\ge 0$ for simplicity. Then we have
\begin{align*}
 \begin{pmatrix}
    x_{k+1}\\ y_{k+1} 
\end{pmatrix}&= (I+\eta_x \mathbf{M})^T \begin{pmatrix}
x_{0}\\ y_{0} 
\end{pmatrix} = s_1^T \begin{pmatrix}
x_{0}\\ y_{0} 
\end{pmatrix},
\end{align*}
so we can compute the magnitude of $x_T$ as	$x_T = s_1^T x_0$.  Also note that $\Delta_\Phi = \Phi(x_0)=\frac{1}{2}\mu_x x_0^2$. Note that if $\Delta_{\Phi}=0$, this lemma is trivially true. Therefore we can assume $\Delta_{\Phi}>0$. Choosing $\mu_x=\epsilon^2/\Delta_\Phi$, we have
\begin{align*}
	\abs{\nabla \Phi(\Bar{x})}=\mu_x \Bar{x}\ge& \mu_x x_T\ge \mu_x x_0\left(1-\frac{4c_1\mu_x}{\kappa^2}\right)^{T}\\
	=&\sqrt{2}\epsilon \left(1-\frac{4c_1 \epsilon^2}{\kappa^2\Delta_\Phi}\right)^{T},
\end{align*}
where $\Bar{x}\ge x_T$ because $x_0\ge x_1\ge\cdots\ge x_T$ and $\Bar{x}$ is sampled from this sequence.
Then we know that to achieve $\abs{\nabla \Phi(\Bar{x})}\le \epsilon$, we must have
$T= \Omega\left(\frac{\kappa^2\Delta_\Phi}{\epsilon^2}\right)$ as stated. 

\end{proof}

\subsubsection{EG/OGDA}\label{app:NCSC_Tightness_EG/OGDA}

\begin{proof}[Proof of Theorem~\ref{thm:NCSC_Tightness_EG/OGDA} for EG]
We consider the same quadratic hard example $f$ and notation used in the proof of Theorem~\ref{thm:NCSC_Tightness_GDA}.
For simplicity, denote $\vw=(x,y)$. Then EG satisfies
\begin{align*}
	\vw_{k+1/2}=&(I+\eta_x \mathbf{M})\vw_k,\\
	\vw_{k+1}=& \vw_k+\eta_x \mathbf{M} \vw_{k+1/2}\\
	=& (I+\eta_x \mathbf{M}+\eta_x^2\mathbf{M}^2) \vw_k.
\end{align*}
Therefore, similar to GDA, EG is also a linear time invariant system. The transition matrix for EG is $(I+\eta_x \mathbf{M}+\eta_x^2\mathbf{M}^2)$.
Its eigenvalues are
\begin{align*}
	s_i = 1+\eta_x\lambda_i+\eta_x^2\lambda_i^2\ge 1+\eta_x\lambda_i,\quad i=1,2.
\end{align*}
The rest of analysis is the same as that of GDA.

\end{proof}

\begin{proof}[Proof of Theorem~\ref{thm:NCSC_Tightness_EG/OGDA} for OGDA]

We consider the same quadratic hard example $f$ and the notation used in the proofs of Theorems~\ref{thm:lower_bound_gda}~and~\ref{thm:lower_bound_eg}.
The dynamics of OGDA is
\begin{align*}
	\vw_{k+1} = \vw_k +2\eta_x\mathbf{M} \vw_k -\eta_x \mathbf{M} \vw_{k-1}.
\end{align*}
If we initialize $\vw_0$ parallel to the eigenvector of $\mathbf{M}$ corresponding to $\lambda_1$ and let $\vw_1=\vw_0$, we know every $\vw_k$ is parallel to it, i.e., $\vw_k = z_k \vw_0$ for some scalar $z_k$ which satisfies
\begin{align*}
		z_{k+1} = z_k +2\eta_x \lambda_1 z_k -\eta_x \lambda_1 z_{k-1}.
\end{align*}
The general solution of the above recurrence relation is
\begin{align*}
	z_k = a \alpha^k + b\beta^k
\end{align*}
for some constant $a,b$ and
\begin{align*}
	\alpha =& \frac{1}{2}\left(1+2\eta_x\lambda_1+\sqrt{1+4\eta_x^2\lambda_1^2}\right),\\
	\beta =& \frac{1}{2}\left(1+2\eta_x\lambda_1-\sqrt{1+4\eta_x^2\lambda_1^2}\right).
\end{align*}
We have
\begin{align*}
	1+\eta_x\lambda_1 \le \alpha\le 1, \quad \eta_x\lambda_1\le  \beta\le 0.
\end{align*}
Using the initial condition $z_{-1}=z_0=1$, we can get the constants
\begin{align*}
	a=&\frac{\alpha(1-\beta)}{\alpha-\beta}=\frac{1}{2}+\frac{1}{2\sqrt{1+4\eta_x^2\lambda_1^2}}\ge 1/2,\\
	b=&-\frac{\beta(1-\alpha)}{\alpha-\beta}=\frac{\sqrt{1+4\eta_x^2\lambda_1^2}-1}{2\sqrt{1+4\eta_x^2\lambda_1^2}}\le \eta_x^2\lambda_1^2.
\end{align*}
We can bound
\begin{align*}
	|z_T|&\ge \frac{1}{2}\left(1+\eta_x\lambda_2 \right)^T - |\eta_x\lambda_1|^{k+2}\\
	&\ge \frac{1}{2}\left(1-\frac{4c_1\mu_x}{\kappa^2}\right)^{T}-\frac{1}{4},
\end{align*}
where we use the fact $|\eta_x\lambda_1|\le 1/2$.
Similar to the analysis for GDA, choosing $\mu_x=50\epsilon^2/\Delta_{\Phi}$, we have
\begin{align*}
	\abs{\nabla \Phi(\Bar{x})}=\mu_x \Bar{x}\ge& \mu_x x_T\ge \mu_x x_0\left[\frac{1}{2}\left(1-\frac{4c_1\mu_x}{\kappa^2}\right)^{T}-\frac{1}{4}\right]\\
	=&10\epsilon \left[\frac{1}{2}\left(1-\frac{4c_1\mu_x}{\kappa^2}\right)^{T}-\frac{1}{4}\right].
\end{align*}
Therefore, if $\abs{\nabla \Phi(\Bar{x})}\le\epsilon$, we must have
\begin{align*}
	T= \Omega\left(\frac{\kappa^2}{\mu_x}\right)=\Omega\left(\frac{\kappa^2\Delta_\Phi}{\epsilon^2}\right).
\end{align*}

\end{proof}


\section{Proof of Convergence in Nonconvex-Concave Setting}\label{app:NCC_Upper}

\subsection{Proof of convergence of OGDA}\label{app:NCC_Upper_OGDA}
In this section, the convergence of OGDA in NC-C setting has been established. Before presenting the complete proofs, here we briefly discuss the proof sketch. 

\paragraph{Proof sketch} We start from the standard descent analysis on Moreau envelope function~\cite{davis2019stochastic}. Let $\delta_t = \Phi(\bm{x}_{t}) - f(\bm{x}_{t},\bm{y}_{t})$, then we can show:
\begin{align*} 
  \frac{1}{T+1}\sum_{t=0}^T \|\nabla \Phi_{1/2\ell}(\bm{x}_{t}) \|^2  &\leq \frac{\Phi_{1/2\ell}({\bm{x}}_{0}) - \Phi_{1/2\ell}(\bm{x}_{T+1})}{T+1}  + O\left(\frac{1}{T+1}\sum_{t=0}^T\ell\delta_t\right)  +O(\ell \eta_x^2 G^2) \\
    &\quad +\frac{1}{T+1}\sum_{t=0}^T O( \| \nabla_x f(\bm{x}_{t},\bm{y}_{t}) - \nabla_x f(\bm{x}_{t-1},\bm{y}_{t-1})\|^2 ) .
\end{align*}
It turns out that the gradient norm depends on two terms,  difference between gradient at time $t$ and $t-1$ and $\delta_t$: primal function gap at iteration $t$. To bound the first term, we can utilize smoothness of $\nabla f$ and reduce the problem to bounding $\|\bm{y}_t - \bm{y}_{t-1}\|^2$:
\begin{align*}
      \sum_{t=0}^{T}\|\bm{y}_t - \bm{y}_{t-1}\|^2   
      & \leq \sum_{t=0}^{T} O \left(\eta_y^2 \ell \sum_{j=0}^{T}(2\eta_y^2 \ell^2)^{j}\right)  \delta_t +\sum_{t=0}^{T}O\left(\eta_x^2 \eta_y^2 \ell^2 G^2\sum_{j=0}^{T}(2\eta_y^2 \ell^2)^{j} \right) .
 \end{align*}
Here we reduce difference between dual iterates to primal function gap $\delta_t$. Now, it remains to bound $\delta_t$. We have the following recursion relation holding for any $t$ and any $s \leq t$:
     \begin{align}
     &\Phi(\bm{x}_t) - f(\bm{x}_t,\bm{y}_t) \leq O(\eta_x (t-s)G^2) +  \frac{1}{2\eta_y} ( \|\bm{y}_{t-1}-y^*(\bm{x}_s) \|^2-\|\bm{y}_t - \bm{y}^*(\bm{x}_s)\|^2 +  \eta_x^2 \eta_y \ell G^2\nonumber\\ 
     & \quad  + \frac{1}{2} \|\bm{y}_{t-1} - \bm{y}_{t-2}\|^2 -   \frac{1}{2} \|\bm{y}_t - \bm{y}_{t-1}\|^2  )  +   \langle \nabla_y f(\bm{x}_{t-1},\bm{y}_{t-1}) - \nabla_y f(\bm{x}_{t-2},\bm{y}_{t-2}), \bm{y}_{t-1} - \bm{y}^*(\bm{x}_s) \rangle\nonumber\\
    &\quad -   \langle \nabla_y f(\bm{x}_t,\bm{y}_t) - \nabla_y f(\bm{x}_{t-1},\bm{y}_{t-1}), \bm{y}_t - \bm{y}^*(\bm{x}_s) \rangle. 
    \label{eq: ogda recursion}
 \end{align}
If we let $s$ stay the same for some iterations, $({1}/{T+1})\sum_{t=0}^T \delta_t$ vanishes in a telescoping fashion.

In the following, we present the key lemmas, and complete convergence proof of OGDA. First let us introduce some useful lemmas for deterministic setting.
\subsubsection{Useful Lemmas}
\begin{lemma}\label{lemma: ogda dual}
For OGDA (Algorithm~\ref{alg:ogda}), under Theorem~\ref{thm:ncc_sogda}'s assumptions, the following statement holds for the generated sequence $\{\bm{y}_t\}$ during algorithm proceeding and any $\bm{y} \in \mathcal{Y}$:
\begin{equation}
\begin{split}
    \|\bm{y}_t - \bm{y}\|^2 &\leq \|\bm{y}_{t-1}-\bm{y} \|^2 - \frac{1}{2} \|\bm{y}_t - \bm{y}_{t-1}\|^2 + \frac{1}{2} \|\bm{y}_{t-1} - \bm{y}_{t-2}\|^2+ 2\eta_y \langle \bm{y}_t - \bm{y}, \nabla_y f(\bm{x}_{t},\bm{y}_{t})  \rangle \\
    &\quad +\eta_y \eta_x^2 \ell G^2  -2\eta_y \langle \nabla_y f(\bm{x}_t,\bm{y}_t) - \nabla_y f(\bm{x}_{t-1},\bm{y}_{t-1}), \bm{y}_t - \bm{y} \rangle \\
    &\quad +   2\eta_y \langle \nabla_y f(\bm{x}_{t-1},\bm{y}_{t-1}) - \nabla_y f(\bm{x}_{t-2},\bm{y}_{t-2}), \bm{y}_{t-1} - \bm{y} \rangle.
\end{split}
\end{equation}

\begin{proof}
According to updating rule of $\bm{y}$:
\begin{align*}
    \bm{y}_t = \mathcal{P}_{\mathcal{Y}}\left(\bm{y}_{t-1}+2\eta_y \nabla_y f(\bm{x}_{t-1},\bm{y}_{t-1})-\eta_y \nabla_y f(\bm{x}_{t-2},\bm{y}_{t-2})\right).
\end{align*}
Following the analysis in \cite{mokhtari2020convergence}, we let $\varepsilon_{t-1} =\eta_y (\nabla_y f(\bm{x}_{t},\bm{y}_{t})-\nabla_y f(\bm{x}_{t-1},\bm{y}_{t-1})) -\eta_y(\nabla_y f(\bm{x}_{t-1},\bm{y}_{t-1})-\nabla_y f(\bm{x}_{t-2},\bm{y}_{t-2}))$ and re-write the updating rule as:
\begin{align*}
    \bm{y}_t = \mathcal{P}_{\mathcal{Y}}\left(\bm{y}_{t-1}+ \eta_y \nabla_y f(\bm{x}_{t},\bm{y}_{t})-\varepsilon_{t-1}\right)
\end{align*}
Then, due to the property of projection onto convex set we have the following inequality that holds  for any $\bm{y}\in\mathcal{Y}$:
\begin{align*}
    (\bm{y}-\bm{y}_t)^\top ( \bm{y}_t - \bm{y}_{t-1} - \eta_y \nabla_y f(\bm{x}_t,\bm{y}_t)+\varepsilon_{t-1}) \geq 0.
\end{align*}

Using the identity that $\langle \bm{a}, \bm{b} \rangle = \frac{1}{2}(\|\bm{a}+\bm{b}\|^2 - \|\bm{a}\|^2 - \|\bm{b}\|^2)$ we have:
\begin{align*}
  0 &\leq  \|\bm{y} - \bm{y}_{t-1} - \eta_y\nabla_y f(\bm{x}_{t},\bm{y}_{t})+\varepsilon_{t-1} \|^2 - \|\bm{y} - \bm{y}_{t}\|^2 - \|\bm{y}_t - \bm{y}_{t-1} - \eta_y\nabla_y f(\bm{x}_{t},\bm{y}_{t})+\varepsilon_{t-1}\|^2 \\
  & \leq  \|\bm{y} - \bm{y}_{t-1}  \|^2 - \|\bm{y} - \bm{y}_{t}\|^2 - \|\bm{y}_t - \bm{y}_{t-1}\|^2 + 2\langle \bm{y}_t - \bm{y},\eta_y \nabla_y f(\bm{x}_t, \bm{y}_t) \rangle - 2 \langle \bm{y}_t - \bm{y},\varepsilon_{t-1} \rangle  
\end{align*}
Now we plug the definition of $\varepsilon_{t-1}$ into above inequality to get:

\begin{equation}
\begin{split}
  \|\bm{y} - \bm{y}_{t}\|^2 
  & \leq  \|\bm{y} - \bm{y}_{t-1}  \|^2  - \|\bm{y}_t - \bm{y}_{t-1}\|^2  + 2\eta_y \langle \bm{y}_t - \bm{y},\nabla_y f(\bm{x}_t, \bm{y}_t) \rangle \\&\quad- 2\eta_y \langle \bm{y}_t - \bm{y}, (\nabla_y f(\bm{x}_{t},\bm{y}_{t})-\nabla_y f(\bm{x}_{t-1},\bm{y}_{t-1}))  \rangle \\
  &\quad+ 2\eta_y \langle \bm{y}_t - \bm{y},  \nabla_y f(\bm{x}_{t-1},\bm{y}_{t-1})-\nabla_y f(\bm{x}_{t-2},\bm{y}_{t-2}) \rangle \\
  & \leq  \|\bm{y} - \bm{y}_{t-1}  \|^2  - \|\bm{y}_t - \bm{y}_{t-1}\|^2 + 2\eta_y \langle \bm{y}_t - \bm{y},\nabla_y f(\bm{x}_t, \bm{y}_t) \rangle \\
  &\quad- 2\eta_y \langle \bm{y}_t - \bm{y},(\nabla_y f(\bm{x}_{t},\bm{y}_{t})-\nabla_y f(\bm{x}_{t-1},\bm{y}_{t-1}))  \rangle \\
  &\quad+  2\eta_y \langle \bm{y}_{t-1} - \bm{y},  \nabla_y f(\bm{x}_{t-1},\bm{y}_{t-1})-\nabla_y f(\bm{x}_{t-2},\bm{y}_{t-2}) \rangle \\
  &\quad+  \eta_y\ell (\| \bm{y}_{t} - \bm{y}_{t-1}\|^2 + \|   \bm{x}_{t-1} - \bm{x}_{t-2}  \|^2 + \| \bm{y}_{t-1} - \bm{y}_{t-2}\|^2) \\
  & \leq  \|\bm{y} - \bm{y}_{t-1}  \|^2  - \frac{1}{2}\|\bm{y}_t - \bm{y}_{t-1}\|^2+ \frac{1}{2}\| \bm{y}_{t-1} - \bm{y}_{t-2}\|^2 + 2\eta_y \langle \bm{y}_t - \bm{y},\nabla_y f(\bm{x}_t, \bm{y}_t) \rangle  \\
  & - 2\eta_y \langle \bm{y}_t - \bm{y},(\nabla_y f(\bm{x}_{t},\bm{y}_{t})-\nabla_y f(\bm{x}_{t-1},\bm{y}_{t-1}))  \rangle \\
  &\quad+  2\eta_y \langle \bm{y}_{t-1} - \bm{y},  \nabla_y f(\bm{x}_{t-1},\bm{y}_{t-1})-\nabla_y f(\bm{x}_{t-2},\bm{y}_{t-2}) \rangle  +     \eta_y \eta_x^2 \ell G^2,   
\end{split}
\end{equation}
which concludes the proof.

\end{proof}
\end{lemma}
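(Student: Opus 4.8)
The plan is to view the dual update of OGDA as a single \emph{projected optimistic (past-extragradient)} step and then exploit the variational characterization of the Euclidean projection onto $\mathcal{Y}$. Concretely, I would start from the update $\bm{y}_t=\mathcal{P}_{\mathcal{Y}}\big(\bm{y}_{t-1}+2\eta_y\nabla_y f(\bm{x}_{t-1},\bm{y}_{t-1})-\eta_y\nabla_y f(\bm{x}_{t-2},\bm{y}_{t-2})\big)$ and rewrite it as $\bm{y}_t=\mathcal{P}_{\mathcal{Y}}\big(\bm{y}_{t-1}+\eta_y\nabla_y f(\bm{x}_{t},\bm{y}_{t})-\varepsilon_{t-1}\big)$, where the ``optimism error'' $\varepsilon_{t-1}:=\eta_y\big(\nabla_y f(\bm{x}_{t},\bm{y}_{t})-\nabla_y f(\bm{x}_{t-1},\bm{y}_{t-1})\big)-\eta_y\big(\nabla_y f(\bm{x}_{t-1},\bm{y}_{t-1})-\nabla_y f(\bm{x}_{t-2},\bm{y}_{t-2})\big)$ is small because $\nabla f$ is $\ell$-smooth. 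The first-order optimality of the projection then gives, for every $\bm{y}\in\mathcal{Y}$, $\langle \bm{y}-\bm{y}_t,\, \bm{y}_t-\bm{y}_{t-1}-\eta_y\nabla_y f(\bm{x}_{t},\bm{y}_{t})+\varepsilon_{t-1}\rangle\ge 0$.

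Next I would apply the polarization identity $\langle\bm{a},\bm{b}\rangle=\tfrac12\big(\norm{\bm{a}+\bm{b}}^2-\norm{\bm{a}}^2-\norm{\bm{b}}^2\big)$ with the split $\bm{a}+\bm{b}=\bm{y}-\bm{y}_t$, which turns the projection inequality into $\norm{\bm{y}-\bm{y}_t}^2\le\norm{\bm{y}-\bm{y}_{t-1}-\eta_y\nabla_y f(\bm{x}_t,\bm{y}_t)+\varepsilon_{t-1}}^2-\norm{\bm{y}_t-\bm{y}_{t-1}-\eta_y\nabla_y f(\bm{x}_t,\bm{y}_t)+\varepsilon_{t-1}}^2$. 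Expanding the two squared norms, the $\norm{\eta_y\nabla_y f(\bm{x}_t,\bm{y}_t)-\varepsilon_{t-1}}^2$ contributions cancel and the cross terms collapse to $\norm{\bm{y}-\bm{y}_t}^2\le\norm{\bm{y}-\bm{y}_{t-1}}^2-\norm{\bm{y}_t-\bm{y}_{t-1}}^2+2\eta_y\langle\bm{y}_t-\bm{y},\nabla_y f(\bm{x}_t,\bm{y}_t)\rangle-2\langle\bm{y}_t-\bm{y},\varepsilon_{t-1}\rangle$. Substituting the definition of $\varepsilon_{t-1}$ splits the last term into the current-step correction $-2\eta_y\langle\nabla_y f(\bm{x}_t,\bm{y}_t)-\nabla_y f(\bm{x}_{t-1},\bm{y}_{t-1}),\bm{y}_t-\bm{y}\rangle$ (kept verbatim) and $2\eta_y\langle\nabla_y f(\bm{x}_{t-1},\bm{y}_{t-1})-\nabla_y f(\bm{x}_{t-2},\bm{y}_{t-2}),\bm{y}_t-\bm{y}\rangle$.

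The crux of the argument is the treatment of this last inner product, which I want to make ``telescopable'' against its analogue at iteration $t-1$: I would write $\bm{y}_t-\bm{y}=(\bm{y}_{t-1}-\bm{y})+(\bm{y}_t-\bm{y}_{t-1})$; the first piece produces exactly the term $2\eta_y\langle\nabla_y f(\bm{x}_{t-1},\bm{y}_{t-1})-\nabla_y f(\bm{x}_{t-2},\bm{y}_{t-2}),\bm{y}_{t-1}-\bm{y}\rangle$ appearing in the claim, and the residual $2\eta_y\langle\nabla_y f(\bm{x}_{t-1},\bm{y}_{t-1})-\nabla_y f(\bm{x}_{t-2},\bm{y}_{t-2}),\bm{y}_t-\bm{y}_{t-1}\rangle$ I bound by Young's inequality followed by $\ell$-smoothness of $f$ as $\eta_y\ell\big(\norm{\bm{y}_t-\bm{y}_{t-1}}^2+\norm{\bm{x}_{t-1}-\bm{x}_{t-2}}^2+\norm{\bm{y}_{t-1}-\bm{y}_{t-2}}^2\big)$. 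Using the OGDA primal update together with the $G$-Lipschitz assumption to get $\norm{\bm{x}_{t-1}-\bm{x}_{t-2}}^2=O(\eta_x^2G^2)$ yields the error term $O(\eta_y\eta_x^2\ell G^2)$, and since $\eta_y\ell\le\tfrac12$ under the step-size choice of Theorem~\ref{thm:ncc_sogda}, I absorb $\eta_y\ell\norm{\bm{y}_t-\bm{y}_{t-1}}^2$ into the $-\norm{\bm{y}_t-\bm{y}_{t-1}}^2$ term (leaving $-\tfrac12\norm{\bm{y}_t-\bm{y}_{t-1}}^2$) and bound $\eta_y\ell\norm{\bm{y}_{t-1}-\bm{y}_{t-2}}^2\le\tfrac12\norm{\bm{y}_{t-1}-\bm{y}_{t-2}}^2$; collecting everything yields the stated inequality (the conventions $\bm{x}_{-1/2}=\bm{x}_0$, $\bm{y}_{-1/2}=\bm{y}_0$ covering the first couple of iterates). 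The main obstacle is exactly this bookkeeping around the optimism error: choosing the split of $\bm{y}_t-\bm{y}$ so that one piece telescopes while the residual cross term, bounded through smoothness and the primal Lipschitz bound, stays small enough to be swallowed by the negative $-\norm{\bm{y}_t-\bm{y}_{t-1}}^2$ — which is what forces the constraint $\eta_y\ell\le\tfrac12$ and must be tracked with explicit constants.
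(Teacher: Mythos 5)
Your proposal follows the paper's proof essentially step by step: the same reformulation of the dual update as a projected optimistic step with the error term $\varepsilon_{t-1}$, the same use of the projection variational inequality and the polarization identity, the same split $\bm{y}_t-\bm{y}=(\bm{y}_{t-1}-\bm{y})+(\bm{y}_t-\bm{y}_{t-1})$ to extract the telescoping inner product, and the same Young's-plus-smoothness bound on the residual followed by absorption of $\eta_y\ell\|\bm{y}_t-\bm{y}_{t-1}\|^2$ into the negative quadratic using $\eta_y\ell\le\tfrac12$. The argument is correct and coincides with the paper's route.
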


\begin{lemma}
\label{lemma:descent}
For OGDA (Algorithm~\ref{alg:ogda}), under the same assumptions made as in Theorem~\ref{thm:ncc_ogda}, the following statement holds for the generated sequence $\{\bm{x}_t\}, \{\bm{y}_t\}$ during algorithm proceeding:
\begin{align*} 
    \Phi_{1/2\ell}(\bm{x}_t)  
     & \leq \Phi_{1/2\ell}({\bm{x}}_{t-1}) + 2\eta_x \ell \left(\Phi(\bm{x}_{t-1}) - f(\bm{x}_{t-1},\bm{y}_{t-1})\right) - \frac{\eta_x}{8}\|\nabla \Phi_{1/2\ell}(\bm{x}_{t-1})\|^2+3 \ell \eta_x^2 G^2 \\
    & + \frac{\eta_x}{2} \| \nabla_x f(\bm{x}_{t-1},\bm{y}_{t-1}) - \nabla_x f(\bm{x}_{t-2},\bm{y}_{t-2})\|^2.
\end{align*}
 \begin{proof}
 Let $\hat{\bm{x}}_{t-1} = \arg\min_{\bm{x}\in \mathbb{R}^d} \Phi(\bm{x})+\ell \|\bm{x}-\bm{x}_{t-1}\|^2$. Notice that:
\begin{equation*}
\begin{split}
    \Phi_{1/2\ell}(\bm{x}_t) &\le \Phi_{1/2\ell}(\hat{\bm{x}}_{t-1})+\ell\|\hat{\bm{x}}_{t-1}-\bm{x}_{t}\|^2\\
    &\leq \Phi_{1/2\ell}(\hat{\bm{x}}_{t-1})+\ell (\|\hat{\bm{x}}_{t-1}-\bm{x}_{t-1} \|^2\\
    &\quad+  2\eta_x\langle 2\nabla_x f(\bm{x}_{t-1},\bm{y}_{t-1}) - \nabla_x f(\bm{x}_{t-2},\bm{y}_{t-2}), \hat{\bm{x}}_{t-1}-\bm{x}_{t-1} \rangle   +3 \eta_x^2 G^2 )
\end{split}
\end{equation*}
According to smoothness of $f(\cdot,\bm{y})$, we have:
\begin{align*}
    \langle \hat{\bm{x}}_{t-1}-\bm{x}_{t-1}, \nabla_x f(\bm{x}_{t-1},\bm{y}_{t-1})  \rangle &\leq f(\hat{\bm{x}}_{t-1},\bm{y}_{t-1}) - f(\bm{x}_{t-1},\bm{y}_{t-1}) + \frac{\ell}{2}\|\hat{\bm{x}}_{t-1}-\bm{x}_{t-1}\|^2 \\
    & \leq \Phi(\bm{x}_{t-1}) - f(\bm{x}_{t-1},\bm{y}_{t-1}) - \frac{\ell}{2}\|\hat{\bm{x}}_{t-1}-\bm{x}_{t-1}\|^2.
\end{align*}
So we have
\begin{align*}
    \Phi_{1/2\ell}(\bm{x}_t) &\le \Phi_{1/2\ell}(\hat{\bm{x}}_{t-1})+\ell\|\hat{\bm{x}}_{t-1}-\bm{x}_{t}\|^2\\
    &\leq \Phi_{1/2\ell}(\hat{\bm{x}}_{t-1})+\ell \|\bm{x}_{t-1}-\hat{\bm{x}}_{t-1} \|^2 \\
    &\quad+ 2\eta_x \ell \left(\Phi(\bm{x}_{t-1}) - f(\bm{x}_{t-1},\bm{y}_{t-1}) - \frac{\ell}{2}\|\hat{\bm{x}}_{t-1}-\bm{x}_{t-1}\|^2\right)+3 \ell \eta_x^2 G^2 \\
    & \quad+ \eta_x \ell \left(\frac{1}{2\ell} \| \nabla_x f(\bm{x}_{t-1},\bm{y}_{t-1}) - \nabla_x f(\bm{x}_{t-2},\bm{y}_{t-2})\|^2 +\frac{\ell}{2}\| \bm{x}_{t-1}-\hat{\bm{x}}_{t-1} \|^2   \right )\\
    & \leq \Phi_{1/2\ell}({\bm{x}}_{t-1}) + 2\eta_x \ell \left(\Phi(\bm{x}_{t-1}) - f(\bm{x}_{t-1},\bm{y}_{t-1})\right) - \frac{\eta_x\ell^2}{2}\|\hat{\bm{x}}_{t-1}-\bm{x}_{t-1}\|^2+3 \ell \eta_x^2 G^2 \\
    &\quad + \frac{\eta_x }{2} \| \nabla_x f(\bm{x}_{t-1},\bm{y}_{t-1}) - \nabla_x f(\bm{x}_{t-2},\bm{y}_{t-2})\|^2.  \\
\end{align*}
Using the fact that $\|\hat{\bm{x}}_{t-1}-\bm{x}_{t-1}\| = \frac{1}{2\ell}\|\nabla \Phi_{1/2\ell}(\bm{x}_{t-1})\|$ will conclude the proof.

\end{proof}

\end{lemma}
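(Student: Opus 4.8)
The plan is to follow the Davis--Drusvyatskiy / \citet{lin2020gradient} template for one-step descent of the Moreau envelope, taking care to retain the extra OGDA correction term as a reusable residual rather than discarding it. I would write the OGDA primal step in the combined form $\bm{x}_t = \bm{x}_{t-1} - \eta_x\bm{v}_{t-1}$ with $\bm{v}_{t-1} := 2\nabla_x f(\bm{x}_{t-1},\bm{y}_{t-1}) - \nabla_x f(\bm{x}_{t-2},\bm{y}_{t-2})$, and introduce the proximal point $\hat{\bm{x}}_{t-1} := \arg\min_{\bm{x}}\{\Phi(\bm{x}) + \ell\|\bm{x}-\bm{x}_{t-1}\|^2\}$, which is well defined since $\Phi$ is $\ell$-weakly convex. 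By definition $\Phi_{1/2\ell}(\bm{x}_{t-1}) = \Phi(\hat{\bm{x}}_{t-1}) + \ell\|\hat{\bm{x}}_{t-1}-\bm{x}_{t-1}\|^2$, and by optimality of $\hat{\bm{x}}_{t-1}$ one has $\Phi(\hat{\bm{x}}_{t-1}) \le \Phi(\bm{x}_{t-1}) - \ell\|\hat{\bm{x}}_{t-1}-\bm{x}_{t-1}\|^2$. Since $\hat{\bm{x}}_{t-1}$ is a feasible (suboptimal) point in the minimization defining $\Phi_{1/2\ell}(\bm{x}_t)$, the first step is the comparison $\Phi_{1/2\ell}(\bm{x}_t) \le \Phi(\hat{\bm{x}}_{t-1}) + \ell\|\hat{\bm{x}}_{t-1}-\bm{x}_t\|^2$.

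The second step is to expand $\hat{\bm{x}}_{t-1}-\bm{x}_t = (\hat{\bm{x}}_{t-1}-\bm{x}_{t-1}) + \eta_x\bm{v}_{t-1}$ and square, producing $\ell\|\hat{\bm{x}}_{t-1}-\bm{x}_{t-1}\|^2$ (which recombines with $\Phi(\hat{\bm{x}}_{t-1})$ into $\Phi_{1/2\ell}(\bm{x}_{t-1})$), a cross term $2\eta_x\ell\langle\bm{v}_{t-1},\hat{\bm{x}}_{t-1}-\bm{x}_{t-1}\rangle$, and a quadratic term $\eta_x^2\ell\|\bm{v}_{t-1}\|^2$, which I would bound by $O(\eta_x^2\ell G^2)$ using the $G$-Lipschitz assumption on $\nabla_x f$. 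For the cross term I would split $\bm{v}_{t-1} = \nabla_x f(\bm{x}_{t-1},\bm{y}_{t-1}) + \big(\nabla_x f(\bm{x}_{t-1},\bm{y}_{t-1})-\nabla_x f(\bm{x}_{t-2},\bm{y}_{t-2})\big)$. On the first half, $\ell$-smoothness of $f(\cdot,\bm{y}_{t-1})$ gives $\langle\nabla_x f(\bm{x}_{t-1},\bm{y}_{t-1}),\hat{\bm{x}}_{t-1}-\bm{x}_{t-1}\rangle \le f(\hat{\bm{x}}_{t-1},\bm{y}_{t-1}) - f(\bm{x}_{t-1},\bm{y}_{t-1}) + \tfrac{\ell}{2}\|\hat{\bm{x}}_{t-1}-\bm{x}_{t-1}\|^2$, and then using $f(\hat{\bm{x}}_{t-1},\bm{y}_{t-1}) \le \Phi(\hat{\bm{x}}_{t-1})$ together with the proximal optimality bound turns this into $\Phi(\bm{x}_{t-1}) - f(\bm{x}_{t-1},\bm{y}_{t-1}) - \tfrac{\ell}{2}\|\hat{\bm{x}}_{t-1}-\bm{x}_{t-1}\|^2$; this is exactly how the primal gap $\Phi(\bm{x}_{t-1})-f(\bm{x}_{t-1},\bm{y}_{t-1})$ enters with coefficient $2\eta_x\ell$. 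On the second half, Young's inequality gives $\langle\nabla_x f(\bm{x}_{t-1},\bm{y}_{t-1})-\nabla_x f(\bm{x}_{t-2},\bm{y}_{t-2}),\hat{\bm{x}}_{t-1}-\bm{x}_{t-1}\rangle \le \tfrac{1}{2\ell}\|\nabla_x f(\bm{x}_{t-1},\bm{y}_{t-1})-\nabla_x f(\bm{x}_{t-2},\bm{y}_{t-2})\|^2 + \tfrac{\ell}{2}\|\hat{\bm{x}}_{t-1}-\bm{x}_{t-1}\|^2$, keeping the gradient-difference term as the residual that is absorbed later (by smoothness of $\nabla f$, reducing it to $\|\bm{y}_{t-1}-\bm{y}_{t-2}\|^2$ and $\|\bm{x}_{t-1}-\bm{x}_{t-2}\|^2\lesssim\eta_x^2G^2$ elsewhere in the argument).

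The third step is bookkeeping: collecting the $\|\hat{\bm{x}}_{t-1}-\bm{x}_{t-1}\|^2$ contributions, the $-\ell$ gain from the smoothness bound dominates the $+\tfrac{\ell}{2}$ term from the Young split, leaving net coefficient $-\tfrac{\eta_x\ell^2}{2}$; substituting the Moreau-envelope gradient identity $\nabla\Phi_{1/2\ell}(\bm{x}_{t-1}) = 2\ell(\bm{x}_{t-1}-\hat{\bm{x}}_{t-1})$, equivalently $\|\hat{\bm{x}}_{t-1}-\bm{x}_{t-1}\| = \tfrac{1}{2\ell}\|\nabla\Phi_{1/2\ell}(\bm{x}_{t-1})\|$ (cf. \citet{davis2019stochastic} and Lemma~\ref{lemma: moreau}), converts it into $-\tfrac{\eta_x}{8}\|\nabla\Phi_{1/2\ell}(\bm{x}_{t-1})\|^2$. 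The $O(\eta_x^2\ell G^2)$ pieces consolidate into $3\ell\eta_x^2G^2$ and the gradient-difference residual into $\tfrac{\eta_x}{2}\|\nabla_x f(\bm{x}_{t-1},\bm{y}_{t-1})-\nabla_x f(\bm{x}_{t-2},\bm{y}_{t-2})\|^2$, giving the claimed inequality. The main obstacle in this otherwise routine computation is a matter of which terms to keep: one must resist bounding the correction term $\eta_x(\nabla_x f(\bm{x}_{t-1},\bm{y}_{t-1})-\nabla_x f(\bm{x}_{t-2},\bm{y}_{t-2}))$ crudely by $G$, since keeping the explicit $\|\nabla_x f(\bm{x}_{t-1},\bm{y}_{t-1})-\nabla_x f(\bm{x}_{t-2},\bm{y}_{t-2})\|^2$ quantity is precisely what couples to the dual recursion of Lemma~\ref{lemma: ogda dual} and ultimately to $\|\bm{y}_t-\bm{y}_{t-1}\|^2$ so the overall chain telescopes; and one must make sure every $\tfrac{\ell}{2}\|\hat{\bm{x}}_{t-1}-\bm{x}_{t-1}\|^2$ generated along the way is dominated by the single $-\ell\|\hat{\bm{x}}_{t-1}-\bm{x}_{t-1}\|^2$ gain, so that the negative $\|\nabla\Phi_{1/2\ell}\|^2$ term indeed survives.
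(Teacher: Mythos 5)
Your plan is the right one and mirrors the paper's proof line by line: compare $\Phi_{1/2\ell}(\bm{x}_t)$ against the suboptimal feasible point $\hat{\bm{x}}_{t-1}$, expand the squared distance to get a cross term $2\eta_x\ell\langle\bm{v}_{t-1},\hat{\bm{x}}_{t-1}-\bm{x}_{t-1}\rangle$, split $\bm{v}_{t-1}$ into the gradient part and the correction part, bound the first via smoothness plus proximal optimality to surface the primal gap, bound the second via Young's inequality to retain $\|\nabla_x f(\bm{x}_{t-1},\bm{y}_{t-1})-\nabla_x f(\bm{x}_{t-2},\bm{y}_{t-2})\|^2$ as a residual, then convert the leftover $\|\hat{\bm{x}}_{t-1}-\bm{x}_{t-1}\|^2$ via $\|\nabla\Phi_{1/2\ell}(\bm{x}_{t-1})\| = 2\ell\|\hat{\bm{x}}_{t-1}-\bm{x}_{t-1}\|$.

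The issue is the step-three bookkeeping, which as written kills the very term the lemma is designed to produce. Your smoothness step correctly yields $-\tfrac{\ell}{2}\|\hat{\bm{x}}_{t-1}-\bm{x}_{t-1}\|^2$ (proximal optimality gives $-\ell$, the $\ell$-smoothness upper bound gives back $+\tfrac{\ell}{2}$), not a $-\ell$ gain as you later claim. Your Young split $\langle a,b\rangle \le \tfrac{1}{2\ell}\|a\|^2 + \tfrac{\ell}{2}\|b\|^2$ then contributes $+\tfrac{\ell}{2}\|\hat{\bm{x}}_{t-1}-\bm{x}_{t-1}\|^2$. Both pieces are multiplied by the same cross-term coefficient $2\eta_x\ell$, so the net coefficient on $\|\hat{\bm{x}}_{t-1}-\bm{x}_{t-1}\|^2$ is $-\eta_x\ell^2 + \eta_x\ell^2 = 0$, and the claimed $-\tfrac{\eta_x}{8}\|\nabla\Phi_{1/2\ell}(\bm{x}_{t-1})\|^2$ does not survive. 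The repair is to unbalance the Young inequality toward the residual, for instance $\langle a,b\rangle \le \tfrac{1}{\ell}\|a\|^2 + \tfrac{\ell}{4}\|b\|^2$: after multiplying by $2\eta_x\ell$, the $\|\hat{\bm{x}}_{t-1}-\bm{x}_{t-1}\|^2$ term carries coefficient $\tfrac{\eta_x\ell^2}{2}$, net $-\tfrac{\eta_x\ell^2}{2} = -\tfrac{\eta_x}{8}\|\nabla\Phi_{1/2\ell}(\bm{x}_{t-1})\|^2$, at the price of a residual coefficient of $2\eta_x$ rather than $\tfrac{\eta_x}{2}$. You should also be aware that the paper's own chain has the same tension: passing to its penultimate display, it applies the Young bound with an effective coefficient $\eta_x\ell$ rather than $2\eta_x\ell$, so its stated $\tfrac{\eta_x}{2}$ in front of the gradient-difference residual should really be larger. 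None of this affects the asymptotic rate, since the residual is ultimately absorbed by $\eta_x\ll 1/\ell$, but the bookkeeping needs to close cleanly for the lemma to be proven as stated.
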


\begin{lemma}[Iterates gap]\label{lemma: ogda y gap}
For OGDA (Algorithm~\ref{alg:ogda}), under Theorem~\ref{thm:ncc_ogda}'s assumptions, the following statement holds for the generated sequence $\{\bm{y}_t\}$ during algorithm proceeding:
\begin{align*}
      \sum_{t=0}^{T}\|\bm{y}_t - \bm{y}_{t-1}\|^2   
      & \leq \sum_{t=0}^{T}  \left(\sum_{j=0}^{T}(2\eta_y^2 \ell^2)^{j} \right) 4\eta_y^2 \ell \left( \Phi(\bm{x}_{t})-f(\bm{x}_{t},\bm{y}_{t}) \right) \\
       & \quad +\sum_{t=0}^{T}\left(\sum_{j=0}^{T}(2\eta_y^2 \ell^2)^{j} \right)  2\eta_x^2 \eta_y^2 \ell^2 G^2.
 \end{align*}
 \begin{proof}
 Observe that 
 \begin{align*}
      \|\bm{y}_t - \bm{y}_{t-1}\|^2 &= \eta_y^2 \left \|  2  \nabla_y f(\bm{x}_{t-1},\bm{y}_{t-1})-  \nabla_y f(\bm{x}_{t-2},\bm{y}_{t-2})\right\|^2 \\
      & \leq 2\eta_y^2 \left \|    \nabla_y f(\bm{x}_{t-1},\bm{y}_{t-1}) \right\|^2 + 2\eta_y^2 \left \|    \nabla_y f(\bm{x}_{t-1},\bm{y}_{t-1})-  \nabla_y f(\bm{x}_{t-2},\bm{y}_{t-2})\right\|^2 \\
      & \leq 4\eta_y^2 \ell\left( \Phi(\bm{x}_{t-1})-f(\bm{x}_{t-1},\bm{y}_{t-1}) \right) + 2\eta_y^2 \ell^2 \left( \left \|  \bm{x}_{t-1} -   \bm{x}_{t-2} \right\|^2 +   \left \|  \bm{y}_{t-1} -   \bm{y}_{t-2} \right\|^2\right)\\
      & \leq 2\eta_y^2 \ell^2\left \|  \bm{y}_{t-1} -   \bm{y}_{t-2} \right\|^2+ 4\eta_y^2 \ell\left( \Phi(\bm{x}_{t-1})-f(\bm{x}_{t-1},\bm{y}_{t-1}) \right) + 2\eta_x^2 \eta_y^2 \ell^2 G^2.
 \end{align*}
 Unrolling the recursion yields:
 \begin{align*}
      \|\bm{y}_t - \bm{y}_{t-1}\|^2   
      & \leq (2\eta_y^2 \ell^2)^{t-1}\left \|  \bm{y}_{0} -   \bm{y}_{-1} \right\|^2+ \sum_{j=1}^{t}(2\eta_y^2 \ell^2)^{t-j} 4\eta_y^2 \ell \left( \Phi(\bm{x}_{j-1})-f(\bm{x}_{j-1},\bm{y}_{j-1}) \right) \\
      &\quad +\sum_{j=1}^{t}(2\eta_y^2 \ell^2)^{t-j} 2\eta_x^2 \eta_y^2 \ell^2 G^2.
 \end{align*}
 Since $\bm{y}_{0} =\bm{y}_{-1}$, we have:
  \begin{align*}
      \|\bm{y}_t - \bm{y}_{t-1}\|^2   
      & \leq   \sum_{j=1}^{t}(2\eta_y^2 \ell^2)^{t-j} 4\eta_y^2 \ell\left( \Phi(\bm{x}_{j-1})-f(\bm{x}_{j-1},\bm{y}_{j-1}) \right) +\sum_{j=1}^{t}(2\eta_y^2 \ell^2)^{t-j} 2\eta_x^2 \eta_y^2 \ell^2 G^2.
 \end{align*}
 Finally, summing the above inequality over $t = 0$ to $T$ yields:
\begin{align*}
      \sum_{t=0}^{T}\|\bm{y}_t - \bm{y}_{t-1}\|^2   
      & \leq \sum_{t=0}^{T}  \left(\sum_{j=0}^{T}(2\eta_y^2 \ell^2)^{j} \right) 4\eta_y^2 \ell\left( \Phi(\bm{x}_{t})-f(\bm{x}_{t},\bm{y}_{t}) \right) \\
      &\quad +\sum_{t=0}^{T}\left(\sum_{j=0}^{T}(2\eta_y^2 \ell^2)^{j} \right)  2\eta_x^2 \eta_y^2 \ell^2 G^2.
 \end{align*}
 \end{proof}

\end{lemma}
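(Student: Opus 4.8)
The plan is to derive a one-step recursion showing that $\|\bm{y}_t-\bm{y}_{t-1}\|^2$ is controlled by a contracted copy of $\|\bm{y}_{t-1}-\bm{y}_{t-2}\|^2$, plus a term proportional to the primal-function gap $\Phi(\bm{x}_{t-1})-f(\bm{x}_{t-1},\bm{y}_{t-1})$, plus a lower-order error of order $\eta_x^2\eta_y^2\ell^2 G^2$, and then to unroll this recursion and sum over $t$.

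First I would expand the OGDA dual update. Since $\bm{y}_{t-1}\in\mathcal{Y}$ is itself a projected point and the Euclidean projection onto $\mathcal{Y}$ is non-expansive, $\|\bm{y}_t-\bm{y}_{t-1}\|^2 \le \eta_y^2\bigl\|2\nabla_y f(\bm{x}_{t-1},\bm{y}_{t-1})-\nabla_y f(\bm{x}_{t-2},\bm{y}_{t-2})\bigr\|^2$. Writing the bracket as $\nabla_y f(\bm{x}_{t-1},\bm{y}_{t-1}) + \bigl(\nabla_y f(\bm{x}_{t-1},\bm{y}_{t-1})-\nabla_y f(\bm{x}_{t-2},\bm{y}_{t-2})\bigr)$ and applying $\|a+b\|^2\le 2\|a\|^2+2\|b\|^2$ separates a ``current gradient'' piece from a ``gradient drift'' piece. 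For the first I would use the gradient-domination bound $\|\nabla_y f(\bm{x}_{t-1},\bm{y}_{t-1})\|^2 \le 2\ell\bigl(\Phi(\bm{x}_{t-1})-f(\bm{x}_{t-1},\bm{y}_{t-1})\bigr)$, obtained by maximizing the descent-lemma lower bound $f(\bm{x}_{t-1},\bm{y}')\ge f(\bm{x}_{t-1},\bm{y}_{t-1})+\langle\nabla_y f(\bm{x}_{t-1},\bm{y}_{t-1}),\bm{y}'-\bm{y}_{t-1}\rangle-\tfrac{\ell}{2}\|\bm{y}'-\bm{y}_{t-1}\|^2$ over $\bm{y}'$ and bounding the optimal value by $\Phi(\bm{x}_{t-1})$. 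For the drift piece I would use $\ell$-smoothness of $\nabla f$ to get $\|\nabla_y f(\bm{x}_{t-1},\bm{y}_{t-1})-\nabla_y f(\bm{x}_{t-2},\bm{y}_{t-2})\|^2\le\ell^2\|\bm{x}_{t-1}-\bm{x}_{t-2}\|^2+\ell^2\|\bm{y}_{t-1}-\bm{y}_{t-2}\|^2$, and then bound $\|\bm{x}_{t-1}-\bm{x}_{t-2}\|^2=O(\eta_x^2 G^2)$ via the $G$-Lipschitz-in-$\bm{x}$ assumption and the primal update rule. Collecting these yields $\|\bm{y}_t-\bm{y}_{t-1}\|^2\le 2\eta_y^2\ell^2\|\bm{y}_{t-1}-\bm{y}_{t-2}\|^2+4\eta_y^2\ell\bigl(\Phi(\bm{x}_{t-1})-f(\bm{x}_{t-1},\bm{y}_{t-1})\bigr)+O(\eta_x^2\eta_y^2\ell^2 G^2)$.

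Next I would unroll this linear recursion down to index $0$. Using the convention $\bm{y}_0=\bm{y}_{-1}$, the homogeneous term $(2\eta_y^2\ell^2)^{t-1}\|\bm{y}_0-\bm{y}_{-1}\|^2$ vanishes, leaving $\|\bm{y}_t-\bm{y}_{t-1}\|^2\le\sum_{j=1}^{t}(2\eta_y^2\ell^2)^{t-j}\bigl[4\eta_y^2\ell\bigl(\Phi(\bm{x}_{j-1})-f(\bm{x}_{j-1},\bm{y}_{j-1})\bigr)+O(\eta_x^2\eta_y^2\ell^2 G^2)\bigr]$. Summing over $t=0,\dots,T$ and exchanging the order of summation, the inner sum over the remaining time indices is dominated by $\sum_{j=0}^{T}(2\eta_y^2\ell^2)^j$ (which is $O(1)$ since $2\eta_y^2\ell^2\le\tfrac12$ for the chosen $\eta_y\le\tfrac{1}{2\ell}$), and since $\Phi(\bm{x}_t)-f(\bm{x}_t,\bm{y}_t)\ge 0$ the shifted summation over $j$ can be relaxed to one over $t=0,\dots,T$. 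This produces exactly the stated bound.

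The bulk of the argument is a routine chain of $\ell$-smoothness, Young's inequality, and the gradient-domination estimate; I expect the only mildly delicate point to be the final double-summation manipulation — verifying that the accumulated geometric factor is precisely $\sum_{j=0}^{T}(2\eta_y^2\ell^2)^j$ and that non-negativity of the primal gaps legitimately lets one enlarge the index set from $\{0,\dots,T-1\}$ to $\{0,\dots,T\}$.
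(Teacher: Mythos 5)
Your proposal is correct and follows essentially the same route as the paper: split $2\nabla_y f_{t-1}-\nabla_y f_{t-2}$ into the current gradient plus the drift, apply the gradient-domination bound $\|\nabla_y f\|^2\le 2\ell(\Phi-f)$ for the first piece and $\ell$-smoothness plus the $G$-Lipschitz bound on the $\bm{x}$-increment for the second, then unroll the resulting linear recursion using $\bm{y}_0=\bm{y}_{-1}$ and swap sums. The only small point where you are actually more careful than the paper is the first step: you correctly invoke non-expansiveness of the projection to get an inequality, whereas the paper writes an equality there, and you hide the exact constant on $\|\bm{x}_{t-1}-\bm{x}_{t-2}\|^2$ inside an $O(\cdot)$ (the OGDA primal increment is bounded by $3\eta_x G$, so the precise coefficient is larger than the paper's $\eta_x^2G^2$ — a harmless constant discrepancy).
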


\begin{lemma} \label{lemma: ogda gap}
For OGDA (Algorithm~\ref{alg:ogda}), under Theorem~\ref{thm:ncc_ogda}'s assumptions, the following statement holds for the generated sequence $\{\bm{y}_t\}$ during algorithm proceeding and $\forall s \leq t$:
 \begin{align*}
     \Phi(\bm{x}_t) - f(\bm{x}_t,\bm{y}_t) &\leq 2\eta_x (t-s)G^2 + \frac{1}{2\eta_y} \left(\|\bm{y}_{t-1}-y^*(\bm{x}_s) \|^2-\|\bm{y}_t - \bm{y}^*(x_s)\|^2 -   \frac{1}{2} \|\bm{y}_t - \bm{y}_{t-1}\|^2 \right) \\
     &\quad+ \frac{1}{2 \eta_y }\left (\frac{1}{2} \|\bm{y}_{t-1} - \bm{y}_{t-2}\|^2 +\eta_y \eta_x^2 \ell G^2\right) -   \langle \nabla_y f(\bm{x}_t,\bm{y}_t) - \nabla_y f(\bm{x}_{t-1},\bm{y}_{t-1}), \\
     &\quad \bm{y}_t - \bm{y}^*(\bm{x}_s) \rangle +   \langle \nabla_y f(\bm{x}_{t-1},\bm{y}_{t-1}) - \nabla_y f(\bm{x}_{t-2},\bm{y}_{t-2}), \bm{y}_{t-1} - \bm{y}^*(\bm{x}_s) \rangle.
 \end{align*}
 
 \begin{proof}
 Observe that:
  \begin{align*}
     \Phi(\bm{x}_{t }) - f(\bm{x}_{t },\bm{y}_{t }) &\leq f(\bm{x}_{t },\bm{y}^*(\bm{x}_{t }))-f(\bm{x}_{s},\bm{y}^*(\bm{x}_{t }))+f(\bm{x}_{s},\bm{y}^*(\bm{x}_{s})) - f(\bm{x}_{t },\bm{y}^*(\bm{x}_{s}))\\
     & \quad + f(\bm{x}_{t },\bm{y}^*(\bm{x}_{s})) - f(\bm{x}_{t },\bm{y}_{t })\\
     &\leq 2(t-s )\eta_x G^2 -\langle \bm{y}_{t } - \bm{y}, \nabla_y f(\bm{x}_{t },\bm{y}_{t })\rangle,
 \end{align*} 
where in the last step we use the concavity of $f(\bm{x}_t,\cdot)$.

 Plugging in Lemma~\ref{lemma: ogda dual} will conclude the proof as follows:
  \begin{align*}
     \Phi(\bm{x}_{t})-f(\bm{x}_{t},\bm{y}_{t})  
     &\leq 2(t-s)\eta_x G^2\\
     &\quad +\frac{1}{2\eta_y} \left( \|\bm{y}_{t-1}-\bm{y} \|^2 -\|\bm{y}_t - \bm{y}\|^2  - \frac{1}{2} \|\bm{y}_t - \bm{y}_{t-1}\|^2 + \frac{1}{2} \|\bm{y}_{t-1} - \bm{y}_{t-2}\|^2 \right) \\
     &\quad+ \frac{1}{2 \eta_y}\eta_y \eta_x^2 \ell G^2  -  \langle \nabla_y f(\bm{x}_t,\bm{y}_t) - \nabla_y f(\bm{x}_{t-1},\bm{y}_{t-1}), \bm{y}_t - \bm{y} \rangle \\
     &\quad +    \langle \nabla_y f(\bm{x}_{t-1},\bm{y}_{t-1}) - \nabla_y f(\bm{x}_{t-2},\bm{y}_{t-2}), \bm{y}_{t-1} - \bm{y} \rangle.
 \end{align*}
 \end{proof}

\end{lemma}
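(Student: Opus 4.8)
The plan is to bound $\Phi(\bm{x}_t)-f(\bm{x}_t,\bm{y}_t)=f(\bm{x}_t,\bm{y}^*(\bm{x}_t))-f(\bm{x}_t,\bm{y}_t)$ by inserting the reference iterate $\bm{x}_s$ together with its optimal dual $\bm{y}^*(\bm{x}_s)$, turning the gap into two ``primal-drift'' terms (controlled by the slow motion of the $\bm{x}$-iterates) plus one ``dual-suboptimality'' term (controlled by the OGDA dual descent of Lemma~\ref{lemma: ogda dual}). Concretely, for $s\le t$ I would add and subtract $f(\bm{x}_s,\bm{y}^*(\bm{x}_t))$, $f(\bm{x}_s,\bm{y}^*(\bm{x}_s))$, and $f(\bm{x}_t,\bm{y}^*(\bm{x}_s))$, then discard the term $f(\bm{x}_s,\bm{y}^*(\bm{x}_t))-f(\bm{x}_s,\bm{y}^*(\bm{x}_s))\le 0$ (nonpositive since $\bm{y}^*(\bm{x}_s)$ maximizes $f(\bm{x}_s,\cdot)$), which leaves
\[
\Phi(\bm{x}_t)-f(\bm{x}_t,\bm{y}_t)\;\le\;\underbrace{f(\bm{x}_t,\bm{y}^*(\bm{x}_t))-f(\bm{x}_s,\bm{y}^*(\bm{x}_t))}_{A}\;+\;\underbrace{f(\bm{x}_s,\bm{y}^*(\bm{x}_s))-f(\bm{x}_t,\bm{y}^*(\bm{x}_s))}_{B}\;+\;\underbrace{f(\bm{x}_t,\bm{y}^*(\bm{x}_s))-f(\bm{x}_t,\bm{y}_t)}_{C}.
\]

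For $A$ and $B$ I would use the $G$-Lipschitzness of $f(\cdot,\bm{y})$ from Assumption~\ref{asm:3}, so $A+B\le 2G\|\bm{x}_t-\bm{x}_s\|$, together with the fact that every OGDA primal step has length at most $\eta_x G$ (each step is a bounded combination of gradients of norm $\le G$), which gives $\|\bm{x}_t-\bm{x}_s\|\le\eta_x(t-s)G$ and hence $A+B\le 2\eta_x(t-s)G^2$. For $C$ I would invoke concavity of $f(\bm{x}_t,\cdot)$, namely $C\le\langle\nabla_y f(\bm{x}_t,\bm{y}_t),\bm{y}^*(\bm{x}_s)-\bm{y}_t\rangle=-\langle\bm{y}_t-\bm{y}^*(\bm{x}_s),\nabla_y f(\bm{x}_t,\bm{y}_t)\rangle$.

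The last step is to feed this inner product into Lemma~\ref{lemma: ogda dual} with the fixed test point $\bm{y}=\bm{y}^*(\bm{x}_s)\in\mathcal{Y}$. Rearranging that lemma to isolate $2\eta_y\langle\bm{y}_t-\bm{y}^*(\bm{x}_s),\nabla_y f(\bm{x}_t,\bm{y}_t)\rangle$ and dividing by $2\eta_y$ reproduces exactly the telescoping potential $\tfrac1{2\eta_y}\big(\|\bm{y}_{t-1}-\bm{y}^*(\bm{x}_s)\|^2-\|\bm{y}_t-\bm{y}^*(\bm{x}_s)\|^2\big)$, the terms $-\tfrac1{4\eta_y}\|\bm{y}_t-\bm{y}_{t-1}\|^2+\tfrac1{4\eta_y}\|\bm{y}_{t-1}-\bm{y}_{t-2}\|^2$, the residual $\tfrac12\eta_x^2\ell G^2$ arising from the $\varepsilon_{t-1}$-correction inside Lemma~\ref{lemma: ogda dual}, and the two gradient-difference inner products $-\langle\nabla_y f(\bm{x}_t,\bm{y}_t)-\nabla_y f(\bm{x}_{t-1},\bm{y}_{t-1}),\bm{y}_t-\bm{y}^*(\bm{x}_s)\rangle$ and $+\langle\nabla_y f(\bm{x}_{t-1},\bm{y}_{t-1})-\nabla_y f(\bm{x}_{t-2},\bm{y}_{t-2}),\bm{y}_{t-1}-\bm{y}^*(\bm{x}_s)\rangle$ that appear in the statement. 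Adding the bounds on $A$, $B$, $C$ then yields precisely the claimed inequality.

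The argument carries no real analytical difficulty once Lemma~\ref{lemma: ogda dual} is available; the only point needing care is the bookkeeping in the decomposition --- choosing where to insert $\bm{x}_s$, $\bm{y}^*(\bm{x}_s)$, $\bm{y}^*(\bm{x}_t)$ so that exactly one cross term is nonpositive and can be dropped, and invoking the per-step primal bound $\|\bm{x}_k-\bm{x}_{k-1}\|\le\eta_x G$ with the constant the statement asserts. Everything after that is a one-line substitution of the dual descent inequality of Lemma~\ref{lemma: ogda dual}.
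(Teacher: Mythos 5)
Your proposal is correct and follows essentially the same route as the paper: the identical add-and-subtract decomposition around $\bm{y}^*(\bm{x}_s)$ and $\bm{y}^*(\bm{x}_t)$ with the nonpositive cross term dropped, the $G$-Lipschitz/iterate-drift bound giving $2\eta_x(t-s)G^2$, concavity of $f(\bm{x}_t,\cdot)$ to reduce to the inner product, and then substitution of Lemma~\ref{lemma: ogda dual} with $\bm{y}=\bm{y}^*(\bm{x}_s)$. The only nit is your claim that each OGDA primal step has length at most $\eta_x G$ (the update $2\nabla_x f(\bm{x}_{t-1},\bm{y}_{t-1})-\nabla_x f(\bm{x}_{t-2},\bm{y}_{t-2})$ only gives $3\eta_x G$), but the paper's own proof glosses over the same constant, so this does not change the argument up to absolute constants.
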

 
\begin{lemma}\label{lemma: ogda avg primal gap}
For OGDA (Algorithm~\ref{alg:ogda}), under the same assumptions made in Theorem~\ref{thm:ncc_ogda}, the following statement holds for the generated sequence $\{\bm{x}_t\}, \{\bm{y}_t\}$ during algorithm proceeding:
 \begin{align*}
    \frac{1}{T+1}\sum_{t=0}^{T} \Phi(\bm{x}_t) - f(\bm{x}_t,\bm{y}_t) \leq  \frac{1}{B} \left(2\eta_x B^2 G^2 + \frac{1}{2\eta_y}\left(D^2 + \eta_y \ell D^2 \right)+2  (3\eta_x G^2+ D)D\right).
 \end{align*}
 
 \begin{proof}
 Let $S = (T+1)/B$, and we choose $s = jB$, $j = 0,...,S$. Then by summing over $t$ on the both side of Lemma~\ref{lemma: ogda gap} we have:
 \begin{align*}
     &\frac{1}{T+1}\sum_{t=0}^{T} \Phi(\bm{x}_t) - f(\bm{x}_t,\bm{y}_t) = \frac{1}{T+1}\sum_{j=0}^{S} \sum_{t=jB}^{(j+1)B-1} \Phi(\bm{x}_t) - f(\bm{x}_t,\bm{y}_t)\\
     &\quad \leq \frac{1}{T+1}\sum_{j=0}^{S} \left[2\eta_x B^2 G^2 + \frac{1}{2\eta_y}\left(\|\bm{y}_{jB-1}-\bm{y}^*(x_{jB}) \|^2 + \frac{1}{2} \|\bm{y}_{jB-1} - \bm{y}_{jB-2}\|^2 \right)\right] \\
     &\quad \quad + \frac{1}{T+1}\sum_{j=0}^{S} \left( - \langle \nabla_y f(\bm{x}_{(j+1)B-1},y_{(j+1)B-1}) - \nabla_y f(\bm{x}_{(j+1)B-2},y_{(j+1)B-2}), \right. \\ 
     &\quad \quad  \quad \left. \bm{y}_{(j+1)B-1} - y^*(\bm{x}_{jB}) \rangle +  \langle \nabla_y f(\bm{x}_{jB-1},\bm{y}_{jB-1}) - \nabla_y f(\bm{x}_{jB-2},\bm{y}_{jB-2}), \bm{y}_{jB-1} - y^*(\bm{x}_{jB} \rangle \right) \\
      &\quad \leq \frac{1}{T+1}\sum_{j=0}^{S} \left(2\eta_x B^2 G^2 + \frac{1}{2\eta_y}\left(D^2 + \frac{1}{2} D^2 \right)+2  (3\eta_x G^2+ D)D\right) \\ 
      &\quad \leq \frac{1}{B} \left(2\eta_x B^2 G^2 + \frac{1}{2\eta_y}\left(D^2 + \eta_y \ell D^2 \right)+2  (3\eta_x G^2+ D)D\right). 
 \end{align*}
 \end{proof}
\end{lemma}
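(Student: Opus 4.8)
The plan is to turn the per-step estimate of Lemma~\ref{lemma: ogda gap} into a bound on the running average of the primal gap $\delta_t := \Phi(\bm x_t) - f(\bm x_t,\bm y_t)$ via a blockwise telescoping argument. The crucial observation is that Lemma~\ref{lemma: ogda gap} holds for \emph{every} pair $s\le t$, and if the anchor index $s$ is held fixed over a window of $B$ consecutive values of $t$, then $\bm y^*(\bm x_s)$ is constant over that window, so several families of terms collapse when summed.

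First I would assume $B\mid(T+1)$, set $S=(T+1)/B$, and partition $\{0,\dots,T\}$ into the blocks $\{jB,\dots,(j+1)B-1\}$ for $j=0,\dots,S-1$, applying Lemma~\ref{lemma: ogda gap} with $s=jB$ at every $t$ in block $j$. Summing inside a block: (i) the arithmetic term obeys $\sum_t 2\eta_x(t-jB)G^2\le 2\eta_x B^2G^2$; (ii) the quadratic distances $\frac{1}{2\eta_y}\sum_t(\|\bm y_{t-1}-\bm y^*(\bm x_{jB})\|^2-\|\bm y_t-\bm y^*(\bm x_{jB})\|^2)$ telescope to at most $\frac{1}{2\eta_y}\|\bm y_{jB-1}-\bm y^*(\bm x_{jB})\|^2\le\frac{1}{2\eta_y}D^2$ using boundedness of $\cY$; (iii) the pair $-\tfrac12\|\bm y_t-\bm y_{t-1}\|^2+\tfrac12\|\bm y_{t-1}-\bm y_{t-2}\|^2$ telescopes to at most $\tfrac12\|\bm y_{jB-1}-\bm y_{jB-2}\|^2\le\tfrac12 D^2=\eta_y\ell D^2$ (the last equality using $\eta_y=1/(2\ell)$); (iv) the constant $\tfrac12\eta_x^2\ell G^2$ per step is absorbed into lower-order terms; and (v) the inner-product pair $-\langle\bm a_t,\bm y_t-\bm y^*(\bm x_{jB})\rangle+\langle\bm a_{t-1},\bm y_{t-1}-\bm y^*(\bm x_{jB})\rangle$, with $\bm a_t:=\nabla_y f(\bm x_t,\bm y_t)-\nabla_y f(\bm x_{t-1},\bm y_{t-1})$, telescopes across the block, leaving only the two boundary terms at $t=jB$ and $t=(j+1)B-1$, each of which I would bound by Cauchy--Schwarz together with $\ell$-smoothness, the crude step bound $\|\bm x_t-\bm x_{t-1}\|\le 3\eta_x G$ (from the OGDA primal update and $G$-Lipschitzness of $f(\cdot,\bm y)$), and $\|\bm y_t-\bm y^*(\bm x_{jB})\|\le D$, yielding a contribution of order $(3\eta_x G^2+D)D$. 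Adding (i)--(v) gives a per-block bound $2\eta_x B^2 G^2+\frac{1}{2\eta_y}(D^2+\eta_y\ell D^2)+2(3\eta_x G^2+D)D$; summing over $j=0,\dots,S-1$ and dividing by $T+1=SB$ produces exactly the claimed $\frac1B(\cdots)$ estimate.

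The step I expect to be the main obstacle is (v): making the gradient-difference inner products telescope. They do so \emph{only} because the anchor $s$ is frozen for the whole block --- re-anchoring at every $t$ would destroy the telescoping and leave $\Theta(T)$ uncontrolled cross terms, which is precisely why one partitions into windows of length $B$. Once this is seen, the residual work is bounding the two leftover boundary inner products per block, which is routine given that $\cY$ is bounded and $\|\bm x_t-\bm x_{t-1}\|$ is small. A minor bookkeeping point: at $t=0$ the quantities $\bm y_{-1},\bm y_{-2}$ (and $\bm x_{-1},\bm x_{-2}$) appear, but the initialization convention $\bm x_{-1}=\bm x_0$, $\bm y_{-1}=\bm y_0$ (together with $\bm x_{-1/2}=\bm x_0$, $\bm y_{-1/2}=\bm y_0$) makes them well-defined, and they only enter the already-accounted boundary of the first block. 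The window size $B$ is intentionally left as a free parameter; it will be optimized against the $2\eta_x\ell\,\delta_t$ term coming from the primal-descent Lemma~\ref{lemma:descent} when the overall complexity is assembled in the proof of Theorem~\ref{thm:ncc_ogda}.
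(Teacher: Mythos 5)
Your proof follows the same blockwise telescoping argument as the paper's: freeze the anchor $s=jB$ over each window of length $B$, sum Lemma~\ref{lemma: ogda gap} so that the quadratic-distance terms, the $\tfrac12\|\bm y_{t-1}-\bm y_{t-2}\|^2-\tfrac12\|\bm y_t-\bm y_{t-1}\|^2$ terms, and the gradient-difference inner products all telescope, bound the surviving per-block boundary terms via the diameter $D$ and the crude step bound $\|\bm x_t-\bm x_{t-1}\|\le 3\eta_x G$, and finally divide by $T+1=SB$. The one loose spot — absorbing the per-step constant $\tfrac12\eta_x^2\ell G^2$ "into lower-order terms" rather than carrying it — exactly mirrors the paper's own omission of that term in the stated bound, so it is not a gap relative to the reference proof.
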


\subsubsection{Proof of Theorem~\ref{thm:ncc_ogda} for OGDA}\label{app: ogda}
In this section we are going to provide the proof of Theorem~\ref{thm:ncc_ogda} on the convergence rate of OGDA in both deterministic and stochastic settings.

We start by establishing the convergence rate in deterministic setting. Before, we first state the formal version of Theorem~\ref{thm:ncc_ogda} here:
\begin{theorem}[OGDA Deterministic (Theorem~\ref{thm:ncc_ogda} restated)] \label{Theorem: OGDA formal}
 Under Assumption~\ref{asm:3}, if we choose $\eta_x = \Theta\left(\min\left\{\frac{\epsilon}{\ell G},\frac{\epsilon^2}{\ell G^2},\frac{\epsilon^4}{D^2 G^2 \ell^3}\right\}\right)$, $\eta_y = \frac{1}{2\ell}$, then OGDA (Algorithm~\ref{alg:ogda}) guarantees to find $\epsilon$-stationary point, i.e., $\frac{1}{T+1} \sum_{t=0}^{T}  \|\nabla \Phi_{1/2\ell}(\bm{x}_{t})\|^2 \leq \epsilon^2$, with the gradient complexity bounded by:
 \begin{align*}
     O\left( \frac{\ell G^2 \hat{\Delta}_{\Phi}}{\epsilon^4}\max \left\{ 1,\frac{D^2  \ell^2}{\epsilon^2}\right\}   \right).
 \end{align*}  
 
 \begin{proof}
From Lemma~\ref{lemma:descent} we have:

 \begin{align*}
   \frac{1}{T+1} \sum_{t=0}^{T}  \|\nabla \Phi_{1/2\ell}(\bm{x}_{t})\|^2   
    &\leq \frac{\Phi_{1/2\ell}({\bm{x}}_{0})-\Phi_{1/2\ell}(\bm{x}_t)}{\eta_x T} +16 \ell \frac{1}{T}\sum_{t=0}^{T-1}\left(\Phi({\bm{x}}_{t}) - f(\bm{x}_{t},\bm{y}_{t})   \right)  + 24\eta_x\ell   G^2  \\
    & + 4  \frac{1}{T+1} \sum_{t=0}^{T}\| \nabla_x f(\bm{x}_{t},\bm{y}_{t}) - \nabla_x f(\bm{x}_{t-1},\bm{y}_{t-1})\|^2,\\
    & \leq \frac{\Phi_{1/2\ell}({\bm{x}}_{0})-\Phi_{1/2\ell}(\bm{x}_t)}{\eta_x T} +16 \ell \frac{1}{T}\sum_{t=0}^{T-1}\left(\Phi({\bm{x}}_{t}) - f(\bm{x}_{t},\bm{y}_{t})   \right)  + 24\eta_x\ell   G^2  \\
    & + 4  \frac{1}{T+1} \sum_{t=0}^{T}\ell^2(3\eta_x^2G^2+ \|\bm{y}_{t} - \bm{y}_{t-1}\|^2).
 \end{align*}
 Plugging in Lemma~\ref{lemma: ogda y gap} yields:
 \begin{align*}
   &\frac{1}{T+1} \sum_{t=0}^{T}  \|\nabla \Phi_{1/2\ell}(\bm{x}_{t})\|^2 
     \leq \frac{\Phi_{1/2\ell}({\bm{x}}_{0})-\Phi_{1/2\ell}(\bm{x}_t)}{\eta_x T} \\
     &\qquad +16 \ell \frac{1}{T}\sum_{t=0}^{T-1}\left(\Phi({\bm{x}}_{t}) - f(\bm{x}_{t},\bm{y}_{t})   \right)  + 24\eta_x\ell   G^2 + 12\eta_x^2 \ell^2 G^2 \\
    &\qquad + 4  \frac{1}{T+1} \ell^2\left(  \sum_{t=0}^{T}  \left(\sum_{j=0}^{T}(2\eta_y^2 \ell^2)^{j} \right)4\eta_y^2 \ell \left( \Phi(\bm{x}_{t})-f(\bm{x}_{t},\bm{y}_{t}) \right) +\sum_{t=0}^{T}\left(\sum_{j=0}^{T}(2\eta_y^2 \ell^2)^{j} \right)  2\eta_x^2 \eta_y^2 \ell^2 G^2\right),\\
 \end{align*}
 since we choose $\eta_y \ell \leq \frac{1}{2}$, we know that:
 \begin{align*}
     \sum_{j=0}^{T}\left(2\eta_y^2\ell^2\right)^j \leq 2.
 \end{align*}
 Hence we have:
   \begin{align*}
   \frac{1}{T+1} \sum_{t=0}^{T}  \|\nabla \Phi_{1/2\ell}(\bm{x}_{t})\|^2 
    & \leq \frac{\Phi_{1/2\ell}({\bm{x}}_{0})-\Phi_{1/2\ell}(\bm{x}_t)}{\eta_x T} +(16 \ell+32\eta_y^2\ell^3) \frac{1}{T+1}\sum_{t=0}^{T}\left(\Phi({\bm{x}}_{t}) - f(\bm{x}_{t},\bm{y}_{t})   \right)  \\
    &\quad+ 24\eta_x\ell   G^2 + 12\eta_x^2 \ell^2 G^2  +   16\eta_x^2 \eta_y^2 \ell^4 G^2.     
 \end{align*}
 Now we plug in Lemma~\ref{lemma: ogda avg primal gap} to replace $\Phi({\bm{x}}_{t}) - f(\bm{x}_{t},\bm{y}_{t})$:
 \begin{align*}
        \frac{1}{T+1} \sum_{t=0}^{T}  \|\nabla \Phi_{1/2\ell}(\bm{x}_{t})\|^2 
    & \leq \frac{\Phi_{1/2\ell}({\bm{x}}_{0})-\Phi_{1/2\ell}(\bm{x}_t)}{\eta_x T} \\
    &\quad+(16 \ell+32\eta_y^2\ell^3)   \frac{1}{B} \left(2\eta_x B^2 G^2 + \frac{1}{2\eta_y}\left(D^2 + \eta_y \ell D^2 \right)+2  (3\eta_x G^2+ D)D\right) \\  
    &\quad  + 24\eta_x\ell   G^2 + 12\eta_x^2 \ell^2 G^2  
    +   16\eta_x^2 \eta_y^2 \ell^4 G^2.
 \end{align*}

Choose $B = O\left(\frac{D}{G\sqrt{\eta_x \eta_y }}\right),   \eta_x = O\left(\min\left\{\frac{\epsilon}{\ell G},\frac{\epsilon^2}{\ell G^2},\frac{\epsilon^4}{D^2 G^2 \ell^3}\right\}\right)$, $\eta_y = \frac{1}{2\ell}$, and then we guarantee that $\frac{1}{T+1} \sum_{t=0}^{T}  \|\nabla \Phi_{1/2\ell}(\bm{x}_{t})\|^2 \leq \epsilon^2$ with the gradient complexity is bounded by:
 \begin{align*}
     O\left( \frac{\ell G^2 \hat{\Delta}_{\Phi}}{\epsilon^4}\max \left\{ 1,\frac{D^2  \ell^2}{\epsilon^2}\right\}   \right).
 \end{align*} 
 
 \end{proof}
 \end{theorem}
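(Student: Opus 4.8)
The plan is to fold the four OGDA lemmas above into a single recursion on the Moreau envelope $\Phi_{1/2\ell}$ and then pick $\eta_x,\eta_y,T$ — together with the block length $B$ hidden in Lemma~\ref{lemma: ogda avg primal gap} — so that every error term is $O(\epsilon^2)$. First I would telescope the per-step inequality of Lemma~\ref{lemma:descent} over $t=0,\dots,T$ and divide by $\eta_x(T+1)/8$ to obtain
\begin{align*}
\frac{1}{T+1}\sum_{t=0}^{T}\|\nabla\Phi_{1/2\ell}(\bm{x}_t)\|^2 &\le \frac{8\hat{\Delta}_{\Phi}}{\eta_x(T+1)} + 16\ell\cdot\frac{1}{T+1}\sum_{t=0}^{T}\bigl(\Phi(\bm{x}_t)-f(\bm{x}_t,\bm{y}_t)\bigr) \\
&\quad + O(\eta_x\ell G^2) + \frac{4}{T+1}\sum_{t=0}^{T}\|\nabla_x f(\bm{x}_t,\bm{y}_t)-\nabla_x f(\bm{x}_{t-1},\bm{y}_{t-1})\|^2 .
\end{align*}
So three quantities must be controlled: the initial-gap term (which will fix $T$), the averaged primal gap $\frac{1}{T+1}\sum_t\delta_t$ with $\delta_t:=\Phi(\bm{x}_t)-f(\bm{x}_t,\bm{y}_t)$, and the averaged consecutive-gradient gap.

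Next I would dispatch the gradient-difference term: by $\ell$-smoothness it is at most $\ell^2(\|\bm{x}_t-\bm{x}_{t-1}\|^2+\|\bm{y}_t-\bm{y}_{t-1}\|^2)$, and since $f$ is $G$-Lipschitz in $\bm{x}$ the primal-displacement part is $O(\eta_x^2\ell^2 G^2)$. For the dual-displacement part I would invoke Lemma~\ref{lemma: ogda y gap}: the choice $\eta_y=1/(2\ell)$ makes $2\eta_y^2\ell^2\le 1/2$, so the geometric factor $\sum_j(2\eta_y^2\ell^2)^j\le 2$, giving $\sum_t\|\bm{y}_t-\bm{y}_{t-1}\|^2\le O(\eta_y^2\ell)\sum_t\delta_t + O\bigl(T\,\eta_x^2\eta_y^2\ell^2 G^2\bigr)$. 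Substituting this back, the whole right-hand side reduces — up to the $\hat{\Delta}_{\Phi}/(\eta_x T)$ term and the harmless $O(\eta_x\ell G^2)$, $O(\eta_x^2\ell^2 G^2)$, $O(\eta_x^2\eta_y^2\ell^4 G^2)$ terms — to a constant multiple of $\ell\cdot\frac{1}{T+1}\sum_t\delta_t$, because the extra $\delta_t$'s introduced by the $\bm{y}$-gap carry the small prefactor $\eta_y^2\ell^3$, which is of order $\ell$.

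It then remains to bound the averaged primal gap, which is the crux. I would use Lemma~\ref{lemma: ogda avg primal gap}, which rests on the per-step bound of Lemma~\ref{lemma: ogda gap} (itself obtained from Lemma~\ref{lemma: ogda dual} plus concavity of $f(\bm{x}_t,\cdot)$): when the reference point $\bm{y}^*(\bm{x}_s)$ is held fixed on a block of length $B$, the OGDA correction terms telescope, so summing over a block collapses the $\|\bm{y}_{t-1}-\bm{y}^*(\bm{x}_s)\|^2$ differences and the $\langle\nabla_y f\text{-differences},\cdot\rangle$ inner products down to $O(D^2/\eta_y + D^2)$ plus an $O(\eta_x B^2 G^2)$ drift. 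Averaging over the $(T+1)/B$ blocks yields $\frac{1}{T+1}\sum_t\delta_t\le \frac{1}{B}\bigl(2\eta_x B^2 G^2 + \tfrac{1}{2\eta_y}(D^2+\eta_y\ell D^2) + 2(3\eta_x G^2+D)D\bigr)$, and I would choose $B=\Theta\bigl(D/(G\sqrt{\eta_x\eta_y})\bigr)$ to balance the $\eta_x BG^2$ and $D^2/(\eta_y B)$ pieces; with $\eta_y=1/(2\ell)$ this leaves an averaged primal gap of order $\sqrt{\eta_x\ell}\,GD + \eta_x G^2$ plus the residual from the $D^2$ term, whose exact form I would track in the write-up.

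Finally I would assemble everything and set $\eta_x=\Theta\bigl(\min\{\epsilon/(\ell G),\ \epsilon^2/(\ell G^2),\ \epsilon^4/(D^2 G^2\ell^3)\}\bigr)$: the first branch makes $\ell\cdot\sqrt{\eta_x\ell}\,GD=O(\epsilon^2)$ (square the constraint), the second makes $\eta_x\ell G^2$ and $\ell\cdot\eta_x G^2$ both $O(\epsilon^2)$, the third controls the $D^2$-dependent residual, and the $O(\eta_x^2\cdots)$ terms are then automatically lower order. Picking $T=\Theta\bigl(\hat{\Delta}_{\Phi}/(\eta_x\epsilon^2)\bigr)$ makes $\hat{\Delta}_{\Phi}/(\eta_x T)=O(\epsilon^2)$ as well; substituting the value of $\eta_x$ gives $T=O\bigl(\tfrac{\ell G^2\hat{\Delta}_{\Phi}}{\epsilon^4}\max\{1,\,D^2\ell^2/\epsilon^2\}\bigr)$, and since each OGDA step uses $O(1)$ gradient evaluations this is also the gradient complexity, after which a random iterate among $\bm{x}_0,\dots,\bm{x}_T$ satisfies $\|\nabla\Phi_{1/2\ell}(\bm{x})\|^2\le\epsilon^2$. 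The main obstacle is the circular dependence among the three quantities — $\|\nabla\Phi_{1/2\ell}\|^2$ is governed by $\delta_t$, $\delta_t$ by $\|\bm{y}_t-\bm{y}_{t-1}\|^2$ through the dual descent, and $\|\bm{y}_t-\bm{y}_{t-1}\|^2$ again by $\delta_t$ — which is broken only because Lemma~\ref{lemma: ogda y gap} produces $\delta_t$ with the small prefactor $\eta_y^2\ell$ and because the block/restart argument in Lemmas~\ref{lemma: ogda gap}--\ref{lemma: ogda avg primal gap} exploits the OGDA correction terms telescoping; this telescoping is exactly what lets one bound $f(\bm{x}_t,\bm{y}^*(\bm{x}_s))-f(\bm{x}_t,\bm{y}_t)$ directly via concavity at the same $\bm{x}_t$, and hence avoids the $\hat{\Delta}_0$ term that GDA is forced to pay.
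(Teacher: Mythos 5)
Your proposal reproduces the paper's proof step for step: telescope Lemma~\ref{lemma:descent}, split the consecutive-gradient term by $\ell$-smoothness into an $O(\eta_x^2\ell^2 G^2)$ primal-displacement piece and a dual piece handled by Lemma~\ref{lemma: ogda y gap}, feed the resulting $\delta_t$-weighted bound into Lemma~\ref{lemma: ogda avg primal gap}, balance with $B=\Theta(D/(G\sqrt{\eta_x\eta_y}))$, and tune $\eta_x$ against the three error sources. The only quibble is that your accounting of which branch of the $\min$ in $\eta_x$ neutralizes which error term is slightly shuffled (it is the third branch $\epsilon^4/(D^2G^2\ell^3)$ that kills the block-averaged $O(\ell^{3/2}GD\sqrt{\eta_x})$ term, the first branch that kills $\eta_x^2\ell^2G^2$, and the second that kills $\eta_x\ell G^2$), but since the overall $\eta_x$ is a min over all three, this does not affect correctness.
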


\textbf{Stochastic setting.} 

We now turn to presenting the proof of OGDA in stochastic setting. First let us introduce some useful lemmas.
\subsubsection{Useful Lemmas}
\begin{lemma}\label{lemma: sogda dual}
For Stochastic OGDA (Algorithm~\ref{alg:ogda}), under the same assumptions made in Theorem~\ref{thm:ncc_sogda}, if we choose $\eta \leq 1/4\ell$ the following statement holds for the generated sequence $\{\bm{y}_t\}$ during algorithm proceeding and for any $\bm{y} \in \mathcal{Y}$:
\begin{align*}
   \mathbb{E}\|\bm{y} - \bm{y}_{t}\|^2  
   &  {\leq} \mathbb{E} \|\bm{y} - \bm{y}_{t-1}  \|^2  - \frac{1}{4}\mathbb{E}\|\bm{y}_t - \bm{y}_{t-1}\|^2+ \frac{1}{4}\mathbb{E}\| \bm{y}_{t-1} - \bm{y}_{t-2}\|^2 + 2\eta_y \langle \bm{y}_t - \bm{y},\nabla_y f(\bm{x}_t, \bm{y}_t) \rangle \\
   & +\eta_y \eta_x^2 \ell (G^2+\sigma^2)+ 6\eta_y^2\sigma^2 - 2\eta_y \langle \bm{y}_t - \bm{y},\nabla_y f(\bm{x}_{t},\bm{y}_{t})-\nabla_y f(\bm{x}_{t-1},\bm{y}_{t-1})  \rangle \\
   & +  2\eta_y \langle \bm{y}_{t-1} - \bm{y},  \nabla_y f(\bm{x}_{t-1},\bm{y}_{t-1})-\nabla_y f(\bm{x}_{t-2},\bm{y}_{t-2}) \rangle.
\end{align*}

\begin{proof}
The proof is similar to deterministic setting. Here we use $\xi_{t-1}$ to denote the random sample at iteration $t$. According to updating rule of $\bm{y}$:
\begin{align*}
    \bm{y}_t = \mathcal{P}_{\mathcal{Y}}\left(\bm{y}_{t-1}+2\eta_y \nabla_y f(\bm{x}_{t-1},\bm{y}_{t-1};\xi_{t-1})-\eta_y \nabla_y f(\bm{x}_{t-2},\bm{y}_{t-2};\xi_{t-1})\right)
\end{align*}
Similarly to deterministic setting, we let
\begin{align*}
   & \tilde{\varepsilon}_{t-1} =\eta_y (\nabla_y f(\bm{x}_{t},\bm{y}_{t})-\nabla_y f(\bm{x}_{t-1},\bm{y}_{t-1};\xi_{t-1})) -\eta_y(\nabla_y f(\bm{x}_{t-1},\bm{y}_{t-1};\xi_{t-1})-\nabla_y f(\bm{x}_{t-2},\bm{y}_{t-2};\xi_{t-1}))\\
   & {\varepsilon}_{t-1} =\eta_y (\nabla_y f(\bm{x}_{t},\bm{y}_{t})-\nabla_y f(\bm{x}_{t-1},\bm{y}_{t-1})) -\eta_y(\nabla_y f(\bm{x}_{t-1},\bm{y}_{t-1})-\nabla_y f(\bm{x}_{t-2},\bm{y}_{t-2}))
\end{align*}
 and re-write the updating rule as:
\begin{align*}
    \bm{y}_t = \mathcal{P}_{\mathcal{Y}}\left(\bm{y}_{t-1}+ \eta_y \nabla_y f(\bm{x}_{t},\bm{y}_{t})-\tilde{\varepsilon}_{t-1}\right)  
\end{align*}

Due to the property of projection we have:
\begin{align*}
    (\bm{y}-\bm{y}_t)^\top ( \bm{y}_t - \bm{y}_{t-1} - \eta_y \nabla_y f(\bm{x}_t,\bm{y}_t)+\tilde{\varepsilon}_{t-1}) \geq 0
\end{align*}

Using the identity that $\langle \bm{a}, \bm{b} \rangle = \frac{1}{2}(\|\bm{a}+\bm{b}\|^2 - \|\bm{a}\|^2 - \|\bm{b}\|^2)$ we have:
\begin{align*}
   0 &\leq  \|\bm{y} - \bm{y}_{t-1} - \eta_y\nabla_y f(\bm{x}_{t},\bm{y}_{t})+\tilde{\varepsilon}_{t-1} \|^2 - \|\bm{y} - \bm{y}_{t}\|^2 -  \|\bm{y}_t - \bm{y}_{t-1}- \eta_y \nabla_y f(\bm{x}_t,\bm{y}_t)+\tilde{\varepsilon}_{t-1}\|^2   \\
   & =   \|\bm{y} - \bm{y}_{t-1}  \|^2 -  \|\bm{y} - \bm{y}_{t}\|^2 -  \|\bm{y}_t - \bm{y}_{t-1}\|^2 + 2\langle \bm{y}_t - \bm{y},\eta_y \nabla_y f(\bm{x}_t, \bm{y}_t) \rangle\\
   &+2 \langle \bm{y} - \bm{y}_{t-1},\tilde{\varepsilon}_{t-1} \rangle-2 \langle \bm{y}_t - \bm{y}_{t-1},\tilde{\varepsilon}_{t-1} \rangle . 
\end{align*}
Notice that
\begin{align*}
    -2 \langle \bm{y}_t - \bm{y}_{t-1},\tilde{\varepsilon}_{t-1} \rangle&= -2 \langle \bm{y}_t - \bm{y}_{t-1}, {\varepsilon}_{t-1} \rangle-2 \langle \bm{y}_t - \bm{y}_{t-1}, \tilde{\varepsilon}_{t-1}-{\varepsilon}_{t-1} \rangle\\
    & \leq  -2 \langle \bm{y}_t - \bm{y}_{t-1}, {\varepsilon}_{t-1} \rangle+  \frac{1}{2} \|\bm{y}_t - \bm{y}_{t-1}\|^2 + 2\| \tilde{\varepsilon}_{t-1}-{\varepsilon}_{t-1} \|^2\\
\end{align*}
So we have:
\begin{align*}
   0 &\leq  \|\bm{y} - \bm{y}_{t-1} - \eta_y\nabla_y f(\bm{x}_{t},\bm{y}_{t})+\tilde{\varepsilon}_{t-1} \|^2 - \|\bm{y} - \bm{y}_{t}\|^2 -  \|\bm{y}_t - \bm{y}_{t-1}- \eta_y \nabla_y f(\bm{x}_t,\bm{y}_t)+\tilde{\varepsilon}_{t-1}\|^2   \\
   & =   \|\bm{y} - \bm{y}_{t-1}  \|^2 -  \|\bm{y} - \bm{y}_{t}\|^2 -  \|\bm{y}_t - \bm{y}_{t-1}\|^2 + 2\langle \bm{y}_t - \bm{y},\eta_y \nabla_y f(\bm{x}_t, \bm{y}_t) \rangle\\
   &\quad + 2 \langle \bm{y} - \bm{y}_{t-1},\tilde{\varepsilon}_{t-1} \rangle-2 \langle \bm{y}_t - \bm{y}_{t-1}, {\varepsilon}_{t-1} \rangle +  \frac{1}{2} \|\bm{y}_t - \bm{y}_{t-1}\|^2 + 2\| \tilde{\varepsilon}_{t-1}-{\varepsilon}_{t-1} \|^2. 
\end{align*}
Taking expectation over $\xi_{t-1}$ yields:
\begin{align*}
   0 &\leq    \mathbb{E}\|\bm{y} - \bm{y}_{t-1}  \|^2 - \mathbb{E}\|\bm{y} - \bm{y}_{t}\|^2 - \frac{1}{2}\mathbb{E}\|\bm{y}_t - \bm{y}_{t-1}\|^2 + 2\langle \bm{y}_t - \bm{y},\eta_y \nabla_y f(\bm{x}_t, \bm{y}_t) \rangle\\
   &\quad  -2 \langle \bm{y}_t - \bm{y} , {\varepsilon}_{t-1} \rangle   + 6\eta_y^2\sigma^2. 
\end{align*}

Now we plug the definition of $\varepsilon_{t-1}$ into above inequality:
\begin{align*}
   \mathbb{E}\|\bm{y} - \bm{y}_{t}\|^2  
   & \leq  \mathbb{E}\|\bm{y} - \bm{y}_{t-1}  \|^2  - \mathbb{E}\|\bm{y}_t - \bm{y}_{t-1}\|^2 + 2\eta_y \langle \bm{y}_t - \bm{y},\nabla_y f(\bm{x}_t, \bm{y}_t) \rangle \\
   &\quad- 2\eta_y \langle \bm{y}_t - \bm{y}, \nabla_y f(\bm{x}_{t},\bm{y}_{t})-\nabla_y f(\bm{x}_{t-1},\bm{y}_{t-1})   \rangle+ 6\eta_y^2\sigma^2\\
   &\quad+ 2\eta_y \langle \bm{y}_t - \bm{y},  \nabla_y f(\bm{x}_{t-1},\bm{y}_{t-1})-\nabla_y f(\bm{x}_{t-2},\bm{y}_{t-2}) \rangle \\
   & \leq  \mathbb{E}\|\bm{y} - \bm{y}_{t-1}  \|^2  - \mathbb{E}\|\bm{y}_t - \bm{y}_{t-1}\|^2 + 2\eta_y \langle \bm{y}_t - \bm{y},\nabla_y f(\bm{x}_t, \bm{y}_t) \rangle \\
   &\quad- 2\eta_y \langle \bm{y}_t - \bm{y}, \nabla_y f(\bm{x}_{t},\bm{y}_{t})-\nabla_y f(\bm{x}_{t-1},\bm{y}_{t-1})   \rangle + 6\eta_y^2\sigma^2\\
   & \quad + 2\eta_y \langle \bm{y}_{t-1} - \bm{y},  \nabla_y f(\bm{x}_{t-1},\bm{y}_{t-1})-\nabla_y f(\bm{x}_{t-2},\bm{y}_{t-2}) \rangle \\
   &\quad+  \eta_y\ell (\mathbb{E}\| \bm{y}_{t} - \bm{y}_{t-1}\|^2 +\mathbb{E} \|   \bm{x}_{t-1} - \bm{x}_{t-2}  \|^2 + \mathbb{E}\| \bm{y}_{t-1} - \bm{y}_{t-2}\|^2) \\
   & \stackrel{\text{\ding{192}}}{\leq} \mathbb{E} \|\bm{y} - \bm{y}_{t-1}  \|^2  - \frac{1}{4}\mathbb{E}\|\bm{y}_t - \bm{y}_{t-1}\|^2+ \frac{1}{4}\mathbb{E}\| \bm{y}_{t-1} - \bm{y}_{t-2}\|^2 \\
   &\quad+ 2\eta_y \langle \bm{y}_t - \bm{y},\nabla_y f(\bm{x}_t, \bm{y}_t) \rangle +\eta_y \eta_x^2 \ell (G^2+\sigma^2)+ 6\eta_y^2\sigma^2 \\
   &\quad - 2\eta_y \langle \bm{y}_t - \bm{y},\nabla_y f(\bm{x}_{t},\bm{y}_{t})-\nabla_y f(\bm{x}_{t-1},\bm{y}_{t-1})  \rangle\\
   &\quad+  2\eta_y \langle \bm{y}_{t-1} - \bm{y},  \nabla_y f(\bm{x}_{t-1},\bm{y}_{t-1})-\nabla_y f(\bm{x}_{t-2},\bm{y}_{t-2}) \rangle ,\\
\end{align*}
where in $\text{\ding{192}}$ we use the fact that $\eta_y \ell \leq \frac{1}{4}$ and hence can conclude the proof.

\end{proof}

\end{lemma}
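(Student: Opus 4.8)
The plan is to reproduce the deterministic dual one-step recursion of Lemma~\ref{lemma: ogda dual}, while isolating the stochastic-gradient fluctuations so that they enter only through additive terms of order $\eta_y^2\sigma^2$ and $\eta_y\eta_x^2\ell(G^2+\sigma^2)$. First I would rewrite the stochastic dual update as a single projected ascent step: setting
\[
\tilde{\varepsilon}_{t-1} := \eta_y\bigl(\nabla_y f(\bm{x}_t,\bm{y}_t) - \nabla_y f(\bm{x}_{t-1},\bm{y}_{t-1};\xi_{t-1})\bigr) - \eta_y\bigl(\nabla_y f(\bm{x}_{t-1},\bm{y}_{t-1};\xi_{t-1}) - \nabla_y f(\bm{x}_{t-2},\bm{y}_{t-2};\xi_{t-1})\bigr),
\]
the update becomes $\bm{y}_t = \mathcal{P}_{\mathcal{Y}}(\bm{y}_{t-1} + \eta_y\nabla_y f(\bm{x}_t,\bm{y}_t) - \tilde{\varepsilon}_{t-1})$. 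The variational inequality for the projection onto the convex set $\mathcal{Y}$ gives $\langle \bm{y} - \bm{y}_t,\, \bm{y}_t - \bm{y}_{t-1} - \eta_y\nabla_y f(\bm{x}_t,\bm{y}_t) + \tilde{\varepsilon}_{t-1}\rangle \ge 0$ for every $\bm{y}\in\mathcal{Y}$, and expanding it with $\langle \bm{a},\bm{b}\rangle = \tfrac12(\|\bm{a}+\bm{b}\|^2 - \|\bm{a}\|^2 - \|\bm{b}\|^2)$ yields the raw recursion
\[
\|\bm{y}-\bm{y}_t\|^2 \le \|\bm{y}-\bm{y}_{t-1}\|^2 - \|\bm{y}_t-\bm{y}_{t-1}\|^2 + 2\eta_y\langle \bm{y}_t-\bm{y},\nabla_y f(\bm{x}_t,\bm{y}_t)\rangle + 2\langle \bm{y}-\bm{y}_{t-1},\tilde{\varepsilon}_{t-1}\rangle - 2\langle \bm{y}_t-\bm{y}_{t-1},\tilde{\varepsilon}_{t-1}\rangle,
\]
exactly as in the deterministic argument of Lemma~\ref{lemma: ogda dual}.

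The crucial step is the noise bookkeeping. I would split $\tilde{\varepsilon}_{t-1} = \varepsilon_{t-1} + (\tilde{\varepsilon}_{t-1} - \varepsilon_{t-1})$, where $\varepsilon_{t-1}$ is the deterministic quantity of Lemma~\ref{lemma: ogda dual} (the same expression with the true gradients at iterations $t-1$ and $t-2$), so that $\tilde{\varepsilon}_{t-1} - \varepsilon_{t-1}$ is a mean-zero fluctuation given the history through iteration $t-1$. Applying Young's inequality only to $-2\langle \bm{y}_t-\bm{y}_{t-1},\,\tilde{\varepsilon}_{t-1}-\varepsilon_{t-1}\rangle \le \tfrac12\|\bm{y}_t-\bm{y}_{t-1}\|^2 + 2\|\tilde{\varepsilon}_{t-1}-\varepsilon_{t-1}\|^2$, and then taking the conditional expectation over $\xi_{t-1}$, the term $\langle\bm{y}-\bm{y}_{t-1},\,\tilde{\varepsilon}_{t-1}-\varepsilon_{t-1}\rangle$ vanishes (because $\bm{y}_{t-1}$ is measurable and the stochastic gradients are unbiased, Assumption~\ref{asm:2}), while $\mathbb{E}\|\tilde{\varepsilon}_{t-1}-\varepsilon_{t-1}\|^2$ is controlled by the variance bound of Assumption~\ref{asm:2}; this is what produces the $6\eta_y^2\sigma^2$ term and leaves $-\tfrac12\|\bm{y}_t-\bm{y}_{t-1}\|^2$ in expectation.

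Finally I would substitute $\varepsilon_{t-1}$ back in. Writing $\bm{y}_t - \bm{y} = (\bm{y}_{t-1}-\bm{y}) + (\bm{y}_t-\bm{y}_{t-1})$ inside the second gradient-difference piece of $-2\langle\bm{y}_t-\bm{y},\varepsilon_{t-1}\rangle$ separates out exactly the two correction inner products appearing in the statement, plus a residual $2\eta_y\langle\bm{y}_t-\bm{y}_{t-1},\,\nabla_y f(\bm{x}_{t-1},\bm{y}_{t-1}) - \nabla_y f(\bm{x}_{t-2},\bm{y}_{t-2})\rangle$, which I bound by Cauchy--Schwarz, $\ell$-smoothness and Young's inequality to get $\eta_y\ell\|\bm{y}_t-\bm{y}_{t-1}\|^2 + \eta_y\ell\|\bm{x}_{t-1}-\bm{x}_{t-2}\|^2 + \eta_y\ell\|\bm{y}_{t-1}-\bm{y}_{t-2}\|^2$. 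Estimating $\|\bm{x}_{t-1}-\bm{x}_{t-2}\|^2 \le \eta_x^2\cdot O(G^2+\sigma^2)$ from the $G$-Lipschitzness in $\bm{x}$ (Assumption~\ref{asm:3}) together with the variance bound yields the $\eta_y\eta_x^2\ell(G^2+\sigma^2)$ term, and invoking $\eta_y\ell\le\tfrac14$ absorbs the $\eta_y\ell\|\bm{y}_t-\bm{y}_{t-1}\|^2$ and $\eta_y\ell\|\bm{y}_{t-1}-\bm{y}_{t-2}\|^2$ contributions into the quadratic terms, so the surviving coefficients of $\|\bm{y}_t-\bm{y}_{t-1}\|^2$ and $\|\bm{y}_{t-1}-\bm{y}_{t-2}\|^2$ become $-\tfrac14$ and $+\tfrac14$, which is the claim. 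The main obstacle is precisely this noise accounting: because $\bm{y}_t$ itself depends on $\xi_{t-1}$, the inner products $\langle \bm{y}_t-\bm{y},\cdot\rangle$ do not collapse under expectation the way $\langle \bm{y}_{t-1}-\bm{y},\cdot\rangle$ does, so one must be deliberate about which inner products to dispatch with Young's inequality and which to retain, in order to land the clean constants rather than looser ones and to ensure the surviving stochastic-correction inner products are exactly the $2\eta_y$-weighted ones in the statement.
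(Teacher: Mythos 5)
Your proposal is correct and mirrors the paper's argument step for step: the same reformulation of the stochastic OGDA dual update as a single projected step with perturbation $\tilde{\varepsilon}_{t-1}$, the same projection variational-inequality expansion, the same splitting of $\tilde{\varepsilon}_{t-1}$ into $\varepsilon_{t-1}$ plus a conditionally mean-zero fluctuation with Young applied only to the $\langle\bm{y}_t-\bm{y}_{t-1},\cdot\rangle$ piece, the same decomposition $\bm{y}_t-\bm{y}=(\bm{y}_{t-1}-\bm{y})+(\bm{y}_t-\bm{y}_{t-1})$ that exposes the two surviving correction inner products, and the same use of $\eta_y\ell\le 1/4$ to absorb the residual quadratic terms into the $\pm\tfrac14$ coefficients. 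Your closing observation --- that inner products against $\bm{y}_{t-1}-\bm{y}$ vanish under conditional expectation while those against $\bm{y}_t-\bm{y}$ do not, which dictates exactly where Young's inequality can and cannot be spent --- is precisely the delicate bookkeeping the paper handles in the same way.
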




\begin{lemma}
\label{lemma: sogda descent}
For Stochastic OGDA (Algorithm~\ref{alg:ogda}), under same assumptions as in Theorem~\ref{thm:ncc_sogda}, the following statement holds for the genserated sequence $\{\bm{x}_t\}, \{\bm{y}_t\}$ during algorithm proceeding:
\begin{align*} 
    \mathbb{E}[\Phi_{1/2\ell}(\bm{x}_t)]  
    & \leq  \mathbb{E}[\Phi_{1/2\ell}({\bm{x}}_{t-1})] + 2\eta_x \ell \mathbb{E}\left(\Phi(\bm{x}_{t-1}) - f(\bm{x}_{t-1},\bm{y}_{t-1})\right) - \frac{\eta_x}{8} \mathbb{E} \|\nabla \Phi_{1/2\ell}(\bm{x}_{t-1})\|^2\\
    &\quad+3 \ell \eta_x^2(G^2+\sigma^2) + \frac{\eta_x }{2}\mathbb{E} \| \nabla_x f(\bm{x}_{t-1},\bm{y}_{t-1}) - \nabla_x f(\bm{x}_{t-2},\bm{y}_{t-2})\|^2.  
\end{align*}
 \begin{proof}
 Let $\hat{\bm{x}}_{t-1} = \arg\min_{\bm{x}\in \mathbb{R}^d} \Phi(\bm{x})+\ell \|\bm{x}-\bm{x}_{t-1}\|^2$. Notice that:
\begin{align*}
   \mathbb{E}[\Phi_{1/2\ell}(\bm{x}_t)] &\le \mathbb{E}[\Phi_{1/2\ell}(\hat{\bm{x}}_{t-1})]+\ell\mathbb{E}\|\hat{\bm{x}}_{t-1}-\bm{x}_{t}\|^2\\
    &\leq \mathbb{E}[\Phi_{1/2\ell}(\hat{\bm{x}}_{t-1})]\\
    &\quad +\ell ( \mathbb{E}\|\bm{x}_{t-1}-\hat{\bm{x}}_{t-1} \|^2 + 2\eta_x\langle 2\nabla_x f(\bm{x}_{t-1},\bm{y}_{t-1}) - \nabla_x f(\bm{x}_{t-2},\bm{y}_{t-2}), \bm{x}_{t-1}-\hat{\bm{x}}_{t-1} \rangle  \\
    &\quad +3 \eta_x^2 (G^2+\sigma^2) )
\end{align*}
According to smoothness of $f(\cdot,\bm{y})$, we have:
\begin{align*}
    \langle \hat{\bm{x}}_{t-1} - {\bm{x}}_{t-1}, \nabla_x f(\bm{x}_{t-1},\bm{y}_{t-1})  \rangle &\leq f(\hat{\bm{x}}_{t-1},\bm{y}_{t-1}) - f(\bm{x}_{t-1},\bm{y}_{t-1}) + \frac{\ell}{2}\|\hat{\bm{x}}_{t-1}-\bm{x}_{t-1}\|^2 \\
    & \leq \Phi(\bm{x}_{t-1}) - f(\bm{x}_{t-1},\bm{y}_{t-1}) - \frac{\ell}{2}\|\hat{\bm{x}}_{t-1}-\bm{x}_{t-1}\|^2.
\end{align*}
So we have
\begin{align*}
     \mathbb{E}[\Phi_{1/2\ell}(\bm{x}_t)] &\le  \mathbb{E}[\Phi_{1/2\ell}(\hat{\bm{x}}_{t-1})]+\ell \mathbb{E}\|\hat{\bm{x}}_{t-1}-\bm{x}_{t}\|^2\\
    &\leq  \mathbb{E}[\Phi_{1/2\ell}(\hat{\bm{x}}_{t-1})]+\ell  \mathbb{E}\|\bm{x}_{t-1}-\hat{\bm{x}}_{t-1} \|^2 \\
    &\quad+ 2\eta_x \ell  \mathbb{E}\left(\Phi(\bm{x}_{t-1}) - f(\bm{x}_{t-1},\bm{y}_{t-1}) - \frac{\ell}{2} \mathbb{E}\|\hat{\bm{x}}_{t-1}-\bm{x}_{t-1}\|^2\right)+3 \ell \eta_x^2(G^2+\sigma^2) \\
    &\quad+ \eta_x \ell \left(\frac{1}{2\ell}  \mathbb{E}\| \nabla_x f(\bm{x}_{t-1},\bm{y}_{t-1}) - \nabla_x f(\bm{x}_{t-2},\bm{y}_{t-2})\|^2 +\frac{\ell}{2} \mathbb{E} \| \bm{x}_{t-1}-\hat{\bm{x}}_{t-1} \|^2   \right )\\
    & \leq  \mathbb{E}[\Phi_{1/2\ell}({\bm{x}}_{t-1})] + 2\eta_x \ell \mathbb{E}\left(\Phi(\bm{x}_{t-1}) - f(\bm{x}_{t-1},\bm{y}_{t-1})\right) - \frac{\eta_x\ell^2}{2} \mathbb{E} \|\hat{\bm{x}}_{t-1}-\bm{x}_{t-1}\|^2 \\
    &\quad+3 \ell \eta_x^2(G^2+\sigma^2) + \frac{\eta_x }{2}\mathbb{E} \| \nabla_x f(\bm{x}_{t-1},\bm{y}_{t-1}) - \nabla_x f(\bm{x}_{t-2},\bm{y}_{t-2})\|^2.  \\
\end{align*}

\end{proof}

\end{lemma}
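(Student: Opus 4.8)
The plan is to transfer the deterministic descent estimate of Lemma~\ref{lemma:descent} to the stochastic iterates by keeping the gradient-noise vectors $\vdelta^x_i := \vg_{x,i} - \nabla_x f(\bm{x}_i,\bm{y}_i)$ explicit and collapsing them only at the end via Assumption~\ref{asm:2}. Let $\hat{\bm{x}}_{t-1} := \arg\min_{\bm{x}}\{\Phi(\bm{x}) + \ell\|\bm{x}-\bm{x}_{t-1}\|^2\}$, so that $\Phi(\hat{\bm{x}}_{t-1}) + \ell\|\hat{\bm{x}}_{t-1}-\bm{x}_{t-1}\|^2 = \Phi_{1/2\ell}(\bm{x}_{t-1})$ and, by Lemma~\ref{lemma: moreau}, $\|\hat{\bm{x}}_{t-1}-\bm{x}_{t-1}\| = \tfrac{1}{2\ell}\|\nabla\Phi_{1/2\ell}(\bm{x}_{t-1})\|$. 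Since $\hat{\bm{x}}_{t-1}$ is a feasible point for the minimization defining $\Phi_{1/2\ell}(\bm{x}_t)$, we have $\Phi_{1/2\ell}(\bm{x}_t) \le \Phi(\hat{\bm{x}}_{t-1}) + \ell\|\hat{\bm{x}}_{t-1}-\bm{x}_t\|^2$; expanding the square through the stochastic OGDA step $\bm{x}_{t-1}-\bm{x}_t = \eta_x(2\vg_{x,t-1}-\vg_{x,t-2})$ turns the right-hand side into $\Phi_{1/2\ell}(\bm{x}_{t-1}) + 2\ell\eta_x\langle 2\vg_{x,t-1}-\vg_{x,t-2},\,\hat{\bm{x}}_{t-1}-\bm{x}_{t-1}\rangle + \ell\eta_x^2\|2\vg_{x,t-1}-\vg_{x,t-2}\|^2$.

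The quadratic term is controlled, in expectation, by $O(\ell\eta_x^2(G^2+\sigma^2))$ using $\|\nabla_x f\|\le G$ and $\mathbb{E}\|\vdelta^x_i\|^2\le\sigma^2$; this produces the $3\ell\eta_x^2(G^2+\sigma^2)$ contribution. For the cross term, decompose $2\vg_{x,t-1}-\vg_{x,t-2} = \nabla_x f(\bm{x}_{t-1},\bm{y}_{t-1}) + \bigl(\nabla_x f(\bm{x}_{t-1},\bm{y}_{t-1}) - \nabla_x f(\bm{x}_{t-2},\bm{y}_{t-2})\bigr) + \bigl(2\vdelta^x_{t-1}-\vdelta^x_{t-2}\bigr)$. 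The inner product with the noise block vanishes after taking expectations, because $\hat{\bm{x}}_{t-1}$ is a deterministic function of $\bm{x}_{t-1}$ and $\vg_{x,t-2}$ is the gradient reused from iteration $t-1$, so both are measurable with respect to the $\sigma$-algebra against which $\vg_{x,t-1}$ is conditionally unbiased. For the $\nabla_x f(\bm{x}_{t-1},\bm{y}_{t-1})$ block, $\ell$-smoothness of $f(\cdot,\bm{y}_{t-1})$ together with the optimality of $\hat{\bm{x}}_{t-1}$ (which gives $f(\hat{\bm{x}}_{t-1},\bm{y}_{t-1})\le\Phi(\hat{\bm{x}}_{t-1})\le\Phi(\bm{x}_{t-1})-\ell\|\hat{\bm{x}}_{t-1}-\bm{x}_{t-1}\|^2$) yields $\langle\nabla_x f(\bm{x}_{t-1},\bm{y}_{t-1}),\hat{\bm{x}}_{t-1}-\bm{x}_{t-1}\rangle \le \Phi(\bm{x}_{t-1})-f(\bm{x}_{t-1},\bm{y}_{t-1})-\tfrac{\ell}{2}\|\hat{\bm{x}}_{t-1}-\bm{x}_{t-1}\|^2$; the gradient-difference block is split by Young's inequality into $\tfrac{\eta_x}{2}\|\nabla_x f(\bm{x}_{t-1},\bm{y}_{t-1})-\nabla_x f(\bm{x}_{t-2},\bm{y}_{t-2})\|^2$ plus a small multiple of $\|\hat{\bm{x}}_{t-1}-\bm{x}_{t-1}\|^2$.

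Assembling the pieces, the $\|\hat{\bm{x}}_{t-1}-\bm{x}_{t-1}\|^2$ contributions (a negative one from the smoothness step, partially offset by the one from the Young split, with the Young weight chosen so the net stays negative) convert via the Moreau identity into the $-\tfrac{\eta_x}{8}\|\nabla\Phi_{1/2\ell}(\bm{x}_{t-1})\|^2$ descent term, the smoothness step leaves $2\eta_x\ell\,(\Phi(\bm{x}_{t-1})-f(\bm{x}_{t-1},\bm{y}_{t-1}))$, and taking total expectation yields the stated inequality. Up to this arithmetic, the argument is exactly Lemma~\ref{lemma:descent} with $G^2$ replaced by $G^2+\sigma^2$ and with expectations inserted; the only genuinely nonroutine point is the measurability bookkeeping in the second paragraph that makes the noise cross-terms drop, together with the choice of the Young weights so that the coefficients land on $-\tfrac{\eta_x}{8}$, $2\eta_x\ell$, and $3\ell\eta_x^2(G^2+\sigma^2)$; the step-size conditions $\eta_x,\eta_y\le\tfrac{1}{4\ell}$ from Theorem~\ref{thm:ncc_sogda} leave enough slack that none of these splits is tight.
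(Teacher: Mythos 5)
Your overall scaffold matches the paper's proof: expand $\Phi_{1/2\ell}(\bm{x}_t) \le \Phi(\hat{\bm{x}}_{t-1})+\ell\|\hat{\bm{x}}_{t-1}-\bm{x}_t\|^2$ through the stochastic step, use smoothness of $f(\cdot,\bm{y}_{t-1})$ together with $\Phi(\hat{\bm{x}}_{t-1})\le\Phi(\bm{x}_{t-1})-\ell\|\hat{\bm{x}}_{t-1}-\bm{x}_{t-1}\|^2$ for the fresh-gradient block, Young's inequality for the gradient-difference block, the Moreau identity for the $\|\hat{\bm{x}}_{t-1}-\bm{x}_{t-1}\|$ term, and bound the quadratic remainder by $G$-Lipschitzness plus the variance bound. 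So far this is exactly Lemma~\ref{lemma:descent} with $G^2\to G^2+\sigma^2$.

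The gap is in the step you single out as ``the only genuinely nonroutine point.'' You kill the cross term $\mathbb{E}\langle \hat{\bm{x}}_{t-1}-\bm{x}_{t-1},\, 2\vdelta^x_{t-1}-\vdelta^x_{t-2}\rangle$ on the grounds that $\hat{\bm{x}}_{t-1}-\bm{x}_{t-1}$ and $\vg_{x,t-2}$ are measurable w.r.t.\ the $\sigma$-algebra $\mathcal{F}$ against which $\vg_{x,t-1}$ is conditionally unbiased. That measurability only kills the $\vdelta^x_{t-1}$ piece: $\mathbb{E}\langle\hat{\bm{x}}_{t-1}-\bm{x}_{t-1},\vdelta^x_{t-1}\rangle=0$ by the tower property. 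It does \emph{not} kill the $\vdelta^x_{t-2}$ piece — on the contrary, if $\vg_{x,t-2}$ is ``reused from iteration $t-1$'' then $\vdelta^x_{t-2}$ is itself $\mathcal{F}$-measurable, and $\bm{x}_{t-1}=\bm{x}_{t-2}-\eta_x(2\vg_{x,t-2}-\vg_{x,t-3})$ depends on $\vdelta^x_{t-2}$, so $\hat{\bm{x}}_{t-1}-\bm{x}_{t-1}$ and $\vdelta^x_{t-2}$ are correlated random variables living on the same $\sigma$-algebra and conditioning gives you nothing. Your stated reasoning therefore does not entail the conclusion you draw from it.

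Two legitimate ways to close the gap. (i) Use the sampling convention the paper actually adopts for NC-C stochastic OGDA (see the proof of Lemma~\ref{lemma: sogda dual}): both $\nabla f(\bm{x}_{t-1},\bm{y}_{t-1};\xi_{t-1})$ and $\nabla f(\bm{x}_{t-2},\bm{y}_{t-2};\xi_{t-1})$ are re-evaluated with the fresh sample $\xi_{t-1}$; then $\vdelta^x_{t-2}$ also has zero conditional mean given the past iterates and the cross term genuinely vanishes. (ii) Keep the reuse convention and bound the leftover explicitly: write $\hat{\bm{x}}_{t-1}-\bm{x}_{t-1}=\psi(\bm{x}_{t-1})$ with $\psi$ $2$-Lipschitz (the prox map is nonexpansive), compare against $\psi(\tilde{\bm{x}}_{t-1})$ where $\tilde{\bm{x}}_{t-1}$ is the iterate with the noiseless $\nabla_x f(\bm{x}_{t-2},\bm{y}_{t-2})$ in place of $\vg_{x,t-2}$, and obtain $\bigl|\mathbb{E}\langle\hat{\bm{x}}_{t-1}-\bm{x}_{t-1},\vdelta^x_{t-2}\rangle\bigr|\le 4\eta_x\sigma^2$, which after multiplication by $2\eta_x\ell$ lands on the same $O(\ell\eta_x^2\sigma^2)$ order as the variance term already present. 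Either fix works, but without one of them the claimed $3\ell\eta_x^2(G^2+\sigma^2)$ error does not follow from the argument you gave.
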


\begin{lemma}
For Stochastic OGDA (Algorithm~\ref{alg:ogda}), under Theorem~\ref{thm:ncc_sogda}'s assumptions, the following statement holds for the generated sequence $\{\bm{y}_t\}$ during algorithm proceeding:
  \begin{align*}
     \sum_{t=0}^{T} \mathbb{E}\|\bm{y}_t - \bm{y}_{t-1}\|^2 &\leq 4\eta_y^2 \ell   \sum_{t=0}^{T}\left(\sum_{j=0}^{T}\left(2\eta_y^2\ell^2\right)^{j}\right)\mathbb{E}[\Phi(\bm{x}_t)-f(\bm{x}_t,\bm{y}_t)] \\
     &\quad+ \sum_{t=0}^{T} \left(\sum_{j=0}^{T}\left(2\eta_y^2\ell^2\right)^{j}\right) \left(6\eta_x^2 \eta_y^2 \ell^2 (G^2+\sigma^2) + {6\eta_y^2\sigma^2} \right)\\ 
 \end{align*}
\end{lemma}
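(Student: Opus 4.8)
The plan is to mimic the deterministic derivation in Lemma~\ref{lemma: ogda y gap}, but now carry the stochastic gradient noise through the recursion explicitly. First I would expand $\bm{y}_t - \bm{y}_{t-1} = \eta_y(2\vg_{y,t-1} - \vg_{y,t-2})$ using the stochastic OGDA update, split the square via Young's inequality into $2\eta_y^2\|\vg_{y,t-1}\|^2 + 2\eta_y^2\|\vg_{y,t-1} - \vg_{y,t-2}\|^2$, and take expectation. For the first term I would write $\vg_{y,t-1} = \nabla_y f(\bm{x}_{t-1},\bm{y}_{t-1}) + \vdelta^y_{t-1}$, use $\E\|\vdelta^y_{t-1}\|^2 \le \sigma^2$ and the fact (used repeatedly in the NC-C section, cf.\ Lemma~\ref{lemma: ogda y gap}) that $\|\nabla_y f(\bm{x}_{t-1},\bm{y}_{t-1})\|^2 \le 2\ell(\Phi(\bm{x}_{t-1}) - f(\bm{x}_{t-1},\bm{y}_{t-1}))$, which holds since $f(\bm{x}_{t-1},\cdot)$ is concave with its max attained at $\bm{y}^*(\bm{x}_{t-1})$ and $\nabla_y f$ is $\ell$-Lipschitz. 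For the difference term I would apply $\ell$-smoothness of $f$ after adding and subtracting the true gradients, bounding $\E\|\vg_{y,t-1} - \vg_{y,t-2}\|^2$ by $3\ell^2(\|\bm{x}_{t-1} - \bm{x}_{t-2}\|^2 + \|\bm{y}_{t-1} - \bm{y}_{t-2}\|^2) + 6\sigma^2$, and then replace $\|\bm{x}_{t-1} - \bm{x}_{t-2}\|^2 = \eta_x^2\|2\vg_{x,t-2} - \vg_{x,t-3}\|^2 \le \eta_x^2 \cdot O(G^2 + \sigma^2)$ using the $G$-Lipschitz assumption on $f$ in $\bm{x}$ together with the bounded-variance assumption.

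This yields a one-step recursion of the form
\begin{equation*}
\E\|\bm{y}_t - \bm{y}_{t-1}\|^2 \le 2\eta_y^2\ell^2\,\E\|\bm{y}_{t-1} - \bm{y}_{t-2}\|^2 + 4\eta_y^2\ell\,\E[\Phi(\bm{x}_{t-1}) - f(\bm{x}_{t-1},\bm{y}_{t-1})] + 6\eta_x^2\eta_y^2\ell^2(G^2+\sigma^2) + 6\eta_y^2\sigma^2 ,
\end{equation*}
exactly the stochastic analogue of the display in the proof of Lemma~\ref{lemma: ogda y gap} with the extra additive $6\eta_y^2\sigma^2$ term coming from the noise in $\vg_{y,t-1}$. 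The next step is to unroll this geometric recursion: since $\bm{y}_0 = \bm{y}_{-1}$ (the initialization convention $\bm{y}_{-1/2} = \bm{y}_0$ extended backwards, as set up in Theorem~\ref{thm:ncc_sogda}), the homogeneous term vanishes and
\begin{equation*}
\E\|\bm{y}_t - \bm{y}_{t-1}\|^2 \le \sum_{j=1}^{t}(2\eta_y^2\ell^2)^{t-j}\left(4\eta_y^2\ell\,\E[\Phi(\bm{x}_{j-1}) - f(\bm{x}_{j-1},\bm{y}_{j-1})] + 6\eta_x^2\eta_y^2\ell^2(G^2+\sigma^2) + 6\eta_y^2\sigma^2\right).
\end{equation*}
Finally I would sum over $t = 0, \dots, T$, interchange the order of summation, and bound each inner geometric sum $\sum_{j} (2\eta_y^2\ell^2)^{j}$ by $\sum_{j=0}^{T}(2\eta_y^2\ell^2)^j$ (which is $\le 2$ under the stepsize choice $\eta_y \le 1/(4\ell)$ from Theorem~\ref{thm:ncc_sogda}, but I would keep it in the stated unevaluated form for the lemma), arriving at the claimed bound.

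The main obstacle is the careful bookkeeping of the noise terms: the stochastic OGDA update uses the \emph{same} minibatch sample at a given iteration for both the $\vg_{x,t-1}$ and $\vg_{x,t-2}$ (resp.\ $\vg_{y}$) evaluations appearing in that step's correction, so one has to be attentive about which expectations can be taken conditionally and which cross terms survive — though for this particular lemma only second-moment bounds are needed (no inner products with $\bm{y}_t - \bm{y}$ are retained), so the conditional-unbiasedness subtleties that complicate Lemma~\ref{lemma: sogda dual} do not arise here, and a crude $\E\|\vdelta^y\|^2 \le \sigma^2$ together with Young's inequality suffices throughout. A secondary point of care is tracking the numerical constants so that the final coefficients ($4\eta_y^2\ell$ on the primal-gap term, $6\eta_x^2\eta_y^2\ell^2(G^2+\sigma^2)$ and $6\eta_y^2\sigma^2$ on the constants) match the statement exactly; this is routine but must be done consistently with the Young's-inequality splits chosen above.
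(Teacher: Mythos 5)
Your plan follows the same route as the paper: expand $\bm{y}_t-\bm{y}_{t-1}$ via the stochastic OGDA update (with the same sample $\xi_{t-1}$ used for both gradient evaluations), split by Young, bound $\|\nabla_y f\|^2\le 2\ell(\Phi-f)$ via concavity and $\ell$-smoothness, use smoothness on the gradient difference, bound $\|\bm{x}_{t-1}-\bm{x}_{t-2}\|^2$ by the Lipschitz constant and variance, unroll the geometric recursion starting from $\bm{y}_0=\bm{y}_{-1}$, and sum. That is exactly what the paper does, and your observation that only second-moment bounds are needed here (so the shared-sample issue is benign) is also in the spirit of the paper's proof.

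One small internal inconsistency worth fixing: you propose bounding $\E\|\vg_{y,t-1}-\vg_{y,t-2}\|^2\le 3\ell^2(\|\bm{x}_{t-1}-\bm{x}_{t-2}\|^2+\|\bm{y}_{t-1}-\bm{y}_{t-2}\|^2)+6\sigma^2$ via a three-term Young split, but after multiplying by $2\eta_y^2$ this yields a recursion coefficient $6\eta_y^2\ell^2$ on $\E\|\bm{y}_{t-1}-\bm{y}_{t-2}\|^2$ and $12\eta_y^2\sigma^2$ of noise — not the $2\eta_y^2\ell^2$ and $6\eta_y^2\sigma^2$ in the one-step recursion you then write down. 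The paper instead keeps the mean term tight by using the bias–variance identity (zero cross term by unbiasedness) so that the gradient-difference term enters with a bare $\ell^2$ rather than $3\ell^2$. With that adjustment the constants close up and the stated lemma follows.
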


 \begin{proof}
 According to updating rule of stochastic OGDA:
 \begin{align*}
     &\mathbb{E}\|\bm{y}_t - \bm{y}_{t-1}\|^2  \\
     & \leq \eta_y^2 \mathbb{E}\|2\nabla_y f(\bm{x}_{t-1},\bm{y}_{t-1};\xi_{t-1})-f(\bm{x}_{t-2},\bm{y}_{t-2};\xi_{t-1})\|^2\\
     & \leq 2\eta_y^2\mathbb{E}\|\nabla_y f(\bm{x}_{t-1},\bm{y}_{t-1}) \|^2+2\eta_y^2 {\sigma^2}+2\eta_y^2\mathbb{E}\|\nabla_y f(\bm{x}_{t-1},\bm{y}_{t-1} )-f(\bm{x}_{t-2},\bm{y}_{t-2} )\|^2+4\eta_y^2 {\sigma^2} \\
     & \leq 4\eta_y^2 \ell \mathbb{E}[\Phi(\bm{x}_{t-1})-f(\bm{x}_{t-1},\bm{y}_{t-1})] + 2\eta_y^2\ell^2(\mathbb{E}\|\bm{x}_{t-1}-\bm{x}_{t-2}\|^2+\mathbb{E}\|\bm{y}_{t-1}-\bm{y}_{t-2}\|^2) + {6\eta_y^2\sigma^2}\\
     &\leq 4\eta_y^2 \ell \mathbb{E}[\Phi(\bm{x}_{t-1})-f(\bm{x}_{t-1},\bm{y}_{t-1})] + 2\eta_y^2\ell^2(3\eta_x^2(G^2+\sigma^2)+\mathbb{E}\|\bm{y}_{t-1}-\bm{y}_{t-2}\|^2) + {6\eta_y^2\sigma^2}.
 \end{align*}
 Unrolling the recursion yields:
 \begin{align*}
      \mathbb{E}\|\bm{y}_t - \bm{y}_{t-1}\|^2 &\leq 4\eta_y^2 \ell   \sum_{j=0}^{t-1}\left(2\eta_y^2\ell^2\right)^{t-1-j}\mathbb{E}[\Phi(x_j)-f(x_j,y_j)] \\
      &\quad+ \sum_{j=0}^{t-1}\left(2\eta_y^2\ell^2\right)^{t-1-j} \left(6\eta_x^2 \eta_y^2 \ell^2 (G^2+\sigma^2) + {6\eta_y^2\sigma^2} \right) + (2\eta_y^2\ell^2)\mathbb{E}\|\bm{y}_0 - \bm{y}_{-1}\|^2.
 \end{align*}
 Since $\bm{y}_0 = \bm{y}_{-1}$, we can conclude the proof via summing $t$ from $0$ to $T-1$:
  \begin{align*}
     \sum_{t=0}^{T} \mathbb{E}\|\bm{y}_t - \bm{y}_{t-1}\|^2 &\leq 4\eta_y^2 \ell   \sum_{t=0}^{T}\left(\sum_{j=0}^{T}\left(2\eta_y^2\ell^2\right)^{j}\right)\mathbb{E}[\Phi(x_t)-f(x_t,y_t)] \\
     &\quad + \sum_{t=0}^{T} \left(\sum_{j=0}^{T}\left(2\eta_y^2\ell^2\right)^{j}\right) \left(6\eta_x^2 \eta_y^2 \ell^2 (G^2+\sigma^2) + {6\eta_y^2\sigma^2} \right).
 \end{align*}
 \end{proof}

\begin{lemma}
For Stochastic OGDA (Algorithm~\ref{alg:ogda}), under assumptions made in Theorem~\ref{thm:ncc_sogda}, the following statement holds for the generated sequence $\{\bm{y}_t\}$ during algorithm proceeding and $\forall s \leq t$:
 \begin{align*}
     &\mathbb{E}[\Phi(\bm{x}_t) - f(\bm{x}_t,\bm{y}_t)] \leq 2\eta_x (t-s)G\sqrt{G^2+\sigma^2}  +\frac{\eta_y \eta_x^2 \ell}{2} (G^2+\sigma^2)+ 3\eta_y \sigma^2\\
     &\quad+ \frac{1}{2\eta_y} \left(\mathbb{E}\|\bm{y}_{t-1}-\bm{y}^*(\bm{x}_s) \|^2-\mathbb{E}\|\bm{y}_t - \bm{y}^*(\bm{x}_s)\|^2 -   \frac{1}{4}  \mathbb{E}\|\bm{y}_t - \bm{y}_{t-1}\|^2 +  \frac{1}{4} \mathbb{E}\|\bm{y}_{t-1} - \bm{y}_{t-2}\|^2 \right)  \\
    &\quad +   \langle \nabla_y f(\bm{x}_t,\bm{y}_t) - \nabla_y f(\bm{x}_{t-1},\bm{y}_{t-1}), \bm{y}_t - \bm{y}^*(\bm{x}_s) \rangle \\
    &\quad-    \langle \nabla_y f(\bm{x}_{t-1},\bm{y}_{t-1}) - \nabla_y f(\bm{x}_{t-2},\bm{y}_{t-2}), \bm{y}_{t-1} - \bm{y}^*(\bm{x}_s) \rangle.
 \end{align*}
 
 \begin{proof}
 Observe that:
  \begin{align*}
     \mathbb{E}[\Phi(\bm{x}_t) - f(\bm{x}_t,\bm{y}_t)] &\leq \mathbb{E}[f(\bm{x}_{t },\bm{y}^*(\bm{x}_{t }))-f(\bm{x}_{s},\bm{y}^*(\bm{x}_{t }))]+\mathbb{E}[f(\bm{x}_{s},\bm{y}^*(\bm{x}_{s})) - f(\bm{x}_{t },\bm{y}^*(\bm{x}_{s}))]\\
     & \quad + \mathbb{E}[f(\bm{x}_{t },\bm{y}^*(\bm{x}_{s})) - f(\bm{x}_{t },\bm{y}_{t })]\\
     &\leq 2(t-s)\eta_x G\sqrt{G^2+\sigma^2} -\mathbb{E}\langle \bm{y}_{t } - \bm{y}, \nabla_y f(\bm{x}_{t },\bm{y}_{t })\rangle.
 \end{align*}

 Plugging in Lemma~\ref{lemma: sogda dual} will conclude the proof. 
 \end{proof}
 
\end{lemma}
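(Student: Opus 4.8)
The plan is to run essentially the same argument as in the deterministic Lemma~\ref{lemma: ogda gap}, but to replace the deterministic dual-descent estimate by its stochastic counterpart (Lemma~\ref{lemma: sogda dual}) and to control the primal displacement $\bm{x}_t-\bm{x}_s$ \emph{in expectation} via the second-moment bound $\mathbb{E}\|\vg_{x,j}\|^2\le G^2+\sigma^2$, which follows from the $G$-Lipschitzness of $\nabla_x f$ in $\bm{x}$ together with Assumption~\ref{asm:2}.

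First I would insert, for any $s\le t$, the telescoping upper bound
\[
\Phi(\bm{x}_t)-f(\bm{x}_t,\bm{y}_t)\le \big[f(\bm{x}_t,\bm{y}^*(\bm{x}_t))-f(\bm{x}_s,\bm{y}^*(\bm{x}_t))\big]+\big[f(\bm{x}_s,\bm{y}^*(\bm{x}_s))-f(\bm{x}_t,\bm{y}^*(\bm{x}_s))\big]+\big[f(\bm{x}_t,\bm{y}^*(\bm{x}_s))-f(\bm{x}_t,\bm{y}_t)\big],
\]
which holds because the three brackets sum to $\Phi(\bm{x}_t)-f(\bm{x}_t,\bm{y}_t)+\Phi(\bm{x}_s)-f(\bm{x}_s,\bm{y}^*(\bm{x}_t))$ and $\Phi(\bm{x}_s)=\max_{\bm{y}}f(\bm{x}_s,\bm{y})\ge f(\bm{x}_s,\bm{y}^*(\bm{x}_t))$. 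The first two brackets are each at most $G\|\bm{x}_t-\bm{x}_s\|$ by $G$-Lipschitzness of $f(\cdot,\bm{y})$, and the third is at most $-\langle \bm{y}_t-\bm{y}^*(\bm{x}_s),\nabla_y f(\bm{x}_t,\bm{y}_t)\rangle$ by concavity of $f(\bm{x}_t,\cdot)$. Hence $\Phi(\bm{x}_t)-f(\bm{x}_t,\bm{y}_t)\le 2G\|\bm{x}_t-\bm{x}_s\|-\langle \bm{y}_t-\bm{y}^*(\bm{x}_s),\nabla_y f(\bm{x}_t,\bm{y}_t)\rangle$.

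Next I would bound $\mathbb{E}\|\bm{x}_t-\bm{x}_s\|$ using the $\bm{w}$-telescoped form of the OGDA primal update from the proof sketch, namely $\bm{x}_t-\bm{x}_s=-\eta_x\sum_{j=s}^{t-1}\vg_{x,j}-\eta_x(\vg_{x,t-1}-\vg_{x,s-1})$; the triangle inequality and Jensen then give $\mathbb{E}\|\bm{x}_t-\bm{x}_s\|\le \eta_x(t-s+2)\sqrt{G^2+\sigma^2}$, whose leading part multiplied by $2G$ produces the $2\eta_x(t-s)G\sqrt{G^2+\sigma^2}$ term, the $O(1)$ residual being absorbed into the other error terms. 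For the remaining inner product I would apply Lemma~\ref{lemma: sogda dual} with $\bm{y}=\bm{y}^*(\bm{x}_s)$, isolate $2\eta_y\langle \bm{y}_t-\bm{y}^*(\bm{x}_s),\nabla_y f(\bm{x}_t,\bm{y}_t)\rangle$, and divide by $2\eta_y$; this yields the $\tfrac{1}{2\eta_y}$-scaled squared-distance telescope, the $-\tfrac{1}{4}\|\bm{y}_t-\bm{y}_{t-1}\|^2+\tfrac{1}{4}\|\bm{y}_{t-1}-\bm{y}_{t-2}\|^2$ terms, the variance contribution $3\eta_y\sigma^2$ coming from $6\eta_y^2\sigma^2/(2\eta_y)$, the error term of order $\eta_y\eta_x^2\ell(G^2+\sigma^2)$, and the two ``momentum'' inner-product terms, which are deliberately kept un-bounded. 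Combining this with the bound from the first step gives the stated inequality.

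The only genuinely new obstacle relative to the deterministic case is the stochastic bookkeeping. Because $\|\bm{x}_t-\bm{x}_s\|$ is random it cannot be bounded pointwise by a uniform gradient norm; one must pass through the $\bm{w}$-telescoping, which represents $\bm{x}_t-\bm{x}_s$ as an affine combination of individual minibatch gradients with second moments bounded by $G^2+\sigma^2$, and only then take expectations with Jensen. One must also be careful that Lemma~\ref{lemma: sogda dual} is stated after conditioning on $\xi_{t-1}$, so its use here is licensed only via the tower property; and the two momentum cross-terms must be left intact so that, when this lemma is later summed over $t$ with $s$ held fixed on each block (as in Lemma~\ref{lemma: ogda avg primal gap}), they telescope rather than accumulate.
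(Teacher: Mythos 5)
Your proposal matches the paper's argument: the same three-term telescoping decomposition of $\Phi(\bm{x}_t)-f(\bm{x}_t,\bm{y}_t)$, the same $G$-Lipschitz/concavity bounds on the brackets, the same second-moment bound $\mathbb{E}\|\vg_{x,j}\|^2\le G^2+\sigma^2$ (via Jensen) for the primal displacement, and the same final substitution of Lemma~\ref{lemma: sogda dual} with $\bm{y}=\bm{y}^*(\bm{x}_s)$. The only differences are cosmetic refinements on your side -- you track the $(t-s+2)$ endpoint correction in the $\bm{w}$-telescope explicitly and flag the tower-property step -- neither of which changes the route.
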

 
\begin{lemma}\label{lemma: sogda primal gap}
For Stochastic OGDA (Algorithm~\ref{alg:ogda}), under Theorem~\ref{thm:ncc_sogda}'s assumptions, the following statement holds for the generated sequence $\{\bm{x}_t\}, \{\bm{y}_t\}$ during algorithm proceeding:
 \begin{align*}
    \frac{1}{T+1}\sum_{t=0}^{T}  \mathbb{E}[\Phi(\bm{x}_t) - f(\bm{x}_t,\bm{y}_t)]  &\leq \frac{1}{B} \left(2\eta_x B^2 G\sqrt{G^2+\sigma^2} + \frac{5 D^2}{8\eta_y} +2  (3\eta_x G\sqrt{G^2+\sigma^2}+ D)D\right) \\
    &\quad +\frac{\eta_y \eta_x^2 \ell}{2} (G^2+\sigma^2)+ 3\eta_y \sigma^2.
 \end{align*}
 
 \begin{proof}
  Let $S = (T+1)/B$, and we choose $s = jB$, $j = 0,...,S$. Then by summing over $t$ on the both side of Lemma~\ref{lemma: sogda primal gap} we have:
 \begin{align*}
     &\frac{1}{T+1}\sum_{t=0}^{T-1} \mathbb{E}[\Phi(\bm{x}_t) - f(\bm{x}_t,\bm{y}_t)] = \frac{1}{T}\sum_{j=0}^{S} \sum_{t=jB}^{(j+1)B-1} \mathbb{E}[\Phi(\bm{x}_t) - f(\bm{x}_t,\bm{y}_t)]\\
     & \leq \frac{1}{T}\sum_{j=0}^{S} \left[2\eta_x B^2 G\sqrt{G^2+\sigma^2} + \frac{1}{2\eta_y}\left(\|y_{jB}-y^*(x_{jB}) \|^2 + \frac{1}{4} \|y_{jB} - y_{jB-1}\|^2 \right)\right] \\
     & \quad+\frac{\eta_y \eta_x^2 \ell}{2} (G^2+\sigma^2)+ 3\eta_y \sigma^2 \\
     & \quad+ \frac{1}{T}\sum_{j=0}^{S} \left[  - \langle \nabla_y f(\bm{x}_{(j+1)B-1},y_{(j+1)B-1}) - \nabla_y f(\bm{x}_{(j+1)B-2},y_{(j+1)B-2}), \bm{y}_{(j+1)B-1} - y^*(\bm{x}_{jB}) \rangle\right. \\
     & \qquad \qquad \left.+ \langle \nabla_y f(\bm{x}_{jB-1},\bm{y}_{jB-1}) - \nabla_y f(\bm{x}_{jB-2},\bm{y}_{jB-2}), \bm{y}_{jB-1} - y^*(\bm{x}_{jB} \rangle  \right] \\
      & \leq \frac{1}{T}\sum_{j=0}^{S} \left[2\eta_x B^2 G\sqrt{G^2+\sigma^2} + \frac{5 D^2}{8\eta_y} +2  (3\eta_x G\sqrt{G^2+\sigma^2}+ D)D\right]\\
      &\quad+\frac{\eta_y \eta_x^2 \ell}{2} (G^2+\sigma^2)+ 3\eta_y \sigma^2 \\ 
      & \leq \frac{1}{B} \left[2\eta_x B^2 G\sqrt{G^2+\sigma^2} + \frac{5 D^2}{8\eta_y}+2  (3\eta_x G\sqrt{G^2+\sigma^2}+ D)D\right]\\
      &\quad +\frac{\eta_y \eta_x^2 \ell}{2} (G^2+\sigma^2)+ 3\eta_y \sigma^2.
 \end{align*}
 \end{proof}
\end{lemma}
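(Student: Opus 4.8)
The plan is to turn the preceding per‑iteration primal‑gap estimate (the bound on $\mathbb{E}[\Phi(\bm{x}_t)-f(\bm{x}_t,\bm{y}_t)]$ valid for every pair $s\le t$) into an averaged statement by summing it against a block decomposition of the horizon, chosen so that the ``drift'' term $2\eta_x(t-s)G\sqrt{G^2+\sigma^2}$ stays small while the remaining pieces telescope. Concretely, assume for simplicity $B\mid(T+1)$, set $S=(T+1)/B$, and partition $\{0,1,\dots,T\}$ into the consecutive blocks $\{jB,\dots,(j+1)B-1\}$ for $j=0,\dots,S-1$. Inside the $j$‑th block, instantiate the per‑iteration bound with the frozen anchor $s=jB$; then $t-s\le B-1$, so the drift term is at most $2\eta_x B\,G\sqrt{G^2+\sigma^2}$ for each of the $B$ indices, contributing $2\eta_x B^2 G\sqrt{G^2+\sigma^2}$ when summed over the block.

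With the anchor held fixed across the block, the quantity $\tfrac{1}{2\eta_y}\big(\|\bm{y}_{t-1}-\bm{y}^*(\bm{x}_s)\|^2-\|\bm{y}_t-\bm{y}^*(\bm{x}_s)\|^2\big)$ telescopes in $t$ and, since $\mathcal{Y}$ has diameter $D$, is bounded by $\tfrac{D^2}{2\eta_y}$; similarly the term $\tfrac{1}{8\eta_y}\big(\|\bm{y}_{t-1}-\bm{y}_{t-2}\|^2-\|\bm{y}_t-\bm{y}_{t-1}\|^2\big)$ telescopes to at most $\tfrac{D^2}{8\eta_y}$, for a combined $\tfrac{5D^2}{8\eta_y}$ per block. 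Writing $a_t=\langle\nabla_y f(\bm{x}_t,\bm{y}_t)-\nabla_y f(\bm{x}_{t-1},\bm{y}_{t-1}),\,\bm{y}_t-\bm{y}^*(\bm{x}_s)\rangle$, the two inner‑product terms in the per‑iteration bound sum over the block to the telescoping difference $a_{(j+1)B-1}-a_{jB-1}$, which is at most $2\max_t|a_t|$; Cauchy--Schwarz, $\ell$‑smoothness, $G$‑Lipschitzness in $\bm{x}$, the bounded‑variance assumption (so $\mathbb{E}\|\bm{x}_t-\bm{x}_{t-1}\|=O(\eta_x\sqrt{G^2+\sigma^2})$), and $\bm{y}_t\in\mathcal{Y}$ give $\mathbb{E}|a_t|\le(3\eta_x G\sqrt{G^2+\sigma^2}+D)D$, hence a $2(3\eta_x G\sqrt{G^2+\sigma^2}+D)D$ contribution per block. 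The remaining additive constants $\tfrac{\eta_y\eta_x^2\ell}{2}(G^2+\sigma^2)+3\eta_y\sigma^2$ are incurred once per iteration.

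To assemble, sum the three per‑block contributions over $j=0,\dots,S-1$ and the per‑iteration constants over $t=0,\dots,T$, then divide by $T+1=SB$. Each per‑block term is multiplied by $S$ and divided by $SB$, i.e.\ gets an overall factor $1/B$, while the per‑iteration constants are unchanged; this yields exactly
\[
\frac1B\Big(2\eta_x B^2 G\sqrt{G^2+\sigma^2}+\frac{5D^2}{8\eta_y}+2(3\eta_x G\sqrt{G^2+\sigma^2}+D)D\Big)+\frac{\eta_y\eta_x^2\ell}{2}(G^2+\sigma^2)+3\eta_y\sigma^2,
\]
which is the claimed bound.

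The main obstacle is the in‑expectation control of the telescoped boundary terms $a_{jB-1}$: one must verify that $\mathbb{E}\|\nabla_y f(\bm{x}_t,\bm{y}_t)-\nabla_y f(\bm{x}_{t-1},\bm{y}_{t-1})\|$ stays $O(\ell\eta_x\sqrt{G^2+\sigma^2}+\ell D)$ by combining smoothness, Lipschitzness, the variance bound, and the fact that the dual iterates lie in $\mathcal{Y}$, and to keep the block‑boundary indices $jB-1,\,jB-2$ (negative when $j=0$) consistent with the conventions $\bm{x}_{-1/2}=\bm{x}_0$, $\bm{y}_{-1}=\bm{y}_0$. Everything else reduces to telescoping plus the diameter estimate.
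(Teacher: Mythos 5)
Your proposal takes essentially the same route as the paper's proof: the same block decomposition with a frozen anchor $s=jB$ per block, the same telescoping of the $\tfrac{1}{2\eta_y}\|\bm y_t-\bm y^*(\bm x_{jB})\|^2$ and $\tfrac{1}{8\eta_y}\|\bm y_t-\bm y_{t-1}\|^2$ terms against the diameter $D$, the same reduction of the two inner‑product terms to telescoping boundary differences $a_{(j+1)B-1}-a_{jB-1}$ bounded by $2(3\eta_x G\sqrt{G^2+\sigma^2}+D)D$, and the same per‑iteration constants $\tfrac{\eta_y\eta_x^2\ell}{2}(G^2+\sigma^2)+3\eta_y\sigma^2$, after which dividing by $T+1=SB$ converts per‑block contributions into $1/B$ factors. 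This matches the paper's argument line for line (including the same handling of the boundary conventions $\bm y_{-1}=\bm y_0$), so there is no gap to report.
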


\subsubsection{Proof of Theorem~\ref{thm:ncc_sogda} for OGDA}\label{app: sogda}
 In this section we are going to provide the proof for Theorem~\ref{thm:ncc_sogda}, the convergence rate of OGDA in stochastic setting. We first introduce the formal version of Theorem~\ref{thm:ncc_sogda} here:
\begin{theorem}[OGDA Stochastic (Theorem~\ref{thm:ncc_sogda} restated)] \label{Theorem: SOGDA formal}
  Under Assumption~\ref{asm:2} and \ref{asm:3}, if we choose $\eta_x=O(\min\{\frac{\epsilon^2}{\ell (G^2+\sigma^2)},\frac{\epsilon^4}{D^2\ell^3  G\sqrt{G^2+\sigma^2} },\frac{\epsilon^6}{D^2\ell^3\sigma^2 G\sqrt{G^2+\sigma^2} }\})$, $\eta_y = O(\min\{\frac{1}{4\ell},\frac{\epsilon^2}{\ell\sigma^2}\})$, then Stochastic OGDA (Algorithm~\ref{alg:ogda}) guarantees to find $\epsilon$-stationary point, i.e., $\frac{1}{T+1} \sum_{t=0}^{T} \mathbb{E} \|\nabla \Phi_{1/2\ell}(\bm{x}_{t})\|^2 \leq \epsilon^2$, with the gradient complexity bounded by:
 \begin{align*}
     O\left(\frac{D^2\ell^3  G\sqrt{G^2+\sigma^2}}{\epsilon^6} \max\left\{ 1, \frac{  \sigma^2   }{\epsilon^2} \right\}\right).
 \end{align*} 
 
 \begin{proof}
 
Similar to the proof in deterministic setting, first according to Lemma~\ref{lemma: sogda descent} we have:

 \begin{align*}
   \frac{1}{T+1} \sum_{t=0}^{T} \mathbb{E} \|\nabla \Phi_{1/2\ell}(\bm{x}_{t})\|^2  
    &\leq \frac{\Phi_{1/2\ell}(\bm{x}_{0})-\Phi_{1/2\ell}(\bm{x}_{T+1})}{\eta_x (T+1)} \\
    &\quad+16 \ell \frac{1}{T+1}\sum_{t=0}^{T }\left(\Phi(\bm{x}_{t}) - f(\bm{x}_{t},\bm{y}_{t})\right)  + 12\eta_x^2\ell^2 (G^2 + \sigma^2) + 24\ell \eta_x (G^2 + \sigma^2)  \\
    &\quad + 4\ell^2\frac{1}{T+1} \left(4\eta_y^2 \ell   \sum_{t=0}^{T+1}\left(\sum_{j=0}^{T}\left(2\eta_y^2\ell^2\right)^{j}\right)\mathbb{E}[\Phi(\bm{x}_t)-f(\bm{x}_t,\bm{y}_t)] \right. \\
    &\quad+ \left. \sum_{t=0}^{T} \left(\sum_{j=0}^{T}\left(2\eta_y^2\ell^2\right)^{j}\right) \left(6\eta_x^2 \eta_y^2 \ell^2 (G^2+\sigma^2) + 6\eta_y^2\sigma^2 \right)\right).
\end{align*}

Since we choose $\eta_y \ell \leq \frac{1}{4}$, it follows that:
 \begin{align*}
     \sum_{j=0}^{T }\left(2\eta_y^2\ell^2\right)^j \leq 2 .
 \end{align*}
 As a result, we can further simplify the bound as:
 \begin{align*}
   \frac{1}{T+1} \sum_{t=0}^{T} \mathbb{E} \|\nabla \Phi_{1/2\ell}(\bm{x}_{t })\|^2 
    & \leq \frac{\Phi_{1/2\ell}({x}_{0})-\Phi_{1/2\ell}(\bm{x}_{T+1})}{\eta_x (T+1)}\\
    &\quad +(16 \ell+ 32\eta_y^2\ell^3) \frac{1}{T+1}\sum_{t=0}^{T }\left(\Phi({x}_{t}) - f(\bm{x}_{t},\bm{y}_{t})   \right)  \\
    &  \quad  + 12\eta_x^2\ell^2 (G^2 + \sigma^2) + 24\ell \eta_x (G^2 + \sigma^2)+ 8\ell^2  \left(6\eta_x^2 \eta_y^2 \ell^2 (G^2+\sigma^2) + {6\eta_y^2\sigma^2} \right) .
 \end{align*}
 Plugging in Lemma~\ref{lemma: sogda primal gap} yields:
  \begin{align*}
   \frac{1}{T+1} \sum_{t=0}^{T} \mathbb{E} \|\nabla \Phi_{1/2\ell}(\bm{x}_{t})\|^2  &\leq \frac{\Phi_{1/2\ell}({x}_{0})-\Phi_{1/2\ell}(\bm{x}_{T+1})}{\eta_x (T+1)} \\
    &\quad  +(16 \ell+ 32\eta_y^2\ell^3)  \frac{1}{B} \left(2\eta_x B^2 G\sqrt{G^2+\sigma^2} + \frac{5 D^2}{8\eta_y} +2  (3\eta_x G\sqrt{G^2+\sigma^2}+ D)D\right)  \\
    &\quad+ (16 \ell+ 32\eta_y^2\ell^3) (\frac{\eta_y \eta_x^2 \ell}{2} (G^2+\sigma^2)+ 3\eta_y \sigma^2 )  \\
    & \quad  + 12\eta_x^2\ell^2 (G^2 + \sigma^2) + 24\ell \eta_x (G^2 + \sigma^2)+ 8\ell^2  \left(6\eta_x^2 \eta_y^2 \ell^2 (G^2+\sigma^2) + {6\eta_y^2\sigma^2} \right) .
 \end{align*}

 Choose $B = O(\frac{D}{\sqrt{\eta_x \eta_y G \sqrt{G^2+\sigma^2}}})$, $\eta_x = O(\min\{\frac{\epsilon^2}{\ell (G^2+\sigma^2)},\frac{\epsilon^4}{D^2\ell^3  G\sqrt{G^2+\sigma^2} },\frac{\epsilon^6}{D^2\ell^3\sigma^2 G\sqrt{G^2+\sigma^2} }\})$, $\eta_y = O(\min\{\frac{1}{4\ell},\frac{\epsilon^2}{\ell\sigma^2}\})$,  and then it is guaranteed that $\frac{1}{T+1} \sum_{t=0}^{T} \mathbb{E} \|\nabla \Phi_{1/2\ell}(\bm{x}_{t})\|^2 \leq \epsilon^2$ with the gradient complexity is bounded by
 \begin{align*}
     O\left(\frac{D^2\ell^3  G\sqrt{G^2+\sigma^2}}{\epsilon^6} \max\left\{ 1, \frac{  \sigma^2}{\epsilon^2} \right\}\right).
 \end{align*} 
 \end{proof}
 \end{theorem}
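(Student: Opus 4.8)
The plan is to run the standard Moreau-envelope descent scheme for nonsmooth nonconvex minimization, using $\Phi_{1/2\ell}$ as a smooth surrogate potential for the primal function $\Phi$. First I would establish a \emph{one-step primal descent}: writing $\hat{\bm{x}}_{t-1}=\arg\min_{\bm{x}}\Phi(\bm{x})+\ell\|\bm{x}-\bm{x}_{t-1}\|^2$, expanding $\Phi_{1/2\ell}(\bm{x}_t)\le\Phi(\hat{\bm{x}}_{t-1})+\ell\|\hat{\bm{x}}_{t-1}-\bm{x}_t\|^2$, plugging in the (stochastic) OGDA update $\bm{x}_t=\bm{x}_{t-1}-\eta_x(2\bm{g}_{x,t-1}-\bm{g}_{x,t-2})$, and using $\ell$-smoothness of $f(\cdot,\bm{y})$ together with $\|\hat{\bm{x}}_{t-1}-\bm{x}_{t-1}\|=\tfrac{1}{2\ell}\|\nabla\Phi_{1/2\ell}(\bm{x}_{t-1})\|$. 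In expectation this gives $\mathbb{E}[\Phi_{1/2\ell}(\bm{x}_t)]\le\mathbb{E}[\Phi_{1/2\ell}(\bm{x}_{t-1})]+2\eta_x\ell\,\mathbb{E}[\delta_{t-1}]-\tfrac{\eta_x}{8}\mathbb{E}\|\nabla\Phi_{1/2\ell}(\bm{x}_{t-1})\|^2+O(\eta_x^2\ell(G^2+\sigma^2))+\tfrac{\eta_x}{2}\mathbb{E}\|\nabla_x f(\bm{x}_{t-1},\bm{y}_{t-1})-\nabla_x f(\bm{x}_{t-2},\bm{y}_{t-2})\|^2$, where $\delta_t:=\Phi(\bm{x}_t)-f(\bm{x}_t,\bm{y}_t)$. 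So the average squared Moreau gradient is controlled once I bound two quantities: the gradient-difference term (which by smoothness reduces to $\eta_x^2G^2$ plus $\|\bm{y}_{t-1}-\bm{y}_{t-2}\|^2$) and the cumulative primal gap $\sum_t\delta_t$.

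\textbf{Dual-side bounds.} Next I would bound $\sum_t\|\bm{y}_t-\bm{y}_{t-1}\|^2$ in terms of $\sum_t\delta_t$: squaring the dual update and splitting yields a recursion $\|\bm{y}_t-\bm{y}_{t-1}\|^2\le 2\eta_y^2\ell^2\|\bm{y}_{t-1}-\bm{y}_{t-2}\|^2+4\eta_y^2\ell\,\delta_{t-1}+O(\eta_x^2\eta_y^2\ell^2(G^2+\sigma^2)+\eta_y^2\sigma^2)$, using $\|\nabla_y f(\bm{x},\bm{y})\|^2\le 2\ell(\Phi(\bm{x})-f(\bm{x},\bm{y}))$ and $\ell$-smoothness for the correction; with $\eta_y\ell\le\tfrac14$ the factor $2\eta_y^2\ell^2<1$, so unrolling (using $\bm{y}_0=\bm{y}_{-1}$) and $\sum_j(2\eta_y^2\ell^2)^j\le 2$ gives $\sum_t\|\bm{y}_t-\bm{y}_{t-1}\|^2\le O(\eta_y^2\ell)\sum_t\delta_t+O\big((\eta_x^2\eta_y^2\ell^2(G^2+\sigma^2)+\eta_y^2\sigma^2)T\big)$. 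The crux is bounding $\tfrac{1}{T+1}\sum_t\delta_t$ \emph{without} a bounded-gradient-in-$\bm{y}$ assumption. For this I would use the projection inequality for the dual step, $\langle\bm{y}-\bm{y}_t,\bm{y}_t-\bm{y}_{t-1}-\eta_y\nabla_y f(\bm{x}_t,\bm{y}_t)+\varepsilon_{t-1}\rangle\ge 0$ with $\varepsilon_{t-1}$ collecting the two correction differences, expand via the three-point identity, and bound the correction error by $\eta_y\ell(\|\bm{y}_t-\bm{y}_{t-1}\|^2+\|\bm{x}_{t-1}-\bm{x}_{t-2}\|^2+\|\bm{y}_{t-1}-\bm{y}_{t-2}\|^2)$ (in the stochastic case also absorb the bias $\tilde\varepsilon_{t-1}-\varepsilon_{t-1}$ by unbiasedness, producing $+6\eta_y^2\sigma^2$), which yields a dual descent lemma. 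I then decompose $\delta_t\le[f(\bm{x}_t,\bm{y}^*(\bm{x}_t))-f(\bm{x}_s,\bm{y}^*(\bm{x}_t))]+[f(\bm{x}_s,\bm{y}^*(\bm{x}_s))-f(\bm{x}_t,\bm{y}^*(\bm{x}_s))]+[f(\bm{x}_t,\bm{y}^*(\bm{x}_s))-f(\bm{x}_t,\bm{y}_t)]$; the first two brackets are $\le 2(t-s)\eta_x G\sqrt{G^2+\sigma^2}$ since $\bm{x}$ moves by at most $\eta_x$ times a gradient of norm $\le\sqrt{G^2+\sigma^2}$ per step and $f$ is $G$-Lipschitz in $\bm{x}$, and the third bracket is $\le-\langle\bm{y}_t-\bm{y}^*(\bm{x}_s),\nabla_y f(\bm{x}_t,\bm{y}_t)\rangle$ by concavity. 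Substituting the dual descent lemma turns this into a telescoping bound in $\|\bm{y}_t-\bm{y}^*(\bm{x}_s)\|^2$; choosing $s=jB$ over blocks of length $B$ and summing over $t$ in each block, the telescoping collapses, the boundary cross-terms are $O(GD)$ using $\mathrm{diam}(\mathcal{Y})=D$, and one gets $\tfrac{1}{T+1}\sum_t\delta_t\le\tfrac1B\big(2\eta_x B^2 G\sqrt{G^2+\sigma^2}+O(D^2)/\eta_y+O((\eta_x G\sqrt{G^2+\sigma^2}+D)D)\big)+O(\eta_y\eta_x^2\ell(G^2+\sigma^2)+\eta_y\sigma^2)$.

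\textbf{Assembly and stepsize tuning.} Summing the primal descent inequality over $t=0,\dots,T$, substituting the $\sum_t\|\bm{y}_t-\bm{y}_{t-1}\|^2$ bound (which reinjects $\sum_t\delta_t$ with coefficient $O(\ell+\eta_y^2\ell^3)$) and then the averaged-gap bound (legitimate because $\eta_y\ell\le\tfrac14$ makes all geometric sums $\le 2$, so these $\sum_t\delta_t$ contributions are absorbed into the left-hand side), I obtain $\tfrac{1}{T+1}\sum_t\mathbb{E}\|\nabla\Phi_{1/2\ell}(\bm{x}_t)\|^2\le\tfrac{\hat{\Delta}_\Phi}{\eta_x(T+1)}+(\text{residual terms in }\eta_x,\eta_y,B)$. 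I then balance $B=\Theta\!\big(D/\sqrt{\eta_x\eta_y G\sqrt{G^2+\sigma^2}}\big)$ (equalizing the $\eta_x B$ and $D^2/(\eta_y B)$ pieces) and pick $\eta_x=\Theta\!\big(\min\{\tfrac{\epsilon^2}{\ell(G^2+\sigma^2)},\tfrac{\epsilon^4}{D^2\ell^3 G\sqrt{G^2+\sigma^2}},\tfrac{\epsilon^6}{D^2\ell^3\sigma^2 G\sqrt{G^2+\sigma^2}}\}\big)$, $\eta_y=\Theta(\min\{\tfrac1{4\ell},\tfrac{\epsilon^2}{\ell\sigma^2}\})$ so each residual term is $\le\epsilon^2$; forcing $\tfrac{\hat{\Delta}_\Phi}{\eta_x(T+1)}\le\epsilon^2$ gives $T=O(\hat{\Delta}_\Phi/(\eta_x\epsilon^2))$, which after substituting $\eta_x$ is exactly the claimed $O\!\big(\tfrac{D^2\ell^3 G\sqrt{G^2+\sigma^2}\hat{\Delta}_\Phi}{\epsilon^6}\max\{1,\sigma^2/\epsilon^2\}\big)$; since each iteration spends $O(1)$ minibatch gradient evaluations, iteration and gradient complexity agree up to constants, and the EG variant follows from the identical three lemmas with the extragradient two-step update replacing the optimistic correction (the constants change, e.g.\ $\eta_y=1/(4\ell)$).

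\textbf{Main obstacle.} The hard part is controlling $\tfrac{1}{T+1}\sum_t\delta_t$ without assuming $\nabla_y f$ is bounded. The observation to exploit is the benign descent property of the optimistic/extragradient correction on concave $f(\bm{x}_t,\cdot)$: it lets me bound $f(\bm{x}_t,\bm{y}^*(\bm{x}_s))-f(\bm{x}_t,\bm{y}_t)$ \emph{directly} (rather than $f(\bm{x}_t,\bm{y}^*(\bm{x}_s))-f(\bm{x}_t,\bm{y}_{t+1})$ as for GDA), so no $\Phi(\bm{x}_{t+1})-\Phi(\bm{x}_t)$ telescoping is needed and the $\hat{\Delta}_0$ term present in the GDA bound is eliminated. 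Keeping this consistent requires careful bookkeeping of the triple coupling of $\sum_t\delta_t$ — it appears in the primal descent, inside the $\|\bm{y}_t-\bm{y}_{t-1}\|^2$ recursion, and as the left-hand side of the block-telescoped estimate — and in the stochastic case one must thread the variance terms through all three lemmas and verify that $\eta_y=O(\epsilon^2/(\ell\sigma^2))$ is precisely what kills the residual $\eta_y\sigma^2$ bias.
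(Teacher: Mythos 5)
Your proposal matches the paper's proof essentially step for step: the Moreau-envelope primal descent is the paper's Lemma~\ref{lemma: sogda descent}, the recursion bounding $\sum_t\|\bm{y}_t-\bm{y}_{t-1}\|^2$ by $\sum_t\delta_t$ is the paper's iterates-gap lemma, the projection/three-point inequality with block index $s=jB$ yielding the telescoped bound on $\tfrac{1}{T+1}\sum_t\delta_t$ is the paper's Lemmas~\ref{lemma: sogda dual} and~\ref{lemma: sogda primal gap}, and the choice $B=\Theta(D/\sqrt{\eta_x\eta_y G\sqrt{G^2+\sigma^2}})$ together with the stated stepsizes reproduces the paper's tuning. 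You also correctly identify the key observation the paper exploits — that the optimistic correction allows bounding $f(\bm{x}_t,\bm{y}^*(\bm{x}_s))-f(\bm{x}_t,\bm{y}_t)$ directly rather than shifting to $\bm{y}_{t+1}$ as in GDA — so no genuinely different route is taken here.
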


\subsection{Proof of convergence of EG}\label{app:NCC_Upper_EG}

In this section, the convergence of EG in NC-C setting has been established. Before presenting the complete proofs, here we briefly discuss the proof sketch. 

\paragraph{Proof sketch}~Similar to OGDA, we have the following  lemma on $\Phi_{1/2\ell}$:
\begin{align*} 
  \frac{1}{T+1}\sum_{t=0}^T \| \nabla \Phi_{1/2\ell}({\bm{x}}_{t-\frac{1}{2}})\|^2    &\leq \Phi_{1/2\ell}({\bm{x}}_{-\frac{1}{2}})-\Phi_{1/2\ell}(\bm{x}_{T+\frac{1}{2}}) \\
  &\quad +  O( \ell+  \eta_y^2\ell^3) \frac{1}{T+1}\sum_{t=0}^T \delta_{t-\frac{1}{2}}  +O(\ell \eta_x^2 G^2).
\end{align*}
Now we need to examine $\delta_{t-\frac{1}{2}}$. To bound this term, we have the following recursion:
  \begin{align*}
     \Phi(\bm{x}_{t+\frac{1}{2}}) - f(\bm{x}_{t+\frac{1}{2}},\bm{y}_{t+\frac{1}{2}}) &\leq O((t-s)\eta_x G^2)  \\
     &\quad + \frac{1}{2\eta_y} \left(     \|\bm{y}_{t} - \bm{y}^*(\bm{x}_{s})\|^2-\|\bm{y}_{t+1}-\bm{y}^*(\bm{x}_{s}) \|^2     + \frac{\eta_x^2G^2}{2} \right),
 \end{align*}
 which is derived by the descent property of EG on concave function.
 Similar to OGDA, here we also obtain neat recursion, which will yield our desired complexity bound.

In the following, we present the key lemmas, and complete convergence proof of EG. First let us introduce some useful lemmas for the deterministic setting.

\subsubsection{Useful Lemmas}
 \begin{proposition}[\cite{chen2017accelerated}, Proposition 4.2]\label{prop: 3 point theorem}
 If $\bm{p} =\mathcal{P}_{\mathcal{Y}}(\bm{r} - \bm{u})$, $\bm{q} = \mathcal{P}_{\mathcal{Y}}(\bm{r} - \bm{v})$, and
 \begin{align*}
     \|\bm{u} - \bm{v}\|^2 \leq C_1^2 \|\bm{p} - \bm{r}\|^2 + C_2^2,
 \end{align*}
 then for any $\bm{z} \in \mathbb{R}^d$ we have:
 \begin{align*}
     \langle \bm{v}, \bm{p} - \bm{z} \rangle \leq \|\bm{r} - \bm{z}\|^2 - \|\bm{q} - \bm{z}\|^2 - \left(\frac{1}{2} - \frac{C_1^2}{2}\right)\|\bm{r} - \bm{p}\|^2 + \frac{C_2^2}{2}.
 \end{align*}
 \end{proposition}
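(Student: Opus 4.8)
The plan is to run the standard variational‐inequality argument for projected extragradient‑type updates, reading $\bm{p}$ as the extrapolation point and $\bm{q}$ as the final update from the base point $\bm{r}$. The only ingredients are the obtuse‑angle characterization of the Euclidean projection and the polarization identity $2\langle \bm{a}-\bm{b},\,\bm{b}-\bm{c}\rangle = \|\bm{a}-\bm{c}\|^2-\|\bm{a}-\bm{b}\|^2-\|\bm{b}-\bm{c}\|^2$. First I would record the two first‑order optimality conditions: since $\bm{p}=\mathcal{P}_{\mathcal{Y}}(\bm{r}-\bm{u})$, for every $\bm{w}\in\mathcal{Y}$ we have $\langle \bm{r}-\bm{u}-\bm{p},\,\bm{w}-\bm{p}\rangle\le 0$, i.e. $\langle \bm{u},\,\bm{p}-\bm{w}\rangle\le \langle \bm{r}-\bm{p},\,\bm{p}-\bm{w}\rangle$; and since $\bm{q}=\mathcal{P}_{\mathcal{Y}}(\bm{r}-\bm{v})$, for every $\bm{w}\in\mathcal{Y}$ we have $\langle \bm{v},\,\bm{q}-\bm{w}\rangle\le \langle \bm{r}-\bm{q},\,\bm{q}-\bm{w}\rangle$. (To apply these, $\bm{z}$ must lie in $\mathcal{Y}$, matching the cited source \cite{chen2017accelerated}; the ``$\bm{z}\in\mathbb{R}^d$'' in the statement should be read as $\bm{z}\in\mathcal{Y}$.)

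Next I would split $\langle \bm{v},\,\bm{p}-\bm{z}\rangle = \langle \bm{v},\,\bm{q}-\bm{z}\rangle + \langle \bm{u},\,\bm{p}-\bm{q}\rangle + \langle \bm{v}-\bm{u},\,\bm{p}-\bm{q}\rangle$ and bound the three pieces separately. For the first, I apply the optimality condition for $\bm{q}$ with $\bm{w}=\bm{z}$ and then the polarization identity with $(\bm{a},\bm{b},\bm{c})=(\bm{r},\bm{q},\bm{z})$, obtaining $\tfrac12\|\bm{r}-\bm{z}\|^2-\tfrac12\|\bm{q}-\bm{z}\|^2-\tfrac12\|\bm{r}-\bm{q}\|^2$. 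For the second, I apply the optimality condition for $\bm{p}$ with $\bm{w}=\bm{q}$ and then polarization with $(\bm{a},\bm{b},\bm{c})=(\bm{r},\bm{p},\bm{q})$, obtaining $\tfrac12\|\bm{r}-\bm{q}\|^2-\tfrac12\|\bm{p}-\bm{q}\|^2-\tfrac12\|\bm{r}-\bm{p}\|^2$. For the third, Cauchy--Schwarz followed by Young's inequality gives $\tfrac12\|\bm{v}-\bm{u}\|^2+\tfrac12\|\bm{p}-\bm{q}\|^2$.

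Finally I would add the three bounds: the $\pm\tfrac12\|\bm{r}-\bm{q}\|^2$ and $\mp\tfrac12\|\bm{p}-\bm{q}\|^2$ terms cancel, leaving $\langle \bm{v},\,\bm{p}-\bm{z}\rangle \le \tfrac12\|\bm{r}-\bm{z}\|^2-\tfrac12\|\bm{q}-\bm{z}\|^2-\tfrac12\|\bm{r}-\bm{p}\|^2+\tfrac12\|\bm{v}-\bm{u}\|^2$; then substituting the hypothesis $\|\bm{u}-\bm{v}\|^2\le C_1^2\|\bm{p}-\bm{r}\|^2+C_2^2$ absorbs $\tfrac{C_1^2}{2}\|\bm{r}-\bm{p}\|^2$ into the quadratic coefficient and yields the stated $-\big(\tfrac12-\tfrac{C_1^2}{2}\big)\|\bm{r}-\bm{p}\|^2+\tfrac{C_2^2}{2}$ bound (the displayed inequality in the statement omits the $\tfrac12$ factors on $\|\bm{r}-\bm{z}\|^2$ and $\|\bm{q}-\bm{z}\|^2$, which I treat as a transcription normalization). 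No step is genuinely difficult; the only thing to watch is the bookkeeping — choosing the right $(\bm{a},\bm{b},\bm{c})$ in each polarization step and verifying that all four cross terms $\|\bm{r}-\bm{q}\|^2$ and $\|\bm{p}-\bm{q}\|^2$ cancel exactly, and keeping the inequality directions from the two projection conditions consistent.
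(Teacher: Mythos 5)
Your derivation is correct; the three-way split $\langle \bm{v},\bm{p}-\bm{z}\rangle = \langle \bm{v},\bm{q}-\bm{z}\rangle + \langle \bm{u},\bm{p}-\bm{q}\rangle + \langle \bm{v}-\bm{u},\bm{p}-\bm{q}\rangle$, the two projection optimality conditions, the two polarization identities, the Young step, and the cancellations of $\|\bm{r}-\bm{q}\|^2$ and $\|\bm{p}-\bm{q}\|^2$ are all carried out properly, landing on
\begin{align*}
\langle \bm{v},\bm{p}-\bm{z}\rangle \;\le\; \tfrac12\|\bm{r}-\bm{z}\|^2 - \tfrac12\|\bm{q}-\bm{z}\|^2 - \Bigl(\tfrac12-\tfrac{C_1^2}{2}\Bigr)\|\bm{r}-\bm{p}\|^2 + \tfrac{C_2^2}{2}.
\end{align*}
Note that the paper does not supply its own proof of this proposition; it is imported as a black box from the cited source, so there is no internal argument to compare against.

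Both of the caveats you flag are genuine, not cosmetic. The projection inequality $\langle (\bm{r}-\bm{v})-\bm{q},\,\bm{z}-\bm{q}\rangle\le 0$ indeed requires $\bm{z}\in\mathcal{Y}$; the paper only ever invokes the proposition with $\bm{z}=\bm{y}\in\mathcal{Y}$ (Lemmas~\ref{lemma: EG dual} and~\ref{lemma: SEG dual}), so this statement-level slip has no downstream effect. More substantively, your coefficient $\tfrac12$ on $\|\bm{r}-\bm{z}\|^2$ and $\|\bm{q}-\bm{z}\|^2$ is correct, and the printed statement, which has coefficient $1$ there while keeping $\tfrac12$ on the $\|\bm{r}-\bm{p}\|^2$ and $C_2^2$ terms, is not a harmless renormalization but is actually false: take $\mathcal{Y}=\mathbb{R}^d$ and $\bm{u}=\bm{v}$, so $\bm{p}=\bm{q}=\bm{r}-\bm{v}$ and one may take $C_1=C_2=0$; the printed bound then reduces to $\tfrac12\|\bm{v}\|^2\le\langle\bm{r}-\bm{z},\,\bm{v}\rangle$, which fails for $\bm{z}=\bm{r}+\bm{v}$. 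The most plausible cause is that the source states the three-point lemma with a Bregman term $V(\cdot,\cdot)$, and the specialization $V(\bm{x},\bm{y})=\tfrac12\|\bm{x}-\bm{y}\|^2$ dropped the $\tfrac12$ on those two terms only. The same missing $\tfrac12$ is then carried verbatim into the conclusions of Lemmas~\ref{lemma: EG dual} and~\ref{lemma: SEG dual}; since it shifts only absolute constants, none of the paper's $O(\cdot)$ rates change, but the correct form of the proposition is the one you derived.
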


\begin{lemma} \label{lemma: EG dual}
For EG (Algorithm~\ref{alg:eg}), under Theorem~\ref{thm:ncc_sogda}'s assumptions, the following statement holds for the generated sequence $\{\bm{y}_t\},\{\bm{y}_{t+\frac{1}{2}}\}$ during algorithm proceeding and any $\bm{y} \in \mathcal{Y}$:
\begin{align*}
    \|\bm{y}_{t+1}-\bm{y} \|^2  &\leq  \|\bm{y}_{t} - \bm{y}\|^2  + 2\eta_y \langle \bm{y}_{t+\frac{1}{2}} - \bm{y}, \nabla_y f(\bm{x}_{t+\frac{1}{2}},\bm{y}_{t+\frac{1}{2}})\rangle- \left(\frac{1}{2} - \frac{\eta_y^2\ell^2}{2}\right)\|\bm{y}_t- \bm{y}_{t+\frac{1}{2}}\|^2 \\
    &\quad+ \frac{\eta_x^2\eta_y^2\ell^2 G^2}{2}.
\end{align*} 
\begin{proof}
According to Proposition~\ref{prop: 3 point theorem}, we set $\bm{r} = \bm{y}_t$, $\bm{q} = \bm{y}_{t+1}$, $\bm{p} = \bm{y}_{t+\frac{1}{2}}$ and $\bm{v} = -\eta_y\nabla_y f(\bm{x}_{t+\frac{1}{2}},\bm{y}_{t+\frac{1}{2}})$, $\bm{u} = -\eta_y\nabla_y f(\bm{x}_{t},\bm{y}_{t})$. We can verify that:
\begin{align*}
    \|\bm{u} - \bm{v} \|^2 &= \eta_y^2\|\nabla_y f(\bm{x}_{t+\frac{1}{2}},\bm{y}_{t+\frac{1}{2}}) - \nabla_y f(\bm{x}_{t},\bm{y}_{t})\|^2 \\
    &\leq  \eta_y^2(\ell^2\|\bm{y}_{t+\frac{1}{2}}-\bm{y}_{t}\|^2 + \ell^2\|\bm{x}_{t+\frac{1}{2}}-\bm{x}_{t}\|^2)\\
    &\leq  \eta_y^2(\ell^2\|\bm{p}-\bm{r}\|^2 + \ell^2\eta_x^2G^2),
\end{align*}
so if we set $C_1^2 =  \eta_y^2\ell^2$ and $C_2^2 = \eta_x^2\eta_y^2\ell^2 G^2   $, we have the following inequality holding for any $\bm{y} \in \mathcal{Y}$:
\begin{align*}
    \langle -\eta_y\nabla_y f(\bm{x}_{t+\frac{1}{2}},\bm{y}_{t+\frac{1}{2}}), \bm{y}_{t+\frac{1}{2}}-\bm{y} \rangle &\leq \|\bm{y}_t - \bm{y}\|^2 - \|\bm{y}_{t+1} - \bm{y}\|^2 - \left(\frac{1}{2} - \frac{\eta_y^2\ell^2}{2}\right)\|\bm{y}_t- \bm{y}_{t+\frac{1}{2}}\|^2 \\
    &\quad+ \frac{\eta_x^2\eta_y^2\ell^2 G^2}{2}.
\end{align*} 
\end{proof}
\end{lemma}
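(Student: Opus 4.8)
The plan is to read one outer iteration of the dual EG recursion as a pair of projected steps taken from the common anchor $\bm{y}_t$, and then to apply the three-point inequality of Proposition~\ref{prop: 3 point theorem} with that anchor. Concretely, the extrapolation step is $\bm{y}_{t+1/2} = \mathcal{P}_{\mathcal{Y}}\bigl(\bm{y}_t - \bm{u}\bigr)$ with $\bm{u} = -\eta_y \nabla_y f(\bm{x}_t,\bm{y}_t)$, and the update step is $\bm{y}_{t+1} = \mathcal{P}_{\mathcal{Y}}\bigl(\bm{y}_t - \bm{v}\bigr)$ with $\bm{v} = -\eta_y \nabla_y f(\bm{x}_{t+1/2},\bm{y}_{t+1/2})$; so in the notation of the Proposition I would take $\bm{r} = \bm{y}_t$, $\bm{p} = \bm{y}_{t+1/2}$, $\bm{q} = \bm{y}_{t+1}$, and $\bm{z} = \bm{y}$.

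The one nonroutine ingredient is checking the hypothesis $\|\bm{u}-\bm{v}\|^2 \le C_1^2\,\|\bm{p}-\bm{r}\|^2 + C_2^2$. Here $\|\bm{u}-\bm{v}\|^2 = \eta_y^2\,\|\nabla_y f(\bm{x}_{t+1/2},\bm{y}_{t+1/2}) - \nabla_y f(\bm{x}_t,\bm{y}_t)\|^2$, which by the $\ell$-smoothness of $f$ is at most $\eta_y^2\ell^2\bigl(\|\bm{y}_{t+1/2}-\bm{y}_t\|^2 + \|\bm{x}_{t+1/2}-\bm{x}_t\|^2\bigr)$. The first term is exactly $\eta_y^2\ell^2\,\|\bm{p}-\bm{r}\|^2$; for the second I would use that the primal extrapolation satisfies $\|\bm{x}_{t+1/2}-\bm{x}_t\| = \eta_x\,\|\nabla_x f(\bm{x}_t,\bm{y}_t)\| \le \eta_x G$ by the $G$-Lipschitz-in-$\bm{x}$ part of Assumption~\ref{asm:3}. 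This identifies $C_1^2 = \eta_y^2\ell^2$ and $C_2^2 = \eta_x^2\eta_y^2\ell^2 G^2$. Plugging these into Proposition~\ref{prop: 3 point theorem} with $\bm{z} = \bm{y}$ gives
\[
\langle -\eta_y\nabla_y f(\bm{x}_{t+1/2},\bm{y}_{t+1/2}),\ \bm{y}_{t+1/2}-\bm{y}\rangle \le \|\bm{y}_t-\bm{y}\|^2 - \|\bm{y}_{t+1}-\bm{y}\|^2 - \Bigl(\tfrac12 - \tfrac{\eta_y^2\ell^2}{2}\Bigr)\|\bm{y}_t-\bm{y}_{t+1/2}\|^2 + \tfrac{\eta_x^2\eta_y^2\ell^2 G^2}{2},
\]
and moving the inner-product term to the right-hand side and solving for $\|\bm{y}_{t+1}-\bm{y}\|^2$ yields the claimed descent inequality.

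I expect the difficulty here to be conceptual rather than computational: the key move is to see that EG's two-step structure fits the $(\bm{r},\bm{p},\bm{q})$ template of the three-point lemma with the extrapolated iterate playing the role of $\bm{p}$, and --- crucially --- that the gradient discrepancy $\bm{u}-\bm{v}$ is not merely small but is controlled by a term proportional to $\|\bm{p}-\bm{r}\|^2$ (so it can be absorbed against the $-\|\bm{r}-\bm{p}\|^2$ term the lemma produces) plus a harmless $O(\eta_x^2)$ drift term coming from the primal step. After that, the smoothness expansion and the constant bookkeeping are mechanical. One small point worth recording along the way is that the stepsize regime in Theorem~\ref{thm:ncc_sogda} forces $\eta_y\ell \le 1$, so the coefficient $\tfrac12 - \tfrac{\eta_y^2\ell^2}{2}$ stays nonnegative and the negative $\|\bm{y}_t-\bm{y}_{t+1/2}\|^2$ term is genuinely usable downstream when this lemma is chained with the primal descent estimate.
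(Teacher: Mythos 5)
Your proposal is correct and follows the paper's proof exactly: the same identification $\bm{r}=\bm{y}_t$, $\bm{p}=\bm{y}_{t+1/2}$, $\bm{q}=\bm{y}_{t+1}$, $\bm{u}=-\eta_y\nabla_y f(\bm{x}_t,\bm{y}_t)$, $\bm{v}=-\eta_y\nabla_y f(\bm{x}_{t+1/2},\bm{y}_{t+1/2})$, the same $\ell$-smoothness plus $G$-Lipschitz-in-$\bm{x}$ verification of the hypothesis, the same $C_1^2=\eta_y^2\ell^2$, $C_2^2=\eta_x^2\eta_y^2\ell^2 G^2$, and the same invocation of Proposition~\ref{prop: 3 point theorem}. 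No material difference from the paper's argument.
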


\begin{lemma}
\label{lemma: EG descent}
For EG (Algorithm~\ref{alg:eg}), under Theorem~\ref{thm:ncc_sogda}'s assumptions, the following statement holds for the generated sequence $\{\bm{x}_t\},\{\bm{y}_t\}, \{\bm{x}_{t+\frac{1}{2}}\},\{\bm{y}_{t+\frac{1}{2}}\}$ during algorithm proceeding:
\begin{align*} 
    \Phi_{1/2\ell}(\bm{x}_{t+\frac{1}{2}})  
    &  \leq \Phi_{1/2\ell}({\bm{x}}_{t-\frac{1}{2}}) + 2\eta_x \ell \left(\Phi(\bm{x}_{t-\frac{1}{2}}) - f(\bm{x}_{t-\frac{1}{2}},\bm{y}_{t-\frac{1}{2}})\right) - \frac{\eta_x}{8}\| \nabla \Phi_{1/2\ell}({\bm{x}}_{t-\frac{1}{2}})\|^2+3 \ell \eta_x^2 G^2 \\
    & + \frac{\eta_x }{2} \| \nabla_x f(\bm{x}_{t },\bm{y}_{t }) - \nabla_x f(\bm{x}_{t-1},\bm{y}_{t-1})\|^2.  \\
\end{align*}
  \begin{proof}
 Let $\hat{\bm{x}}_{t-\frac{1}{2}} = \arg\min_{\bm{x}\in \mathbb{R}^d} \Phi(\bm{x})+\ell \|\bm{x}-\bm{x}_{t-\frac{1}{2}}\|^2$. Notice that:
\begin{align*}
    \Phi_{1/2\ell}(\bm{x}_{t+\frac{1}{2}})  
    &\leq \Phi_{1/2\ell}(\hat{\bm{x}}_{t-\frac{1}{2}})+\ell\|\hat{\bm{x}}_{t-\frac{1}{2}}-\bm{x}_{t+\frac{1}{2}}\|^2\\
    &\leq \Phi_{1/2\ell}(\hat{\bm{x}}_{t-\frac{1}{2}}) +\ell \|\hat{\bm{x}}_{t-\frac{1}{2}}-\bm{x}_{t+\frac{1}{2}} \|^2 \\
    &\quad+\ell ( 2\eta_x\langle  \nabla_x f(\bm{x}_{t-\frac{1}{2}},\bm{y}_{t-\frac{1}{2}})+(\nabla_x f(\bm{x}_{t },\bm{y}_{t }-\nabla_x f(\bm{x}_{t-1},\bm{y}_{t-1})  , \hat{\bm{x}}_{t-\frac{1}{2}}-\bm{x}_{t-\frac{1}{2}} \rangle   + \eta_x^2 G^2 )\\
    &= \Phi_{1/2\ell}(\hat{\bm{x}}_{t-\frac{1}{2}})+\ell (\|\hat{\bm{x}}_{t-\frac{1}{2}}-\bm{x}_{t+\frac{1}{2}} \|^2 + 2\eta_x\langle  \nabla_x f(\bm{x}_{t-\frac{1}{2}},\bm{y}_{t-\frac{1}{2}})  , \hat{\bm{x}}_{t-\frac{1}{2}}-\bm{x}_{t-\frac{1}{2}}\rangle) \\
    & \quad + 2\ell\eta_x\langle  \nabla_x f(\bm{x}_{t },\bm{y}_{t })-\nabla_x f(\bm{x}_{t-1},\bm{y}_{t-1})  , \hat{\bm{x}}_{t-\frac{1}{2}}-\bm{x}_{t-\frac{1}{2}} \rangle   + \eta_x^2\ell G^2 
\end{align*}
According to smoothness of $f(\cdot,\bm{y})$, we have:
\begin{align*}
    \langle \hat{\bm{x}}_{t-\frac{1}{2}}-\bm{x}_{t-\frac{1}{2}} , \nabla_x f(\bm{x}_{t-\frac{1}{2}},\bm{y}_{t-\frac{1}{2}}) \rangle &\leq f(\hat{\bm{x}}_{t-\frac{1}{2}},\bm{y}_{t-\frac{1}{2}}) - f(\bm{x}_{t-\frac{1}{2}},\bm{y}_{t-\frac{1}{2}}) + \frac{\ell}{2}\|\hat{\bm{x}}_{t-\frac{1}{2}}-\bm{x}_{t-\frac{1}{2}}\|^2 \\
    & \leq \Phi(\bm{x}_{t-\frac{1}{2}}) - f(\bm{x}_{t-\frac{1}{2}},\bm{y}_{t-\frac{1}{2}}) - \frac{\ell}{2}\|\hat{\bm{x}}_{t-\frac{1}{2}}-\bm{x}_{t-\frac{1}{2}}\|^2.
\end{align*}
So we have
\begin{align*}
     \Phi_{1/2\ell}(\bm{x}_{t+\frac{1}{2}}) 
    &\leq \Phi_{1/2\ell}( {\bm{x}}_{t-\frac{1}{2}})+\ell \|\bm{x}_{t-\frac{1}{2}}-\hat{\bm{x}}_{t-\frac{1}{2}} \|^2 \\
    &\quad+ 2\eta_x \ell \left(\Phi(\bm{x}_{t-\frac{1}{2}}) - f(\bm{x}_{t-\frac{1}{2}},\bm{y}_{t-\frac{1}{2}}) - \frac{\ell}{2}\|\hat{\bm{x}}_{t-\frac{1}{2}}-\bm{x}_{t-\frac{1}{2}}\|^2\right)+3 \ell \eta_x^2 G^2 \\
    & + \eta_x \ell \left(\frac{1}{2\ell} \|\nabla_x f(\bm{x}_{t },\bm{y}_{t }) - \nabla_x f(\bm{x}_{t-1},\bm{y}_{t-1})\|^2 +\frac{\ell}{2}\| \bm{x}_{t-\frac{1}{2}}-\hat{\bm{x}}_{t-\frac{1}{2}} \|^2   \right )\\
    & \leq \Phi_{1/2\ell}({\bm{x}}_{t-\frac{1}{2}}) + 2\eta_x \ell \left(\Phi(\bm{x}_{t-\frac{1}{2}}) - f(\bm{x}_{t-\frac{1}{2}},\bm{y}_{t-\frac{1}{2}})\right) - \frac{\eta_x\ell^2}{2}\|\hat{\bm{x}}_{t-\frac{1}{2}}-\bm{x}_{t-\frac{1}{2}}\|^2+3 \ell \eta_x^2 G^2 \\
    & + \frac{\eta_x }{2} \| \nabla_x f(\bm{x}_{t },\bm{y}_{t }) - \nabla_x f(\bm{x}_{t-1},\bm{y}_{t-1})\|^2.  \\
\end{align*}

\end{proof}

\end{lemma}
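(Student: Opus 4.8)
The plan is to reproduce the primal-descent argument of Lemma~\ref{lemma:descent} (the OGDA version), adjusting it to the half-step iterates produced by EG. I would begin by introducing the proximal point $\hat{\bm{x}}_{t-1/2}:=\arg\min_{\bm{x}\in\mathbb{R}^d}\{\Phi(\bm{x})+\ell\|\bm{x}-\bm{x}_{t-1/2}\|^2\}$, so that by definition $\Phi_{1/2\ell}(\bm{x}_{t-1/2})=\Phi(\hat{\bm{x}}_{t-1/2})+\ell\|\hat{\bm{x}}_{t-1/2}-\bm{x}_{t-1/2}\|^2$ and, by Lemma~\ref{lemma: moreau}, $\|\hat{\bm{x}}_{t-1/2}-\bm{x}_{t-1/2}\|=\tfrac{1}{2\ell}\|\nabla\Phi_{1/2\ell}(\bm{x}_{t-1/2})\|$. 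Since $\hat{\bm{x}}_{t-1/2}$ is a feasible point in the minimization defining $\Phi_{1/2\ell}(\bm{x}_{t+1/2})$, I get $\Phi_{1/2\ell}(\bm{x}_{t+1/2})\le\Phi(\hat{\bm{x}}_{t-1/2})+\ell\|\hat{\bm{x}}_{t-1/2}-\bm{x}_{t+1/2}\|^2$, which after substituting $\Phi(\hat{\bm{x}}_{t-1/2})=\Phi_{1/2\ell}(\bm{x}_{t-1/2})-\ell\|\hat{\bm{x}}_{t-1/2}-\bm{x}_{t-1/2}\|^2$ reduces the whole estimate to controlling the single square $\ell\|\hat{\bm{x}}_{t-1/2}-\bm{x}_{t+1/2}\|^2$.

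The second step is to express the ``one period'' displacement $\bm{x}_{t+1/2}-\bm{x}_{t-1/2}$ in a convenient form by chaining the EG recursions: from $\bm{x}_{t+1/2}=\bm{x}_t-\eta_x\nabla_x f(\bm{x}_t,\bm{y}_t)$, $\bm{x}_t=\bm{x}_{t-1}-\eta_x\nabla_x f(\bm{x}_{t-1/2},\bm{y}_{t-1/2})$ and $\bm{x}_{t-1/2}=\bm{x}_{t-1}-\eta_x\nabla_x f(\bm{x}_{t-1},\bm{y}_{t-1})$ one obtains $\bm{x}_{t+1/2}=\bm{x}_{t-1/2}-\eta_x\big[\nabla_x f(\bm{x}_{t-1/2},\bm{y}_{t-1/2})+\big(\nabla_x f(\bm{x}_t,\bm{y}_t)-\nabla_x f(\bm{x}_{t-1},\bm{y}_{t-1})\big)\big]$, i.e.\ an ``optimistic-looking'' combination of a base gradient and a gradient-difference correction. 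Expanding $\ell\|\hat{\bm{x}}_{t-1/2}-\bm{x}_{t+1/2}\|^2$ then yields (a) the term $\ell\|\hat{\bm{x}}_{t-1/2}-\bm{x}_{t-1/2}\|^2$, (b) a cross term $2\eta_x\ell\,\langle\nabla_x f(\bm{x}_{t-1/2},\bm{y}_{t-1/2})+(\nabla_x f(\bm{x}_t,\bm{y}_t)-\nabla_x f(\bm{x}_{t-1},\bm{y}_{t-1})),\,\hat{\bm{x}}_{t-1/2}-\bm{x}_{t-1/2}\rangle$, and (c) a quadratic remainder which I would bound by $O(\ell\eta_x^2 G^2)$ using the $G$-Lipschitzness of $f(\cdot,\bm{y})$ from Assumption~\ref{asm:3}.

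The core of the argument is the cross term (b), which I would split into its ``base'' and ``correction'' parts. For the base part $\langle\nabla_x f(\bm{x}_{t-1/2},\bm{y}_{t-1/2}),\hat{\bm{x}}_{t-1/2}-\bm{x}_{t-1/2}\rangle$ I use $\ell$-smoothness of $\bm{x}\mapsto f(\bm{x},\bm{y}_{t-1/2})$ to get $\langle\nabla_x f(\bm{x}_{t-1/2},\bm{y}_{t-1/2}),\hat{\bm{x}}_{t-1/2}-\bm{x}_{t-1/2}\rangle\le f(\hat{\bm{x}}_{t-1/2},\bm{y}_{t-1/2})-f(\bm{x}_{t-1/2},\bm{y}_{t-1/2})+\tfrac{\ell}{2}\|\hat{\bm{x}}_{t-1/2}-\bm{x}_{t-1/2}\|^2$, and then bound $f(\hat{\bm{x}}_{t-1/2},\bm{y}_{t-1/2})\le\Phi(\hat{\bm{x}}_{t-1/2})\le\Phi(\bm{x}_{t-1/2})-\ell\|\hat{\bm{x}}_{t-1/2}-\bm{x}_{t-1/2}\|^2$ (the last inequality because $\Phi(\hat{\bm{x}}_{t-1/2})+\ell\|\hat{\bm{x}}_{t-1/2}-\bm{x}_{t-1/2}\|^2=\Phi_{1/2\ell}(\bm{x}_{t-1/2})\le\Phi(\bm{x}_{t-1/2})$); this produces the target term $2\eta_x\ell\big(\Phi(\bm{x}_{t-1/2})-f(\bm{x}_{t-1/2},\bm{y}_{t-1/2})\big)$ plus an extra negative multiple of $\|\hat{\bm{x}}_{t-1/2}-\bm{x}_{t-1/2}\|^2$. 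For the correction part I apply Young's inequality to peel off $\tfrac{\eta_x}{2}\|\nabla_x f(\bm{x}_t,\bm{y}_t)-\nabla_x f(\bm{x}_{t-1},\bm{y}_{t-1})\|^2$ at the cost of another (small, tunable) positive multiple of $\|\hat{\bm{x}}_{t-1/2}-\bm{x}_{t-1/2}\|^2$. Finally I collect all coefficients of $\|\hat{\bm{x}}_{t-1/2}-\bm{x}_{t-1/2}\|^2$ — $-\ell$ from the Moreau identity, $+\ell$ from expanding the square, a negative piece from the smoothness bound, and the positive Young piece — leaving a net coefficient that is a negative multiple of $\eta_x\ell^2$; substituting $\|\hat{\bm{x}}_{t-1/2}-\bm{x}_{t-1/2}\|^2=\tfrac{1}{4\ell^2}\|\nabla\Phi_{1/2\ell}(\bm{x}_{t-1/2})\|^2$ then converts it to the $-\tfrac{\eta_x}{8}\|\nabla\Phi_{1/2\ell}(\bm{x}_{t-1/2})\|^2$ term and closes the estimate.

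The main obstacle will be the bookkeeping in that last collection step: I need the leftover coefficient of $\|\hat{\bm{x}}_{t-1/2}-\bm{x}_{t-1/2}\|^2$ to remain at most $-\tfrac{\eta_x\ell^2}{2}$ (so that it becomes exactly $-\tfrac{\eta_x}{8}\|\nabla\Phi_{1/2\ell}(\bm{x}_{t-1/2})\|^2$) after summing four contributions of mixed sign, which forces the Young's-inequality split for the correction term — and, if needed, the smallness of $\eta_x$ already assumed in the theorem — to be chosen carefully; by contrast, the chaining of the EG updates, the Moreau-envelope identities, and the $G$-Lipschitz control of the quadratic remainder are all routine.
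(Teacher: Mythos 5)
Your proposal is correct and follows the same path as the paper's proof: proximal point $\hat{\bm{x}}_{t-1/2}$, chaining the three EG updates to write $\bm{x}_{t+1/2}-\bm{x}_{t-1/2}=-\eta_x[\nabla_x f(\bm{x}_{t-1/2},\bm{y}_{t-1/2})+(\nabla_x f(\bm{x}_t,\bm{y}_t)-\nabla_x f(\bm{x}_{t-1},\bm{y}_{t-1}))]$, expanding the square, handling the base cross term via $\ell$-smoothness together with $\Phi(\hat{\bm{x}}_{t-1/2})\le\Phi(\bm{x}_{t-1/2})-\ell\|\hat{\bm{x}}_{t-1/2}-\bm{x}_{t-1/2}\|^2$, peeling the correction cross term by Young, and converting $\|\hat{\bm{x}}_{t-1/2}-\bm{x}_{t-1/2}\|^2$ to $\tfrac{1}{4\ell^2}\|\nabla\Phi_{1/2\ell}(\bm{x}_{t-1/2})\|^2$. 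One minor over-caution in your plan: the closing bookkeeping needs no smallness assumption on $\eta_x$ — the $-\ell$ from the Moreau identity and the $+\ell$ from expanding the square cancel, and the remaining contributions are exactly $-2\eta_x\ell\cdot\tfrac{\ell}{2}$ from smoothness/concavity and $+\eta_x\ell\cdot\tfrac{\ell}{2}$ from Young, giving net $-\tfrac{\eta_x\ell^2}{2}$ and hence $-\tfrac{\eta_x}{8}\|\nabla\Phi_{1/2\ell}(\bm{x}_{t-1/2})\|^2$ unconditionally.
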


\begin{lemma}\label{lemma: EG gap}
For EG (Algorithm~\ref{alg:eg}), under Theorem~\ref{thm:ncc_sogda}'s assumptions, the following statement holds for the generated sequence $\{\bm{x}_t\},\{\bm{y}_t\}, \{\bm{x}_{t+\frac{1}{2}}\},\{\bm{y}_{t+\frac{1}{2}}\}$ during algorithm proceeding and $\forall s \leq t$:
 \begin{align*}
\Phi(\bm{x}_{t+\frac{1}{2}}) - f(\bm{x}_{t+\frac{1}{2}},\bm{y}_{t+\frac{1}{2}})  
     &\leq 2(t-s+1)\eta_x G^2 \\
     &\quad+\frac{1}{2\eta_y} \left(     \|\bm{y}_{t} - \bm{y}^*(\bm{x}_{s})\|^2-\|\bm{y}_{t+1}-\bm{y}^*(\bm{x}_{s}) \|^2     + \frac{\eta_x^2G^2}{2} \right).
 \end{align*}
 \begin{proof}
 Observe that:
 \begin{align*}
     \Phi(\bm{x}_{t+\frac{1}{2}}) - f(\bm{x}_{t+\frac{1}{2}},\bm{y}_{t+\frac{1}{2}}) &\leq f(\bm{x}_{t+\frac{1}{2}},\bm{y}^*(\bm{x}_{t+\frac{1}{2}}))-f(\bm{x}_{s},\bm{y}^*(\bm{x}_{t+\frac{1}{2}}))+f(\bm{x}_{s},\bm{y}^*(\bm{x}_{s})) \\
     &\quad- f(\bm{x}_{t+\frac{1}{2}},\bm{y}^*(\bm{x}_{s})) + f(\bm{x}_{t+\frac{1}{2}},\bm{y}^*(\bm{x}_{s})) - f(\bm{x}_{t+\frac{1}{2}},\bm{y}_{t+\frac{1}{2}})\\
     &\leq 2(t-s+1)\eta_x G^2 -\langle \bm{y}_{t+\frac{1}{2}} - \bm{y}, \nabla_y f(\bm{x}_{t+\frac{1}{2}},\bm{y}_{t+\frac{1}{2}})\rangle
 \end{align*}
 Plugging in Lemma~\ref{lemma: EG dual} will conclude the proof:
  \begin{align*}
     \Phi(\bm{x}_{t+\frac{1}{2}}) - f(\bm{x}_{t+\frac{1}{2}},\bm{y}_{t+\frac{1}{2}})  
     &\leq 2(t-s+1)\eta_x G^2 \\
     &\quad+\frac{1}{2\eta_y} \left(     \|\bm{y}_{t} - \bm{y}^*(\bm{x}_{s})\|^2-\|\bm{y}_{t+1}-\bm{y}^*(\bm{x}_{s}) \|^2     + \frac{\eta_x^2G^2}{2} \right).
 \end{align*}

 \end{proof}

\end{lemma}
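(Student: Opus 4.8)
~The plan is to reduce the primal suboptimality gap $\Phi(\bm{x}_{t+\frac{1}{2}}) - f(\bm{x}_{t+\frac{1}{2}},\bm{y}_{t+\frac{1}{2}})$ to a telescoping quantity in the dual iterates by inserting an \emph{anchor} pair $(\bm{x}_s,\bm{y}^*(\bm{x}_s))$ with $s\le t$. Since $\Phi(\bm{x}_{t+\frac{1}{2}}) = f(\bm{x}_{t+\frac{1}{2}},\bm{y}^*(\bm{x}_{t+\frac{1}{2}}))$, I would decompose
\begin{align*}
\Phi(\bm{x}_{t+\frac{1}{2}}) - f(\bm{x}_{t+\frac{1}{2}},\bm{y}_{t+\frac{1}{2}})
&= \big[f(\bm{x}_{t+\frac{1}{2}},\bm{y}^*(\bm{x}_{t+\frac{1}{2}})) - f(\bm{x}_s,\bm{y}^*(\bm{x}_{t+\frac{1}{2}}))\big] \\
&\quad + \big[f(\bm{x}_s,\bm{y}^*(\bm{x}_{t+\frac{1}{2}})) - f(\bm{x}_s,\bm{y}^*(\bm{x}_s))\big] \\
&\quad + \big[f(\bm{x}_s,\bm{y}^*(\bm{x}_s)) - f(\bm{x}_{t+\frac{1}{2}},\bm{y}^*(\bm{x}_s))\big] \\
&\quad + \big[f(\bm{x}_{t+\frac{1}{2}},\bm{y}^*(\bm{x}_s)) - f(\bm{x}_{t+\frac{1}{2}},\bm{y}_{t+\frac{1}{2}})\big].
\end{align*}
The second bracket is $\le 0$ because $\bm{y}^*(\bm{x}_s)$ maximizes $f(\bm{x}_s,\cdot)$, so it can be dropped.

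For the first and third brackets I would use the $G$-Lipschitzness of $f$ in $\bm{x}$ (Assumption~\ref{asm:3}): each is at most $G\|\bm{x}_{t+\frac{1}{2}}-\bm{x}_s\|$ in absolute value. Every EG (half-)step moves the primal variable by at most $\eta_x G$ (since $\|\bm{x}_{k+1}-\bm{x}_k\|\le\eta_x G$ and $\|\bm{x}_{t+\frac{1}{2}}-\bm{x}_t\|\le\eta_x G$ by the update rule and $G$-Lipschitzness), so $\|\bm{x}_{t+\frac{1}{2}}-\bm{x}_s\|\le(t-s+1)\eta_x G$, and the two brackets together contribute at most $2(t-s+1)\eta_x G^2$.

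For the last bracket I would use concavity of $f(\bm{x}_{t+\frac{1}{2}},\cdot)$ to get $f(\bm{x}_{t+\frac{1}{2}},\bm{y}^*(\bm{x}_s)) - f(\bm{x}_{t+\frac{1}{2}},\bm{y}_{t+\frac{1}{2}}) \le \langle \nabla_y f(\bm{x}_{t+\frac{1}{2}},\bm{y}_{t+\frac{1}{2}}),\, \bm{y}^*(\bm{x}_s)-\bm{y}_{t+\frac{1}{2}}\rangle$, which is exactly the inner product handled by Lemma~\ref{lemma: EG dual}. Invoking that lemma with $\bm{y}=\bm{y}^*(\bm{x}_s)$, rearranging, and discarding the nonpositive $-(\tfrac{1}{2}-\tfrac{\eta_y^2\ell^2}{2})\|\bm{y}_t-\bm{y}_{t+\frac{1}{2}}\|^2$ term (valid since $\eta_y\ell\le 1$) gives $2\eta_y\langle \nabla_y f(\bm{x}_{t+\frac{1}{2}},\bm{y}_{t+\frac{1}{2}}),\, \bm{y}^*(\bm{x}_s)-\bm{y}_{t+\frac{1}{2}}\rangle \le \|\bm{y}_t-\bm{y}^*(\bm{x}_s)\|^2 - \|\bm{y}_{t+1}-\bm{y}^*(\bm{x}_s)\|^2 + \tfrac{\eta_x^2\eta_y^2\ell^2 G^2}{2}$; dividing by $2\eta_y$ and bounding $\eta_y^2\ell^2\le 1$ produces the $\tfrac{1}{2\eta_y}\big(\|\bm{y}_t-\bm{y}^*(\bm{x}_s)\|^2-\|\bm{y}_{t+1}-\bm{y}^*(\bm{x}_s)\|^2+\tfrac{\eta_x^2 G^2}{2}\big)$ term. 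Adding the three surviving contributions yields the stated inequality.

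The main subtlety here is conceptual rather than computational: one must anchor the dual comparison to $\bm{y}^*(\bm{x}_s)$ (not $\bm{y}^*(\bm{x}_t)$), so that when this inequality is later summed over $t$ with $s$ held fixed over a block of iterations, the dual terms telescope while the Lipschitz error $2(t-s+1)\eta_x G^2$ stays controlled by the block length; and, crucially, the extra-gradient structure used in Lemma~\ref{lemma: EG dual} lets us bound the gap at $\bm{y}_{t+\frac{1}{2}}$ \emph{directly}, with no auxiliary term of the form $f(\bm{x}_{t+1},\bm{y}_{t+1})-f(\bm{x}_t,\bm{y}_t)$ and hence no $\hat{\Delta}_0$ dependence, which is precisely the improvement over the GDA analysis of \cite{lin2020gradient}. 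The remaining work (the Lipschitz telescoping along the $\bm{x}$-path and the constant bookkeeping under $\eta_y=1/(2\ell)$) is routine.
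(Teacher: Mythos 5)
Your proposal is correct and matches the paper's proof almost line for line: the same four-term insertion around the anchor $\bm{y}^*(\bm{x}_s)$, the same $G$-Lipschitz bound giving $2(t-s+1)\eta_x G^2$, the same use of concavity to reduce the last term to $-\langle \bm{y}_{t+\frac{1}{2}}-\bm{y}^*(\bm{x}_s),\nabla_y f(\bm{x}_{t+\frac{1}{2}},\bm{y}_{t+\frac{1}{2}})\rangle$, and then invoking Lemma~\ref{lemma: EG dual} with $\bm{y}=\bm{y}^*(\bm{x}_s)$. The only difference is cosmetic: you spell out dropping the bracket $f(\bm{x}_s,\bm{y}^*(\bm{x}_{t+\frac{1}{2}}))-f(\bm{x}_s,\bm{y}^*(\bm{x}_s))\le 0$ explicitly, whereas the paper absorbs it into the first ``$\leq$'' without comment.
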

 
\begin{lemma}\label{lemma: EG avg gap}
For EG (Algorithm~\ref{alg:eg}), under Theorem~\ref{thm:ncc_sogda}'s assumptions, the following statement holds for the generated sequence $\{\bm{x}_t\},\{\bm{y}_t\}, \{\bm{x}_{t+\frac{1}{2}}\},\{\bm{y}_{t+\frac{1}{2}}\}$ during algorithm proceeding:
 \begin{align*}
    \frac{1}{T+1}\sum_{t=0}^{T} \Phi(\bm{x}_{t-\frac{1}{2}}) - f(\bm{x}_{t-\frac{1}{2}},\bm{y}_{t-\frac{1}{2}})  \leq \frac{1}{B} \left(2\eta_x B^2 G^2+ \frac{D^2}{2\eta_y} + \frac{B \eta_x^2 G^2}{2}\right)
 \end{align*}
                        
 \begin{proof}
 According to Lemma~\ref{lemma: EG gap}:
 \begin{align*}                                                 
     &\frac{1}{T+1}\sum_{t=0}^{T} \Phi(\bm{x}_{t-\frac{1}{2}}) - f(\bm{x}_{t-\frac{1}{2}},\bm{y}_{t-\frac{1}{2}}) \\
     &= \frac{1}{T+1}\sum_{j=0}^{S} \sum_{t=kB}^{(k+1)B-1} \Phi(\bm{x}_{t-\frac{1}{2}}) - f(\bm{x}_{t-\frac{1}{2}},\bm{y}_{t-\frac{1}{2}})\\
     & \leq \frac{1}{T+1}\sum_{j=0}^{S} \left[2B^2\eta_x G^2 +\frac{1}{2\eta_y} \left(     \|\bm{y}_{kB} - \bm{y}^*(\bm{x}_{s})\|^2-\|\bm{y}_{(k+1)B-1}-\bm{y}^*(\bm{x}_{s}) \|^2     + \frac{\eta_x^2G^2}{2} \right) \right] \\ 
     &\leq \frac{1}{B} \left[2\eta_x B^2 G^2+ \frac{D^2}{2\eta_y} + \frac{B \eta_x^2 G^2}{2}\right].
 \end{align*}
 \end{proof}
\end{lemma}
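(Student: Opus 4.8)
The plan is to establish this averaged primal-gap bound by a block-telescoping argument layered directly on top of Lemma~\ref{lemma: EG gap}. I would partition the index set $\{0,1,\dots,T\}$ into $S+1$ consecutive blocks of length $B$ (so $T+1=(S+1)B$, up to rounding), and on the $j$-th block $\{jB,\dots,(j+1)B-1\}$ I would invoke Lemma~\ref{lemma: EG gap} with the anchor index \emph{frozen} at $s=jB$ for every $t$ in the block. The point of freezing $s$ inside a block is that the arithmetic penalty $2(t-s+1)\eta_x G^2$ stays bounded by $2B\eta_x G^2$ while simultaneously the dual terms $\|\bm{y}_t-\bm{y}^*(\bm{x}_s)\|^2-\|\bm{y}_{t+1}-\bm{y}^*(\bm{x}_s)\|^2$ telescope, since $\bm{y}^*(\bm{x}_s)$ is a single fixed vector across the block.

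Summing the estimate of Lemma~\ref{lemma: EG gap} over one block then produces three contributions. First, $\sum_{t=jB}^{(j+1)B-1}2(t-s+1)\eta_x G^2\le 2B^2\eta_x G^2$ because $t-s\le B-1$. Second, the telescoping dual term collapses to $\tfrac{1}{2\eta_y}\big(\|\bm{y}_{jB}-\bm{y}^*(\bm{x}_{jB})\|^2-\|\bm{y}_{(j+1)B}-\bm{y}^*(\bm{x}_{jB})\|^2\big)\le \tfrac{1}{2\eta_y}\|\bm{y}_{jB}-\bm{y}^*(\bm{x}_{jB})\|^2\le \tfrac{D^2}{2\eta_y}$, where the last step uses that $\mathcal{Y}$ has diameter $D$ (Assumption~\ref{asm:3}). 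Third, the residual curvature term $\tfrac{1}{2\eta_y}\cdot\tfrac{\eta_x^2G^2}{2}$ is $t$-independent and contributes at most $\tfrac{B\eta_x^2G^2}{2}$ over the block (keeping the constant explicit, or absorbing $\tfrac1{2\eta_y}$ via $\eta_y\le 1/(4\ell)$ as in Theorem~\ref{thm:ncc_sogda}). Adding these gives that the per-block sum of $\Phi(\bm{x}_{t-1/2})-f(\bm{x}_{t-1/2},\bm{y}_{t-1/2})$ is at most $2B^2\eta_x G^2+\tfrac{D^2}{2\eta_y}+\tfrac{B\eta_x^2G^2}{2}$.

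Finally I would sum over all $S+1$ blocks and divide by $T+1=(S+1)B$: every block yields the same upper bound, the factors of $S+1$ cancel, and we are left with $\tfrac{1}{B}\big(2B^2\eta_x G^2+\tfrac{D^2}{2\eta_y}+\tfrac{B\eta_x^2G^2}{2}\big)$, which is exactly the claim. The only places needing care — and the mild obstacle here — are the half-integer index bookkeeping, namely matching the $t-\tfrac12$ variables on the left-hand side of the lemma with the $t+\tfrac12$ variables produced by Lemma~\ref{lemma: EG gap} (apply it at $t-1$ in place of $t$), and verifying that all block-boundary iterates $\bm{y}_{jB}$ and $\bm{y}_{(j+1)B}$ dropped from the telescoping are uniformly within distance $D$ of any point of $\mathcal{Y}$, which is immediate from boundedness of $\mathcal{Y}$. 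Everything else is a direct summation.
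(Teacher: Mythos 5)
Your block-telescoping argument with a frozen anchor $s=jB$ per block is exactly the paper's proof, including the three-term accounting (linear-in-block arithmetic penalty, telescoping dual distances bounded by $D^2$, and a $t$-independent residual). One small shared wrinkle: the per-iteration residual from Lemma~\ref{lemma: EG gap} is $\tfrac{1}{2\eta_y}\cdot\tfrac{\eta_x^2 G^2}{2}$, so the block contribution is $\tfrac{B\eta_x^2 G^2}{4\eta_y}$ rather than $\tfrac{B\eta_x^2 G^2}{2}$ — the paper drops the $\tfrac{1}{2\eta_y}$ factor in the final display (and absorbing it via $\eta_y\le 1/(4\ell)$ would only increase it), but this is lower order under the stepsize choices of Theorem~\ref{thm:ncc_ogda} and does not affect the claimed complexity.
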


 \subsubsection{Proof of Theorem~\ref{thm:ncc_ogda} for EG} \label{app: eg}
In this section we are going to provide the proof for Theorem~\ref{thm:ncc_ogda}, EG part, the convergence rate of EG in deterministic setting. We first introduce the formal version of Theorem~\ref{thm:ncc_ogda}, EG part here:
\begin{theorem}[EG Deterministic, formal] \label{Theorem: EG formal}
 Under Assumption~\ref{asm:3}, if we choose $\eta_x = O\left(\min\left\{\frac{\epsilon}{\ell G},\frac{\epsilon^2}{\ell G^2},\frac{\epsilon^4}{D^2 G^2 \ell^3}\right\}\right)$, $\eta_y = \frac{1}{2\ell}$, then EG (Algorithm~\ref{alg:eg}) guarantees to find $\epsilon$-stationary point, i.e., $\frac{1}{T+1} \sum_{t=0}^{T}  \|\nabla \Phi_{1/2\ell}(\bm{x}_{t})\|^2 \leq \epsilon^2$, with the gradient complexity bounded by:
 \begin{align*}
     O\left( \frac{\ell G^2 \hat{\Delta}_{\Phi}}{\epsilon^4}\max \left\{ 1,\frac{D^2  \ell^2}{\epsilon^2}\right\}   \right).
 \end{align*}  
 
 \begin{proof}
 According to Lemma~\ref{lemma: EG descent}:
 \begin{align*} 
    \frac{1}{T+1}\sum_{t=0}^T\|\nabla \Phi_{1/2\ell}(\bm{x}_{t-\frac{1}{2}})\|^2   
    &\leq \frac{\Phi_{1/2\ell}(\bm{x}_{-\frac{1}{2}})  }{\eta_x(T+1)}+\frac{1}{T+1}\sum_{t=0}^T8\ell \left(\Phi({x}_{t-\frac{1}{2}}) - f(\bm{x}_{t-\frac{1}{2}},\bm{y}_{t-\frac{1}{2}})  \right) + 12\eta_x\ell G^2\\
    & + 8\frac{1}{T+1}\sum_{t=0}^T \|\nabla_x f(\bm{x}_{t},\bm{y}_{t}) - \nabla_x f(\bm{x}_{t-1},\bm{y}_{t-1})\|^2. 
\end{align*}

For $ \|\nabla_x f(\bm{x}_{t},\bm{y}_{t}) - \nabla_x f(\bm{x}_{t-1},\bm{y}_{t-1})\|^2$, notice that:
\begin{align*}
    \|\nabla_x f(\bm{x}_{t},\bm{y}_{t}) - \nabla_x f(\bm{x}_{t-1},\bm{y}_{t-1})\|^2 &\leq \ell^2 \|\bm{x}_{t}-\bm{x}_{t-1}\|^2 + \ell^2\|\bm{y}_{t}-\bm{y}_{t-1}\|^2\\
    & \leq \eta_x^2\ell^2G^2 + \eta_y^2\ell^2\|\nabla_y f(\bm{x}_{t-\frac{1}{2}},\bm{y}_{t-\frac{1}{2}}) \|^2\\
    & \leq \eta_x^2\ell^2G^2 + 2\eta_y^2\ell^3   \left( \Phi(\bm{x}_{t-\frac{1}{2}})-f(\bm{x}_{t-\frac{1}{2}},\bm{y}_{t-\frac{1}{2}}) \right)\\
\end{align*}
So we have:
 \begin{align*}
       &\frac{1}{T+1}\sum_{t=0}^T\|\nabla \Phi_{1/2\ell}(\bm{x}_{t-\frac{1}{2}})\|^2   
    \leq \frac{\Phi_{1/2\ell}(\bm{x}_{-\frac{1}{2}})  }{\eta_x(T+1)}\\
    &\qquad+\frac{1}{T+1}\sum_{t=0}^T(8\ell+ 2\eta_y^2\ell^3) \left(\Phi({x}_{t-\frac{1}{2}}) - f(\bm{x}_{t-\frac{1}{2}},\bm{y}_{t-\frac{1}{2}})  \right) + 12\eta_x G^2 + 8   \eta_x^2\ell^2G^2  
 \end{align*}
 Now we plug in Lemma~\ref{lemma: EG avg gap}:
  \begin{align*}
  &\frac{1}{T+1}\sum_{t=0}^T\|\nabla \Phi_{1/2\ell}(\bm{x}_{t-\frac{1}{2}})\|^2   
    \leq \frac{\Phi_{1/2\ell}(\bm{x}_{-\frac{1}{2}})-\Phi_{1/2\ell}(\bm{x}_{T-\frac{1}{2}})  }{\eta_x(T+1)} \\
    &\qquad+  {(8\ell+ 2\eta_y^2\ell^3)}  \left(2\eta_x B  G^2+ \frac{D^2}{2\eta_yB} + \frac{  \eta_x^2 G^2}{2}\right) + 12\ell \eta_x G^2 + 8   \eta_x^2\ell^2G^2  
 \end{align*}
 Choose $B = O\left(\frac{D}{G\sqrt{\eta_x \eta_y }}\right),   \eta_x = O\left(\min\left\{\frac{\epsilon}{\ell G},\frac{\epsilon^2}{\ell G^2},\frac{\epsilon^4}{D^2 G^2 \ell^3}\right\}\right)$, $\eta_y = \frac{1}{2\ell}$, and then we guarantee that $\frac{1}{T+1} \sum_{t=0}^{T}  \|\nabla \Phi_{1/2\ell}(\bm{x}_{t-\frac{1}{2}})\|^2 \leq \epsilon^2$ with the gradient complexity is bounded by:
 \begin{align*}
     O\left( \frac{\ell G^2 \hat{\Delta}_{\Phi}}{\epsilon^4}\max \left\{ 1,\frac{D^2  \ell^2}{\epsilon^2}\right\}   \right).
 \end{align*} 
 \end{proof}
 \end{theorem}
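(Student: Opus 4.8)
The plan is to run the Moreau-envelope descent scheme of~\cite{davis2019stochastic,lin2020gradient}, adapting the OGDA analysis of the previous subsection to the extragradient iterates. Throughout, write $\delta_{t-1/2}:=\Phi(\bm{x}_{t-1/2})-f(\bm{x}_{t-1/2},\bm{y}_{t-1/2})$ for the primal function gap along the half-steps, and recall that $\mathcal{Y}$ bounded makes $\Phi$ $G$-Lipschitz, so $\Phi_{1/2\ell}$ is well defined and bounded below, with $\Phi_{1/2\ell}(\bm{x}_{-1/2})-\min_{\bm{x}}\Phi_{1/2\ell}(\bm{x})=\hat{\Delta}_{\Phi}$ under the convention $\bm{x}_{-1/2}=\bm{x}_0$.

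First I would invoke the primal-descent estimate of Lemma~\ref{lemma: EG descent}, which combines $2\ell$-smoothness of $\Phi_{1/2\ell}$ (Lemma~\ref{app:lemma:smooth}) with the EG update and smoothness of $f(\cdot,\bm{y})$:
\begin{align*}
\Phi_{1/2\ell}(\bm{x}_{t+1/2})&\le \Phi_{1/2\ell}(\bm{x}_{t-1/2})+2\eta_x\ell\,\delta_{t-1/2}-\tfrac{\eta_x}{8}\|\nabla\Phi_{1/2\ell}(\bm{x}_{t-1/2})\|^2\\
&\quad+3\ell\eta_x^2G^2+\tfrac{\eta_x}{2}\|\nabla_x f(\bm{x}_t,\bm{y}_t)-\nabla_x f(\bm{x}_{t-1},\bm{y}_{t-1})\|^2 .
\end{align*}
Summing over $t=0,\dots,T$ and telescoping the envelope, the averaged squared gradient is controlled by $\hat{\Delta}_{\Phi}/(\eta_x T)$, an average of $\delta_{t-1/2}$ with weight $O(\ell)$, the benign term $O(\ell\eta_x G^2)$, and an average of consecutive-gradient differences. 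I reduce the last one using $\ell$-smoothness of $f$ and the EG updates: $\|\nabla_x f(\bm{x}_t,\bm{y}_t)-\nabla_x f(\bm{x}_{t-1},\bm{y}_{t-1})\|^2\le \ell^2\|\bm{x}_t-\bm{x}_{t-1}\|^2+\ell^2\|\bm{y}_t-\bm{y}_{t-1}\|^2\le \eta_x^2\ell^2G^2+2\eta_y^2\ell^3\,\delta_{t-1/2}$, where the last step uses $\|\nabla_y f(\bm{x},\bm{y})\|^2\le 2\ell(\Phi(\bm{x})-f(\bm{x},\bm{y}))$ from smoothness in $\bm{y}$. After substitution the right-hand side depends only on $\hat{\Delta}_{\Phi}$, $G$, the stepsizes, and $\tfrac{1}{T+1}\sum_t\delta_{t-1/2}$.

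The heart of the argument — and the main obstacle — is bounding this averaged primal gap, because $f$ is merely concave (not strongly concave) in $\bm{y}$: one cannot contract $\|\bm{y}_t-\bm{y}^*(\bm{x})\|$, so a naive recursion diverges. I would instead use the three-point inequality for projections (Proposition~\ref{prop: 3 point theorem}), packaged as the dual estimate Lemma~\ref{lemma: EG dual} and then Lemma~\ref{lemma: EG gap}, to get, for every $s\le t$,
\[
\delta_{t+1/2}\le 2(t-s+1)\eta_x G^2+\tfrac{1}{2\eta_y}\Big(\|\bm{y}_t-\bm{y}^*(\bm{x}_s)\|^2-\|\bm{y}_{t+1}-\bm{y}^*(\bm{x}_s)\|^2+\tfrac{\eta_x^2G^2}{2}\Big).
\]
The key exploit (Lemma~\ref{lemma: EG avg gap}) is a blocking/restart trick: partition $\{0,\dots,T\}$ into blocks of length $B$ and freeze $s=jB$ inside the $j$-th block, so the squared-distance terms telescope block-by-block (each block contributing at most $D^2$) while the $x$-drift term $(t-s+1)\eta_x G^2$ never exceeds $B\eta_x G^2$; dividing by $T+1$ gives $\tfrac{1}{T+1}\sum_t\delta_{t-1/2}\le\tfrac1B\big(2\eta_x B^2G^2+\tfrac{D^2}{2\eta_y}+\tfrac{B\eta_x^2G^2}{2}\big)$. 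Notably this bound is free of $\hat{\Delta}_0=\Phi(\bm{x}_0)-\min_{\bm{x}}\Phi(\bm{x})$ — the point where EG/OGDA save the term GDA must pay — because the EG update telescopes the gap directly against $\bm{y}^*(\bm{x}_s)$ without incurring an $f(\bm{x}_{t+1},\bm{y}_{t+1})-f(\bm{x}_t,\bm{y}_t)$ surplus.

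Finally I would combine the pieces, choose $B=\Theta\big(D/(G\sqrt{\eta_x\eta_y})\big)$ to balance the $\eta_x B G^2$ and $D^2/(\eta_y B)$ contributions, set $\eta_y=1/(2\ell)$, and pick $\eta_x=\Theta\big(\min\{\epsilon/(\ell G),\,\epsilon^2/(\ell G^2),\,\epsilon^4/(D^2G^2\ell^3)\}\big)$ so that every residual term ($O(\ell\eta_x G^2)$, the contribution inherited from the $\delta$-average bound, and $O(\eta_x^2\ell^2G^2)$) is at most a constant times $\epsilon^2$. Then $\tfrac1{T+1}\sum_t\|\nabla\Phi_{1/2\ell}(\bm{x}_{t-1/2})\|^2\le\epsilon^2$ as soon as $T+1=\Omega\big(\hat{\Delta}_{\Phi}/(\eta_x\epsilon^2)\big)$, i.e. $T=\Omega\big(\tfrac{\ell G^2\hat{\Delta}_{\Phi}}{\epsilon^4}\max\{1,\tfrac{D^2\ell^2}{\epsilon^2}\}\big)$, which — since EG uses two gradient evaluations per step — is the claimed gradient complexity. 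The only delicate bookkeeping is the three-regime case split in the choice of $\eta_x$ (which term dominates depends on how $\epsilon$ compares with $G$, $\ell$, $D$) and rounding $B$ to an integer dividing $T+1$, both absorbed into constants; the output is stated for $\bm{x}_{t-1/2}$ precisely because Lemma~\ref{lemma: EG descent} produces that index, so no translation back to $\bm{x}_t$ is needed.
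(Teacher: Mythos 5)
Your proposal is correct and follows essentially the same route as the paper's proof: invoke the primal-descent Lemma~\ref{lemma: EG descent}, reduce the consecutive-gradient-difference term to the primal gap $\delta_{t-1/2}$ via smoothness, bound the averaged gap with the blocking/restart trick of Lemma~\ref{lemma: EG avg gap} (which itself rests on the three-point projection inequality in Lemma~\ref{lemma: EG dual}), and then balance terms with $B=\Theta(D/(G\sqrt{\eta_x\eta_y}))$, $\eta_y=1/(2\ell)$, and the stated three-regime choice of $\eta_x$. The bookkeeping and constants match the paper; no gap.
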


\textbf{Stochastic setting.} 

In this part, we are going to present proof of EG in stochastic setting. First let us introduce some useful lemmas.
\subsubsection{Useful Lemmas}
\begin{lemma} \label{lemma: SEG dual}
For Stochastic EG (Algorithm~\ref{alg:eg}), under Theorem~\ref{thm:ncc_sogda}'s assumptions, the following statement holds for the generated sequence $\{\bm{y}_t\},\{\bm{y}_{t+\frac{1}{2}}\}$ during algorithm proceeding and any $\bm{y} \in \mathcal{Y}$:
\begin{align*}
    \|\bm{y}_{t+1}-\bm{y} \|^2  &\leq  \|\bm{y}_{t} - \bm{y}\|^2  + 2\eta_y \langle \bm{y}_{t+\frac{1}{2}} - \bm{y}, \nabla_y f(\bm{x}_{t+\frac{1}{2}},\bm{y}_{t+\frac{1}{2}})\rangle -\left(\frac{1}{2} - \frac{3\eta_x^2L^2}{2}\right)\|\bm{y}_t - \bm{y}_{t+\frac{1}{2}}\|^2 \\
    &\quad+  \frac{1}{2}(3\eta_x^2\eta_y^2\ell^2 (G^2+\sigma^2) + 6\eta_y^2\sigma^2).
\end{align*} 
\begin{proof}
According to Proposition~\ref{prop: 3 point theorem}, we set $\bm{r} = \bm{y}_t$, $\bm{q} = \bm{y}_{t+1}$, $\bm{p} = \bm{y}_{t+\frac{1}{2}}$ and $\bm{v} = -\eta_y\nabla_y f(\bm{x}_{t+\frac{1}{2}},\bm{y}_{t+\frac{1}{2}};\xi)$, $\bm{u} = -\eta_y\nabla_y f(\bm{x}_{t},\bm{y}_{t};\xi)$. We can verify that:
\begin{align*}
    &\|\bm{u} - \bm{v} \|^2 = \eta_y^2\|\nabla_y f(\bm{x}_{t+\frac{1}{2}},\bm{y}_{t+\frac{1}{2}};\xi) - \nabla_y f(\bm{x}_{t},\bm{y}_{t};\xi)\|^2 \\
    &\leq 3\eta_y^2\|\nabla_y f(\bm{x}_{t+\frac{1}{2}},\bm{y}_{t+\frac{1}{2}}) - \nabla_y f(\bm{x}_{t},\bm{y}_{t})\|^2 + 3\eta_y^2\|\nabla_y f(\bm{x}_{t+\frac{1}{2}},\bm{y}_{t+\frac{1}{2}};\xi) - \nabla_y f(\bm{x}_{t+\frac{1}{2}},\bm{y}_{t+\frac{1}{2}})\|^2\\
    & \quad + 3\eta_y^2\|\nabla_y f(\bm{x}_{t},\bm{y}_{t};\xi) - \nabla_y f(\bm{x}_{t},\bm{y}_{t})\|^2\\
    & \leq 3(\ell^2\|\bm{y}_{t+\frac{1}{2}}-\bm{y}_{t}\|^2 + \ell^2\|\bm{x}_{t+\frac{1}{2}}-\bm{x}_{t}\|^2) + 3\eta_y^2Var(\nabla_y f(\bm{x}_{t},\bm{y}_{t};\xi) ) \\
    &\quad+ 3\eta_y^2Var(\nabla_y f(\bm{x}_{t+\frac{1}{2}},\bm{y}_{t+\frac{1}{2}};\xi))\\
    & \leq 3\eta_y^2(\ell^2\|\bm{y}_{t+\frac{1}{2}}-\bm{y}_{t}\|^2 + \eta_x^2\ell^2 (G^2+\sigma^2)) + 3\eta_y^2Var(\nabla_y f(\bm{x}_{t},\bm{y}_{t};\xi) ) \\
    &\quad+ 3\eta_y^2Var(\nabla_y f(\bm{x}_{t+\frac{1}{2}},\bm{y}_{t+\frac{1}{2}};\xi))
\end{align*}
so if we set $C_1^2 = 3\eta_y^2\ell^2$ and $C_2^2 =3\eta_x^2\eta_y^2\ell^2 (G^2+\sigma^2) + 3\eta_y^2Var(\nabla_y f(\bm{x}_{t},\bm{y}_{t};\xi) ) + 3\eta_y^2Var(\nabla_y f(\bm{x}_{t+\frac{1}{2}},\bm{y}_{t+\frac{1}{2}};\xi))$, we have the following inequality holding for any $\bm{y} \in \mathcal{Y}$:
\begin{align*}
    \langle -\eta_y\nabla_y f(\bm{x}_{t+\frac{1}{2}},\bm{y}_{t+\frac{1}{2}};\xi), \bm{y}_{t+\frac{1}{2}}-\bm{y} \rangle &\leq \|\bm{y}_t - \bm{y}\|^2 - \|\bm{y}_{t+1} - \bm{y}\|^2 \\
    &\quad- \left(\frac{1}{2} - \frac{C_1^2}{2}\right)\|\bm{y}_t- \bm{y}_{t+\frac{1}{2}}\|^2 + \frac{C_2^2}{2}.
\end{align*}
Taking expectation on both sides yields:
\begin{align*}
    &\langle -\eta_y\nabla_y f(\bm{x}_{t+\frac{1}{2}},\bm{y}_{t+\frac{1}{2}}), \bm{y}_{t+\frac{1}{2}}-\bm{y} \rangle \leq \mathbb{E}\|\bm{y}_t - \bm{y}\|^2 - \mathbb{E}\|\bm{y}_{t+1} - \bm{y}\|^2 \\
    &\qquad- \left(\frac{1}{2} - \frac{3\eta_y^2\ell^2}{2}\right)\mathbb{E}\|\bm{y}_t- \bm{y}_{t+\frac{1}{2}}\|^2  + \frac{1}{2}(3\eta_x^2\eta_y^2\ell^2 (G^2+\sigma^2) + 6\eta_y^2\sigma^2).
\end{align*}
\end{proof}
\end{lemma}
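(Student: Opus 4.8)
The plan is to reduce the stochastic EG dual step to the three-point (proximal) inequality of Proposition~\ref{prop: 3 point theorem}, mirroring the deterministic argument in Lemma~\ref{lemma: EG dual} and only adding the bookkeeping for the stochastic error. The key structural observation is that both dual updates are projections anchored at the \emph{same} point $\bm{y}_t$: $\bm{y}_{t+1/2} = \mathcal{P}_{\mathcal{Y}}(\bm{y}_t + \eta_y \bm{g}_{y,t})$ and $\bm{y}_{t+1} = \mathcal{P}_{\mathcal{Y}}(\bm{y}_t + \eta_y \bm{g}_{y,t+1/2})$. So I would instantiate Proposition~\ref{prop: 3 point theorem} with $\bm{r} = \bm{y}_t$, $\bm{p} = \bm{y}_{t+1/2}$, $\bm{q} = \bm{y}_{t+1}$, $\bm{u} = -\eta_y \bm{g}_{y,t}$, $\bm{v} = -\eta_y \bm{g}_{y,t+1/2}$, and later take $\bm{z} = \bm{y}$.

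The only genuine computation is verifying the hypothesis $\|\bm{u}-\bm{v}\|^2 \le C_1^2\|\bm{p}-\bm{r}\|^2 + C_2^2$ with constants matching the claimed coefficients. I would write $\|\bm{u}-\bm{v}\|^2 = \eta_y^2\|\bm{g}_{y,t+1/2} - \bm{g}_{y,t}\|^2$, insert $\pm \nabla_y f(\bm{x}_{t+1/2},\bm{y}_{t+1/2})$ and $\pm \nabla_y f(\bm{x}_t,\bm{y}_t)$, and split by Young's inequality into three pieces: the deterministic difference $\|\nabla_y f(\bm{x}_{t+1/2},\bm{y}_{t+1/2}) - \nabla_y f(\bm{x}_t,\bm{y}_t)\|^2$, and the two per-sample deviations $\|\bm{g}_{y,\cdot} - \nabla_y f(\cdot)\|^2$. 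By $\ell$-smoothness the first piece is $\le \ell^2(\|\bm{y}_{t+1/2}-\bm{y}_t\|^2 + \|\bm{x}_{t+1/2}-\bm{x}_t\|^2)$; the $\bm{y}$-part is exactly $\ell^2\|\bm{p}-\bm{r}\|^2$, giving $C_1^2 = 3\eta_y^2\ell^2$, and the $\bm{x}$-part is $\eta_x^2\ell^2\|\bm{g}_{x,t}\|^2$, which I bound by $\eta_x^2\ell^2(G^2+\sigma^2)$ using $G$-Lipschitzness of $f$ in $\bm{x}$ together with the variance bound (Assumption~\ref{asm:2}). Collecting, $C_2^2 = 3\eta_x^2\eta_y^2\ell^2(G^2+\sigma^2) + 3\eta_y^2\mathrm{Var}(\bm{g}_{y,t}) + 3\eta_y^2\mathrm{Var}(\bm{g}_{y,t+1/2})$, which I would keep symbolic until the end.

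Applying Proposition~\ref{prop: 3 point theorem} with $\bm{z}=\bm{y}$ then gives $\langle -\eta_y\bm{g}_{y,t+1/2}, \bm{y}_{t+1/2}-\bm{y}\rangle \le \|\bm{y}_t-\bm{y}\|^2 - \|\bm{y}_{t+1}-\bm{y}\|^2 - (\tfrac12 - \tfrac{3\eta_y^2\ell^2}{2})\|\bm{y}_t-\bm{y}_{t+1/2}\|^2 + \tfrac{C_2^2}{2}$, and rearranging (with the same normalization used to pass from the proximal inequality to Lemma~\ref{lemma: EG dual}) isolates $\|\bm{y}_{t+1}-\bm{y}\|^2$ on the left in terms of $\|\bm{y}_t-\bm{y}\|^2$, the term $2\eta_y\langle \bm{y}_{t+1/2}-\bm{y}, \bm{g}_{y,t+1/2}\rangle$, the negative $\|\bm{y}_t-\bm{y}_{t+1/2}\|^2$ term, and $C_2^2/2$. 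Finally I would take expectation over the iteration's random sample: the two variance contributions collapse to $6\eta_y^2\sigma^2$ by Assumption~\ref{asm:2}, and unbiasedness replaces $\bm{g}_{y,t+1/2}$ by $\nabla_y f(\bm{x}_{t+1/2},\bm{y}_{t+1/2})$ in the inner-product term, which yields the stated inequality.

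The main obstacle I anticipate is precisely this last expectation step: since $\bm{y}_{t+1/2}$ is itself a function of the sample, the midpoint gradient $\bm{g}_{y,t+1/2}$ is not automatically conditionally unbiased for $\nabla_y f(\bm{x}_{t+1/2},\bm{y}_{t+1/2})$, so one must either draw a fresh independent sample for the second gradient call or defer the expectation of the inner-product term to the outer descent inequality, where it pairs with a $\nabla\Phi$ term. A secondary, purely clerical risk is the factor bookkeeping---the three-way Young split producing the $3$'s in $C_1^2$ and $C_2^2$, and the factor-of-two normalization carried from Proposition~\ref{prop: 3 point theorem}---which must be tracked so that the final coefficients come out exactly as $\tfrac12 - \tfrac{3\eta_y^2\ell^2}{2}$ and $\tfrac12\bigl(3\eta_x^2\eta_y^2\ell^2(G^2+\sigma^2)+6\eta_y^2\sigma^2\bigr)$.
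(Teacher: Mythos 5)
Your proposal follows the same route as the paper's proof: the identical instantiation of Proposition~\ref{prop: 3 point theorem} with $\bm{r}=\bm{y}_t$, $\bm{p}=\bm{y}_{t+1/2}$, $\bm{q}=\bm{y}_{t+1}$, the same three-way Young split to verify the hypothesis with $C_1^2=3\eta_y^2\ell^2$ and the variance terms kept symbolic in $C_2^2$, and the same final expectation step. Your concern about the conditional unbiasedness of $\bm{g}_{y,t+1/2}$ (since $\bm{y}_{t+1/2}$ depends on the sample) is a legitimate subtlety that the paper passes over without comment, but it does not represent a divergence in method — you would resolve it exactly as the paper implicitly does, by using a fresh sample at the midpoint.
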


\begin{lemma} 
\label{lemma: seg descent}
For Stochastic EG (Algorithm~\ref{alg:eg}), under Theorem~\ref{thm:ncc_sogda}'s assumptions, the following statement holds for the generated sequence $\{\bm{x}_t\},\{\bm{y}_t\}, \{\bm{x}_{t+\frac{1}{2}}\},\{\bm{y}_{t+\frac{1}{2}}\}$ during algorithm proceeding:
\begin{align*} 
   \mathbb{E}[\Phi_{1/2\ell}(\bm{x}_{t+\frac{1}{2}})]  
    &\leq \mathbb{E}[\Phi_{1/2\ell}(\bm{x}_{t-\frac{1}{2}})] +2\eta\ell  \mathbb{E}[\Phi({x}_{t-\frac{1}{2}}) - f(\bm{x}_{t-\frac{1}{2}},\bm{y}_{t-\frac{1}{2}})]  - \frac{\eta_x}{8 }\mathbb{E} \|\nabla \Phi_{1/2\ell}(\bm{x}_{t-\frac{1}{2}})\|^2 \\
    &\quad+ 3\eta_x^2\ell (G^2+\sigma^2)+ 2\eta_x  \mathbb{E}\|\nabla_x f(\bm{x}_{t},\bm{y}_{t}) - \nabla_x f(\bm{x}_{t-1},\bm{y}_{t-1})\|^2  .
\end{align*}
   \begin{proof}
 Let $\hat{\bm{x}}_{t-\frac{1}{2}} = \arg\min_{\bm{x}\in \mathbb{R}^d} \Phi(\bm{x})+\ell \|\bm{x}-\bm{x}_{t-\frac{1}{2}}\|^2$. Notice that:
\begin{align*}
    &\mathbb{E}[\Phi_{1/2\ell}(\bm{x}_{t+\frac{1}{2}})]  
    \leq \mathbb{E}[\Phi (\hat{\bm{x}}_{t-\frac{1}{2}})]+\ell\mathbb{E}\|\hat{\bm{x}}_{t-\frac{1}{2}}-\bm{x}_{t+\frac{1}{2}}\|^2\\
    &\leq \Phi_{1/2\ell}(\hat{\bm{x}}_{t-\frac{1}{2}})+3\eta_x^2\ell(\sigma^2 +   G^2) +\ell \mathbb{E}\|\hat{\bm{x}}_{t-\frac{1}{2}}-\bm{x}_{t+\frac{1}{2}} \|^2 \\
    &\quad+ 2\eta_x \ell \mathbb{E}\langle  \nabla_x f(\bm{x}_{t-\frac{1}{2}},\bm{y}_{t-\frac{1}{2}})+ \nabla_x f(\bm{x}_{t },\bm{y}_{t })-\nabla_x f(\bm{x}_{t-1},\bm{y}_{t-1})  , \hat{\bm{x}}_{t-\frac{1}{2}}-\bm{x}_{t-\frac{1}{2}} \rangle  )\\\
    &= \mathbb{E}[\Phi (\hat{\bm{x}}_{t-\frac{1}{2}})]+\ell (\mathbb{E}\|\hat{\bm{x}}_{t-\frac{1}{2}}-\bm{x}_{t+\frac{1}{2}} \|^2 + 2\eta_x\mathbb{E}\langle  \nabla_x f(\bm{x}_{t-\frac{1}{2}},\bm{y}_{t-\frac{1}{2}})  , \hat{\bm{x}}_{t-\frac{1}{2}}-\bm{x}_{t-\frac{1}{2}}\rangle) \\
    & \quad + 2\ell\eta_x\mathbb{E}\langle  \nabla_x f(\bm{x}_{t },\bm{y}_{t })-\nabla_x f(\bm{x}_{t-1},\bm{y}_{t-1})  , \hat{\bm{x}}_{t-\frac{1}{2}}-\bm{x}_{t-\frac{1}{2}} \rangle+3\eta_x^2\ell(\sigma^2 +   G^2)
\end{align*}
According to smoothness of $f(\cdot,\bm{y})$, we have:
\begin{align*}
    \langle \hat{\bm{x}}_{t-\frac{1}{2}}-\bm{x}_{t-\frac{1}{2}} , \nabla_x f(\bm{x}_{t-\frac{1}{2}},\bm{y}_{t-\frac{1}{2}}) \rangle &\leq f(\hat{\bm{x}}_{t-\frac{1}{2}},\bm{y}_{t-\frac{1}{2}}) - f(\bm{x}_{t-\frac{1}{2}},\bm{y}_{t-\frac{1}{2}}) + \frac{\ell}{2}\|\hat{\bm{x}}_{t-\frac{1}{2}}-\bm{x}_{t-\frac{1}{2}}\|^2 \\
    & \leq \Phi(\bm{x}_{t-\frac{1}{2}}) - f(\bm{x}_{t-\frac{1}{2}},\bm{y}_{t-\frac{1}{2}}) - \frac{\ell}{2}\|\hat{\bm{x}}_{t-\frac{1}{2}}-\bm{x}_{t-\frac{1}{2}}\|^2.
\end{align*}
So we have:
\begin{align*}
    &\mathbb{E}[\Phi_{1/2\ell}(\bm{x}_{t+\frac{1}{2}}) ]
    \leq \mathbb{E}[\Phi ( \hat{\bm{x}}_{t-\frac{1}{2}})]+\ell \mathbb{E}\|\bm{x}_{t-\frac{1}{2}}-\hat{\bm{x}}_{t-\frac{1}{2}} \|^2 \\
    &\quad + 2\eta_x \ell \mathbb{E}\left(\Phi(\bm{x}_{t-\frac{1}{2}}) - f(\bm{x}_{t-\frac{1}{2}},\bm{y}_{t-\frac{1}{2}}) - \frac{\ell}{2}\|\hat{\bm{x}}_{t-\frac{1}{2}}-\bm{x}_{t-\frac{1}{2}}\|^2\right)+3\eta_x^2\ell(G^2+\sigma^2 ) \\
    & \quad+ \eta_x \ell \left(\frac{1}{2\ell} \mathbb{E}\|\nabla_x f(\bm{x}_{t },\bm{y}_{t }) - \nabla_x f(\bm{x}_{t-1},\bm{y}_{t-1})\|^2 +\frac{\ell}{2}\mathbb{E}\| \bm{x}_{t-\frac{1}{2}}-\hat{\bm{x}}_{t-\frac{1}{2}} \|^2   \right )\\
    & \leq \mathbb{E}[\Phi_{1/2\ell}( {\bm{x}}_{t-\frac{1}{2}})] + 2\eta_x \ell \left(\Phi(\bm{x}_{t-\frac{1}{2}}) - f(\bm{x}_{t-\frac{1}{2}},\bm{y}_{t-\frac{1}{2}})\right) - \frac{\eta_x\ell^2}{2}\mathbb{E}\|\hat{\bm{x}}_{t-\frac{1}{2}}-\bm{x}_{t-\frac{1}{2}}\|^2\\
    &\quad+3\eta_x^2\ell(G^2+\sigma^2 ) + \frac{\eta_x }{2} \mathbb{E}\| \nabla_x f(\bm{x}_{t },\bm{y}_{t }) - \nabla_x f(\bm{x}_{t-1},\bm{y}_{t-1})\|^2.  \\
\end{align*}
Using the fact that $\|  \bm{x}_{t-\frac{1}{2}}-\hat{\bm{x}}_{t-\frac{1}{2}} \| = \frac{1}{2\ell}\|\nabla \Phi_{1/2\ell}( {\bm{x}}_{t-\frac{1}{2}})\|$ will conclude the proof.

\end{proof}

\end{lemma}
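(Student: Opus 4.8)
The plan is to adapt the deterministic EG Moreau–envelope descent lemma (Lemma~\ref{lemma: EG descent}) to the stochastic oracle, following the proximal–smoothing template of \cite{davis2019stochastic} that underlies Lemma~\ref{lemma: moreau}. First, introduce the proximal point $\hat{\bm{x}}_{t-1/2} := \arg\min_{\bm{x}}\{\Phi(\bm{x}) + \ell\|\bm{x}-\bm{x}_{t-1/2}\|^2\}$. By the definition of the $1/(2\ell)$-Moreau envelope, plugging $\bm{x}' = \hat{\bm{x}}_{t-1/2}$ into the envelope of $\bm{x}_{t+1/2}$ gives $\Phi_{1/2\ell}(\bm{x}_{t+1/2}) \le \Phi(\hat{\bm{x}}_{t-1/2}) + \ell\|\hat{\bm{x}}_{t-1/2} - \bm{x}_{t+1/2}\|^2$, while minimality of $\hat{\bm{x}}_{t-1/2}$ gives both $\Phi(\hat{\bm{x}}_{t-1/2}) + \ell\|\hat{\bm{x}}_{t-1/2}-\bm{x}_{t-1/2}\|^2 = \Phi_{1/2\ell}(\bm{x}_{t-1/2})$ and, using $\bm{x}_{t-1/2}$ as a competitor, $\Phi(\hat{\bm{x}}_{t-1/2}) \le \Phi(\bm{x}_{t-1/2}) - \ell\|\hat{\bm{x}}_{t-1/2}-\bm{x}_{t-1/2}\|^2$.

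Second, express the EG displacement over the two half-steps. From Algorithm~\ref{alg:eg}, $\bm{x}_t = \bm{x}_{t-1} - \eta_x\bm{g}_{x,t-1/2}$ and $\bm{x}_{t-1/2} = \bm{x}_{t-1} - \eta_x\bm{g}_{x,t-1}$ (writing $\bm{g}_{x,t-1/2}$ for the half-step stochastic gradient of iteration $t-1$), so together with $\bm{x}_{t+1/2} = \bm{x}_t - \eta_x\bm{g}_{x,t}$ one gets
\[
\bm{x}_{t-1/2} - \bm{x}_{t+1/2} = \eta_x\big(\bm{g}_{x,t-1/2} + \bm{g}_{x,t} - \bm{g}_{x,t-1}\big),
\]
whose conditional mean is $\eta_x\big(\nabla_x f(\bm{x}_{t-1/2},\bm{y}_{t-1/2}) + \nabla_x f(\bm{x}_t,\bm{y}_t) - \nabla_x f(\bm{x}_{t-1},\bm{y}_{t-1})\big)$ — matching the descent direction plus the gradient-difference correction that appears in the statement. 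Expanding $\|\hat{\bm{x}}_{t-1/2} - \bm{x}_{t+1/2}\|^2 = \|\hat{\bm{x}}_{t-1/2} - \bm{x}_{t-1/2}\|^2 + 2\eta_x\langle \bm{g}_{x,t-1/2}+\bm{g}_{x,t}-\bm{g}_{x,t-1},\, \hat{\bm{x}}_{t-1/2}-\bm{x}_{t-1/2}\rangle + \eta_x^2\|\bm{g}_{x,t-1/2}+\bm{g}_{x,t}-\bm{g}_{x,t-1}\|^2$, I would: (i) for the $\nabla_x f(\bm{x}_{t-1/2},\bm{y}_{t-1/2})$ piece of the cross term, use $\ell$-smoothness of $f(\cdot,\bm{y}_{t-1/2})$ together with $f(\hat{\bm{x}}_{t-1/2},\bm{y}_{t-1/2}) \le \Phi(\hat{\bm{x}}_{t-1/2})$ and the prox inequality to bound it by $2\eta_x\ell\big(\Phi(\bm{x}_{t-1/2})-f(\bm{x}_{t-1/2},\bm{y}_{t-1/2})\big) - \eta_x\ell^2\|\hat{\bm{x}}_{t-1/2}-\bm{x}_{t-1/2}\|^2$; (ii) for the $\nabla_x f(\bm{x}_t,\bm{y}_t)-\nabla_x f(\bm{x}_{t-1},\bm{y}_{t-1})$ piece, apply Young's inequality with a weight $\sim\ell$ to split off $\tfrac{1}{2}\eta_x\ell^2\|\hat{\bm{x}}_{t-1/2}-\bm{x}_{t-1/2}\|^2$ plus the $2\eta_x\|\nabla_x f(\bm{x}_t,\bm{y}_t)-\nabla_x f(\bm{x}_{t-1},\bm{y}_{t-1})\|^2$ term in the statement; (iii) bound the quadratic $\ell\eta_x^2\|\bm{g}_{x,t-1/2}+\bm{g}_{x,t}-\bm{g}_{x,t-1}\|^2$ crudely in expectation by $O(\eta_x^2\ell(G^2+\sigma^2))$ using $\mathbb{E}\|\bm{g}_{x,i}\|^2 \le G^2+\sigma^2$. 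Collecting, the coefficient of $\|\hat{\bm{x}}_{t-1/2}-\bm{x}_{t-1/2}\|^2$ nets out to $-\tfrac12\eta_x\ell^2$, which becomes $-\tfrac{\eta_x}{8}\|\nabla\Phi_{1/2\ell}(\bm{x}_{t-1/2})\|^2$ after substituting $\|\hat{\bm{x}}_{t-1/2}-\bm{x}_{t-1/2}\| = \tfrac{1}{2\ell}\|\nabla\Phi_{1/2\ell}(\bm{x}_{t-1/2})\|$ from Lemma~\ref{lemma: moreau}, and $\Phi(\hat{\bm{x}}_{t-1/2}) + \ell\|\hat{\bm{x}}_{t-1/2}-\bm{x}_{t-1/2}\|^2$ telescopes to $\Phi_{1/2\ell}(\bm{x}_{t-1/2})$.

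The step I expect to be the main obstacle is the conditioning in the stochastic cross term: because EG reuses the same minibatch within an iteration, $\bm{x}_{t-1/2}$, $\bm{x}_t$, and hence $\hat{\bm{x}}_{t-1/2}$ all depend on the iteration-$(t-1)$ sample, so $\mathbb{E}\langle \bm{g}_{x,t-1/2}-\bm{g}_{x,t-1},\,\hat{\bm{x}}_{t-1/2}-\bm{x}_{t-1/2}\rangle$ does not collapse to the true-gradient inner product just by unbiasedness. I would handle this exactly as in the proof of Lemma~\ref{lemma: sogda dual}, by adding and subtracting $\nabla_x f(\bm{x}_{t-1/2},\bm{y}_{t-1/2})-\nabla_x f(\bm{x}_{t-1},\bm{y}_{t-1})$ inside the bracket and using Young's inequality, so that the mismatch produces only extra $O(\eta_x^2(G^2+\sigma^2))$ noise (absorbed into the additive $3\eta_x^2\ell(G^2+\sigma^2)$), while the $\bm{g}_{x,t}$ piece, formed from the fresh iteration-$t$ sample, is genuinely conditionally unbiased. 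Everything else — the sign bookkeeping on $\|\hat{\bm{x}}_{t-1/2}-\bm{x}_{t-1/2}\|^2$ and the mild requirement $\eta_x\ell = O(1)$ needed for the Young-inequality weights — is routine and mirrors the deterministic argument of Lemma~\ref{lemma: EG descent}.
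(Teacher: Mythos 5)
Your proposal follows essentially the same route as the paper's proof: introduce the proximal point $\hat{\bm{x}}_{t-1/2}$, bound $\Phi_{1/2\ell}(\bm{x}_{t+1/2})$ by $\Phi(\hat{\bm{x}}_{t-1/2}) + \ell\|\hat{\bm{x}}_{t-1/2}-\bm{x}_{t+1/2}\|^2$, expand the square using the two-half-step displacement $\bm{x}_{t-1/2}-\bm{x}_{t+1/2} = \eta_x(\bm{g}_{x,t-1/2}+\bm{g}_{x,t}-\bm{g}_{x,t-1})$, use smoothness together with the prox-optimality inequality for the $\nabla_x f(\bm{x}_{t-1/2},\bm{y}_{t-1/2})$ piece, Young's inequality for the gradient-difference piece, bounded second moments for the quadratic, and finally substitute $\|\hat{\bm{x}}_{t-1/2}-\bm{x}_{t-1/2}\| = \tfrac{1}{2\ell}\|\nabla\Phi_{1/2\ell}(\bm{x}_{t-1/2})\|$. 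The one place you go beyond the paper is in flagging the conditioning issue: because the same minibatch is reused for $\bm{g}_{x,t-1}$ and $\bm{g}_{x,t-1/2}$ (cf.\ the shared $\xi$ in Lemma~\ref{lemma: SEG dual}), both are correlated with $\hat{\bm{x}}_{t-1/2}-\bm{x}_{t-1/2}$ through $\bm{x}_{t-1/2}$, and the paper's proof replaces them by their means without comment. Your fix (add/subtract true gradients, Young's inequality, absorb the noise into $O(\eta_x^2\ell\sigma^2)$) is sound and indeed mirrors the handling in Lemma~\ref{lemma: sogda dual}; the only detail worth making explicit is that the Young weight must be chosen small enough that the resulting positive multiple of $\|\hat{\bm{x}}_{t-1/2}-\bm{x}_{t-1/2}\|^2$ is dominated by the negative $-\tfrac{1}{2}\eta_x\ell^2\|\hat{\bm{x}}_{t-1/2}-\bm{x}_{t-1/2}\|^2$ coming from the prox inequality, so that a nontrivial fraction of the $-\tfrac{\eta_x}{8}\|\nabla\Phi_{1/2\ell}(\bm{x}_{t-1/2})\|^2$ term survives.
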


\begin{lemma}\label{lemma: SEG gap} 
For Stochastic EG (Algorithm~\ref{alg:eg}), under Theorem~\ref{thm:ncc_sogda}'s assumptions, the following statement holds for the generated sequence $\{\bm{x}_t\},\{\bm{y}_t\}, \{\bm{x}_{t+\frac{1}{2}}\},\{\bm{y}_{t+\frac{1}{2}}\}$ during algorithm proceeding and $\forall s \leq t$:
 \begin{align*}
&\Phi(\bm{x}_{t+\frac{1}{2}}) - f(\bm{x}_{t+\frac{1}{2}},\bm{y}_{t+\frac{1}{2}})  
     \leq 2(t-s+1)\eta_x G^2 \\
     &\qquad+\frac{1}{2\eta_y} \left(     \|\bm{y}_{t} - \bm{y}^*(\bm{x}_{s})\|^2-\|\bm{y}_{t+1}-\bm{y}^*(\bm{x}_{s}) \|^2    +  \frac{1}{2}(3\eta_x^2\eta_y^2\ell^2 (G^2+\sigma^2) + 6\eta_y^2\sigma^2)\right).
 \end{align*}
 \begin{proof}
 According to Lemma~\ref{lemma: SEG gap}:
 \begin{align*}
     \Phi(\bm{x}_{t+\frac{1}{2}}) - f(\bm{x}_{t+\frac{1}{2}},\bm{y}_{t+\frac{1}{2}}) &\leq f(\bm{x}_{t+\frac{1}{2}},\bm{y}^*(\bm{x}_{t+\frac{1}{2}}))-f(\bm{x}_{s},\bm{y}^*(\bm{x}_{t+\frac{1}{2}}))+f(\bm{x}_{s},\bm{y}^*(\bm{x}_{s})) \\
     &\quad- f(\bm{x}_{t+\frac{1}{2}},\bm{y}^*(\bm{x}_{s})) + f(\bm{x}_{t+\frac{1}{2}},\bm{y}^*(\bm{x}_{s})) - f(\bm{x}_{t+\frac{1}{2}},\bm{y}_{t+\frac{1}{2}})\\
     &\leq 2(t-s+1)\eta_x G^2 -\langle \bm{y}_{t+\frac{1}{2}} - \bm{y}, \nabla_y f(\bm{x}_{t+\frac{1}{2}},\bm{y}_{t+\frac{1}{2}})\rangle
 \end{align*}
 Plugging in Lemma~\ref{lemma: SEG dual} will conclude the proof:
  \begin{align*}
     &\Phi(\bm{x}_{t+\frac{1}{2}}) - f(\bm{x}_{t+\frac{1}{2}},\bm{y}_{t+\frac{1}{2}})  
     \leq 2(t-s+1)\eta_x G^2 \\
     &\quad+\frac{1}{2\eta_y} \left(     \|\bm{y}_{t} - \bm{y}^*(\bm{x}_{s})\|^2-\|\bm{y}_{t+1}-\bm{y}^*(\bm{x}_{s}) \|^2 +  \frac{1}{2}(3\eta_x^2\eta_y^2\ell^2 (G^2+\sigma^2) + 6\eta_y^2\sigma^2) \right).
 \end{align*}

 \end{proof}

\end{lemma}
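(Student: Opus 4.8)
\textbf{Proof proposal for Lemma~\ref{lemma: SEG gap}.} The plan is to mimic the three–way splitting used for the deterministic EG gap bound (Lemma~\ref{lemma: EG gap}), carrying along the variance slack that Lemma~\ref{lemma: SEG dual} already isolates. First I would write the exact algebraic identity, valid for every $s\le t$,
\begin{align*}
\Phi(\bm{x}_{t+\frac{1}{2}}) - f(\bm{x}_{t+\frac{1}{2}},\bm{y}_{t+\frac{1}{2}})
&= \big[f(\bm{x}_{t+\frac{1}{2}},\bm{y}^*(\bm{x}_{t+\frac{1}{2}})) - f(\bm{x}_{s},\bm{y}^*(\bm{x}_{t+\frac{1}{2}}))\big] \\
&\quad + \big[f(\bm{x}_{s},\bm{y}^*(\bm{x}_{s})) - f(\bm{x}_{t+\frac{1}{2}},\bm{y}^*(\bm{x}_{s}))\big] \\
&\quad + \big[f(\bm{x}_{t+\frac{1}{2}},\bm{y}^*(\bm{x}_{s})) - f(\bm{x}_{t+\frac{1}{2}},\bm{y}_{t+\frac{1}{2}})\big],
\end{align*}
which only uses $\Phi(\bm{x}_{t+\frac{1}{2}})=f(\bm{x}_{t+\frac{1}{2}},\bm{y}^*(\bm{x}_{t+\frac{1}{2}}))$ and telescoping. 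For the first two bracketed quantities I would invoke $G$-Lipschitzness of $f(\cdot,\bm{y})$ together with the iterate–displacement bound $\|\bm{x}_{t+\frac{1}{2}}-\bm{x}_{s}\|\le (t-s+1)\eta_x G$ (the $t-s$ full EG steps plus the one intermediate half–step each move $\bm{x}$ by at most $\eta_x G$); this produces the $2(t-s+1)\eta_x G^2$ term. For the third bracketed quantity I would use concavity of $f(\bm{x}_{t+\frac{1}{2}},\cdot)$, giving
$f(\bm{x}_{t+\frac{1}{2}},\bm{y}^*(\bm{x}_{s})) - f(\bm{x}_{t+\frac{1}{2}},\bm{y}_{t+\frac{1}{2}}) \le -\langle \bm{y}_{t+\frac{1}{2}}-\bm{y}^*(\bm{x}_{s}),\,\nabla_y f(\bm{x}_{t+\frac{1}{2}},\bm{y}_{t+\frac{1}{2}})\rangle.$

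It then remains to control this last inner product, which is exactly the content of Lemma~\ref{lemma: SEG dual} evaluated at $\bm{y}=\bm{y}^*(\bm{x}_{s})$. Rearranging that lemma gives
\begin{align*}
-2\eta_y\langle \bm{y}_{t+\frac{1}{2}}-\bm{y}^*(\bm{x}_s),\nabla_y f(\bm{x}_{t+\frac{1}{2}},\bm{y}_{t+\frac{1}{2}})\rangle
&\le \|\bm{y}_{t}-\bm{y}^*(\bm{x}_s)\|^2 - \|\bm{y}_{t+1}-\bm{y}^*(\bm{x}_s)\|^2 \\
&\quad - \Big(\tfrac{1}{2}-\tfrac{3\eta_x^2\ell^2}{2}\Big)\|\bm{y}_t-\bm{y}_{t+\frac{1}{2}}\|^2 + \tfrac{1}{2}\big(3\eta_x^2\eta_y^2\ell^2(G^2+\sigma^2)+6\eta_y^2\sigma^2\big),
\end{align*}
and since the stepsize choice in Theorem~\ref{thm:ncc_sogda} forces $\tfrac{1}{2}-\tfrac{3\eta_x^2\ell^2}{2}\ge 0$, the $\|\bm{y}_t-\bm{y}_{t+\frac{1}{2}}\|^2$ term may be dropped. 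Dividing by $2\eta_y$ and substituting this, together with the two Lipschitz bounds, into the three–way split yields precisely the claimed inequality.

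The step I expect to require the most care is the bookkeeping around expectations: Lemma~\ref{lemma: SEG dual} is really an in-expectation statement (its variance terms appear only after averaging over the fresh sample $\xi$), so strictly the bound here should be read with $\mathbb{E}$, and the deterministic–looking term $2(t-s+1)\eta_x G^2$ relies on bounding the displacement $\|\bm{x}_{t+\frac{1}{2}}-\bm{x}_s\|$ through $G$-bounded gradients — if one insists on the raw stochastic updates, the cleanest remedy is to keep $\mathbb{E}\|\bm{x}_{t+\frac{1}{2}}-\bm{x}_s\|$ and estimate it by $(t-s+1)\eta_x G$ up to the variance already accounted for, or to assume a uniform gradient bound as is standard in this GAN setting. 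Everything else is a direct transcription of the deterministic argument (Lemma~\ref{lemma: EG gap}), with the extra $\tfrac{1}{2}\big(3\eta_x^2\eta_y^2\ell^2(G^2+\sigma^2)+6\eta_y^2\sigma^2\big)$ slack carried through unchanged.
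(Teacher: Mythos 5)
Your proposal is correct and follows essentially the same route as the paper: the identical three-term telescoping split, the $G$-Lipschitz bound on the $\bm{x}$-displacement giving $2(t-s+1)\eta_x G^2$, concavity in $\bm{y}$ to reduce to the inner product, and then Lemma~\ref{lemma: SEG dual} at $\bm{y}=\bm{y}^*(\bm{x}_s)$ with the nonnegative $\|\bm{y}_t-\bm{y}_{t+\frac{1}{2}}\|^2$ term dropped. Your remark that the bound should really be read in expectation (since the variance terms in Lemma~\ref{lemma: SEG dual} and the stochastic $\bm{x}$-increments only admit the stated control after averaging over the fresh samples) is a fair point that the paper glosses over, but it does not change the argument.
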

 
\begin{lemma}\label{lemma: SEG avg gap}
For Stochastic EG (Algorithm~\ref{alg:eg}), under Theorem~\ref{thm:ncc_sogda}'s assumptions, the following statement holds for the generated sequence $\{\bm{x}_t\},\{\bm{y}_t\}, \{\bm{x}_{t+\frac{1}{2}}\},\{\bm{y}_{t+\frac{1}{2}}\}$ during algorithm proceeding:
 \begin{align*}
    \frac{1}{T+1}\sum_{t=0}^{T} \Phi(\bm{x}_{t-\frac{1}{2}}) - f(\bm{x}_{t-\frac{1}{2}},\bm{y}_{t-\frac{1}{2}})  \leq \frac{1}{B} \left(2\eta_x B^2 G^2+ \frac{D^2}{2\eta_y} + \frac{B \eta_x^2 G^2}{2}\right).
 \end{align*}
                        
 \begin{proof}
 Summing over $t = 0$ to $T$ on both side of Lemma~\ref{lemma: SEG gap}  yields:
 \begin{align*}                                                 
     &\frac{1}{T+1}\sum_{t=0}^{T} \Phi(\bm{x}_{t-\frac{1}{2}}) - f(\bm{x}_{t-\frac{1}{2}},\bm{y}_{t-\frac{1}{2}})\\
     &= \frac{1}{T+1}\sum_{j=0}^{S} \sum_{t=kB}^{(k+1)B-1} \Phi(\bm{x}_{t-\frac{1}{2}}) - f(\bm{x}_{t-\frac{1}{2}},\bm{y}_{t-\frac{1}{2}})\\
     & \leq \frac{1}{T+1}\sum_{j=0}^{S} \left[2B^2\eta_x G^2 +\frac{1}{2\eta_y} \left(     \|\bm{y}_{kB} - \bm{y}^*(\bm{x}_{s})\|^2-\|\bm{y}_{(k+1)B-1}-\bm{y}^*(\bm{x}_{s}) \|^2 \right. \right.  \\
     &\left. \left. \quad+  \frac{1}{2}(3\eta_x^2\eta_y^2\ell^2 (G^2+\sigma^2) + 6\eta_y^2\sigma^2) \right) \right] \\ 
     &\leq \frac{1}{B} \left(2\eta_x B^2 G^2+ \frac{D^2}{2\eta_y} +  \frac{1}{2}(3\eta_x^2\eta_y^2\ell^2 (G^2+\sigma^2) + 6\eta_y^2\sigma^2)\right),
 \end{align*}
 which concludes the proof.
 \end{proof}
\end{lemma}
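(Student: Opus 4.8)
\textbf{Proof proposal for Lemma~\ref{lemma: SEG avg gap}.}
The plan is to obtain the time-averaged primal-gap bound purely as a bookkeeping consequence of the per-iteration inequality of Lemma~\ref{lemma: SEG gap}. After re-indexing the half-step (replacing $t+\tfrac12$ by $t-\tfrac12$, equivalently shifting $t\mapsto t-1$), Lemma~\ref{lemma: SEG gap} furnishes, for every $s\le t$, a bound on $\Phi(\bm{x}_{t-\frac12})-f(\bm{x}_{t-\frac12},\bm{y}_{t-\frac12})$ whose right-hand side is the sum of three pieces: a term linear in $t-s$ with coefficient $\eta_x G^2$; a telescoping difference $\tfrac{1}{2\eta_y}\bigl(\|\bm{y}_{t}-\bm{y}^*(\bm{x}_s)\|^2-\|\bm{y}_{t+1}-\bm{y}^*(\bm{x}_s)\|^2\bigr)$ in which the reference point $\bm{y}^*(\bm{x}_s)$ is \emph{held fixed}; and a lower-order constant $\tfrac{1}{4\eta_y}\bigl(3\eta_x^2\eta_y^2\ell^2(G^2+\sigma^2)+6\eta_y^2\sigma^2\bigr)$, which is of the same size as (and will be folded into) the stated $\tfrac{B\eta_x^2G^2}{2}$ term. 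This is exactly the structure already exploited in the deterministic case in Lemma~\ref{lemma: EG avg gap}.

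The first step is to partition $\{0,1,\dots,T\}$ into $S=(T+1)/B$ consecutive blocks of length $B$, the $j$-th block being $\{jB,jB+1,\dots,(j+1)B-1\}$, and on that block to apply Lemma~\ref{lemma: SEG gap} with the single fixed anchor $s=jB$ (legitimate since $s\le t$ for all $t$ in block $j$). Summing over $t$ within block $j$: the telescoping terms collapse to $\|\bm{y}_{jB}-\bm{y}^*(\bm{x}_{jB})\|^2-\|\bm{y}_{(j+1)B-1}-\bm{y}^*(\bm{x}_{jB})\|^2$, which is at most $\|\bm{y}_{jB}-\bm{y}^*(\bm{x}_{jB})\|^2\le D^2$ by boundedness of $\mathcal{Y}$ (Assumption~\ref{asm:3}); the linear term satisfies $t-s\le B$, so it contributes at most $2\eta_x B^2 G^2$ over the block; and the constant piece contributes $B$ copies, i.e.\ an $O(B\eta_x^2G^2)$ quantity after using $\eta_y\ell\le\tfrac14$ to control $\sigma$-dependent terms. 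Adding the $S$ block bounds and dividing by $T+1=SB$ collapses the $S$ out front and leaves precisely $\tfrac1B\bigl(2\eta_x B^2 G^2+\tfrac{D^2}{2\eta_y}+\tfrac{B\eta_x^2G^2}{2}\bigr)$.

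The one genuinely delicate point is the half-step index bookkeeping: Lemma~\ref{lemma: SEG gap} is phrased for $\Phi(\bm{x}_{t+\frac12})-f(\bm{x}_{t+\frac12},\bm{y}_{t+\frac12})$ with the dual telescoping running over the integer iterates $\bm{y}_t,\bm{y}_{t+1}$, so I must check that the shift to $\bm{x}_{t-\frac12}$ merely slides the telescoping window to $\bm{y}_{t-1},\bm{y}_t$ without altering the structure, and that the block endpoints then align so the within-block sum is genuinely telescoping. A second minor point is that $B$ should divide $T+1$ (otherwise one pads the final block, which only increases the bound). Beyond these, the argument is the same diameter-plus-telescoping computation as in Lemma~\ref{lemma: EG avg gap}, so I do not anticipate any further obstacle.
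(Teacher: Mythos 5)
Your proposal matches the paper's own argument: block the horizon into $S=(T+1)/B$ windows, apply Lemma~\ref{lemma: SEG gap} with the fixed anchor $s=jB$ on each window, telescope the dual-distance terms to a single $D^2$, bound the linear-in-$(t-s)$ term by $2\eta_x B^2 G^2$, and average. The only caution is at the end: the paper's proof keeps the stochastic constant $\tfrac{1}{2}\bigl(3\eta_x^2\eta_y^2\ell^2(G^2+\sigma^2)+6\eta_y^2\sigma^2\bigr)$ explicit rather than folding it into $\tfrac{B\eta_x^2 G^2}{2}$ (since controlling the $\sigma^2$ piece requires the $\eta_y\le \epsilon^2/(\ell\sigma^2)$ choice made later in Theorem~\ref{thm:ncc_sogda}, not merely $\eta_y\ell\le\tfrac14$), so you should carry that term as is.
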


 \subsubsection{Proof of Theorem~\ref{thm:ncc_sogda} for EG} \label{app: seg}
In this section we  provide the proof for Theorem~\ref{thm:ncc_sogda} on the convergence rate of EG in stochastic setting. We first introduce the formal version of theorem  here:
\begin{theorem}[EG Stochastic, formal] \label{Theorem: SEG formal}
Under Assumption~\ref{asm:2}, and~\ref{asm:3}, if we choose $\eta_x = O(\min\{\frac{\epsilon^2}{\ell (G^2+\sigma^2)},\frac{\epsilon^4}{D^2\ell^3  G\sqrt{G^2+\sigma^2} },\frac{\epsilon^6}{D^2\ell^3\sigma^2 G\sqrt{G^2+\sigma^2} }\})$, $\eta_y = O(\min\{\frac{1}{2\ell},\frac{\epsilon^2}{\ell\sigma^2}\})$, then Stochastic EG (Algorithm~\ref{alg:eg}) guarantees to find $\epsilon$-stationary point, i.e., $\frac{1}{T+1} \sum_{t=0}^{T} \mathbb{E} \|\nabla \Phi_{1/2\ell}(\bm{x}_{t})\|^2 \leq \epsilon^2$, with the gradient complexity bounded by:
 \begin{align*}
     O\left(\frac{D^2\ell^3  G\sqrt{G^2+\sigma^2}}{\epsilon^6} \max\left\{ 1, \frac{  \sigma^2   }{\epsilon^2} \right\}\right).
 \end{align*} 
 
 \begin{proof}
 According to Lemma~\ref{lemma: seg descent}:

\begin{align*} 
 &\frac{1}{T+1}\sum_{t=0}^{T } \mathbb{E} \|\nabla \Phi_{1/2\ell}(\bm{x}_{t-\frac{1}{2}})\|^2\leq    
      \frac{\mathbb{E}[\Phi_{1/2\ell}(\bm{x}_{-\frac{1}{2}}) - \Phi_{1/2\ell}(\bm{x}_{T+\frac{1}{2}})] }{T} \\
    &\qquad+16 \ell \frac{1}{T+1}\sum_{t=0}^{T } \mathbb{E}[\Phi({x}_{t-\frac{1}{2}}) - f(\bm{x}_{t-\frac{1}{2}},\bm{y}_{t-\frac{1}{2}})]    + 24\eta_x \ell (G^2+\sigma^2)\\
    &\qquad +16 \frac{1}{T+1}\sum_{t=0}^{T } \mathbb{E}\|\nabla_x f(\bm{x}_{t},\bm{y}_{t}) - \nabla_x f(\bm{x}_{t-1},\bm{y}_{t-1})\|^2  .
\end{align*}
Observe that:
\begin{align*}
    \mathbb{E}\|\nabla_x f(\bm{x}_{t},\bm{y}_{t}) - \nabla_x f(\bm{x}_{t-1},\bm{y}_{t-1})\|^2 &\leq \ell^2 \mathbb{E}\| (\bm{x}_{t},\bm{y}_{t}) -  (\bm{x}_{t-1},\bm{y}_{t-1})\|^2 \\
    & = \ell^2 \mathbb{E}\| \bm{x}_{t}  -  \bm{x}_{t-1} \|^2 + \ell^2 \mathbb{E}\| \bm{y}_{t}  -  \bm{y}_{t-1} \|^2\\
    & \leq \ell^2 \eta_x^2 (G^2+\sigma^2) + \ell^2 \eta_y^2 \mathbb{E} \left\|\nabla_y f(\bm{x}_{t-\frac{1}{2}},\bm{y}_{t-\frac{1}{2}})\right\|^2\\
    & \leq \ell^2 \eta_x^2 (G^2+\sigma^2) + \ell^2 \eta_y^2 \mathbb{E} \left[ \Phi(\bm{x}_{t-\frac{1}{2}}) - f(\bm{x}_{t-\frac{1}{2}},\bm{y}_{t-\frac{1}{2}}) \right].
\end{align*}
So we have:
\begin{align*} 
    &\frac{1}{T+1}\sum_{t=0}^{T} \mathbb{E} \|\nabla \Phi_{1/2\ell}(\bm{x}_{t-\frac{1}{2}})\|^2 \leq    
      \frac{\mathbb{E}[\Phi_{1/2\ell}(\bm{x}_{-\frac{1}{2}})-\Phi_{1/2\ell}(\bm{x}_{T+\frac{1}{2}})] }{T+1} \\
      &\qquad+16 \ell \frac{1}{T+1}\sum_{t=0}^{T} \mathbb{E}[\Phi({x}_{t-\frac{1}{2}}) - f(\bm{x}_{t-\frac{1}{2}},\bm{y}_{t-\frac{1}{2}})] + 24\eta_x\ell (G^2+\sigma^2)\\
    & \qquad + 16\ell^2 \eta_y^2 \frac{1}{T+1}\sum_{t=0}^{T} \mathbb{E} \left[ \Phi(\bm{x}_{t-\frac{1}{2}}) - f(\bm{x}_{t-\frac{1}{2}},\bm{y}_{t-\frac{1}{2}}) \right] +16\ell^2 \eta_x^2 (G^2+\sigma^2)   \\
    &\quad \leq    
      \frac{\mathbb{E}[\Phi_{1/2\ell}(\bm{x}_{-\frac{1}{2}})-\Phi_{1/2\ell}(\bm{x}_{T+\frac{1}{2}})] }{T+1} +16(\ell+\ell^2 \eta_y^2) \frac{1}{T+1}\sum_{t=0}^{T} \mathbb{E}[\Phi({x}_{t-\frac{1}{2}}) - f(\bm{x}_{t-\frac{1}{2}},\bm{y}_{t-\frac{1}{2}})]   \\
      & \qquad +16\ell^2 \eta_x^2 (G^2+\sigma^2)+ 24\eta_x \ell (G^2+\sigma^2)\\ 
\end{align*}
Plugging in Lemma~\ref{lemma: SEG avg gap} yields:
\begin{align*} 
    \frac{1}{T+1}\sum_{t=0}^{T} \mathbb{E} \|\nabla \Phi_{1/2\ell}(\bm{x}_{t-\frac{1}{2}})\|^2 
    &\leq   \frac{\mathbb{E}[\Phi_{1/2\ell}(\bm{x}_{-\frac{1}{2}})-\Phi_{1/2\ell}(\bm{x}_{T+\frac{1}{2}})] }{T+1} \\
    &\quad+16(\ell+\ell^2 \eta_y^2)\frac{1}{B} \left(2\eta_x B^2 G^2+ \frac{D^2}{2\eta_y} + \frac{B \eta_x^2 G^2}{2}\right)\\
      & \quad +16\ell^2 \eta_x^2 (G^2+\sigma^2)+ 24\eta_x \ell (G^2+\sigma^2). 
\end{align*}

Choosing $B = O(\frac{D}{\sqrt{\eta_x \eta_y G \sqrt{G^2+\sigma^2}}})$, $\eta_x = O(\min\{\frac{\epsilon^2}{\ell (G^2+\sigma^2)},\frac{\epsilon^4}{D^2\ell^3  G\sqrt{G^2+\sigma^2} },\frac{\epsilon^6}{D^2\ell^3\sigma^2 G\sqrt{G^2+\sigma^2} }\})$, $\eta_y = O(\min\{\frac{1}{\ell},\frac{\epsilon^2}{\ell\sigma^2}\})$,   guarantees that $\frac{1}{T+1} \sum_{t=0}^{T} \mathbb{E} \|\nabla \Phi_{1/2\ell}(\bm{x}_{t})\|^2 \leq \epsilon^2$ holds with the gradient complexity is bounded by:
 \begin{align*}
     O\left(\frac{D^2\ell^3  G\sqrt{G^2+\sigma^2} \hat{\Delta}_{\Phi}}{\epsilon^6} \max\left\{ 1, \frac{  \sigma^2   }{\epsilon^2} \right\}\right).
 \end{align*} 
 which completes the proof.
 \end{proof}
 \end{theorem}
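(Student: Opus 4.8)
The plan is to reuse the three-part template already used for deterministic EG (Theorem~\ref{Theorem: EG formal}) and for stochastic OGDA (Theorem~\ref{Theorem: SOGDA formal}), the genuinely new work being to carry the oracle-variance terms through the argument. The three ingredients are: (i) a one-step descent inequality for the Moreau envelope $\Phi_{1/2\ell}$ evaluated along the midpoint iterates; (ii) a reduction of every ``drift'' quantity produced by (i) to the primal gap $\delta_{t-\frac12}:=\Phi(\bm{x}_{t-\frac12})-f(\bm{x}_{t-\frac12},\bm{y}_{t-\frac12})$; and (iii) a block-telescoping bound on the running average $\frac{1}{T+1}\sum_t\mathbb{E}[\delta_{t-\frac12}]$. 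Assembling these and then optimizing the step sizes and block length yields the claimed rate, and the $\sigma=0$ specialization recovers Theorem~\ref{Theorem: EG formal}.

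\emph{Steps~(i)--(ii).} First I would invoke Lemma~\ref{lemma: seg descent}, which, via the proximal point $\hat{\bm{x}}_{t-\frac12}=\arg\min_{\bm{x}}\{\Phi(\bm{x})+\ell\|\bm{x}-\bm{x}_{t-\frac12}\|^2\}$, smoothness of $f(\cdot,\bm{y})$, and $\mathbb{E}\|\bm{g}_x\|^2\le G^2+\sigma^2$, bounds $\mathbb{E}[\Phi_{1/2\ell}(\bm{x}_{t+\frac12})]$ by $\mathbb{E}[\Phi_{1/2\ell}(\bm{x}_{t-\frac12})]+2\eta_x\ell\,\mathbb{E}[\delta_{t-\frac12}]-\frac{\eta_x}{8}\mathbb{E}\|\nabla\Phi_{1/2\ell}(\bm{x}_{t-\frac12})\|^2$ plus terms $O(\eta_x^2\ell(G^2+\sigma^2))$ and $2\eta_x\,\mathbb{E}\|\nabla_x f(\bm{x}_t,\bm{y}_t)-\nabla_x f(\bm{x}_{t-1},\bm{y}_{t-1})\|^2$. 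The gradient-difference term is then handled by $\ell$-smoothness together with $\|\bm{x}_t-\bm{x}_{t-1}\|^2\le\eta_x^2(G^2+\sigma^2)$ and $\mathbb{E}\|\bm{y}_t-\bm{y}_{t-1}\|^2\le\eta_y^2\mathbb{E}\|\bm{g}_{y,t-\frac12}\|^2\le 2\eta_y^2\ell\,\mathbb{E}[\delta_{t-\frac12}]+O(\eta_y^2\sigma^2)$, using that for $\ell$-smooth $f(\bm{x},\cdot)$ one has $\|\nabla_y f(\bm{x},\bm{y})\|^2\le 2\ell(\Phi(\bm{x})-f(\bm{x},\bm{y}))$. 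Thus that term is also absorbed into $\delta_{t-\frac12}$, at the cost of a coefficient of order $\eta_y^2\ell^3$ which stays $O(\ell)$ once $\eta_y\ell\le\frac12$.

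\emph{Step~(iii) (the primal gap).} For $\delta_{t-\frac12}$ I would use Lemma~\ref{lemma: SEG gap}: split $\Phi(\bm{x}_{t+\frac12})-f(\bm{x}_{t+\frac12},\bm{y}_{t+\frac12})$ across an anchor $\bm{x}_s$ with $s\le t$, bound the two $x$-perturbation terms by $G\sqrt{G^2+\sigma^2}$ times the $\bm{x}$-drift over $[s,t]$ (via $G$-Lipschitzness in $\bm{x}$ and $\mathbb{E}\|\bm{g}_x\|\le\sqrt{G^2+\sigma^2}$), bound the remaining term by concavity of $f(\bm{x}_{t+\frac12},\cdot)$, and plug in Lemma~\ref{lemma: SEG dual}. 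The latter is the three-point inequality (Proposition~\ref{prop: 3 point theorem}) applied to the two projected EG substeps with $C_1^2=3\eta_y^2\ell^2$ and noise $C_2^2=O(\eta_x^2\eta_y^2\ell^2(G^2+\sigma^2)+\eta_y^2\sigma^2)$; the condition $\eta_y\ell\le\frac12$ keeps the coefficient $\frac12-\frac{C_1^2}{2}$ of $\|\bm{y}_t-\bm{y}_{t+\frac12}\|^2$ nonnegative. Taking anchors $s=jB$ over blocks of length $B$ telescopes the $\|\bm{y}_t-\bm{y}^*(\bm{x}_s)\|^2$ differences within each block (each at most $D^2$), giving Lemma~\ref{lemma: SEG avg gap}: $\frac{1}{T+1}\sum_t\mathbb{E}[\delta_{t-\frac12}]\le\frac1B(2\eta_x B^2 G^2+\frac{D^2}{2\eta_y})+O(\eta_x^2\eta_y^2\ell^2(G^2+\sigma^2)+\eta_y^2\sigma^2)$. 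Telescoping Step~(i) over $t=0,\dots,T$, substituting this bound, balancing the first two terms by $B=\Theta(D/\sqrt{\eta_x\eta_y G\sqrt{G^2+\sigma^2}})$, and finally choosing $\eta_y=O(\min\{1/\ell,\epsilon^2/(\ell\sigma^2)\})$ and $\eta_x$ as in the statement so that every residual term is $\le\epsilon^2$, forces $T=O(\hat{\Delta}_\Phi/(\eta_x\epsilon^2))$; since each iteration uses a constant number of oracle calls, inserting the value of $\eta_x$ gives the stated complexity $O(D^2\ell^3 G\sqrt{G^2+\sigma^2}\,\hat{\Delta}_\Phi\,\epsilon^{-6}\max\{1,\sigma^2/\epsilon^2\})$.

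\textbf{Main obstacle.} The delicate point is the circular coupling: the envelope descent creates a gradient-difference term, which reduces to $\|\bm{y}_t-\bm{y}_{t-1}\|^2$, which reduces to $\delta_{t-\frac12}$; but $\delta_{t-\frac12}$ is itself only controllable on time-average, and its bound then feeds back into the descent. The step sizes and block length have to be chosen so that after all substitutions (a) the coefficient multiplying $\frac{1}{T+1}\sum_t\mathbb{E}[\delta_{t-\frac12}]$ stays $O(\ell)$ rather than exploding --- exactly what $\eta_y\ell\le\frac12$ buys --- and (b) the stochastic cross terms $\langle\tilde{\varepsilon}-\varepsilon,\cdot\rangle$ and the oracle-variance contributions surfacing in the three-point lemma are absorbed without forcing $1/\eta_y$ to grow faster than $\ell\sigma^2/\epsilon^2$. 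Making the final joint choice of $(\eta_x,\eta_y,B)$ land exactly on the $\epsilon^{-6}$ rate (and $\epsilon^{-8}$ when $\sigma^2\gtrsim\epsilon^2$) is the main bookkeeping hurdle.
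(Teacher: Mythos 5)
Your proposal is correct and follows essentially the same three-step template the paper uses: Lemma~\ref{lemma: seg descent} for the Moreau-envelope descent, reducing the gradient-difference term to $\|\bm{y}_t-\bm{y}_{t-1}\|^2$ and then to the primal gap $\delta_{t-1/2}$, and finally Lemma~\ref{lemma: SEG avg gap} to control the block-averaged gap before optimizing $(B,\eta_x,\eta_y)$. If anything you are slightly more careful than the paper's writeup on two small points of bookkeeping: you correctly note that in the stochastic case $\|\bm{y}_t-\bm{y}_{t-1}\|^2=\eta_y^2\|\bm{g}_{y,t-1/2}\|^2$ produces an extra $O(\eta_y^2\sigma^2)$ variance term (the paper writes the deterministic gradient at that step), and you use $\|\nabla_y f(\bm{x},\bm{y})\|^2\le 2\ell\,\delta$ with the correct factor of $2\ell$ (the paper drops it). Neither affects the claimed $O(\epsilon^{-6}\max\{1,\sigma^2/\epsilon^2\})$ rate once $\eta_y\le\min\{1/(2\ell),\epsilon^2/(\ell\sigma^2)\}$ is imposed, but your accounting is the more faithful one.
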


\subsection{Tightness Analysis} \label{app:NCC_Tightness}


In this section, we provide our tightness analysis showing our obtained upper bound is tight given our choice of learning rates. In subsection~\ref{app:NCC_Tightness_GDA}, we introduce our hard example, and show the lower bound on convergence of this example, and then in subsection~\ref{app:NCC_Tightness_EG/OGDA}, we extend the tightness result to EG/OGDA using the same hard example.

\subsubsection{GDA}\label{app:NCC_Tightness_GDA}


\begin{proof}[Proof of Theorem~\ref{thm:NCC_Tightness_GDA}]
Let $L\ge0$ be some constants to be chosen later. Inspired by~\cite{drori2020complexity}, we consider the following function $f: \mathbb{R}\times [-D,D] \to \mathbb{R}$:
\begin{align*}
    f(x,y) = h(x)y
\end{align*}
where 
\begin{align*}
    h(x) = \begin{cases} \frac{L}{2}x^2 & |x| \leq 1 \\ 
    L - \frac{L}{2} (|x| - 2)^2 & 1 \leq |x| \leq  2 \\
    L & |x|\geq 2.
    \end{cases}
\end{align*}
It is easy to verify that $f$ is nonconvex, $2LD$ smooth, and $LD$-Lipschitz. We choose $L=\frac{1}{D}\min\{\ell/2,G\}$ to guarantee that $f$ is $\ell$ smooth and $G$-Lipschitz with respect to $x$. The primal function is $\Phi(x) =D h(x)$ attained when $y=D$. After standard calculations, we know that when $|x|\le 1$, the Moreau envelope $\Phi_{1/2\ell}(x)$ satisfies
\begin{align*}
    \Phi_{1/2\ell}(x) = \frac{LD\ell}{LD+2\ell}x^2,\quad |x|\le 1.
\end{align*} 
By definition, we also know $\Phi_{1/2\ell}(x)\ge 0$ for any $x\in\mathbb{R}$.

We first claim that if we choose $|x_0|\le 1$, $y_0 \ge 0$, we have for any $t\ge 0$, $|x_t| \leq 1$ and $y_t \geq 0$.
We verify this claim by induction. First note that when $t = 0$, the claim holds for sure. Let us assume it holds for $t = k$. Then for $t = k+1$,
\begin{align*}
    x_{k+1} = x_k - \eta_x L x_k y_k = (1-\eta_x L y_k)x_k.
\end{align*}
Since $0\le y_k\le D$, we have $0\le 1-\eta_x L y_k\le 1$. Therefore $|x_{k+1}| \leq 1$. For $y_{k+1}$, we have
\begin{align*}
    y_{k+1} = \mathcal{P}_{[-D,D]} (y_k + \eta_y h(x_k)).
\end{align*}
Since $h(x_k) \geq 0$, we know that $y_{k+1}\geq 0$, which verifies the claim.

We can also bound
\begin{align*}
    |x_T| = \left|\prod_{t=0}^{T-1} (1-\eta_x L y_t )x_0 \right| \ge (1-\eta_x LD)^T |x_0|.
\end{align*}
Since $\nabla \Phi_{1/2\ell}(x)=\frac{2LD\ell}{LD+2\ell}x$, choosing $x_0=\frac{LD+2\ell}{LD\ell}\epsilon$, we have $  \epsilon \geq |\nabla \Phi_{1/2\ell}(x_{T})|\ge 2\epsilon(1-\eta_x LD)^{T}$. Also noting $\hat{\Delta}_\Phi = \frac{LD+2\ell}{LD\ell}\epsilon^2$, we have
\begin{align*}
    T = \Omega\left(\frac{1}{\eta_x LD}\right)&=\Omega\left(\frac{\hat{\Delta}_\Phi }{\eta_x LD\epsilon^2}\cdot\frac{LD\ell}{LD+2\ell}\right)\\
    &=\Omega\left(\frac{\ell^3 G^2 D^2\hat{\Delta}_\Phi }{\epsilon^6}\right).
\end{align*}\vspace{-0.25cm}
\end{proof}

\subsubsection{EG/OGDA}\label{app:NCC_Tightness_EG/OGDA}


\begin{proof}[Proof of Theorem~\ref{thm:NCC_Tightness_EG/OGDA} for OGDA]
We use the same hard example $f(x,y)=h(x)y$ as in proof of Theorem~\ref{thm:NCC_Tightness_GDA}. Similarly, we first claim that if we choose $0\le x_0\le  1 $ and $y_0 = D$,  the following statements hold for any $t\ge 0$:
\begin{align*}
    (a) \ 0\le x_t  \le 1, \text{and} \  x_{t}\geq  x_{t-1}/\sqrt{2}, (b) \ y_t = D,
\end{align*} 
where we define $x_{-1}=x_0$ and $y_{-1}=y_0$.

Now we prove the above claim by induction. First, when $t = 0$, the claim holds for sure. Then, let us assume it holds for $t \leq k$. Then for $t = k+1$, we have
\begin{align*}
    x_{k+1} &= x_k - 2\eta_x LD x_k y_k + \eta_x LD x_{k-1}y_{k-1}\\ 
    & = (1-2\eta_xL D)x_k + \eta_x L D x_{k-1}.
\end{align*}
Since $0\le x_k, x_{k-1} \le 1$ and $0\le \eta_x L D\le 0.1$, we have 
\begin{align*}
    (1-2\eta_x L D)x_k\le x_{k+1} \le (1-\eta_x L D)x_k + \eta_x L D x_{k-1},
\end{align*}
which implies
$0\le  x_k/\sqrt{2}\le 0.8x_k\le x_{k+1} \le 1$. For $y_{k+1}$, we know
\begin{align*}
    y_{k+1} = \mathcal{P}_{[-D,D]} (y_k + 2\eta_y h(x_k)-\eta_y h(x_{k-1})).
\end{align*}
Since $h(x) =\frac{L}{2}x^2$ when $|x| \leq 1$, and $x_k \geq \frac{1}{\sqrt{2}}x_{k-1}$, we know that $2\eta_y h(x_k)-\eta_y h(x_{k-1})\geq 0$ so $y_{k+1} = 1$. Till now, we have proved the claim.

Then, we are going to bound the magnitude of $x_{T}$. According to the updating rule we have:
\begin{align*}
    x_{t+1} &= x_t - 2\eta_x LD x_t  + \eta_x LD x_{t-1}. 
\end{align*}
Solving the above recursion we get the solution for $x_t$ as follows:
\begin{align*}
    x_t &= \left(\frac{1}{2} + \frac{1}{2\sqrt{\Delta}}\right) \left(\frac{1-2\eta_x LD + \sqrt{\Delta}}{2}\right)^{t} x_0\\
    & \quad + \left(\frac{1}{2} - \frac{1}{2\sqrt{\Delta}}\right) \left(\frac{1-2\eta_x LD - \sqrt{\Delta}}{2}  \right)^t x_0,
\end{align*}
where $ \Delta = (1-2\eta_x L D)^2 + 4\eta_xLD$.

Let $a_1 = \left(\frac{1}{2} + \frac{1}{2\sqrt{\Delta}}\right)$, $a_2 = \left(\frac{1}{2} - \frac{1}{2\sqrt{\Delta}}\right)$, and $\lambda_1 = \left(\frac{1-2\eta_x LD + \sqrt{\Delta}}{2}\right)$, $\lambda_2 = \left(\frac{1-2\eta_x LD - \sqrt{\Delta}}{2}  \right)$. We observe the following facts:
\begin{align*}
    & a_1 \geq \frac{1}{2}  ,  a_2 \leq \eta_x^2 L^2D^2, \\
    &1-\eta_x LD \leq \lambda_1 \leq 1, -\eta_x LD \leq \lambda_2 \leq 0.
\end{align*}
Now, we can bound the magnitude of $x_T$
\begin{align*}
   |x_{T }|  =  \left| a_1\lambda_1^T  +  a_2 \lambda_2^T \right|x_0 &\geq \left||a_1\lambda_1^T| - |a_2 \lambda_2^T|\right|x_0 \\
   &\geq 
   \left(\frac{1}{2}(1-2\eta_x LD)^T - (\eta_xLD)^{T+2}\right) x_0.
\end{align*}
Since $\nabla \Phi_{1/2\ell}(x)=\frac{2LD\ell}{LD+2\ell}x$, by choosing $x_0=\frac{LD+2\ell}{LD\ell}\cdot 4\epsilon$, we have
\begin{align*}
   \epsilon &\geq |\nabla \Phi_{1/2\ell}(x_{T})|\ge 8\epsilon\left(\frac{1}{2}(1-2\eta_x LD)^T - \frac{1}{4}\right),
\end{align*}
which yields $(1-2\eta_x LD)^T\le 3/4$. The rest of proof is similar to that of Theorem~\ref{thm:NCC_Tightness_GDA}.
\end{proof}

\begin{proof}[Proof of Theorem~\ref{thm:NCC_Tightness_EG/OGDA} for EG]
We use the same hard example $f(x,y)=h(x)y$ as in proof of Theorem~\ref{thm:NCC_Tightness_GDA}. Similarly to our previous proofs for GDA and OGDA, we first claim that if we choose $0\le x_0 \le 1$ and $y_0 = D$,  the following statements hold for any $t\ge 0$:
\begin{align*}
    (a) \ 0\le x_t\le 1;  (b) \ y_t = D, y_{t+1/2}=D.
\end{align*} 
We prove this claim by induction. First, when $t = 0$, the claim holds for sure. Then, let us assume it holds for $t \leq k$. Then for $t = k+1$, we have
\begin{align*}
    x_{k+1} &= x_k-\eta_x L y_{k+1/2} x_{k+1/2}\\
    &=x_k-\eta_x L y_{k+1/2} \left(1-\eta_x L y_{k}\right)x_k\\
    &=(1-\eta_x LD +\eta_x^2 L^2 D^2)x_k.
\end{align*}
Note that since $0\le \eta_x LD\le 1/2$, we know
\begin{align*}
    0\le 1-\eta_x LD +\eta_x^2 L^2 D^2\le 1,
\end{align*}
which implies $0\le x_{k+1}\le 1$. Regarding $y$, note that 
\begin{align*}
    y_{k+1} &= \mathcal{P}_{[-D,D]} (y_{k} +  \eta_y h(x_{k+1/2}))),\\
    y_{k+3/2} &= \mathcal{P}_{[-D,D]} (y_{k+1} +  \eta_y h(x_{k+1}))).
\end{align*}
As $h(x_{k+1/2}), h(x_{k+1})\ge 0$ and $y_k=D$, we have $y_{k+1}=y_{k+3/2}=D$. Till now, we have verified the claim. 

Note that
\begin{align*}
    x_{k+1}=&(1-\eta_x LD +\eta_x^2 L^2 D^2)x_k\ge (1-\eta_x LD )x_k.
\end{align*}
Hence we can unroll the recursion and lower bound the magnitude of $\nabla \Phi_{1/2\ell}(x_T)$, which is similar to the proof of Theorem~\ref{thm:NCC_Tightness_GDA}.
\end{proof}

\section{Proof of Stepsize-Independent Lower Bound Results in Nonconvex-Strongly-Concave Setting}
\label{app:lw_ncsc}

In this section, we prove general lower bounds on the convergence rate of GDA/EG/OGDA for the NC-SC setting. In subsection~\ref{app:ncsc_lower}, proof of theorem~\ref{thm:lower_bound_gda} is established giving the lower bound for GDA in NC-SC, and in subsection~\ref{app:ncsc_lower_og}, the proof of Theorem~\ref{thm:lower_bound_eg} is established, proving the lower bound of EG/OGDA for NC-SC problems.

\subsection{Lower Bound for GDA}
\label{app:ncsc_lower}
\begin{theorem}[Theorem~\ref{thm:lower_bound_gda} restated]
For GDA algorithm, given $\eta_y = \Theta(1/\ell)$, for any $\eta_x $, there exists a $\ell$-smooth function that is nonconvex in $x$ and $\mu$-strongly-concave in y, such that for $\|\Phi(x_T)\|\leq \epsilon$, we must have:
\begin{align*}
    T = \Omega \left(\frac{\kappa \ell \Delta_\phi}{\epsilon^2}\right)
\end{align*}
\end{theorem}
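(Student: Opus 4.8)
The plan is to construct a concrete quadratic hard instance, analyze the GDA iteration as a linear time-invariant system, and exhibit a two-scale mechanism forcing slow convergence. I would start from essentially the same function used in the stepsize-independent lower bounds in the main text, namely $f(x,y) = -\tfrac{1}{2}\ell x^2 + bxy - \tfrac{1}{2}\mu y^2$ (or a lightly modified version), with $b$ chosen so that $f$ is $\ell$-smooth and $\mu$-strongly concave in $y$, and such that the primal function $\Phi(x) = \max_y f(x,y) = \tfrac{1}{2}\mu_x x^2$ is strongly convex with a tunable parameter $\mu_x$. As in the earlier arguments, $\mu_x$ is a free parameter that I will later set as $\mu_x = \Theta(\epsilon^2/\Delta_\Phi)$ so that $\Delta_\Phi = \Phi(x_0)$ matches the desired scaling. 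The key new feature here, compared to Theorem~\ref{thm:NCSC_Tightness_GDA}, is that $\eta_x$ is \emph{arbitrary} rather than fixed at $\Theta(1/(\kappa^2\ell))$, so the analysis must handle all stepsize regimes.

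The core of the argument is a dichotomy on $\eta_x$. With $\eta_y = \Theta(1/\ell)$ fixed, the GDA update becomes $(x_{k+1}, y_{k+1})^\top = (\mathbf{I} + \eta_x \mathbf{M})(x_k, y_k)^\top$ where $\mathbf{M} = \begin{pmatrix} \ell & -b \\ rb & -\mu r \end{pmatrix}$ with $r = \eta_y/\eta_x$. First I would show that if $\eta_x$ is too large — i.e., larger than some threshold $c/(\kappa\ell)$ — then the transition matrix has an eigenvalue of modulus $>1$ (or the dynamics otherwise diverge / overshoot), so GDA cannot converge at all from a generic initialization, contradicting $\|\nabla\Phi(x_T)\| \le \epsilon$ for the required $T$; this uses the instability coming from the nonconvexity in $x$ (the $+\ell x^2$ term). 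Second, if $\eta_x \le c/(\kappa\ell)$, then initializing the point parallel to the eigenvector $s_1$ of $\mathbf{I} + \eta_x\mathbf{M}$ corresponding to the slow mode, one gets $x_T = s_1^T x_0$ with $1 - O(\eta_x \mu_x) \le s_1 \le 1$, hence $|\nabla\Phi(x_T)| = \mu_x |x_T| \ge \mu_x x_0 (1 - O(\eta_x\mu_x))^T$. Requiring this to drop below $\epsilon$ forces $T = \Omega(1/(\eta_x\mu_x)) = \Omega(\kappa\ell/(\mu_x)) \cdot \Theta(1)$ after using $\eta_x \le c/(\kappa\ell)$, i.e. $T = \Omega(\kappa\ell \Delta_\Phi/\epsilon^2)$ after substituting $\mu_x = \Theta(\epsilon^2/\Delta_\Phi)$. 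Combining the two cases yields the claimed lower bound regardless of $\eta_x$.

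The main obstacle I anticipate is handling the full range of $\eta_x$ cleanly — in particular the borderline and very small $\eta_x$ regimes. For very small $\eta_x$, GDA is ``stable'' but the contraction factor $s_1$ is very close to $1$, so $T$ is large for a different reason (small steps); the bound $T = \Omega(1/(\eta_x\mu_x))$ still holds but one must be careful that $x_0$ can be taken with $|x_0| \le 1$ (or whatever normalization the construction needs) while keeping $\Delta_\Phi$ and $\mu_x$ in the stated relationship. For the large-$\eta_x$ case, I need to argue divergence robustly: either there is a genuinely unstable eigenvalue, or the returned iterate $\bar x$ (sampled from $x_1,\dots,x_T$) still fails the stationarity criterion because the iterates do not contract monotonically; picking the initialization along the unstable or oscillatory eigenvector of $\mathbf{M}$ makes this precise. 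A secondary subtlety is verifying that the same single function $f$ (with a single choice of $\mu_x$ calibrated to $\epsilon$ and $\Delta_\Phi$) works uniformly, rather than needing a different instance per stepsize; this is handled by noting the construction only uses $\eta_y = \Theta(1/\ell)$ and treats $\eta_x$ as a parameter entering the transition matrix, so the eigenvalue analysis is a single calculation in $\eta_x$.
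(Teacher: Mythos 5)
Your plan is essentially the paper's own proof: the appendix also uses the quadratic NC-SC instance $f(x,y)=-\tfrac12\ell x^2+bxy-\tfrac12\mu y^2$ with primal function $\Phi(x)=\tfrac12\mu_x x^2$, analyzes GDA as the linear system $(x_{k+1},y_{k+1})^\top=(\mathbf{I}+\eta_x\mathbf{M})(x_k,y_k)^\top$, and splits into the same two regimes---$\eta_x\le 1/(\kappa\ell)$ where one initializes along the slow eigenvector and tracks $s_1\ge 1-\eta_x\mu_x$, and $\eta_x>1/(\kappa\ell)$ where $\operatorname{trace}(\mathbf{M})=\ell-\mu r>0$ forces an expanding mode and hence divergence. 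The only cosmetic difference is your calibration $\mu_x=\Theta(\epsilon^2/\Delta_\Phi)$ versus the paper's $\mu_x=\kappa\ell/(2T)$, which are dual parametrizations producing the same $\Omega(\kappa\ell\Delta_\Phi/\epsilon^2)$ bound.
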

\begin{proof}
Combining Proposition~\ref{prop:NCSC_GDA_1} and~\ref{prop:NCSC_GDA_2} will conclude the proof. Proposition~\ref{prop:NCSC_GDA_2} shows that when $\eta_x \in (\frac{1}{\kappa \ell} . \infty)$, GDA diverges, and Proposition~\ref{prop:NCSC_GDA_1} shows the lower bound on the convergence rate when $\eta_x \in (0 , \frac{1}{\kappa \ell}]$.
\end{proof}

\begin{proposition}\label{prop:NCSC_GDA_1}
For GDA algorithm, given $\eta_y = \Theta(1/\ell)$, for any $\eta_x \in (0,\frac{1}{\kappa\ell}]$, there exists a $\ell$-smooth function that is nonconvex in $x$ and $\mu$-strongly-concave in y, such that for $\|\Phi(x_T)\|\leq \epsilon$, we must have:
\begin{align*}
    T = \Omega \left(\frac{\kappa \ell \Delta_\phi}{\epsilon^2}\right)
\end{align*}

\begin{proof}
Recall that we consider the following quadratic NC-SC function $f:\mathbb{R}\times\mathbb{R}\to \mathbb{R}$
\begin{align*}
	f(x,y):=-\tfrac{1}{2}\ell x^2+bxy-\tfrac{1}{2}\mu y^2.
	\end{align*}
Recall that $f$ is nonconvex in $x$ (it is actually concave in $x$) and $\mu$ strongly concave in $y$. Assume $\kappa:=\ell/\mu\ge 4$ and choose 	$b = \sqrt{\mu (\ell+\mu_x)}
$ for some $0<\mu_x\le \ell/2$ to be chosen later.
Then we know $b\le \ell/2$, and it is easy to verify $f$ is $\ell$ smooth. Note that the primal function
\begin{align*}
	\Phi(x)=\max_y f(x,y)=\tfrac{1}{2}\mu_x x^2
\end{align*}
is actually strongly convex. This also justifies the symbol for $\mu_x$. We use GDA to find the solution for $\min_x\max_y f(x,y)$. Actually, for this problem, the optimal solution is achieved at the origin. The stepsizes ratio is chosen as
$	r = \frac{\eta_y}{\eta_x}$ and $ \eta_y = \frac{1}{\ell}$ for some numerical constants $c$. Then the GDA update rule can be written as
\begin{align}
\begin{pmatrix}
x_{k+1}\\ y_{k+1} 
\end{pmatrix}= (\mathbf{I}+\eta_x \mathbf{M})\cdot \begin{pmatrix}
x_{k}\\ y_{k} 
\end{pmatrix},
\label{eq:gda_update}
\end{align}
where 
\begin{align}
\mathbf{M}:=\begin{pmatrix}
\ell & -b\\ r b & -\mu r 
\end{pmatrix}. \label{eq:matrix}
\end{align}
Note that \eqref{eq:gda_update} is a linear time-invariant system. We need to analyze its eigenvalues. Let $\lambda_1$ and $\lambda_2$ be the two eigenvalues of $\mathbf{M}$, we have
\begin{align*}
	\lambda_{1,2} = -\frac{1}{2}\left(\mu r- \ell\right) \pm \frac{1}{2}\sqrt{\left(\mu r-\ell\right)^2-4r\mu\mu_x}.
\end{align*}
Note that if we choose $\mu_x< \ell/8$, plugging into $r=c\kappa$, we can bound
\begin{align*}
0\ge	\lambda_1 &= -\frac{(2\kappa-1)\ell}{4}\left(1-\sqrt{1-\frac{4c\kappa\mu_x}{(\mu r-\ell)^2}}\right)\\
&\ge-\frac{2\mu r \mu_x}{\mu r -\ell} \ge -4\mu_x.
\end{align*}

Let $s_1$ be the corresponding eigenvalue of $\mathbf{I}+\eta_x \mathbf{M}$, for small enough $c_1\le 1$, it satisfies
\begin{align*}
	 0\le 1-\frac{\mu_x}{r\ell} = 1-\frac{1}{r\kappa_x}\le s_1=1+\eta_x \lambda_1\le 1.
\end{align*}
We adversarially choose the initial point $(x_0,y_0)$ such that it is parallel to the eigenvector of $\mathbf{I}+\eta_x \mathbf{M}$ corresponding to $s_1$. We can always choose $x_0\ge 0$ for simplicity. Then we have
\begin{align*}
 \begin{pmatrix}
    x_{k+1}\\ y_{k+1} 
\end{pmatrix}&= (\mathbf{I}+\eta_x \mathbf{M})^T \begin{pmatrix}
x_{0}\\ y_{0} 
\end{pmatrix} = s_1^T \begin{pmatrix}
x_{0}\\ y_{0} 
\end{pmatrix},
\end{align*}
so we can compute the magnitude of $x_T$ as	$x_T = s_1^T x_0$. Choose $\mu_x = \frac{\kappa \ell}{2T}$, and thus we have:
\begin{align*}
    \|\nabla \Phi(x_T)\| = \|\mu_x x_0 \| = \mu_x \left (1-\frac{1}{r\kappa_x}\right)^T|x_0| \geq\mu_x \left (1-\frac{1}{\kappa\kappa_x}\right)^T|x_0| \geq \mu_x  \exp\left(\frac{2T}{\kappa\kappa_x}\right) |x_0| \geq \frac{1}{2} \mu_x |x_0|
\end{align*}
where we use the inequality that $1-\frac{z}{2} \geq \exp (z\ln \frac{1}{2})$ and $\exp(z\ln\frac{1}{2}) \geq \frac{1}{2}$ for $z \in [0,1]$. Recall that we choose $x_0 =   \sqrt{\frac{2\Delta_\Phi}{\mu_x}}$, we have:
\begin{align*}
    \|\nabla \Phi(x_T)\| \geq \frac{1}{2}\sqrt{{2\mu_x\Delta}} = \Omega\left(\sqrt{{\frac{\kappa \ell\Delta}{T}}}\right),
\end{align*}
which means to guarantee that $ \|\nabla \Phi(x_T)\| \leq \epsilon$, we must have $T \geq \Omega \left(\frac{\kappa \ell \Delta_\Phi}{\epsilon^2}\right)$.

\end{proof}
\end{proposition}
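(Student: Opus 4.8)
\textbf{Proof plan for Proposition~\ref{prop:NCSC_GDA_1}.}
The goal is to exhibit, for an \emph{arbitrary} primal stepsize $\eta_x\in(0,1/(\kappa\ell)]$ and the fixed dual stepsize $\eta_y=\Theta(1/\ell)$, a hard NC-SC instance on which GDA needs $\Omega(\kappa\ell\Delta_\Phi/\epsilon^2)$ iterations. The plan is to reuse the quadratic example $f(x,y)=-\tfrac12\ell x^2+bxy-\tfrac12\mu y^2$ from the tightness analysis of Theorem~\ref{thm:NCSC_Tightness_GDA}, but now treat the stepsize ratio $r=\eta_y/\eta_x$ as a free parameter (constrained only by the given range of $\eta_x$), and choose the auxiliary curvature $\mu_x$ adaptively in terms of the target iteration count $T$. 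First I would set $b=\sqrt{\mu(\ell+\mu_x)}$ so that $b\le\ell/2$ (hence $f$ is $\ell$-smooth, $\mu$-strongly concave in $y$, and nonconvex in $x$), and record that the primal function is $\Phi(x)=\tfrac12\mu_x x^2$ with optimum at the origin and $\Delta_\Phi=\Phi(x_0)=\tfrac12\mu_x x_0^2$.

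Next I would write GDA as the linear time-invariant recursion $(x_{k+1},y_{k+1})^\top=(\mathbf I+\eta_x\mathbf M)(x_k,y_k)^\top$ with $\mathbf M$ as in \eqref{eq:matrix}, and analyze the eigenvalues $\lambda_{1,2}$ of $\mathbf M$. The key estimate is the bound $0\ge\lambda_1\ge -4\mu_x$ obtained (for $\mu_x<\ell/8$) by factoring the discriminant and using $\sqrt{1-a}\ge 1-a$ type inequalities; this gives for the corresponding eigenvalue $s_1=1+\eta_x\lambda_1$ of $\mathbf I+\eta_x\mathbf M$ the two-sided bound $0\le 1-\tfrac1{r\kappa_x}\le s_1\le 1$, where $\kappa_x=\ell/\mu_x$. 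I would then initialize $(x_0,y_0)$ along the eigenvector of $\mathbf I+\eta_x\mathbf M$ for $s_1$, with $x_0\ge0$, so that the iterate stays on this eigenline and $x_T=s_1^T x_0$ exactly.

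The final step is the calibration. Choosing $\mu_x=\tfrac{\kappa\ell}{2T}$ (so that $\mu_x<\ell/8$ holds once $T$ is large, matching the regime we care about), I would lower-bound
\[
\|\nabla\Phi(x_T)\|=\mu_x x_T=\mu_x\Bigl(1-\tfrac1{r\kappa_x}\Bigr)^T x_0\ \ge\ \mu_x\Bigl(1-\tfrac1{\kappa\kappa_x}\Bigr)^T x_0\ \ge\ \tfrac12\mu_x x_0,
\]
using $r\ge\kappa$ (which follows from $\eta_x\le 1/(\kappa\ell)$ and $\eta_y=\Theta(1/\ell)$, up to the constant in $\eta_y$) and the elementary bound $1-\tfrac z2\ge e^{z\ln(1/2)}\ge\tfrac12$ for $z=\tfrac{2T}{\kappa\kappa_x}\in[0,1]$; note $\kappa\kappa_x=\kappa\cdot\tfrac{\ell}{\mu_x}=2T$ exactly, so the exponent is $\le 1$ as required. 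Substituting $x_0=\sqrt{2\Delta_\Phi/\mu_x}$ yields $\|\nabla\Phi(x_T)\|\ge\tfrac12\sqrt{2\mu_x\Delta_\Phi}=\Omega(\sqrt{\kappa\ell\Delta_\Phi/T})$, and forcing this to be $\le\epsilon$ gives $T=\Omega(\kappa\ell\Delta_\Phi/\epsilon^2)$. Since $\bar x$ is drawn from $x_1,\dots,x_T$ and $|x_0|\ge|x_1|\ge\cdots\ge|x_T|$, the same bound applies to $\bar x$. The main obstacle is bookkeeping the constants: one must verify that $r\ge\kappa$ genuinely holds for \emph{every} admissible $\eta_x$ given the precise constant hidden in $\eta_y=\Theta(1/\ell)$, and that the choice $\mu_x=\kappa\ell/(2T)$ simultaneously respects $\mu_x\le\ell/2$ (equivalently $\mu_x<\ell/8$) and keeps $z=2T/(\kappa\kappa_x)$ inside $[0,1]$ — both are automatic for $T$ beyond a constant, which is the only regime in which a lower bound of this order is meaningful. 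Proposition~\ref{prop:NCSC_GDA_2} handles the complementary range $\eta_x>1/(\kappa\ell)$ by showing divergence, and combining the two ranges proves Theorem~\ref{thm:lower_bound_gda}.
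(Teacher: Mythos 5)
Your proposal is correct and reproduces the paper's own argument essentially step for step: the same quadratic hard instance $f(x,y)=-\tfrac12\ell x^2+bxy-\tfrac12\mu y^2$ with $b=\sqrt{\mu(\ell+\mu_x)}$, the same LTI reformulation and eigenvalue bound $0\ge\lambda_1\ge -4\mu_x$, the same adversarial initialization along the eigenvector of $\mathbf I+\eta_x\mathbf M$, and the same calibration $\mu_x=\kappa\ell/(2T)$ leading to $\|\nabla\Phi(x_T)\|\ge\tfrac12\sqrt{2\mu_x\Delta_\Phi}$. No new ideas or gaps; the only thing worth tidying is the role of the exponent in the elementary inequality (one wants $1-u\ge 2^{-2u}$ with $u=1/(\kappa\kappa_x)$ and then raise to the power $T$, rather than plugging $z=2T/(\kappa\kappa_x)$ directly into $1-z/2$), but this is the same cosmetic imprecision present in the paper.
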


\begin{proposition}\label{prop:NCSC_GDA_2}
For GDA algorithm, given $\eta_y = \Theta(1/\ell)$, for any $\eta_x \in (\frac{1}{\kappa\ell},\infty)$, there exists a $\ell$-smooth function that is nonconvex in $x$ and $\mu$-strongly-concave in y, such that:
\begin{align*}
    \|\nabla \Phi(x_T)\| \geq c
\end{align*}
where $c$ is some constant that does not vanish as $T$ increases.
\begin{proof}
Recall the transition matrix in (\ref{eq:matrix}). We notice that
\begin{align*}
    {\textsf{trace}}(\mathbf{M}) = \lambda_1 + \lambda_2 = L - \mu r.
\end{align*}
Since $r\leq \kappa$, then $\lambda_1 + \lambda_2 \geq 0$, which means that $\max\{Re[\lambda_1],Re[\lambda_2]\} \geq 0$, so:
\begin{align*}
    \|(\mathbf{I}+\eta_x \mathbf{M})^T\| \geq \max\{|1+\eta_x \lambda_1|,|1+\eta_x \lambda_2|\}^T \geq \alpha^T
\end{align*}
where $\alpha$ is some constant larger than $1$. If we choose the initialization to be $[x_0,0]$, the gradient $\|\nabla \Phi(x_T)\| =  {\mu_x} \|(\mathbf{I}+\eta_x \mathbf{M})^T\|x_0$ diverges.
\end{proof}
\end{proposition}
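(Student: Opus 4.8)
The plan is to recycle the quadratic hard instance and the linear-system reduction from the proof of Proposition~\ref{prop:NCSC_GDA_1}, but to run the eigenvalue bookkeeping in the opposite direction: instead of initializing along the \emph{contracting} eigenvector, I will show that for $\eta_x\in(\tfrac1{\kappa\ell},\infty)$ the transition matrix $\mathbf I+\eta_x\mathbf M$ has spectral radius strictly above $1$, that the trajectory started from the simple point $(x_0,0)$ with $x_0\neq0$ is forced to carry a nonzero component on the unstable mode, and hence that $\|\nabla\Phi(x_T)\|=\mu_x|x_T|$ cannot be driven below a fixed positive constant as $T$ grows.

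Concretely, I would take $f(x,y)=-\tfrac12\ell x^2+bxy-\tfrac12\mu y^2$ with $b=\sqrt{\mu(\ell+\mu_x)}$ and any fixed $\mu_x\in(0,\ell/8)$, recording (exactly as in Proposition~\ref{prop:NCSC_GDA_1}) that $f$ is $\ell$-smooth, nonconvex in $x$, $\mu$-strongly concave in $y$, that $\Phi(x)=\tfrac12\mu_x x^2$ so $\nabla\Phi(x)=\mu_x x$, and that with $\eta_y=1/\ell$ the GDA iterates obey $(x_{k+1},y_{k+1})^\top=(\mathbf I+\eta_x\mathbf M)(x_k,y_k)^\top$ with $\mathbf M$ as in~\eqref{eq:matrix} and $r=\eta_y/\eta_x$. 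The two facts I need about $\mathbf M$ are $\mathrm{trace}(\mathbf M)=\lambda_1+\lambda_2=\ell-\mu r$ and $\det(\mathbf M)=\lambda_1\lambda_2=r\mu\mu_x$. Since $\eta_x>\tfrac1{\kappa\ell}$ and $\eta_y=1/\ell$ give $r<\kappa$, we have $\mathrm{trace}(\mathbf M)>0$ while $\det(\mathbf M)>0$; hence either both eigenvalues are real and \emph{positive} or they form a complex-conjugate pair with positive real part. In both cases $\max\{\mathrm{Re}[\lambda_1],\mathrm{Re}[\lambda_2]\}\ge\tfrac{\ell-\mu r}{2}>0$, and for any eigenvalue $\lambda$ with $\mathrm{Re}[\lambda]>0$ one has $|1+\eta_x\lambda|^2=(1+\eta_x\mathrm{Re}[\lambda])^2+(\eta_x\mathrm{Im}[\lambda])^2>1$, so the spectral radius $\rho(\mathbf I+\eta_x\mathbf M)=:\alpha$ satisfies $\alpha>1$.

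Next I would check that $(x_0,0)$ genuinely excites this instability. Because $\det(\mathbf M-\ell\mathbf I)=rb^2\neq0$, the number $\ell$ is not an eigenvalue of $\mathbf M$, and the eigenvector relation $v_2=\tfrac{\ell-\lambda}{b}v_1$ then shows that no eigenvector of $\mathbf M$ has a vanishing first or second coordinate; hence in an eigenbasis $(x_0,0)=c_1w_1+c_2w_2$ with $c_1,c_2\neq0$ and each $w_i$ has nonzero $x$-coordinate. Therefore $x_T$ equals, up to fixed nonzero constants, $c_1(1+\eta_x\lambda_1)^Tw_{1,x}+c_2(1+\eta_x\lambda_2)^Tw_{2,x}$ (with a harmless extra linear-in-$T$ factor if $\mathbf M$ happens to be a $2\times2$ Jordan block). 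A brief case check — a real positive pair $\lambda_1,\lambda_2$ gives eventually monotone growth of $|x_T|$; a complex-conjugate pair gives $x_T=\alpha^T(p\cos T\theta+q\sin T\theta)$ with $(p,q)\neq(0,0)$, whence $|x_T|\to\infty$ along a subsequence — yields $\limsup_T|x_T|=\infty$. Consequently $\|\nabla\Phi(x_T)\|=\mu_x|x_T|$ is unbounded in $T$, so it exceeds any prescribed constant $c$ for infinitely many $T$; and since GDA's output $\bar x$ is uniform over $x_1,\dots,x_T$, even $\E\|\nabla\Phi(\bar x)\|^2=\tfrac1T\sum_{t=1}^T\mu_x^2 x_t^2\to\infty$. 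This is the asserted non-vanishing lower bound.

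The trace/determinant computation and the elementary step ``$\mathrm{Re}[\lambda]>0\Rightarrow|1+\eta_x\lambda|>1$'' are routine. The main obstacle is making ``$\rho(\mathbf I+\eta_x\mathbf M)>1\Rightarrow\|\nabla\Phi(x_T)\|\not\to0$'' fully rigorous: one must (i) certify that the \emph{fixed} initialization $(x_0,0)$ is not contained in a stable invariant subspace — handled above via $\det(\mathbf M-\ell\mathbf I)\neq0$ — and (ii) deal with the complex-eigenvalue regime, where $|x_T|$ oscillates and is not monotone, so that ``divergence'' must be read as $\limsup_T|x_T|=\infty$ (equivalently, blow-up of the time-averaged squared gradient of the sampled output); the degenerate Jordan-block parameter set needs only a one-line remark, since it strengthens the growth.
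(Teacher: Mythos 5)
Your proposal is correct and follows essentially the same route as the paper: positivity of $\mathrm{trace}(\mathbf M)=\ell-\mu r$ for $r<\kappa$ forces an eigenvalue with positive real part, hence $\rho(\mathbf I+\eta_x\mathbf M)>1$, and the trajectory started at $(x_0,0)$ makes $\|\nabla\Phi(x_T)\|=\mu_x|x_T|$ blow up. In fact your write-up is more careful than the paper's on the two points it glosses over — verifying via $\det(\mathbf M-\ell\mathbf I)=rb^2\neq0$ that $(x_0,0)$ excites the unstable mode, and treating the complex-eigenvalue (oscillatory) regime via a $\limsup$ — so no changes are needed.
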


\subsection{Lower bound for EG/OGDA}
\label{app:ncsc_lower_og}
\begin{theorem}[Theorem~\ref{thm:lower_bound_eg}  restated] 
\label{thm:aa1}
For deterministic EG/OGDA algorithm, given $\eta_y = \Theta(1/\ell)$, for any $\eta_x $, there exists a $\ell$-smooth function that is nonconvex in $x$ and $\mu$-strongly-concave in y, such that for $\|\Phi(x_T)\|\leq \epsilon$, we must have:
\begin{align*}
    T = \Omega \left(\frac{\kappa \ell \Delta_\phi}{\epsilon^2}\right)
\end{align*}
\begin{proof}[Proof of Theorem~\ref{thm:aa1} for EG]
 We consider the same quadratic hard example $f$ and notation used in the proof of Theorem~\ref{thm:lower_bound_gda}.
For simplicity, denote $\vw=(x,y)$. Then the updating rule for EG can be written as:

\begin{align*}
	\vw_{k+1/2}=&(\mathbf{I}+\eta_x \mathbf{M}) \vw_k,\\
	\vw_{k+1}=& \vw_k+\eta_x \mathbf{M} \vw_{k+1/2}\\
	=& (\mathbf{I}+\eta_x \mathbf{M}+\eta_x^2\mathbf{M}^2) \vw_k.
\end{align*}

Therefore, similar to GDA, EG is also a linear time-invariant system with the difference that the transition matrix now becomes as $\mathbf{M}' = (\mathbf{I}+\eta_x \mathbf{M}+\eta_x^2\mathbf{M}^2)$.

The rest of the analysis is the same as that of GDA in Proposition~\ref{prop:NCSC_GDA_1}. Then, we are going to show that when $\eta_x \in (\frac{1}{c_x\kappa\ell},+\infty)$ for some $c_x$, the EG method diverges. Consider  \begin{align*}
	f(x,y):=-\tfrac{1}{2}\ell x^2+bxy-\tfrac{1}{2}\mu y^2.
	\end{align*}
Then according to Proposition~\ref{prop:NCSC_GDA_1}, we have:
\begin{equation}
\begin{split}
    \textsf{trace}(\mathbf{M}') &=  \textsf{trace}( \mathbf{I}+\eta_x \mathbf{M}+\eta_x^2\mathbf{M}^2) \\
    &=1  +\eta_x (\ell - \mu r) +  \eta_x^2 (\ell^2 + \mu^2 r^2 - 2 r b^2) \\
    &= 1 +   \eta_x (\ell - \mu r) +  \eta_x^2 \left(( \ell - \mu r)^2- 2 r \mu \mu_x \right ) 
\end{split}
\end{equation}
Now note that since $r \le \kappa$, to show $\textsf{trace}(\mathbf{M}') \ge 1$, it is enough to have $\mu_x \le \frac{(\ell - \mu r)^2}{2 r \mu}$. However, by choosing $\mu_x = \Theta( \epsilon^2)$, and by choosing the small enough $\epsilon$, we can satisfy the condition that $\mu_x \le \frac{(\ell - \mu r)^2}{2 r \mu}$, thus we can conclude that under this situation $\textsf{trace}(\mathbf{M}') \ge 1$, which means that same step as the Proposition~\ref{prop:NCSC_GDA_2} can be taken to prove the divergence of $\| \nabla \Phi(x_T) \|^2$.
 
\end{proof}
\begin{proof}[Proof of Theorem~\ref{thm:aa1} for OGDA] Assuming the same setup as the proof of EG, the update rule can be written as follows: 
The dynamics of OGDA is
\begin{align*}
	\vw_{k+1} = \vw_k +2\eta_x\mathbf{M} \vw_k -\eta_x \mathbf{M} \vw_{k-1}.
\end{align*}
If we initialize $\vw_0$ parallel to the eigenvector of $\mathbf{M}$ corresponding to $\lambda_1$ and let $\vw_1=\vw_0$, we know every $\vw_k$ is parallel to it, i.e., $\vw_k = z_k \vw_0$ for some scalar $z_k$ which satisfies
\begin{align*}
		z_{k+1} = z_k +2\eta_x \lambda_1 z_k -\eta_x \lambda_1 z_{k-1}.
\end{align*}
The general solution of the above recurrence relation is
\begin{align*}
	z_k = a \alpha^k + b\beta^k
\end{align*}
for some constant $a,b$ and
\begin{align*}
	\alpha =& \frac{1}{2}\left(1+2\eta_x\lambda_1+\sqrt{1+4\eta_x^2\lambda_1^2}\right),\\
	\beta =& \frac{1}{2}\left(1+2\eta_x\lambda_1-\sqrt{1+4\eta_x^2\lambda_1^2}\right).
\end{align*}
We have
\begin{align*}
	1+\eta_x\lambda_1 \le \alpha\le 1, \quad \eta_x\lambda_1\le  \beta\le 0.
\end{align*}
Using the initial condition $z_{-1}=z_0=1$, we can get the constants
\begin{align*}
	a=&\frac{\alpha(1-\beta)}{\alpha-\beta}=\frac{1}{2}+\frac{1}{2\sqrt{1+4\eta_x^2\lambda_1^2}}\ge 1/2,\\
	b=&-\frac{\beta(1-\alpha)}{\alpha-\beta}=\frac{\sqrt{1+4\eta_x^2\lambda_1^2}-1}{2\sqrt{1+4\eta_x^2\lambda_1^2}}\le \eta_x^2\lambda_1^2.
\end{align*}
We can bound
\begin{align*}
	|z_T|&\ge \frac{1}{2}\left(1+\eta_x\lambda_1 \right)^T - |\eta_x\lambda_1|^{k+2}\\
	&\ge \frac{1}{2}\left(1-\frac{4c_1\mu_x}{\kappa}\right)^{T}-\frac{1}{4},
\end{align*}
where we use the fact $|\eta_x\lambda_1|\le 1/2$.
Similar to the analysis for GDA, choosing $\mu_x=50\epsilon^2/\Delta_{\Phi}$, we have
\begin{align*}
	\abs{\nabla \Phi(\Bar{x})}=\mu_x \Bar{x}\ge& \mu_x x_T\ge \mu_x x_0\left[\frac{1}{2}\left(1-\frac{4c_1\mu_x}{\kappa}\right)^{T}-\frac{1}{4}\right]\\
	=&10\epsilon \left[\frac{1}{2}\left(1-\frac{4c_1\mu_x}{\kappa}\right)^{T}-\frac{1}{4}\right].
\end{align*}
Therefore, if $\abs{\nabla \Phi(\Bar{x})}\le\epsilon$, we must have
\begin{align*}
	T= \Omega\left(\frac{\kappa}{\mu_x}\right)=\Omega\left(\frac{\kappa\Delta_\Phi}{\epsilon^2}\right).
\end{align*}

Now, we will show that Proposition~\ref{prop:NCSC_GDA_2} also holds for OGDA. Consider the following $4 \times 4$ matrix $\mathbf{M}'$: 

\begin{equation}
\mathbf{M}' = \left[
    \begin{array}{c;{2pt/2pt}c}
        (\mathbf{I} + 2 \eta_x \mathbf{M})^2 & -\eta_x (\mathbf{I} + 2 \eta_x \mathbf{M}) \mathbf{M} \\ \hdashline[2pt/2pt]
        \mathbf{I} + 2 \eta_x \mathbf{M}  & - \eta_x \mathbf{M}  
    \end{array}
\right]
\end{equation}
It can be easily shown that, the OGDA dynamic can be written as follows: 

\begin{equation}
   \left[ \begin{array}{c}
        \vw_{k+1} \\
        \vw_k
    \end{array} \right] 
 = \left[
    \begin{array}{c;{2pt/2pt}c}
        (\mathbf{I} + 2 \eta_x \mathbf{M})^2 & -\eta_x (\mathbf{I} + 2 \eta_x \mathbf{M}) \mathbf{M} \\ \hdashline[2pt/2pt]
        \mathbf{I} + 2 \eta_x \mathbf{M}  & - \eta_x \mathbf{M}  
    \end{array}
\right]   \left[  \begin{array}{c}
        \vw_{k-1} \\ 
        \vw_{k-2}
    \end{array} \right]
\end{equation}

Now similar to proof of Proposition~\ref{prop:NCSC_GDA_2} for GDA, it suffices to show that the $\textsf{trace}(\mathbf{M}') \ge 1$ given the conditions on the learning rate. To this end, note that we can write: 
\begin{equation}
\begin{split}
    \textsf{trace}(\mathbf{M}') &= \textsf{trace}(-\eta_x \mathbf{M} ) + \textsf{trace}( \mathbf{I} + 4 \eta_x \mathbf{M} + 4 \eta_x^2 \mathbf{M}^2) \\
    &=1  - \eta_x (\ell - \mu r) + 4 \eta_x (\ell - \mu r) + 4 \eta_x^2 (\ell^2 + \mu^2 r^2 - 2 r b^2) \\
    &= 1 +   3 \eta_x (\ell - \mu r) + 4 \eta_x^2 \left(( \ell - \mu r)^2- 2 r \mu \mu_x \right ) 
\end{split}
\end{equation}
Now note that since $r \le \kappa$, to show $\textsf{trace}(\mathbf{M}') \ge 1$, it is enough to have $\mu_x \le \frac{(\ell - \mu r)^2}{2 r \mu}$. However, note that we let $\mu_x = \frac{50 \epsilon^2}{\Delta_{\Phi}}$, thus by choosing the small enough $\epsilon$, we can satisfy the condition that $\mu_x \le \frac{(\ell - \mu r)^2}{2 r \mu}$, thus we can conclude that $\textsf{trace}(\mathbf{M}') \ge 1$ holds. Consequently, similar argument  as the Proposition~\ref{prop:NCSC_GDA_2} can be made to prove the divergence of $\| \nabla \Phi(x_T) \|^2$.
\end{proof}
\end{theorem}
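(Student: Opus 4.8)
The plan is to reuse the quadratic hard instance from the GDA lower bound (Theorem~\ref{thm:lower_bound_gda}), namely $f(x,y) = -\tfrac12\ell x^2 + bxy - \tfrac12\mu y^2$ with $b = \sqrt{\mu(\ell+\mu_x)}$ and a free slack parameter $0 < \mu_x \le \ell/8$, for which $\Phi(x) = \tfrac12\mu_x x^2$ is strongly convex with minimizer at the origin. The structural fact that makes this tractable — already exploited for GDA — is that on a quadratic both EG and OGDA are linear time-invariant systems: writing $\vw_k = (x_k,y_k)$ and $\mathbf{M} = \left(\begin{smallmatrix}\ell & -b \\ rb & -\mu r\end{smallmatrix}\right)$ with $r := \eta_y/\eta_x$, the EG iterates satisfy $\vw_{k+1} = (\mathbf{I} + \eta_x\mathbf{M} + \eta_x^2\mathbf{M}^2)\vw_k$, and the OGDA iterates satisfy the second-order recursion $\vw_{k+1} = \vw_k + 2\eta_x\mathbf{M}\vw_k - \eta_x\mathbf{M}\vw_{k-1}$, which decouples along each eigendirection of $\mathbf{M}$ into a scalar linear recurrence. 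So the whole argument reduces to controlling eigenvalues of polynomials in $\mathbf{M}$. I would then split on $\eta_x$ exactly as in Propositions~\ref{prop:NCSC_GDA_1} and~\ref{prop:NCSC_GDA_2}, cutting at $\eta_x = \Theta(1/(\kappa\ell))$; since $\eta_y = \Theta(1/\ell)$, on the small side $r = \Omega(1)$ and on the large side $r < \kappa$ stays bounded away from $\kappa$.

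\emph{Small-stepsize regime.} Let $\lambda_1 \in [-4\mu_x, 0]$ be the eigenvalue of $\mathbf{M}$ nearest $0$ (the bound $\lambda_1 \ge -4\mu_x$ is the computation already carried out in Proposition~\ref{prop:NCSC_GDA_1}), so $|\eta_x\lambda_1| \le 1/2$. For EG the corresponding transition eigenvalue is $s_1 = 1 + \eta_x\lambda_1 + \eta_x^2\lambda_1^2 \ge 1 + \eta_x\lambda_1 \ge 1 - 4\eta_x\mu_x$; initializing $(x_0,y_0)$ parallel to the $\lambda_1$-eigenvector of $\mathbf{M}$ with $x_0 \ge 0$ gives $x_T = s_1^T x_0$ and, since $s_1 \le 1$, the sampled output $\bar{x}$ obeys $\bar{x} \ge x_T$. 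For OGDA the same eigendirection gives $z_{k+1} = (1+2\eta_x\lambda_1)z_k - \eta_x\lambda_1 z_{k-1}$, whose characteristic roots $\alpha \in [1+\eta_x\lambda_1, 1]$ and $\beta \in [\eta_x\lambda_1, 0]$ give $z_k = a\alpha^k + b\beta^k$; the initialization $z_{-1}=z_0=1$ forces $a \ge 1/2$ and $|b| \le \eta_x^2\lambda_1^2$, so $|z_T| \ge \tfrac12\alpha^T - \tfrac14 \ge \tfrac12(1-4\eta_x\mu_x)^T - \tfrac14$. Either way $|\nabla\Phi(\bar{x})| = \mu_x\bar{x} \ge \mu_x x_0\big[\tfrac12(1-4\eta_x\mu_x)^T - \tfrac14\big]$. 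Choosing $\mu_x = \Theta(\epsilon^2/\Delta_\Phi)$ and $x_0 = \sqrt{2\Delta_\Phi/\mu_x}$ (so $\Delta_\Phi = \Phi(x_0)$ and $\mu_x x_0 = \Theta(\epsilon)$), the requirement $|\nabla\Phi(\bar{x})| \le \epsilon$ forces $(1-4\eta_x\mu_x)^T$ below a fixed constant $< 1$, hence $T = \Omega(1/(\eta_x\mu_x)) = \Omega(\kappa\ell\Delta_\Phi/\epsilon^2)$ because $\eta_x = O(1/(\kappa\ell))$ in this regime.

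\emph{Large-stepsize regime.} Here $r < \kappa$, so $\mathrm{trace}(\mathbf{M}) = \ell - \mu r > 0$, $\det\mathbf{M} = r\mu\mu_x$, and $\mathrm{trace}(\mathbf{M}^2) = (\ell-\mu r)^2 - 2r\mu\mu_x$. Consequently the EG transition matrix has trace $2 + \eta_x(\ell-\mu r) + \eta_x^2[(\ell-\mu r)^2 - 2r\mu\mu_x]$, and the OGDA transition (the $4\times4$ companion matrix of the second-order recursion) has trace $2 + 3\eta_x(\ell-\mu r) + 4\eta_x^2[(\ell-\mu r)^2 - 2r\mu\mu_x]$. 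Taking $\epsilon$, hence $\mu_x$, small enough that $\mu_x \le (\ell-\mu r)^2/(2r\mu)$ — which holds uniformly over this regime since $\ell - \mu r$ stays bounded below — makes every summand nonnegative and the trace exceed $1$; thus the transition matrix has an eigenvalue of modulus $>1$, and initializing along that unstable eigenvector drives $\|(x_T,y_T)\|$ — and, since $\det\mathbf{M}\ne0$, also $|x_T|$ and therefore $\|\nabla\Phi(x_T)\| = \mu_x|x_T|$ — to infinity, exactly as in Proposition~\ref{prop:NCSC_GDA_2}. Combining the two regimes gives $T = \Omega(\kappa\ell\Delta_\Phi/\epsilon^2)$ for every choice of $\eta_x$, for both EG and OGDA.

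The step I expect to be the main obstacle is the divergence argument in the large-stepsize regime: an eigenvalue of modulus $>1$ of the transition operator only guarantees blow-up of \emph{some} coordinate, and one must ensure it is the $x$-coordinate that blows up — for OGDA this forces us to work with the $4\times4$ companion matrix rather than a clean $2\times2$. I would resolve this by exhibiting an explicit adversarial initial vector and invoking $\det\mathbf{M}\ne0$, so that $\mathbf{M}$ (and hence the polynomial transition operator) does not annihilate any coordinate direction. The remaining work — pinning down the $\Theta(\cdot)$ constants consistently for EG versus OGDA, and checking that the two stepsize regimes genuinely tile $(0,\infty)$ — is routine.
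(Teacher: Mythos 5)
Your proposal follows essentially the same route as the paper's own proof: reuse the quadratic hard instance from the GDA lower bound, observe that EG and OGDA on a quadratic reduce to linear time-invariant systems (with transition operator $\mathbf{I}+\eta_x\mathbf{M}+\eta_x^2\mathbf{M}^2$ for EG and a $4\times 4$ companion matrix for the second-order OGDA recursion), split on whether $\eta_x$ is above or below $\Theta(1/(\kappa\ell))$, handle the small-stepsize regime by tracking the eigenvalue of $\mathbf{M}$ nearest zero along an adversarial initialization, and handle the large-stepsize regime via a trace argument giving a spectrally unstable transition operator. Your scalar OGDA recurrence, the roots $\alpha,\beta$, the bounds $a\ge 1/2$ and $|b|\le\eta_x^2\lambda_1^2$, the estimate $|z_T|\ge \tfrac12\alpha^T-\tfrac14$, and the choice $\mu_x=\Theta(\epsilon^2/\Delta_\Phi)$ all match the paper's argument essentially line for line.

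One small place where you are actually more careful than the paper: in the large-stepsize trace computation you correctly get $\mathrm{trace}(\mathbf{M}')=2+\eta_x(\ell-\mu r)+\eta_x^2[(\ell-\mu r)^2-2r\mu\mu_x]$ for EG (since $\mathrm{trace}(\mathbf{I})=2$ for a $2\times 2$ identity) and likewise a leading $2$ for the $4\times 4$ OGDA companion matrix, whereas the paper writes a leading $1$ in both expressions; your version is what the divergence argument actually needs, since for a real $2\times 2$ matrix one needs trace $>2$ (not $>1$) to conclude a spectral radius exceeding $1$. The obstacle you flag — that a large transition eigenvalue guarantees blowup of \emph{some} coordinate but one must further argue that the $x$-coordinate blows up, and that for the $4\times 4$ companion matrix a trace bound alone does not immediately give spectral radius $>1$ — is a real looseness, but it is the same looseness present in the paper's Proposition for GDA and in its ``similar argument'' remark for EG/OGDA, and your proposed fix (explicit adversarial initial vector together with $\det\mathbf{M}\ne 0$) is the standard way to close it. So the proposal matches the paper's proof in substance.
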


\clearpage

\section{Extension to Generalized OGDA}\label{app:gen_ogda}
In this section, we analyze the convergence of generalized OGDA (Algorithm~\ref{alg:gsogda}) where we utilize different learning rates  for descent/ascent gradients and correction terms.  Specifically, we propose  to use different learning rates for $\nabla_x f(\vx_t , \vy_t)$, and $\nabla_x f(\vx_t , \vy_t) - \nabla_x f(\vx_{t-1} , \vy_{t-1})$ terms, and also $\nabla_y f(\vx_t , \vy_t)$, and $\nabla_y f(\vx_t, \vy_t) - \nabla_y f(\vx_{t-1} , \vy_{t-1})$, in order to make the algorithm more stable. We believe this algorithm is more convenient in practice due to the more flexibility it provides in deciding the  learning rates. We demonstrated this stabilizing effect of generalized OGDA in our empirical results in Section~\ref{sec:exp}. Also, note that if we let $\eta_{x,1} = \eta_{x,2}$, and $\eta_{y,1}=\eta_{y,2}$ in Algorithm~\ref{alg:gsogda}, it reduces to stochastic OGDA. Theorem~\ref{thm:gogda} establishes the convergence rate of generalized OGDA in NC-SC. However, it still remains open to analyze this algorithm in C-C/SC-SC and NC-C settings. 

We remark that the analysis of generalized OGDA was only known for the restricted  bilinear functions, which is established in~\cite{mokhtari2020unified}, and convergence analysis beyond these simple functions previously was unknown that we provide here.

\begin{algorithm}[H]
\caption{Generalized Stochastic OGDA}
\label{alg:gsogda}
\begin{algorithmic}
\STATE\textbf{Input:} $(\vx_0,\vy_0)$, stepsizes $(\eta_{x,1}, \eta_{x,2} , \eta_{y,1} , \eta_{y,2})$
\FOR{$t=1,2,\dots,T$}
\STATE $\vx_t \gets \vx_{t-1} - \eta_{x,1}  \vg_{x,t-1} -  \eta_{x,2} (\vg_{x,t-1} -  \vg_{x,t-2})$ 
\STATE $\vy_t \gets \vy_{t-1} + \eta_{y,1}  \vg_{y,t-1} +  \eta_{y,2} (\vg_{y,t-1} -  \vg_{y,t-2})$ 
\ENDFOR
\STATE Randomly choose $\Bar{\vx}$ from $\vx_1,\dots,\vx_T$ 
\STATE \textbf{Output:}$\Bar{\vx}$
\end{algorithmic}\vspace{-0.1cm}
\end{algorithm}

\begin{theorem}
\label{thm:gogda}
Let $\eta_{x,1} = \frac{1}{50 \kappa^2 \ell}$, $\eta_{y,2} = \frac{1}{6 \ell}$. Also, let $\alpha = \frac{\eta_{x,2}}{\eta_{x,1}}$, and $\beta = \frac{\eta_{y,1}}{\eta_{y,2}}$. Then assuming $\beta \le 1$, and $\alpha \le  2 \kappa^2 \sqrt{\beta}$,  under Assumptions~\ref{asm:1}, and~\ref{asm:2} for Algorithm~\ref{alg:gsogda} we have: 
\begin{equation}
\begin{split}
\E [\| \nabla \Phi (\bar{\vx}) \|^2] &\le O\Big{(}\frac{\kappa^2 \ell \Delta}{T} + \frac{(\kappa + \alpha^2) \ell^2 D}{\beta T} + \frac{\kappa \sigma^2}{M_y} 
 + \frac{(1 +\alpha^2) \sigma^2}{M_x}\Big{)},
\end{split}
\end{equation}
where  $D = \max (\|\vy_1 - \vy^*_1\|^2 , \|\vy_1 - \vy_0 \|^2 , \|\vx_1 - \vx_0 \|^2)$, and $\Delta = \phi(\vx_1) - \min_{\vx} \Phi(\vx)$. 
\end{theorem}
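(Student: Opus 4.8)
The plan is to reuse, almost verbatim, the three-lemma template of the vanilla-OGDA analysis in Appendix~\ref{app:NCSC_Upper_OGDA}, while bookkeeping how the ratios $\alpha=\eta_{x,2}/\eta_{x,1}$ and $\beta=\eta_{y,1}/\eta_{y,2}$ enter the constants. First I would put the generalized update in ``past-extragradient'' form: on the primal side $\vx_t=\vx_{t-1}-\eta_{x,1}\tilde\vg_{t-1}$ with the effective gradient $\tilde\vg_{t-1}:=(1+\alpha)\vg_{x,t-1}-\alpha\vg_{x,t-2}$, and on the dual side $\vy_t=\vz_t+\eta_{y,1}\vg_{y,t-1}$ with $\vz_t:=\vy_{t-1}+\eta_{y,2}(\vg_{y,t-1}-\vg_{y,t-2})$, so that $\vx_t-\vx_{t-1}=-\eta_{x,1}\tilde\vg_{t-1}$ and $\vy_t-\vy_{t-1}=\eta_{y,2}\big((1+\beta)\vg_{y,t-1}-\vg_{y,t-2}\big)$. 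Every ``difference of iterates'' term that appeared in the OGDA proof now carries an extra factor controlled by $\alpha$ or $\beta$, and the whole point of the argument is to keep those factors from destroying the contraction/descent structure.

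Step 1 (primal descent). Using the $2\kappa\ell$-smoothness of $\Phi$ (Lemma~\ref{app:lemma:smooth}) at $\vx_t,\vx_{t-1}$ exactly as in Lemma~\ref{lemma:3}, I would get
\[
\E[\Phi(\vx_t)]\le\E[\Phi(\vx_{t-1})]-\tfrac{\eta_{x,1}}{2}\E[\|\nabla\Phi(\vx_{t-1})\|^2]-\tfrac{\eta_{x,1}}{2}(1-2\kappa\ell\eta_{x,1})\E[\|\tilde\vg_{t-1}\|^2]+\tfrac{\eta_{x,1}}{2}\E[\|\nabla\Phi(\vx_{t-1})-\tilde\vg_{t-1}\|^2],
\]
and then expand $\|\nabla\Phi(\vx_{t-1})-\tilde\vg_{t-1}\|^2$ by Young's inequality into (i) $\ell^2\|\vy_{t-1}-\vy^*_{t-1}\|^2$, (ii) $\alpha^2\ell^2\big(\|\vx_{t-1}-\vx_{t-2}\|^2+\|\vy_{t-1}-\vy_{t-2}\|^2\big)$ coming from the correction term, and (iii) variance terms $O\big((1+\alpha^2)\sigma^2/M_x\big)$. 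The only change from Lemma~\ref{lemma:3} is the $\alpha^2$ weight on the iterate-gap terms.

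Step 2 (dual potential descent). This is the heart of the argument and the main obstacle. Mirroring Lemma~\ref{lemma:2}, I would track $\vr_t:=\|\vz_{t+1}-\vy^*_t\|^2+c\,\|\vy_t-\vy_{t-1}\|^2$ for a suitable constant $c$ and prove a one-step bound
\[
\E[\vr_t]\le\Big(1-\tfrac{\beta}{12\kappa}\Big)\E[\vr_{t-1}]+C_1\,\eta_{x,1}^2\kappa^3\,\E[\|\tilde\vg_{t-1}\|^2]+C_2\,\eta_{x,1}^2\,\E[\|\tilde\vg_{t-2}\|^2]+O\!\Big(\tfrac{\sigma^2}{\ell^2 M_y}\Big).
\]
With $\eta_{y,1}\ne\eta_{y,2}$ the telescoping identities of Mokhtari et al.\ no longer collapse cleanly: expanding $\|\vz_{t+1}-\vy^*_t\|^2=\|\vz_t-\vy^*_t+\eta_{y,1}\vg_{y,t}\|^2$ leaves residual terms of the form $(\eta_{y,1}-\eta_{y,2})\langle\vg_{y,t},\vg_{y,t-1}\rangle$ and $(1+\beta)$-inflated gradient-difference norms that must be absorbed into the strong-monotonicity gain $-\eta_{y,1}\mu\|\vy_t-\vy^*_t\|^2$; this is exactly where $\beta\le1$ is used (so that with $\eta_{y,2}=1/(6\ell)$ one still has $1-2\eta_{y,1}\mu-24\eta_{y,2}^2\ell^2>0$ and the ``current-gradient'' ascent step is no larger than in vanilla OGDA). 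Note the contraction factor degrades by precisely the factor $\beta$, which is why $D$ appears with a $1/\beta$ in the final bound. Unfolding via Lemma~\ref{lemma:a2} with $\gamma\asymp\kappa/\beta$ then yields $\sum_{i=1}^{t}\E[\vr_i]\le\tfrac{12\kappa}{\beta}\E[\vr_1]+O\!\big(\tfrac{\kappa}{\beta}\big)\E[\|\vx_1-\vx_0\|^2]+O\!\big(\tfrac{\eta_{x,1}^2\kappa^4}{\beta}\big)\sum_i\E[\|\tilde\vg_i\|^2]+O\!\big(\tfrac{\kappa\sigma^2 t}{\beta\ell^2 M_y}\big)$.

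Step 3 (combine). As in Lemma~\ref{lemma:4} I would bound $\sum_i\E[\|\vy_i-\vy^*_i\|^2]$ by $O(\|\vy_1-\vy^*_1\|^2)+O(1)\sum_i\E[\|\vz_i-\vy^*_i\|^2]+O(\eta_{x,1}^2\kappa^2)\sum_i\E[\|\tilde\vg_i\|^2]+O(\sigma^2T/(\ell^2M_y))$, substitute Steps~2--3 into the telescoped Step~1, and fix $\eta_{x,1}=1/(50\kappa^2\ell)$, $\eta_{y,2}=1/(6\ell)$. Under $\beta\le1$ and $\alpha\le2\kappa^2\sqrt\beta$ every coefficient multiplying $\sum_{i=1}^{T-2}\E[\|\tilde\vg_i\|^2]$ becomes non-positive and can be dropped — the $\alpha$-dependent coupling $\alpha^2\ell^2\|\vx_{t-1}-\vx_{t-2}\|^2+\alpha^2\ell^2\|\vy_{t-1}-\vy_{t-2}\|^2$, once pushed through the $1/\beta$-amplified dual bound, is the term that forces the constraint $\alpha\le2\kappa^2\sqrt\beta$ — leaving
\[
\frac1T\sum_i\E[\|\nabla\Phi(\vx_i)\|^2]\le O\!\Big(\frac{\kappa^2\ell\Delta}{T}+\frac{(\kappa+\alpha^2)\ell^2 D}{\beta T}+\frac{\kappa\sigma^2}{M_y}+\frac{(1+\alpha^2)\sigma^2}{M_x}\Big).
\]
Since $\bar\vx$ is drawn uniformly from $\vx_1,\dots,\vx_T$, the left side equals $\E[\|\nabla\Phi(\bar\vx)\|^2]$, which is the claim; and setting $\alpha=\beta=1$ recovers Theorem~\ref{thm:ncsc_sogda}. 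I expect Steps~1 and~3 to be mechanical adaptations of the existing lemmas, so the only genuinely delicate part is the bookkeeping in Step~2 — sorting out which residual cross-terms need $\beta\le1$ and which need $\alpha\le2\kappa^2\sqrt\beta$, and checking that the constants still close with the fixed choices of $\eta_{x,1},\eta_{y,2}$.
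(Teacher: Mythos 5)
Your proposal follows essentially the same route as the paper's proof in Appendix~\ref{app:gen_ogda}: a generalized primal descent lemma (Lemma~\ref{lemma:30}) picking up $\alpha^2$ factors on the iterate-gap terms, a generalized dual potential $\vr_t=\|\vz_{t+1}-\vy^*_t\|^2+\frac{\beta}{4}\|\vy_t-\vy_{t-1}\|^2$ with contraction $1-\frac{\beta}{12\kappa}$ and $1/\beta$-amplified residuals (Lemma~\ref{lemma:20}), the $\sum\|\vy_i-\vy^*_i\|^2$ bound (Lemma~\ref{lemma:40}), and the final combination that drops $\sum\E\|\tilde\vg_i\|^2$ under the conditions on $\alpha,\beta$. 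The only slip is in your opening parametrization, where you define $\vz_t=\vy_{t-1}+\eta_{y,2}(\vg_{y,t-1}-\vg_{y,t-2})$ with $\vy_t=\vz_t+\eta_{y,1}\vg_{y,t-1}$, which does not satisfy the recursion $\vz_{t+1}-\vz_t=\eta_{y,1}\vg_{y,t}$ that Step~2 then relies on; the correct choice (used by the paper and implicitly by your Step~2) is $\vz_{t+1}=\vy_t+\eta_{y,1}\vg_{y,t}-\eta_{y,2}\vg_{y,t-1}$, $\vy_{t+1}=\vz_{t+1}+\eta_{y,2}\vg_{y,t}$.
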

A few observations about the obtained rate are in place. 
\begin{corollary}
Let $\sigma = 0$, and pick an $\alpha \le \sqrt{\kappa} $. Then deterministic generalized OGDA converges to $\epsilon$-stationary point of $\Phi(\vx)$ with gradient complexity of $O(\frac{\kappa^2}{\epsilon^2})$. 
\end{corollary}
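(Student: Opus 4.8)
The plan is to derive the corollary as a direct specialization of Theorem~\ref{thm:gogda}. First I would set $\sigma = 0$, which eliminates the two variance terms $\frac{\kappa\sigma^2}{M_y}$ and $\frac{(1+\alpha^2)\sigma^2}{M_x}$ from the bound of Theorem~\ref{thm:gogda}, leaving
\begin{equation*}
\E\bigl[\|\nabla\Phi(\bar{\vx})\|^2\bigr] \le O\!\left(\frac{\kappa^2\ell\Delta}{T} + \frac{(\kappa+\alpha^2)\ell^2 D}{\beta T}\right).
\end{equation*}
Next I would fix the free ratio $\beta$ to a numerical constant, e.g.\ $\beta = 1$, which trivially satisfies the hypothesis $\beta \le 1$ of Theorem~\ref{thm:gogda}; this choice also makes the second hypothesis $\alpha \le 2\kappa^2\sqrt{\beta} = 2\kappa^2$ automatic, since $\alpha \le \sqrt{\kappa} \le 2\kappa^2$ for every $\kappa \ge 1$. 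Hence the parameter constraints required by Theorem~\ref{thm:gogda} are met, and the learning rates $\eta_{x,1} = \frac{1}{50\kappa^2\ell}$, $\eta_{y,2} = \frac{1}{6\ell}$, $\eta_{x,2} = \alpha\eta_{x,1}$, $\eta_{y,1} = \beta\eta_{y,2}$ form a legitimate input to Algorithm~\ref{alg:gsogda}.

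Then I would invoke the hypothesis $\alpha \le \sqrt{\kappa}$, i.e.\ $\alpha^2 \le \kappa$, to bound $\kappa + \alpha^2 \le 2\kappa$, so that with $\beta = 1$ the convergence guarantee collapses to
\begin{equation*}
\E\bigl[\|\nabla\Phi(\bar{\vx})\|^2\bigr] \le O\!\left(\frac{\kappa^2\ell\Delta + \kappa\ell^2 D}{T}\right).
\end{equation*}
Setting the right-hand side to be at most $\epsilon^2$ and treating $\ell$, $\Delta$, $D$ as $O(1)$ gives $T = O(\kappa^2/\epsilon^2)$. Finally, since each iteration of Algorithm~\ref{alg:gsogda} reuses the previously computed gradients $\vg_{x,t-2},\vg_{y,t-2}$ and evaluates only one new pair $\vg_{x,t-1} = \nabla_x f(\vx_{t-1},\vy_{t-1})$, $\vg_{y,t-1} = \nabla_y f(\vx_{t-1},\vy_{t-1})$, the per-iteration cost is $O(1)$ gradient oracle calls, so the gradient complexity equals $O(T) = O(\kappa^2/\epsilon^2)$, which is the claim.

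There is essentially no analytical obstacle here: the corollary is a bookkeeping consequence of Theorem~\ref{thm:gogda}. The only two points needing care are (i) confirming that the constraints $\beta \le 1$ and $\alpha \le 2\kappa^2\sqrt{\beta}$ can be satisfied simultaneously with the target range $\alpha \le \sqrt{\kappa}$ — they can, e.g.\ with $\beta = 1$ — and (ii) making explicit that the $\alpha^2$ contribution in the numerator is harmless precisely because $\alpha \le \sqrt{\kappa}$ forces $\alpha^2 \le \kappa$; this is exactly what makes $\alpha \le \sqrt{\kappa}$ the "right" regime for the correction-term stepsize ratio, since beyond it the $\alpha^2\ell^2 D$ term would begin to dominate the $\kappa^2\ell\Delta$ term and inflate the rate.
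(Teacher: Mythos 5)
Your proposal is correct and matches the paper's intent: the corollary is stated without a proof precisely because it is meant to be read as the $\sigma=0$, $\beta=\Theta(1)$, $\alpha^2\le\kappa$ specialization of Theorem~\ref{thm:gogda}, exactly the bookkeeping you carry out. One small discrepancy worth noticing (though it does not affect your conclusion) is that the proof of Theorem~\ref{thm:gogda} actually sets $\eta_{x,1}=\sqrt{\beta}/(50\kappa^2\ell)$ while the theorem statement writes $\eta_{x,1}=1/(50\kappa^2\ell)$; with your choice $\beta=1$ the two coincide, so your argument is unaffected.
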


\begin{corollary}
For any $\alpha = O(\sqrt{k})$, and any $\mu \le \beta \le 1$, if we choose  $M_x = O( \kappa \frac{\sigma^2}{\epsilon^2})$, and $M_y = O(\frac{\kappa}{\epsilon^2})$, then stochastic generalized OGDA converges to $\epsilon$-stationary point of $\Phi(\vx)$ with gradient complexity of $O(\frac{\kappa^3}{\epsilon^4})$.
\end{corollary}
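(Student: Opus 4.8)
The plan is to mirror, term by term, the three-step template used for plain OGDA in Appendix~\ref{app:NCSC_Upper_OGDA} (primal descent, a contracting dual potential, and the reduction of $\sum_i\|\vy_i-\vy^*_i\|^2$ to that potential), while tracking how the two correction ratios $\alpha=\eta_{x,2}/\eta_{x,1}$ and $\beta=\eta_{y,1}/\eta_{y,2}$ enter every constant. The first move is to put the generalized update into an OGDA-like form: on the primal side, $\vx_t=\vx_{t-1}-\eta_{x,1}\vg_{t-1}$ with the \emph{effective} gradient $\vg_{t-1}:=(1+\alpha)\vg_{x,t-1}-\alpha\vg_{x,t-2}$ (so $\alpha=1$ recovers $2\vg_{x,t-1}-\vg_{x,t-2}$); on the dual side, set $\vz_t:=\vy_{t-1}+\eta_{y,2}(\vg_{y,t-1}-\vg_{y,t-2})$, so that $\vy_t=\vz_t+\eta_{y,1}\vg_{y,t-1}$, $\vy_t-\vy_{t-1}=\eta_{y,2}\bigl((1+\beta)\vg_{y,t-1}-\vg_{y,t-2}\bigr)$, and $\vz_{t+1}-\vz_t=\eta_{y,2}\vg_{y,t}-\eta_{y,2}(1-\beta)\vg_{y,t-1}$. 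The last identity shows the new phenomenon: when $\beta\neq1$ the term $-\eta_{y,2}(1-\beta)\vg_{y,t-1}$ breaks the clean Past-Extragradient interpretation of \cite{hsieh2019convergence} exploited in Lemma~\ref{lemma:2}. The hypothesis $\beta\le 1$ is what keeps this coefficient inside $[0,1)$ and hence absorbable.

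Second, I would prove a \textbf{generalized primal-descent lemma} in the spirit of Lemma~\ref{lemma:3}. Using $2\kappa\ell$-smoothness of $\Phi$ (Lemma~\ref{app:lemma:smooth}) and $\vx_t-\vx_{t-1}=-\eta_{x,1}\vg_{t-1}$, the only change relative to Lemma~\ref{lemma:3} is the estimate of $\|\nabla\Phi(\vx_{t-1})-\vg_{t-1}\|^2$: writing $\vg_{x,i}=\nabla_x f(\vx_i,\vy_i)+\vdelta^x_i$ and using $\ell$-smoothness of $f$ together with $\kappa$-Lipschitzness of $\vy^*(\cdot)$, one obtains a bound of the form $O(\ell^2)\|\vy_{t-1}-\vy^*_{t-1}\|^2+O(\alpha^2\ell^2)\bigl(\|\vx_{t-1}-\vx_{t-2}\|^2+\|\vy_{t-1}-\vy_{t-2}\|^2\bigr)+O(1+\alpha^2)\|\vdelta^x\|^2$; i.e.\ $\alpha$ appears only multiplying the second differences and the variance. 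Substituting $\|\vx_{t-1}-\vx_{t-2}\|^2=\eta_{x,1}^2\|\vg_{t-2}\|^2$, taking expectations, and summing $t=2,\dots,T$ gives $\tfrac{\eta_{x,1}}{2}\sum_i\E\|\nabla\Phi(\vx_i)\|^2\le \Delta-\tfrac{\eta_{x,1}}{2}\bigl(1-O(\eta_{x,1}\kappa\ell)-O(\alpha^2\eta_{x,1}^2\ell^2)\bigr)\sum_i\E\|\vg_i\|^2+O(\eta_{x,1}\ell^2)\sum_i\E\|\vy_i-\vy^*_i\|^2+O(\alpha^2\eta_{x,1}\ell^2)\sum_i\E\|\vy_i-\vy_{i-1}\|^2+O(1+\alpha^2)\eta_{x,1}T\sigma^2/M_x$.

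Third — and this is the main obstacle — I would replace Lemma~\ref{lemma:2} by a \textbf{generalized dual-contraction lemma}. I would track a potential $\vr_t=\|\vz_{t+1}-\vy^*_t\|^2+c\,\|\vy_t-\vy_{t-1}\|^2$, possibly augmented by a small multiple of $\|\vy_t-\vy^*_t\|^2$ to soak up the extra cross terms. Expanding $\|\vz_{t+1}-\vy^*_t\|^2$ with $\vz_{t+1}-\vz_t=\eta_{y,2}\vg_{y,t}-\eta_{y,2}(1-\beta)\vg_{y,t-1}$, the usual term $2\eta_{y,2}\langle\nabla_y f(\vx_t,\vy_t),\vy_t-\vy^*_t\rangle\le -2\eta_{y,2}\mu\|\vy_t-\vy^*_t\|^2-\tfrac{\eta_{y,2}}{\ell}\|\nabla_y f\|^2$ still supplies a $(1-\Theta(\eta_{y,2}\mu))=(1-\Theta(1/\kappa))$ contraction, but one must additionally control the new pieces $-2\eta_{y,2}(1-\beta)\langle\vg_{y,t-1},\vy_t-\vy^*_t\rangle$ and $2\eta_{y,1}\eta_{y,2}(1-\beta)\|\vg_{y,t-1}\|^2$. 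The first is split by Young's inequality, spending part of the strong-concavity budget $\eta_{y,2}\mu\|\vy_t-\vy^*_t\|^2$ against an $O(\kappa(1-\beta)^2)\|\vg_{y,t-1}\|^2$ remainder, which is then re-expressed through smoothness in terms of $\|\vy_{t-1}-\vy^*_{t-1}\|^2$ and the augmenting term; the free parameter in this Young step is precisely what produces the factor $\sqrt\beta$ in the admissible range $\alpha\le 2\kappa^2\sqrt\beta$. As in \eqref{eqn:ogda14}, $\|\vy_t-\vy_{t-1}\|^2$ recurses into $\|\vy_{t-1}-\vy_{t-2}\|^2$ with a coefficient $6\eta_{y,2}^2\ell^2<1$ plus $\|\vg_y\|^2$-terms, and since $\vy_0=\vy_{-1}$ that sum does not blow up. The target is $\E[\vr_t]\le (1-\Theta(1/\kappa))\E[\vr_{t-1}]+O(\eta_{x,1}^2\kappa^3)\E\|\vg_{t-1}\|^2+O(\eta_{x,1}^2)\E\|\vg_{t-2}\|^2+O(\sigma^2/(\ell^2 M_y))$, whence Lemma~\ref{lemma:a2} gives $\sum_{i\le t}\E[\vr_i]\le O(\kappa)\E[\vr_1]+O(\eta_{x,1}^2\kappa^4)\sum_i\E\|\vg_i\|^2+O(\kappa T\sigma^2/(\ell^2 M_y))$ with $\E[\vr_1]=O(D)$. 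The reduction $\sum_i\E\|\vy_i-\vy^*_i\|^2\le O(\|\vy_1-\vy^*_1\|^2)+O\!\bigl(\sum_i\E[\vr_i]\bigr)+O(\eta_{x,1}^2\kappa^2)\sum_i\E\|\vg_i\|^2+O(T\sigma^2/(\ell^2M_y))$ carries over verbatim from Lemma~\ref{lemma:4}, since $\eta_{y,1}^2\ell^2\le\eta_{y,2}^2\ell^2=1/36$ keeps that contraction strictly below $1$.

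Finally, I would substitute the dual estimates into the primal-descent sum. With $\eta_{x,1}=\tfrac1{50\kappa^2\ell}$, $\eta_{y,2}=\tfrac1{6\ell}$, $\beta\le1$ and $\alpha\le 2\kappa^2\sqrt\beta$, the net coefficient multiplying $\sum_i\E\|\vg_i\|^2$ — which collects $-\tfrac12(1-O(\eta_{x,1}\kappa\ell))$ from the primal step against $O(\alpha^2\eta_{x,1}^2\ell^2)$, $O(\eta_{x,1}^2\kappa^4)$, and $O(\alpha^2\eta_{x,1}^2\kappa^4/\beta)$ coming through the three dual terms — is nonpositive, so that term is dropped (this coefficient bookkeeping, together with the $(1-\beta)$ cross terms above, is where the constraints on $\alpha,\beta$ are actually used). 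Dividing by $\tfrac{\eta_{x,1}}{2}T$, collecting the $D$-terms (which enter with the telescoping factors $O(\kappa)$ or $O(\kappa/\beta)$ according to whether they arrive via $\sum_i\vr_i$ or via $\sum_i\|\vy_i-\vy^*_i\|^2$, giving the $(\kappa+\alpha^2)/\beta$ shape) and the $\sigma^2$-terms yields $\E\|\nabla\Phi(\bar{\vx})\|^2\le O\!\left(\tfrac{\kappa^2\ell\Delta}{T}+\tfrac{(\kappa+\alpha^2)\ell^2 D}{\beta T}+\tfrac{\kappa\sigma^2}{M_y}+\tfrac{(1+\alpha^2)\sigma^2}{M_x}\right)$, as claimed. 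The two corollaries are then immediate: putting $\sigma=0$ and $\alpha\le\sqrt\kappa$ gives the deterministic $O(\kappa^2/\epsilon^2)$ gradient complexity, and choosing $M_x=O(\kappa\sigma^2/\epsilon^2)$, $M_y=O(\kappa/\epsilon^2)$ gives the stochastic $O(\kappa^3/\epsilon^4)$ complexity. Everything except the dual potential's extra $(1-\beta)$ terms and the final coefficient check is a reweighting of the existing OGDA proof.
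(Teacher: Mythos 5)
Your overall plan — mirror the three-lemma template (primal descent, a contracting dual potential, reduction of $\sum_i\|\vy_i-\vy^*_i\|^2$ to that potential), and track the $\alpha$ and $\beta$ factors through each estimate — is exactly what the paper does (Lemmas~\ref{lemma:30}, \ref{lemma:40}, \ref{lemma:20}, followed by the same coefficient bookkeeping and then the trivial substitution to get the corollary). But your choice of the auxiliary sequence $\vz$ is the naive generalization $\vz_t:=\vy_{t-1}+\eta_{y,2}(\vg_{y,t-1}-\vg_{y,t-2})$, which gives $\vz_{t+1}-\vz_t=\eta_{y,2}\vg_{y,t}-\eta_{y,2}(1-\beta)\vg_{y,t-1}$, and the leftover $(1-\beta)$ piece is not a cosmetic nuisance you can Young away in general. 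That cross term has magnitude $\Theta(\eta_{y,2}(1-\beta))$, which for $\beta$ near its allowed lower end $\mu$ is essentially $\Theta(\eta_{y,2})$, while the strong-concavity contraction you would need to absorb it against is only $\Theta(\eta_{y,1}\mu)=\Theta(\beta\eta_{y,2}\mu)$; the Young split you sketch therefore destroys the $(1-\Theta(1/\kappa))$ rate rather than preserving it. So as written your Step~3 does not go through for the full range $\mu\le\beta\le1$ claimed in the corollary.

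The fix — which is what Lemma~\ref{lemma:20} actually does — is to pick the other natural auxiliary sequence $\vz_{t+1}:=\vy_t+\eta_{y,1}\vg_{y,t}-\eta_{y,2}\vg_{y,t-1}$, i.e.\ $\vz_{t+1}=\vy_{t+1}-\eta_{y,2}\vg_{y,t}$. With this choice one gets \emph{both} clean relations $\vy_{t+1}=\vz_{t+1}+\eta_{y,2}\vg_{y,t}$ and $\vz_{t+1}-\vz_t=\eta_{y,1}\vg_{y,t}$ with \emph{no} $(1-\beta)$ remainder; the only asymmetry the unequal stepsizes produce is an extra $-\eta_{y,1}(\eta_{y,2}-\eta_{y,1})\|\vg_{y,t}\|^2$ term in the expansion of $\|\vz_{t+1}-\vy^*_t\|^2$, which has a helpful sign whenever $\beta\le1$ and is simply discarded. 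With this $\vz$, everything in the dual lemma scales coherently in $\eta_{y,1}=\beta\eta_{y,2}$, the potential naturally becomes $\vr_t=\|\vz_{t+1}-\vy^*_t\|^2+\tfrac{\beta}{4}\|\vy_t-\vy_{t-1}\|^2$, and the contraction rate is $(1-\Theta(\beta/\kappa))$, which after Lemma~\ref{lemma:a2} puts a $1/\beta$ in front of $\sum\vr_i$ — this is where the $(\kappa+\alpha^2)/\beta$ shape in the theorem and the $\kappa^2$ in the corollary (via $\alpha^2/\beta\le\kappa/\mu=\kappa^2/\ell\cdot\ell$) actually come from. Also a small misattribution in your write-up: the constraint $\alpha\le 2\kappa^2\sqrt{\beta}$ in Theorem~\ref{thm:gogda} is not produced by a Young parameter inside the dual cross term; it arises at the very end, from requiring the net coefficient on $\sum_i\E\|\vg_i\|^2$ (collecting $-\tfrac12(1-O(\eta_{x,1}\kappa\ell))$ against $O(\alpha^2\eta_{x,1}^2\ell^2)$ and $O(\eta_{x,1}^2\kappa^4\ell^2/\beta)$) to be nonpositive once one sets $\eta_{x,1}=\Theta(\sqrt{\beta}/(\kappa^2\ell))$. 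Once the theorem is established, the corollary is exactly the substitution you state: $\alpha=O(\sqrt{\kappa})$ makes $(1+\alpha^2)=O(\kappa)$ and $(\kappa+\alpha^2)/\beta\le O(\kappa^2)$, so $T=O(\kappa^2/\epsilon^2)$ iterations with the stated batch sizes give the $O(\kappa^3/\epsilon^4)$ gradient complexity.
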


\begin{remark}
Theorem~\ref{thm:gogda} establishes the convergence rate under broad range of primal learning rates ratio ($0 \le \alpha \le O(\kappa^2))$, and it shows that as long as $\alpha \le \sqrt{\kappa}$, we can achieve the same convergence rate as OGDA if we assume $ \mu \le \beta \le 1$.  
\end{remark}

\subsection{Nonconvex-strongly-concave setting}

We follow exact same steps as Lemma~\ref{lemma:3}, to derive the following lemmas.
\begin{lemma}
\label{lemma:30}
Let $\Phi (\vx) = \max_{\vy} f(\vx,\vy)$, and $\vy^*(\vx)= \arg \max_{\vy} f(\vx,\vy)$. Also, let $\vg_{i}= \vg_{x,i} + \alpha  (\vg_{x,i} - \vg_{x,i-1}) $, where $\alpha = \frac{\eta_{x,2}}{\eta_{x,1}}$. Therefore, we have $\vx_i = \vx_{i-1} - \eta_{x,1} \vg_i$. Then for Algorithm~\ref{alg:gsogda}, we have: 
\begin{equation}
\label{eqn:glm1}
\begin{split}
&\E[\Phi(\vx_t)] \le \E[\Phi(\vx_{t-1})] - \frac{\eta_{x,1}}{2} \E[\| \nabla \Phi(\vx_{t-1})\|^2] - \frac{\eta_{x,1}}{2}(1- 2 \kappa \ell \eta_{x,1}) \E[\|\vg_{t-1}\|^2] \\
&\qquad+ \frac{3}{2} \eta_{x,1}^3 \alpha^2 \ell^2 \E[\|\vg_{t-2}\|^2] + \frac{3}{2}\eta_{x,1} \ell^2  \E[\|\vy^*_{t-1} - \vy_{t-1} \|^2] + \frac{3}{2}\eta_{x,1} \alpha^2 \ell^2 \E[\|\vy_{t-1} - \vy_{t-2}\|^2] \\
&\qquad+  3 ((1+ \alpha)^2+1) \eta_{x,1} \frac{\sigma^2}{M_x}
\end{split}
\end{equation}

\end{lemma}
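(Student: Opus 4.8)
The plan is to mimic the primal descent analysis of Lemma~\ref{lemma:3} but now carrying the extra learning rate ratio $\alpha = \eta_{x,2}/\eta_{x,1}$ through the computation. First I would rewrite the generalized OGDA update for the primal variable in the compact form $\vx_t = \vx_{t-1} - \eta_{x,1}\vg_{t-1}$, where $\vg_{t-1} = \vg_{x,t-1} + \alpha(\vg_{x,t-1} - \vg_{x,t-2})$; this is the natural analogue of the quantity $\vg_{t-1} = 2\vg_{x,t-1} - \vg_{x,t-2}$ used in the $\alpha=1$ case. Then I would invoke the $2\kappa\ell$-smoothness of $\Phi$ (Lemma~\ref{app:lemma:smooth}) at $\vx_t$ and $\vx_{t-1}$ to get
\begin{equation*}
\Phi(\vx_t) \le \Phi(\vx_{t-1}) - \frac{\eta_{x,1}}{2}\|\nabla\Phi(\vx_{t-1})\|^2 - \frac{\eta_{x,1}}{2}(1 - 2\kappa\ell\eta_{x,1})\|\vg_{t-1}\|^2 + \frac{\eta_{x,1}}{2}\|\nabla\Phi(\vx_{t-1}) - \vg_{t-1}\|^2,
\end{equation*}
exactly as in Equation~\ref{eqn:ogda1}, since that step does not use the specific value of $\alpha$.

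The main work is bounding the error term $\|\nabla\Phi(\vx_{t-1}) - \vg_{t-1}\|^2$. Writing $\nabla\Phi(\vx_{t-1}) - \vg_{t-1}$ as the sum of three pieces — the approximation error $\nabla\Phi(\vx_{t-1}) - \nabla_x f(\vx_{t-1},\vy_{t-1})$, the correction-term discrepancy $-\alpha(\nabla_x f(\vx_{t-1},\vy_{t-1}) - \nabla_x f(\vx_{t-2},\vy_{t-2}))$, and the stochastic noise $-((1+\alpha)\vdelta^x_{t-1} - \alpha\vdelta^x_{t-2})$ — I would apply Young's inequality to split into three (or four) squared terms. The first piece is controlled by $\ell^2\|\vy^*_{t-1} - \vy_{t-1}\|^2$ via $\ell$-smoothness and $\kappa$-Lipschitzness of $\vy^*(\cdot)$; the second, after another Young split, gives $\alpha^2\ell^2\|\vx_{t-1}-\vx_{t-2}\|^2 + \alpha^2\ell^2\|\vy_{t-1}-\vy_{t-2}\|^2$, and since $\vx_{t-1}-\vx_{t-2} = -\eta_{x,1}\vg_{t-2}$ the first of these becomes $\alpha^2\eta_{x,1}^2\ell^2\|\vg_{t-2}\|^2$; the third piece is bounded in expectation by $O((1+\alpha)^2 + 1)\,\sigma^2/M_x$ using $\E[\|\vdelta^x_i\|^2]\le\sigma^2/M_x$ and $\E[\vdelta^x_i]=0$. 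Collecting the constants — one can be slightly generous and use the bound $3$ on the Young constants as in the original proof — and multiplying by $\eta_{x,1}/2$ yields exactly the coefficients $\tfrac32\eta_{x,1}\ell^2$, $\tfrac32\eta_{x,1}^3\alpha^2\ell^2$, $\tfrac32\eta_{x,1}\alpha^2\ell^2$, and $3((1+\alpha)^2+1)\eta_{x,1}$ appearing in the statement. Finally I would take total expectation to arrive at Equation~\ref{eqn:glm1}.

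The only genuine obstacle is bookkeeping the $\alpha$-dependence cleanly: one must make sure that the correction term $\alpha(\vg_{x,t-1}-\vg_{x,t-2})$ contributes $\alpha^2$ factors uniformly (to $\|\vg_{t-2}\|^2$, to $\|\vy_{t-1}-\vy_{t-2}\|^2$, and to the noise term through the $(1+\alpha)^2$ coefficient on $\vdelta^x_{t-1}$), and that the noise bound is stated with the right combined constant. Since the smoothness step, the $\ell$-smoothness of $f$, and the $\kappa$-Lipschitzness of $\vy^*$ are all already available and $\alpha$-independent, no new technical ingredient is needed — this lemma is a routine re-derivation of Lemma~\ref{lemma:3} with the substitutions $2 \mapsto 1+\alpha$ (on the current gradient) and $1 \mapsto \alpha$ (on the stale gradient), and I would present it as such, deferring the remaining dual-descent and potential-function lemmas (analogues of Lemmas~\ref{lemma:4} and~\ref{lemma:2}, which is where the hypotheses $\beta\le 1$ and $\alpha\le 2\kappa^2\sqrt{\beta}$ will actually be used) to the subsequent steps of the proof of Theorem~\ref{thm:gogda}.
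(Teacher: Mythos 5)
Your proposal matches the paper's proof of Lemma~\ref{lemma:30} essentially step by step: the same $2\kappa\ell$-smoothness expansion of $\Phi$ at $\vx_{t-1}$, the same three-way decomposition of $\nabla\Phi(\vx_{t-1}) - \vg_{t-1}$ into approximation error, correction-term discrepancy, and stochastic noise, the same Young splits, the same invocations of $\ell$-smoothness of $f$ and $\kappa$-Lipschitzness of $\vy^*$, and the same conversion of $\|\vx_{t-1}-\vx_{t-2}\|^2$ into $\eta_{x,1}^2\|\vg_{t-2}\|^2$. This is indeed a routine re-derivation of Lemma~\ref{lemma:3} with the substitutions $2\mapsto 1+\alpha$ and $1\mapsto\alpha$, as you say.

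One point to reconcile carefully: as you correctly write, the noise component of $\vg_{t-1}=(1+\alpha)\vg_{x,t-1}-\alpha\vg_{x,t-2}$ is $(1+\alpha)\vdelta^x_{t-1}-\alpha\vdelta^x_{t-2}$. A Young split of that term, after multiplying by $\eta_{x,1}/2$ and taking expectations, gives a coefficient of $3\bigl((1+\alpha)^2+\alpha^2\bigr)$ on $\sigma^2/M_x$, not the $3\bigl((1+\alpha)^2+1\bigr)$ appearing in the lemma statement. The paper's written proof in fact drops the $\alpha$ and bounds $\bigl\|(\alpha+1)\vdelta^x_{t-1}-\vdelta^x_{t-2}\bigr\|^2$, which is what produces the stated constant; that agrees with the correct noise term only when $\alpha=1$, and is an underestimate for $\alpha>1$. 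Since Theorem~\ref{thm:gogda} explicitly allows $\alpha$ as large as $2\kappa^2\sqrt\beta$, you should carry the constant $3\bigl((1+\alpha)^2+\alpha^2\bigr)$ (or the looser uniform $6(1+\alpha)^2$) so the lemma holds over its intended parameter range. Both are $O\bigl((1+\alpha)^2\bigr)$, so no downstream conclusion changes; just be sure to state the constant consistently with your (correct) noise decomposition rather than copying the one in the lemma as written.
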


\begin{proof}[Proof of Lemma~\ref{lemma:30}] Proof is pretty much similar to proof of Lemma~\ref{lemma:3}, and we only include this proof for sake of completeness.
First, let $\vdelta^x_i = \vg_{x,i} - \nabla_x f(\vx_i,\vy_i)$. By definition of $\vg_{x,i}$, we have $\E[\vdelta^x_i] =0$, for all $i \in [T]$.

Using the fact that $ \Phi(\vx)$ is $ 2 \kappa \ell$ smooth, we have: 
\begin{equation}
\label{eqn:gogda1}
\begin{split}
\Phi(\vx_t) &\le \Phi(\vx_{t-1})  + \ip{\nabla \Phi(\vx_{t-1}) , \vx_t - \vx_{t-1} } + \kappa \ell \|\vx_t - \vx_{t-1}\|^2 \\
&= \Phi(\vx_{t-1})  -\eta_{x,1} \ip{\nabla \Phi(\vx_{t-1}) , \vg_{t-1} }  + \kappa \ell \eta_{x,1}^2 \| \vg_{t-1}\|^2 \\
&= \Phi(\vx_{t-1}) - \frac{\eta_{x,1}}{2} \| \nabla \Phi(\vx_{t-1})\|^2 - \frac{\eta_{x,1}}{2} \|\vg_{t-1}\|^2 + \frac{\eta_{x,1}}{2} \| \nabla \Phi (\vx_{t-1}) - \vg_{t-1} \|^2 \\
&\quad+  \kappa \ell \eta_{x,1}^2 \|\vg_{t-1}\|^2 \\
&= \Phi(\vx_{t-1}) - \frac{\eta_{x,1}}{2} \| \nabla \Phi(\vx_{t-1})\|^2 - \frac{\eta_{x,1}}{2}(1- 2 \kappa \ell \eta_{x,1}) \|\vg_{t-1}\|^2 \\
&\quad+ \frac{\eta_{x,1}}{2} \| \nabla \Phi (\vx_{t-1}) - \vg_{t-1} \|^2 \\
\end{split}
\end{equation}
Now using $\ell$-smoothness of $f$, and $\kappa$-Lipschitzness of $\vy^*(\vx)$ (Lemma~\ref{app:lemma:smooth}) we have: 
\begin{equation}
\label{eqn:gogda2}
\begin{split}
 &\| \nabla \Phi (\vx_{t-1}) - \vg_{t-1} \|^2  = \| \nabla \Phi(\vx_{t-1}) - \nabla_x f(\vx_{t-1},\vy_{t-1}) \\
 &\qquad- \alpha \left( \nabla_x f(\vx_{t-1},\vy_{t-1}) -  \nabla_x f(\vx_{t-2},\vy_{t-2})\right) - ((\alpha + 1) \vdelta^x_{t-1} - \vdelta^x_{t-2}) \|^2 \\
 &\quad\le 3  \| \nabla \Phi(\vx_{t-1}) - \nabla_x f(\vx_{t-1},\vy_{t-1})\|^2 + 3 \alpha^2 \| \nabla_x f(\vx_{t-1},\vy_{t-1}) -  \nabla_x f(\vx_{t-2},\vy_{t-2})\|^2 \\
 &\qquad+ 3 \| (\alpha +1 ) \vdelta^x_{t-1} - \vdelta^x_{t-2} \|^2 \\
 &\quad\le 3 \ell^2 \| \vy^*(\vx_{t-1}) - \vy_{t-1} \|^2 +  3 \alpha^2 \ell^2 \|\vx_{t-1} - \vx_{t-2}\|^2 + 3 \alpha^2 \ell^2 \|\vy_{t-1} - \vy_{t-2}\|^2 \\
 &\qquad+ 6 (\alpha +1)^2 \|\vdelta^x_{t-1}\|^2 + 6 \|\vdelta^x_{t-2}\|^2
\end{split}
\end{equation}
where in the first and second inequalities we used Young's inequality.

By combining Equations~\ref{eqn:gogda1} and~\ref{eqn:gogda2} we have: 
\begin{equation}
\label{eqn:gogda3}
\begin{split}
\Phi(\vx_t) &\le \Phi(\vx_{t-1}) - \frac{\eta_{x,1}}{2} \| \nabla \Phi(\vx_{t-1})\|^2 - \frac{\eta_{x,1}}{2}(1- 2 \kappa \ell \eta_{x,1}) \|\vg_{t-1}\|^2 \\
&\quad+ \frac{3}{2}\eta_{x,1} \ell^2  \|\vy^*_{t-1} - \vy_{t-1} \|^2 + \frac{3}{2} \eta_{x,1} \alpha^2 \ell^2 \|\vx_{t-1} - \vx_{t-2}\|^2 + \frac{3}{2}\eta_{x,1} \alpha^2 \ell^2 \|\vy_{t-1} - \vy_{t-2}\|^2 \\ 
&\quad+ 3 \eta_{x,1}(\alpha +1 )^2 \|\vdelta_{t-1}^x\|^2 + 3 \eta_{x,1} \|\vdelta^x_{t-2}\|^2\\
&\le \Phi(\vx_{t-1}) - \frac{\eta_{x,1}}{2} \| \nabla \Phi(\vx_{t-1})\|^2 - \frac{\eta_{x,1}}{2}(1- 2 \kappa \ell \eta_{x,1}) \|\vg_{t-1}\|^2 + \frac{3}{2} \eta_{x,1}^3 \alpha^2 \ell^2 \|\vg_{t-2}\|^2 \\
&\quad+ \frac{3}{2}\eta_{x,1} \ell^2  \|\vy^*_{t-1} - \vy_{t-1} \|^2 + \frac{3}{2}\eta_{x,1} \ell^2 \alpha^2 \|\vy_{t-1} - \vy_{t-2}\|^2 + 3  \eta_{x,1} (\alpha + 1 )^2 \|\vdelta_{t-1}^x\|^2 \\
&\quad+ 3 \eta_{x,1} \|\vdelta^x_{t-2}\|^2
\end{split}
\end{equation}

We proceed by taking expectation on both side of Equation~\ref{eqn:gogda3}, to get: 
\begin{equation}
\label{eqn:gogda4}
\begin{split}
\E[\Phi(\vx_t)] &\le \E[\Phi(\vx_{t-1})] - \frac{\eta_{x,1}}{2} \E[\| \nabla \Phi(\vx_{t-1})\|^2] - \frac{\eta_{x,1}}{2}(1- 2 \kappa \ell \eta_{x,1}) \E[\|\vg_{t-1}\|^2] \\
&\quad + \frac{3}{2} \eta_{x,1}^3 \alpha^2 \ell^2 \E[\|\vg_{t-2}\|^2] + \frac{3}{2}\eta_{x,1} \ell^2  \E[\|\vy^*_{t-1} - \vy_{t-1} \|^2] \\
&\quad+ \frac{3}{2}\eta_{x,1} \alpha^2 \ell^2 \E[\|\vy_{t-1} - \vy_{t-2}\|^2] +  3 ((1+ \alpha)^2+1) \eta_{x,1} \frac{\sigma^2}{M_x}
\end{split}
\end{equation}
where we used the fact that $\E[\vdelta^x_i] \le \frac{\sigma^2}{M_x}$ for all $ i \in [T]$. 

\end{proof}

\begin{lemma}
\label{lemma:40}
Let $\eta_{y,2} = \frac{1}{6 \ell}$, then the following inequality holds true for generalized OGDA iterates:
\begin{equation}
\label{eqn:glm2}
\begin{split}
\sum_{i=1}^{t+1} \E[\| \vy_i - \vy^*_i\|^2 ] &\le \frac{9}{7} \E [\|\vy_1 - \vy^*_1\|^2] + \frac{36}{7} \sum_{i=2}^{t+1} \E[ \|\vz_{i} - \vy^*_i \|^2] + \frac{18}{7} \eta_{x,1}^2 \kappa^2 \sum_{i=1}^t \E[\| \vg_i\|^2] \\
&\quad+ \frac{2T \sigma^2}{ 7 \ell^2 M_y}
\end{split}     
\end{equation}

\end{lemma}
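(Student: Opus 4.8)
The plan is to follow the proof of Lemma~\ref{lemma:4} essentially line by line, keeping track of where the two new parameters $\alpha=\eta_{x,2}/\eta_{x,1}$ and $\beta=\eta_{y,1}/\eta_{y,2}$ enter. First I would rewrite the dual step of Algorithm~\ref{alg:gsogda} in Past-Extra-gradient form by introducing the auxiliary iterate $\vz_{t+1}=\vy_t+\eta_{y,2}(\vg_{y,t}-\vg_{y,t-1})$, so that $\vy_{t+1}=\vz_{t+1}+\eta_{y,1}\vg_{y,t}$, and I would record that the primal step gives $\|\vx_{t+1}-\vx_t\|^2=\eta_{x,1}^2\|\vg_t\|^2$ with $\vg_t$ as defined in Lemma~\ref{lemma:30}. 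With this notation the argument is structurally identical to the plain-OGDA one.

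The core is a one-step contraction for $\|\vy_{t+1}-\vy^*_{t+1}\|^2$. Young's inequality plus the $\kappa$-Lipschitzness of $\vy^*(\cdot)$ (Lemma~\ref{app:lemma:smooth}) gives $\|\vy_{t+1}-\vy^*_{t+1}\|^2\le 2\|\vy_{t+1}-\vy^*_t\|^2+2\kappa^2\eta_{x,1}^2\|\vg_t\|^2$. For the first term, substituting $\vy_{t+1}=\vz_{t+1}+\eta_{y,1}\vg_{y,t}$, then $\vg_{y,t}=\nabla_y f(\vx_t,\vy_t)+\vdelta^y_t$, and using $\nabla_y f(\vx_t,\vy^*_t)=\mathbf{0}$ together with $\ell$-smoothness, produces $\|\vy_{t+1}-\vy^*_t\|^2\le 2\|\vz_{t+1}-\vy^*_t\|^2+4\eta_{y,1}^2\ell^2\|\vy_t-\vy^*_t\|^2+4\eta_{y,1}^2\|\vdelta^y_t\|^2$. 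The single place the hypotheses are used is here: since $\beta\le 1$ we have $\eta_{y,1}=\beta\eta_{y,2}\le\eta_{y,2}=\tfrac{1}{6\ell}$, hence $4\eta_{y,1}^2\ell^2\le\tfrac{1}{9}$ and $4\eta_{y,1}^2\le\tfrac{1}{9\ell^2}$ --- exactly the constants that made the $\eta_y=\tfrac{1}{6\ell}$ choice work in Lemma~\ref{lemma:4}. Combining, taking expectations, and using $\E\|\vdelta^y_t\|^2\le\sigma^2/M_y$ from Assumption~\ref{asm:2} yields $\E\|\vy_{t+1}-\vy^*_{t+1}\|^2\le\tfrac{2}{9}\E\|\vy_t-\vy^*_t\|^2+4\E\|\vz_{t+1}-\vy^*_t\|^2+2\kappa^2\eta_{x,1}^2\E\|\vg_t\|^2+\tfrac{2}{9\ell^2}\E\|\vdelta^y_t\|^2$.

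Finally I would apply Lemma~\ref{lemma:a2} with contraction factor $\tfrac{2}{9}$, i.e.\ $\gamma=\tfrac{9}{7}$, to the sequence $a_t=\E\|\vy_t-\vy^*_t\|^2$; the prefactor $\tfrac{9}{7}$ multiplies each piece of the remainder, giving the stated coefficients $\tfrac{9}{7}$, $\tfrac{36}{7}$, $\tfrac{18}{7}\eta_{x,1}^2\kappa^2$ and, after summing the noise term over $T$ iterations, $\tfrac{2T\sigma^2}{7\ell^2 M_y}$.

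I do not expect a genuine obstacle. The only new ingredients relative to Lemma~\ref{lemma:4} are the PEG reformulation with mismatched primal/dual step sizes and the elementary observation that $\beta\le 1$ keeps every contraction constant in the same range; the rest is constant bookkeeping. Two minor points require care: matching the index shift in the $\sum\|\vz_i-\vy^*_i\|^2$ term to the one used downstream in the proof of Theorem~\ref{thm:gogda}, and noting that the other standing hypothesis $\alpha\le 2\kappa^2\sqrt{\beta}$ plays no role in this lemma --- it enters only later, when this bound, Lemma~\ref{lemma:30}, and the $\vr_t$-recursion are combined and the $\sum\|\vg_i\|^2$ terms are cancelled via $\eta_{x,1}=\tfrac{1}{50\kappa^2\ell}$.
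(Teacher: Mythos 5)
Your plan is essentially the right one, but there is one concrete mismatch that needs fixing: your choice of the auxiliary iterate $\vz_{t+1}$ is not the one the rest of the argument in Appendix~\ref{app:gen_ogda} uses, and with two distinct dual step sizes the two candidate definitions no longer coincide.

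You set $\vz_{t+1}=\vy_t+\eta_{y,2}(\vg_{y,t}-\vg_{y,t-1})$ so that $\vy_{t+1}=\vz_{t+1}+\eta_{y,1}\vg_{y,t}$. The paper (in the proof of this lemma and, crucially, in Lemma~\ref{lemma:20}) instead defines $\vz_{t+1}=\vy_t+\eta_{y,1}\vg_{y,t}-\eta_{y,2}\vg_{y,t-1}$, so that $\vy_{t+1}=\vz_{t+1}+\eta_{y,2}\vg_{y,t}$. These two quantities differ by $(\eta_{y,1}-\eta_{y,2})\vg_{y,t}$ and agree only when $\beta=1$. The mismatch matters twice. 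First, the $\vz_i$ appearing in the statement of Lemma~\ref{lemma:40} must be the one used in Lemma~\ref{lemma:20} (through the potential $\vr_t=\|\vz_{t+1}-\vy^*_t\|^2+\tfrac{\beta}{4}\|\vy_t-\vy_{t-1}\|^2$), because the proof of Theorem~\ref{thm:gogda} collapses $\tfrac{108}{7}\|\vz_{i+1}-\vy^*_i\|^2+3\beta\|\vy_i-\vy_{i-1}\|^2$ into a multiple of $\vr_i$; with your $\vz$ that collapse would not go through. Second, the paper's choice is the one that preserves the clean PEG recursion $\vz_{t+1}=\vz_t+\eta_{y,1}\vg_{y,t}$ (used in the proof of Lemma~\ref{lemma:20}); your $\vz$ satisfies $\vz_{t+1}-\vz_t=\eta_{y,2}\vg_{y,t}+(\eta_{y,1}-\eta_{y,2})\vg_{y,t-1}$, which has an extra lagged term and would complicate the downstream contraction argument.

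The repair is just to swap the roles: put $\eta_{y,1}$ inside the interpolant and $\eta_{y,2}$ on the external half-step, i.e., $\|\vy_{t+1}-\vy^*_t\|^2=\|\vz_{t+1}-\vy^*_t+\eta_{y,2}\vg_{y,t}\|^2\le 2\|\vz_{t+1}-\vy^*_t\|^2+4\eta_{y,2}^2\ell^2\|\vy_t-\vy^*_t\|^2+4\eta_{y,2}^2\|\vdelta^y_t\|^2$. Because $\eta_{y,2}=\tfrac{1}{6\ell}$ is fixed, this gives $4\eta_{y,2}^2\ell^2=\tfrac{1}{9}$ as an equality, so you do not actually need $\beta\le 1$ in this lemma at all --- that hypothesis first enters in Lemma~\ref{lemma:20}. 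The remainder of your argument (the $\kappa$-Lipschitz step for $\vy^*$, the application of Lemma~\ref{lemma:a2} with $\gamma=\tfrac{9}{7}$, the accounting of the noise term, the $\vx_i-\vx_{i-1}=\eta_{x,1}\vg_{i-1}$ substitution) is correct and produces the stated constants. One small bookkeeping slip: in your penultimate display you write ``$+\tfrac{2}{9\ell^2}\E\|\vdelta^y_t\|^2$'' after already invoking $\E\|\vdelta^y_t\|^2\le\sigma^2/M_y$; that term should read $\tfrac{2\sigma^2}{9\ell^2 M_y}$.
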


\begin{proof}[Proof of Lemma~\ref{lemma:40}]
Using Young's inequality, and $\kappa$-Lipschitzness of $\vy^*(\vx)$  we have:
\begin{equation}
\label{eqn:gogda5}
\begin{split}
\| \vy_{t+1} - \vy^*_{t+1} \|^2 &\le 2 \| \vy_{t+1} - \vy^*_t \|^2 + 2 \| \vy^*_{t+1} - \vy^*_t \|^2 \\
&\le  2 \| \vy_{t+1} - \vy^*_t \|^2 + 2 \kappa^2 \| \vx_{t+1} - \vx_t \|^2 
\end{split}
\end{equation}
Similar to Lemma~\ref{lemma:4}, we try to find an upper bound for $\|\vy_{t+1} - \vy^*_t \|^2$. Let $\vz_{t+1} = \vy_t + \eta_{y,1} \vg_{y,t} - \eta_{y,2} \vg_{y,t-1}$, and $\vdelta^y_i = \vg_{y,i} - \nabla_y f(\vx_i,\vy_i) $. Then we have:
\begin{equation}
\label{eqn:gogda6}
\begin{split}
\|\vy_{t+1} - \vy^*_t \|^2 &= \| \vz_{t+1} - \vy^*_t + \eta_{y,2} \vg_{y,t} \|^2 \\
&\le 2 \| \vz_{t+1} - \vy^*_t \|^2  + 2 \eta_{y,2}^2 \| \vg_{y,t}\|^2 \\
&\le 2 \| \vz_{t+1} - \vy^*_t \|^2 + 4 \eta_{y,2}^2 \| \nabla_y f(\vx_t , \vy_t) \|^2+ 4\eta_{y,2}^2 \| \vdelta_t^y\|^2 \\
&\le 2 \| \vz_{t+1} - \vy^*_t \|^2 + 4 \eta_{y,2}^2 \ell^2 \| \vy_t - \vy^*_t \|^2+ 4\eta_{y,2}^2 \| \vdelta_t^y\|^2
\end{split}
\end{equation}
The rest of the proof is exactly same as proof of Lemma~\ref{lemma:4}.
\end{proof}

Similar to Lemma~\ref{lemma:2}, we have:
\begin{lemma}
\label{lemma:20}
Let $\vz_{t+1} = \vy_t + \eta_{y,1} \vg_{y,t} - \eta_{y,2} \vg_{y,t-1}$, $\vr_t = \|\vz_{t+1} - \vy^*_t \|^2 + \frac{\beta}{4} \|\vy_t - \vy_{t-1} \|^2 $ and $\eta_{y,2} = \frac{1}{6 \ell}$. Also let $\frac{\eta_{y,1}}{\eta_{y,2}} = \beta$, and assume $\beta \le 1$. Then OGDA iterates satisfy the following inequalities:
\begin{equation}
\label{eqn:lm3.10}
\begin{split}
 \E[\vr_t]   &\le  (1 - \frac{\beta}{12 \kappa})  \E[\vr_{t-1}]  + 12 \eta_{x,1}^2 \kappa^3 \E[\|\vg_{t-1}\|^2]   + \frac{\beta \eta_{x,1}^2}{18} \E[\|\vg_{t-2}\|^2]  + \frac{\beta \sigma^2}{3 \ell^2 M_y}
\end{split}
\end{equation}

\begin{equation}
\label{eqn:lm3.20}
\begin{split}
\sum_{i=1}^{t} \E[\vr_i] \le \frac{12 \kappa}{\beta} \E[\vr_1] + \frac{2}{3}  \kappa \E[\|\vx_1 - \vx_0 \|^2] +  145 \frac{\eta_{x,1}^2 \kappa^4}{\beta}  \sum_{i=1}^{t-1} \E[\|\vg_{i} \|^2]    + \frac{4 \kappa \sigma^2 (t-1)}{ \ell^2 M_y}
\end{split}
\end{equation}
\end{lemma}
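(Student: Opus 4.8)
<br>

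The plan is to mimic the proof of Lemma~\ref{lemma:2} (the $\beta = 1$ case), carefully tracking the asymmetry introduced by using $\eta_{y,1} = \beta \eta_{y,2}$ for the current gradient and $\eta_{y,2}$ for the correction term, and checking that the condition $\beta \le 1$ is exactly what is needed to keep all the error terms absorbable. First I would introduce $\vdelta^y_i = \vg_{y,i} - \nabla_y f(\vx_i,\vy_i)$ and expand $\|\vz_{t+1} - \vy^*_t\|^2$ using the generalized dual update $\vz_{t+1} - \vz_t = \eta_{y,2}\vg_{y,t}$ and $\vy_t = \vz_t + \eta_{y,1}\vg_{y,t-1}$; this is the analogue of Equation~\eqref{eqn:ogda11}. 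The cross term $2\eta_{y,2}\ip{\vg_{y,t}, \vy_t - \vy^*_t}$ will now produce $2\eta_{y,1}\ip{\nabla_y f(\vx_t,\vy_t),\vy_t-\vy^*_t}$ after substituting $\vy_t = \vz_t + \eta_{y,1}\vg_{y,t-1}$ only when the coefficients match — here the ratio $\eta_{y,1}/\eta_{y,2} = \beta$ means the strong-concavity term gives $-2\eta_{y,1}\mu\|\vy_t-\vy^*_t\|^2 = -2\beta\eta_{y,2}\mu\|\vy_t-\vy^*_t\|^2$, which is where the factor $\beta$ enters the contraction rate. Using $\eta_{y,2} = 1/(6\ell)$ gives a contraction factor $1 - \beta/(6\kappa)$ on $\|\vz_t - \vy^*_t\|^2$ before the Young-inequality step that converts $\vy^*_t$ to $\vy^*_{t-1}$.

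Next I would reproduce the bound on $\|\vy_t - \vy_{t-1}\|^2$ (the analogue of Equation~\eqref{eqn:ogda14}): from $\vy_t - \vy_{t-1} = \eta_{y,1}\vg_{y,t-1} + \eta_{y,2}(\vg_{y,t-1} - \vg_{y,t-2})$, and since $\beta \le 1$ so $\eta_{y,1} \le \eta_{y,2}$, the whole expression is controlled by $O(\eta_{y,2}^2)$ times $\|\vg_{y,t-1}\|^2 + \|\vg_{y,t-1}-\vg_{y,t-2}\|^2$. I would then form the potential $\vr_t = \|\vz_{t+1}-\vy^*_t\|^2 + \tfrac{\beta}{4}\|\vy_t - \vy_{t-1}\|^2$ — the weight $\beta/4$ on the iterate-difference term (rather than $1/4$) is chosen so that, after adding a multiple of $\|\vy_t - \vy_{t-1}\|^2$ to both sides and applying the Young inequality $\|\vz_t - \vy^*_t\|^2 \le (1 + \tfrac{\beta}{12\kappa})\|\vz_t - \vy^*_{t-1}\|^2 + (1 + \tfrac{12\kappa}{\beta})\kappa^2\|\vx_t - \vx_{t-1}\|^2$, the recursion closes with rate $1 - \beta/(12\kappa)$. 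The terms $\|\vx_t - \vx_{t-1}\|^2 = \eta_{x,1}^2\|\vg_{t-1}\|^2$ and $\|\vx_{t-1}-\vx_{t-2}\|^2 = \eta_{x,1}^2\|\vg_{t-2}\|^2$ give the coefficients $12\eta_{x,1}^2\kappa^3$ and $\beta\eta_{x,1}^2/18$, and the variance terms collected from the $\vdelta^y$ contributions give $\beta\sigma^2/(3\ell^2 M_y)$, establishing~\eqref{eqn:lm3.10}.

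Finally, \eqref{eqn:lm3.20} follows by applying Lemma~\ref{lemma:a2} with $\gamma = 12\kappa/\beta$ to the recursion~\eqref{eqn:lm3.10}: $\sum_{i=1}^t \E[\vr_i] \le \tfrac{12\kappa}{\beta}\E[\vr_1] + \tfrac{12\kappa}{\beta}\sum_{i=2}^t\big(12\eta_{x,1}^2\kappa^3\E[\|\vg_{i-1}\|^2] + \tfrac{\beta\eta_{x,1}^2}{18}\E[\|\vg_{i-2}\|^2] + \tfrac{\beta\sigma^2}{3\ell^2 M_y}\big)$. The $\tfrac{12\kappa}{\beta} \cdot 12\eta_{x,1}^2\kappa^3 = 144\eta_{x,1}^2\kappa^4/\beta$ term and the $\tfrac{12\kappa}{\beta}\cdot\tfrac{\beta\eta_{x,1}^2}{18} = \tfrac{2}{3}\eta_{x,1}^2\kappa$ term combine (bounding the second by the first) into the stated $145\eta_{x,1}^2\kappa^4/\beta$ coefficient, the variance terms sum to $4\kappa\sigma^2(t-1)/(\ell^2 M_y)$, and the $\tfrac{2}{3}\kappa\E[\|\vx_1-\vx_0\|^2]$ term comes from re-expressing $\E[\vr_1]$ or from the leftover $\|\vg_0\|^2 = $ (something proportional to) $\|\vx_1 - \vx_0\|^2/\eta_{x,1}^2$ boundary contribution, exactly as in Equations~\eqref{eqn:ogda21}--\eqref{eqn:ogda22}. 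The main obstacle I anticipate is bookkeeping: verifying that the condition $\alpha \le 2\kappa^2\sqrt{\beta}$ — which does not appear in this lemma but is needed downstream when combining with Lemma~\ref{lemma:30} — is consistent with dropping the $\sum\|\vg_i\|^2$ terms after the primal-descent summation, and making sure every place where the $\beta = 1$ proof silently used $\eta_{y,1} = \eta_{y,2}$ is replaced by an inequality that still holds under $\beta \le 1$ rather than an equality that breaks.
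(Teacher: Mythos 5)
Your proposal follows the paper's proof of this lemma step by step -- same potential $\vr_t$, same addition of a multiple of $\|\vy_t - \vy_{t-1}\|^2$, same Young inequality with parameter $\beta/(12\kappa)$, and the same application of Lemma~\ref{lemma:a2} with $\gamma = 12\kappa/\beta$, followed by the same coefficient bookkeeping to produce $145\eta_{x,1}^2\kappa^4/\beta$, $\tfrac{2}{3}\kappa\E[\|\vx_1-\vx_0\|^2]$, and $4\kappa\sigma^2(t-1)/(\ell^2 M_y)$. Your explanation of where $\beta$ enters the contraction (through the strong-concavity term $-2\eta_{y,1}\mu\|\vy_t-\vy^*_t\|^2 = -2\beta\eta_{y,2}\mu\|\vy_t-\vy^*_t\|^2$) and of why the potential weight is $\beta/4$ are both correct and match the paper exactly.

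One correction worth flagging: you have the two key identities backwards. From $\vz_{t+1} = \vy_t + \eta_{y,1}\vg_{y,t} - \eta_{y,2}\vg_{y,t-1}$ and the update $\vy_t = \vy_{t-1} + \eta_{y,1}\vg_{y,t-1} + \eta_{y,2}(\vg_{y,t-1} - \vg_{y,t-2})$, one gets $\vz_{t+1} - \vz_t = \eta_{y,1}\vg_{y,t}$ (not $\eta_{y,2}\vg_{y,t}$) and $\vy_t = \vz_t + \eta_{y,2}\vg_{y,t-1}$ (not $+\eta_{y,1}\vg_{y,t-1}$); you wrote both swapped. This is consequential because the cross term that carries the strong-concavity contraction comes with the coefficient $\eta_{y,1} = \beta\eta_{y,2}$ precisely because of the first identity; with your swapped version the leading cross term would carry $\eta_{y,2}$ and the $\beta$ factor would not appear in the contraction rate. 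You nevertheless state the correct endpoint $-2\eta_{y,1}\mu\|\vy_t - \vy^*_t\|^2$, so this reads as a transcription slip rather than a conceptual gap, but a careful writeup must fix the identities or the derivation will not close. Aside from that the proposal is sound and identical in route to the paper's proof.
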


\begin{proof}[Proof of Lemma~\ref{lemma:20}]

Let $\vdelta^y_i = \vg_{y,i} - \nabla_y f(\vx_i , \vy_i)$, and note that we have $\vz_{t+1} - \vz_{t} = \eta_{y,1} \vg_{y,t}$. We have:
\begin{equation}
\label{eqn:gogda11}
\begin{split}
    \| \vz_{t+1} - \vy^*_t \|^2  &= \| \vz_{t} - \vy^*_t + \eta_{y,1} \vg_{y,t} \|^2   \\
    &= \| \vz_{t} - \vy^*_t \|^2 + 2 \eta_{y,1} \ip{ \vg_{y,t} , \vz_{t} - \vy^*_t } + \eta_{y,1}^2 \| \vg_{y,t}\|^2 \\
    &= \| \vz_t - \vy^*_t \|^2 - 2 \eta_{y,1} \eta_{y,2} \ip{ \vg_{y,t}, \vg_{y,t-1} } + 2 \eta_{y,1} \ip{ \vg_{y,t} , \vy_t - \vy^*_t } +  \eta_{y,1}^2 \| \vg_{y,t} \|^2 \\
    &= \| \vz_t - \vy^*_t \|^2  + \eta_{y,1} \eta_{y,2} \| \vg_{y,t} - \vg_{y,t-1} \|^2 + 2 \eta_{y,1} \ip{ \vg_{y,t} , \vy_t - \vy^*_t } \\
    &\quad- \eta_{y,1} \eta_{y,2} \| \vg_{y,t-1}\|^2 - \eta_{y,1}(  \eta_{y,2}- \eta_{y,1} ) \| \vg_{y,t}\|^2 \\
    &\le \| \vz_t - \vy^*_t \|^2 + 3 \eta_{y,1} \eta_{y,2} \|\nabla_y f(\vx_t,\vy_t) - \nabla_y f(\vx_{t-1},\vy_{t-1}) \|^2 \\
    &\quad+ 2 \eta_{y,1} \ip{\nabla_y f(\vx_t , \vy_t) , \vy_t - \vy^*_t} - \eta_{y,1} \eta_{y,2} \| \vg_{y,t-1} \|^2 - \eta_{y,1}(  \eta_{y,2}- \eta_{y,1} ) \| \vg_{y,t}\|^2  \\
    &\quad+ 3 \eta_{y,1} \eta_{y,2} \| \vdelta^y_t\|^2 + 3 \eta_{y,1} \eta_{y,2} \| \vdelta^y_{t-1}\|^2  +  2 \eta_{y,1} \ip{\vdelta^y_t, \vy_t - \vy^*_t}
    \\
    &\le  \| \vz_t - \vy^*_t \|^2 + 3 \eta_{y,1} \eta_{y,2} \ell^2 \|\vx_t - \vx_{t-1}\|^2 + 3 \eta_{y,1} \eta_{y,2} \ell^2 \|\vy_t - \vy_{t-1}\|^2\\
    &\quad- 2 \eta_{y,1} \mu  \|\vy_t - \vy^*_t\|^2 - \eta_{y,1} \eta_{y,2} \|\vg_{y,t-1} \|^2 - \eta_{y,1}(  \eta_{y,2}- \eta_{y,1} ) \| \vg_{y,t}\|^2  \\
    &\quad+3 \eta_{y,1} \eta_{y,2} \| \vdelta^y_t\|^2  + 3 \eta_{y,1} \eta_{y,2} \| \vdelta^y_{t-1}\|^2  +  2 \eta_{y,1} \ip{\vdelta^y_t, \vy_t - \vy^*_t}
\end{split}
\end{equation}
where the last inequality follows from smoothness of $f$, and strong concavity of $f(\vx_t,.)$. Now note that using Young's inequality we can write: 
\begin{equation}
\label{eqn:gogda12}
    \|\vy_t - \vy^*_t\|^2 \ge \frac{1}{2} \| \vz_t - \vy^*_t\|^2 - \eta_{y,2}^2 \| \vg_{y,t-1} \|^2 
\end{equation}
Now plugging Equation~\ref{eqn:gogda12} back to Equation~\ref{eqn:gogda11}, and letting $\eta_{y,1} = \beta \eta_{y,2}$, we have: 
\begin{equation}
\label{eqn:gogda13}
\begin{split}
     \| \vz_{t+1} - \vy^*_t \|^2   &\le  (1 - \beta \eta_{y,2} \mu)\| \vz_t - \vy^*_t \|^2 + 3  \beta  \eta_{y,2}^2 \ell^2 \|\vx_t - \vx_{t-1}\|^2 + 3 \beta \eta_{y,2}^2 \ell^2 \|\vy_t - \vy_{t-1}\|^2 \\
     &\quad- \beta \eta_{y,2}^2 (1 - 2 \eta_{y,2} \mu)  \|\vg_{y,t-1} \|^2 - \beta \eta_{y,2}^2(  1 - \beta ) \| \vg_{y,t}\|^2  \\
     &\quad + 3 \beta \eta_{y,2}^2 \| \vdelta^y_t\|^2 + 3 \beta \eta_{y,2}^2 \| \vdelta^y_{t-1}\|^2  +  2 \beta \eta_{y,2} \ip{\vdelta^y_t, \vy_t - \vy^*_t}
\end{split}
\end{equation}
We can also write: 
\begin{equation}
\label{eqn:gogda14}
\begin{split}
    \|\vy_{t} - \vy_{t-1} \|^2 &=  \| \eta_{y,1}\vg_{y,t-1} + \eta_{y,2} ( \vg_{y,t-1} - \vg_{y,t-2}) \|^2 \\
    &\le 2 \eta_{y,1}^2 \|\vg_{y,t-1}\|^2 + 2 \eta_{y,2}^2 \|\vg_{y,t-1} - \vg_{y,t-2}\|^2 \\
    &\le 2 \eta_{y,1}^2 \|\vg_{y,t-1}\|^2 + 6 \eta_{y,2}^2 \|\nabla_y f(\vx_{t-1},\vy_{t-1}) - \nabla_y f(\vx_{t-2},\vy_{t-2})\|^2 \\
    & \quad + 6 \eta_{y,2}^2 \| \vdelta^y_{t-1}\|^2 +  6 \eta_{y,2}^2 \| \vdelta^y_{t-2}\|^2 
    \\
    &\le 2 \eta_{y,1}^2 \|\vg_{y,t-1}\|^2 + 6  \eta_{y,2}^2 \ell^2 \|\vx_{t-1}-\vx_{t-2}\|^2 + 6  \eta_{y,2}^2 \ell^2 \|\vy_{t-1}-\vy_{t-2}\|^2 \\
    & \quad +  6 \eta_{y,2}^2 \| \vdelta^y_{t-1}\|^2 +  6 \eta_{y,2}^2 \| \vdelta^y_{t-2}\|^2\\
    &= 2 \beta^2 \eta_{y,2}^2 \|\vg_{y,t-1}\|^2 + 6  \eta_{y,2}^2 \ell^2 \|\vx_{t-1}-\vx_{t-2}\|^2 + 6  \eta_{y,2}^2 \ell^2 \|\vy_{t-1}-\vy_{t-2}\|^2 \\
    & \quad +  6 \eta_{y,2}^2 \| \vdelta^y_{t-1}\|^2 +  6 \eta_{y,2}^2 \| \vdelta^y_{t-2}\|^2\\
\end{split}
\end{equation}

Now adding $ 9 \beta \eta_{y,2}^2 \ell^2 \|\vy_t -\vy_{t-1}\|^2$ to both side of Equation~\ref{eqn:gogda13}, and using Equation~\ref{eqn:gogda14} we have:
\begin{equation}   
\label{eqn:gogda15}
\begin{split}
 &\| \vz_{t+1} - \vy^*_t \|^2 +  9\beta \eta_y^2 \ell^2 \|\vy_t -\vy_{t-1}\|^2   \le  (1 - \beta \eta_{y,2} \mu)\| \vz_t - \vy^*_t \|^2 + 3 \beta \eta_{y,1}^2 \ell^2 \|\vx_t - \vx_{t-1}\|^2  \\
 &\qquad- \beta \eta_{y,2}^2 (1 - 2 \eta_{y,2} \mu -24 \beta^2 \eta_{y,2}^2 \ell^2)  \|\vg_{y,t-1} \|^2 - \beta \eta_{y,2}^2 ( 1- \beta) \|\vg_{y,t}\|^2\\
 &\qquad+ 72 \beta \eta_{y,2}^4 \ell^4 \|\vx_{t-1}-\vx_{t-2}\|^2   + 72 \beta \eta_{y,2}^4 \ell^4 \|\vy_{t-1}-\vy_{t-2}\|^2 \\
 &\qquad+ 3 \beta \eta_{y,2}^2 (1 + 24 \eta_{y,2}^2 \ell^2) \| \vdelta^y_t\|^2 + 3 \beta \eta_{y,2}^2  (1 + 24 \eta_{y,2}^2 \ell^2) \| \vdelta^y_{t-1}\|^2 \\
&\qquad +  2 \beta \eta_{y,2} \ip{\vdelta^y_t, \vy_t - \vy^*_t}
\end{split}
\end{equation}

Now plugging $\eta_{y,2} = \frac{1}{6 \ell}$ into Equation~\ref{eqn:gogda15}, and assuming $\beta \le 1$ we have: 
\begin{equation}
\label{eqn:gogda16}
\begin{split}
 \| \vz_{t+1} - \vy^*_t \|^2 +   \frac{\beta}{4} \|\vy_t -\vy_{t-1}\|^2   &\le  (1 - \frac{\beta}{6 \kappa}) \left (\| \vz_t - \vy^*_t \|^2 \right)+  \frac{\beta}{18} \|\vy_{t-1} -\vy_{t-2}\|^2  \\
 & \quad + \frac{\beta}{12} \|\vx_t - \vx_{t-1}\|^2   + \frac{\beta}{18} \|\vx_{t-1}-\vx_{t-2}\|^2   \\
 &\quad + \frac{\beta}{6 \ell^2} \| \vdelta^y_t\|^2 +  \frac{\beta}{6 \ell^2} \| \vdelta^y_{t-1}\|^2  +  \frac{2 \beta}{6 \ell} \ip{\vdelta^y_t, \vy_t - \vy^*_t}
\end{split}
\end{equation}
Taking expectation from both side of Equation~\ref{eqn:gogda16}, we have:
\begin{equation}
\label{eqn:gogda17}
\begin{split}
 \E \left[\| \vz_{t+1} - \vy^*_t \|^2 +   \frac{\beta}{4} \|\vy_t -\vy_{t-1}\|^2 \right]   &\le  (1 - \frac{\beta}{6 \kappa}) \E\left [\| \vz_t - \vy^*_t \|^2\right ] +  \frac{\beta}{18} \E[\|\vy_{t-1} -\vy_{t-2}\|^2]   \\
 &\quad+ \frac{\beta}{12} \E[\|\vx_t - \vx_{t-1}\|^2]   + \frac{\beta}{18} \E[\|\vx_{t-1}-\vx_{t-2}\|^2]   \\
 &\quad+ \frac{ \beta \sigma^2}{3 \ell^2 M_y}
\end{split}
\end{equation}
Also using Young's inequality we have: 
\begin{equation}
\label{eqn:gogda18}
\| \vz_t - \vy^*_t \|^2 \le (1+\frac{\beta}{12  \kappa}) \| \vz_{t} - \vy^*_{t-1} \|^2 + (1 + 12 \frac{ \kappa}{\beta}) \kappa^2 \|\vx_t-\vx_{t-1}\|^2  
\end{equation}
where we used the fact that for any $\alpha > 0$, $\|\vx+\vy\|^2 \le (1 + \alpha) \|\vx\|^2 + (1 + \frac{1}{\alpha}) \|\vy\|^2$, and $\kappa$-lipschitzness of $\vy^*(\vx)$. Plugging Equation~\ref{eqn:gogda18} back to Equation~\ref{eqn:gogda17}, we have:
\begin{equation}
\label{eqn:gogda19}
\begin{split}
 \E \left[\| \vz_{t+1} - \vy^*_t \|^2 +   \frac{\beta}{4} \|\vy_t -\vy_{t-1}\|^2 \right]   &\le  (1 - \frac{\beta}{12 \kappa}) \E\left [\| \vz_t - \vy^*_{t-1} \|^2 +  \frac{\beta}{4} \E[\|\vy_{t-1} -\vy_{t-2}\|^2] \right ]   \\
 & \quad+ 12 \kappa^3 \E[\|\vx_t - \vx_{t-1}\|^2]   + \frac{\beta}{18} \E[\|\vx_{t-1}-\vx_{t-2}\|^2]   \\
 &\quad+ \frac{\beta \sigma^2}{3 \ell^2 M_y}
\end{split}
\end{equation}
Therefore, if we let $\vr_t = \|\vz_{t+1} - \vy^*_t \|^2 +   \frac{\beta}{4} \|\vy_t -\vy_{t-1}\|^2 $, then we have: 
\begin{equation}
\label{eqn:gogda20}
\begin{split}
 \E[\vr_t]   &\le  (1 - \frac{\beta}{12 \kappa})  \E[\vr_{t-1}]  + 12 \eta_{x,1}^2 \kappa^3 \E[\|\vg_{t-1}\|^2]   + \frac{\beta \eta_{x,1}^2}{18} \E[\|\vg_{t-2}\|^2]  + \frac{\beta \sigma^2}{3 \ell^2 M_y}
\end{split}
\end{equation}

We can derive the following equation, by applying Lemma~\ref{lemma:a2}. 
\begin{equation}
\label{eqn:gogda21}
\begin{split}
\sum_{i=1}^{t} \E[\vr_i] &\le \frac{12 \kappa}{\beta} \E[\vr_1] + 144 \frac{\eta_{x,1}^2 \kappa^4}{\beta}  \sum_{i=1}^{t-1} \E[\|\vg_{i} \|^2] + \frac{2}{3} \eta_{x,1}^2 \kappa  \sum_{i=1}^{t-2} \E[\|\vg_{i} \|^2] + \frac{2}{3} \kappa \E[\|\vx_1 - \vx_0] \|^2 \\
&\quad+ \frac{4 \kappa \sigma^2 (t-1)}{ \ell^2 M_y}
\end{split}
\end{equation}
Or equivalently we have: 
\begin{equation}
\label{eqn:gogda22}
\begin{split}
\sum_{i=1}^{t} \E[\vr_i] \le \frac{12 \kappa}{\beta} \E[\vr_1] + \frac{2}{3}  \kappa \E[\|\vx_1 - \vx_0 \|^2] +  145 \frac{\eta_{x,1}^2 \kappa^4}{\beta}  \sum_{i=1}^{t-1} \E[\|\vg_{i} \|^2]    + \frac{4 \kappa \sigma^2 (t-1)}{ \ell^2 M_y}
\end{split}
\end{equation}

\end{proof}

\begin{proof}[\textbf{Proof of Theorem}~\ref{thm:gogda}]

We begin by taking summation of Equation~\ref{eqn:glm1} (Lemma~\ref{lemma:30}) from $t=2$ to $t = T$ which yields: 
\begin{equation}
\begin{split}
    \frac{\eta_{x,1}}{2} \sum_{i=1}^{T-1} \E[\| \nabla \Phi (\vx_i) \|^2] &\le \Phi(\vx_1) -\E[ \Phi(\vx_T)] + \frac{3}{2} \eta_{x,1} \alpha^2 \ell^2 \|\vx_1 - \vx_0 \|^2 \\
    &\quad- \frac{\eta_{x,1}}{2} ( 1 - 2 \kappa \ell \eta_{x,1} ) \sum_{i=1}^{T-1} \E[\|\vg_i\|^2] + \frac{3}{2}\eta_{x,1}^3  \alpha^2 \ell^2 \sum_{i=1}^{T-2} \E[\|\vg_i \|^2] \\
    &\quad+ \frac{3}{2} \eta_{x,1} \ell^2 \sum_{i=1}^{T-1} \| \vy_i - \vy^*_i\|^2  + 
     \frac{3}{2} \eta_{x,1} \alpha^2 \ell^2 \sum_{i=1}^{T-1} \E[\|\vy_i - \vy_{i-1}\|^2] \\
     &+3 ((1+\alpha)^2 +1 ) \eta_{x,1} \frac{(T-1)\sigma^2}{M_x}
\end{split}
\end{equation}
Now note that if $\eta_x \le \frac{1}{2 \kappa \ell}$ then we can drop $\|\vg_{T-1}\|^2$ term in above equation. By considering this, and multiplying both  sides by $\frac{2}{\eta_{x,1}}$ we get (also let $\Delta = \Phi(\vx_1) - \min_{\vx} \Phi(\vx) $) :
\begin{equation}
\begin{split}
     \sum_{i=1}^{T-1} \E[\| \nabla \Phi (\vx_i) \|^2] &\le \frac{2 \Delta}{\eta_{x,1}} + 3  \alpha^2 \ell^2 \| \vx_1 - \vx_0\|^2  \\
     &\quad- ( 1 - 2 \kappa \ell \eta_{x,1} - 3 \eta_{x,1}^2 \alpha^2 \ell^2) \sum_{i=1}^{T-2} \E[\|\vg_i\|^2]  \\
    &\quad+  3 \ell^2 \sum_{i=1}^{T-1} \E[\|\vy^*_i - \vy_i\|^2]  + 
     3  \alpha^2 \ell^2 \sum_{i=1}^{T-1} \E[\|\vy_i - \vy_{i-1}\|^2] \\
     &+ 6((1+\alpha)^2 +1 ) \frac{(T-1)\sigma^2}{M_x} \\
\end{split}
\end{equation}
We can replace $\sum_{i=1}^{T-1} \|\vy_i^* - \vy_i\|^2$ with its upper bound obtained  in Lemma~\ref{lemma:40} to get: 
\begin{equation}
\begin{split}
     \sum_{i=1}^{T-1} \| \nabla \Phi (\vx_i) \|^2 &\le \frac{2 \Delta}{\eta_{x,1}} +  3 \alpha^2 \ell^2 \| \vx_1 - \vx_0 \|^2  + \frac{27}{7}   \ell^2 \|\vy_1 - \vy^*_1\|^2 \\
     &\quad- ( 1 - 2 \kappa \ell \eta_{x,1} - 3 \eta_{x,1}^2 \alpha^2 \ell^2 - \frac{54}{7}\eta_{x,1}^2 \kappa^2 \ell^2) \sum_{i=1}^{T-2} \E[\|\vg_i\|^2]  \\
    &\quad+  \frac{108}{7} \ell^2 \sum_{i=2}^{T-1} \E[ \| \vz_i - \vy^*_{i-1} \|^2] + 
     3  \ell^2 \sum_{i=1}^{T-1} \E[\|\vy_i - \vy_{i-1}\|^2]\\
     &+  6((1+\alpha)^2 +1 ) \frac{(T-1)\sigma^2}{M_x} + \frac{6}{7}\frac{(T-2)\sigma^2}{M_y} \\
\end{split}
\end{equation}
Now note that $\frac{108}{7}  \E[ \| \vz_{i+1} - \vy^*_{i} \|^2] + 
     3 \beta  \sum_{i=2}^{T-1} \E[\|\vy_i - \vy_{i-1}\|^2] \le 15.5 \E[\vr_i] $. Therefore we have: 
\begin{equation}
\begin{split}
     \sum_{i=1}^{T-1} \| \nabla \Phi (\vx_i) \|^2 &\le \frac{2 \Delta}{\eta_{x,1}} +  3 \alpha^2 \ell^2 \| \vx_1 - \vx_0 \|^2  + \frac{27}{7}   \ell^2 \|\vy_1 - \vy^*_1\|^2 \\
     &\quad- ( 1 - 2 \kappa \ell \eta_{x,1} - 3 \eta_{x,1}^2 \alpha^2 \ell^2 - \frac{54}{7}\eta_{x,1}^2 \kappa^2 \ell^2) \sum_{i=1}^{T-2} \E[\|\vg_i\|^2]  \\
    &\quad+ 15.5 \ell^2 \sum_{i=1}^{T-1} \E[\vr_i] +  6((1+\alpha)^2 +1 ) \frac{(T-1)\sigma^2}{M_x} + \frac{6}{7}\frac{(T-2)\sigma^2}{M_y} \\
\end{split}
\end{equation}
Furthermore, using Lemma~\ref{lemma:20}, we can find an upper bound on $\sum_{i=1}^{T-1} \E[\vr_i] $, and replacing it in above equation yields:
\begin{equation}
\begin{split}
     &\sum_{i=1}^{T-1} \| \nabla \Phi (\vx_i) \|^2 \le \frac{2 \Delta}{\eta_{x,1}} + 186 \frac{\kappa \ell^2}{\beta} \E[\vr_1] + 11 \kappa \ell^2 \|\vx_1 - \vx_0\|^2+  3 \alpha^2 \ell^2 \| \vx_1 - \vx_0 \|^2  \\
     &\qquad+ \frac{27}{7}   \ell^2 \|\vy_1 - \vy^*_1\|^2 - ( 1 - 2 \kappa \ell \eta_{x,1} - 3 \eta_{x,1}^2 \alpha^2  \ell^2 - \frac{54}{7}\eta_{x,1}^2 \kappa^2 \ell^2 - 2248 \eta_{x,1}^2 \frac{\kappa^4 \ell^2}{\beta}) \sum_{i=1}^{T-2} \E[\|\vg_i\|^2]  \\
    &\qquad+ \frac{62 \kappa \sigma^2 (T-2)}{M_y} +  6((1+\alpha)^2 +1 ) \frac{(T-1)\sigma^2}{M_x} + \frac{6}{7}\frac{(T-2)\sigma^2}{M_y} \\
\end{split}
\end{equation}
By letting $\eta_{x,1} = \frac{\sqrt{\beta}}{50 \kappa^2 \ell}$, and $\eta_{x,2} \le \frac{1}{25 \ell}$, it holds that $-(  1 - 2 \kappa \ell \eta_{x,1} - 3 \eta_{x,1}^2 \alpha^2  \ell^2 - \frac{54}{7}\eta_{x,1}^2 \kappa^2 \ell^2 - 2248 \eta_{x,1}^2 \frac{\kappa^4 \ell^2}{\beta}) \sum_{i=1}^{T-2} \E[\|\vg_i\|^2] \le 0$. Therefore, with the choice of  letting rate $\eta_{x,1} = \frac{\sqrt{\beta}}{50 \kappa^2 \ell}$ and simplifying the terms, we have: 
\begin{equation}
\label{eqn:gogdaf}
\begin{split}
     \frac{1}{T-1} \sum_{i=1}^{T-1} \E[\| \nabla \Phi (\vx_i) \|^2] &\le 100 \frac{\kappa^2 \ell \Delta}{\sqrt{\beta}(T-1)}  +  186 \frac{\kappa \ell^2}{\beta(T-1)} \|\vy_1 - \vy^*_1+ \eta_{y,1} \vg_{y,1} -  \eta_{y,2}\vg_{y,0} \|^2 \\
     &\quad+ 47 \beta \frac{\kappa \ell^2}{T-1} \|\vy_1 - \vy_0 \|^2 +  \frac{(11\kappa + 3 \alpha^2) \ell^2}{T-1} \| \vx_1 - \vx_0 \|^2 \\
     &\quad + \frac{27}{7} \frac{\ell^2}{T-1} \| \vy_1 - \vy^*_1 \|^2 + \frac{63 \kappa \sigma^2 }{M_y} +  6((1+\alpha)^2 +1 ) \frac{\sigma^2}{M_x}  
\end{split}
\end{equation}
Using Young's inequality, and $\ell$-smoothness of $f$, we have: 
\begin{equation}
\begin{split}
\|\vy_1 - \vy^*_1+ \eta_{y,1} \vg_{y,1} -  \eta_{y,2}\vg_{y,0} \|^2 &\le 2 \|\vy_1 - \vy^*_1\|^2 +  2 \| \eta_{y,2} (\vg_{y,1}-\vg_{y,0}) + \eta_{y,2} (\beta -1) \vg_{y,1}\|^2 \\
& \le 2 \|\vy_1 - \vy^*_1\|^2  + \frac{1}{9} \|x_1 - x_0\|^2 + \frac{1}{9} \|y_1 - y_0\|^2 + \frac{1 - \beta}{9} \|\vy_1 - \vy^*_1\|^2
\end{split}
\end{equation}
Plugging this into Equation~\ref{eqn:gogdaf}, we have: 
\begin{equation}
\label{eqn:gogdaff}
\begin{split}
     \frac{1}{T-1} \sum_{i=1}^{T-1} \E[\| \nabla \Phi (\vx_i) \|^2] &\le 100 \frac{\kappa^2 \ell \Delta}{T-1}  +  376 \frac{\kappa \ell^2}{\beta(T-1)} \|\vy_1 - \vy^*_1 \|^2 \\
     &\quad+ 68 \frac{\kappa \ell^2}{\beta(T-1)} \|\vy_1 - \vy_0 \|^2 +  \frac{(32\kappa + 3 \alpha^2) \ell^2 }{\beta (T-1)} \| \vx_1 - \vx_0 \|^2 \\
     &\quad+ \frac{63 \kappa \sigma^2 }{M_y} +  6((1+\alpha)^2 +1 ) \frac{\sigma^2}{M_x}  
\end{split}
\end{equation}
which completes the proof as stated.
\end{proof} 

\clearpage
\bibliography{main}
\bibliographystyle{abbrvnat}

\end{document}